\documentclass{article}

\usepackage{microtype}
\usepackage{graphicx}
\usepackage{subcaption}
\usepackage{booktabs} %
\usepackage{url}
\usepackage{multirow}
\usepackage{amsfonts}
\usepackage{amsthm}
\usepackage{amsmath, amssymb, mathtools}
\usepackage{xcolor} %
\usepackage{hyperref}
\usepackage{rotating}
\usepackage[nohints]{minitoc}
\usepackage{etoc}
\usepackage{algorithmic}
\usepackage{algorithm}
\usepackage{wrapfig}
\usepackage[final]{neurips_2024}
\usepackage[capitalize,noabbrev]{cleveref}
\usepackage{adjustbox}
\usepackage[utf8]{inputenc} %
\usepackage[T1]{fontenc}    %
\usepackage{nicefrac}       %

\theoremstyle{plain}
\newtheorem{theorem}{Theorem}[section]
\newtheorem{proposition}[theorem]{Proposition}
\newtheorem{lemma}[theorem]{Lemma}

\theoremstyle{definition}
\newtheorem{definition}[theorem]{Definition}
\newtheorem{assumption}[theorem]{Assumption}
\theoremstyle{remark}

\usepackage[textsize=tiny]{todonotes}

\title{
Dissecting 
the Failure of Invariant Learning on Graphs
}

\author{%
  Qixun Wang\textsuperscript{1} \qquad Yifei Wang \textsuperscript{2} \qquad Yisen Wang \textsuperscript{1,3}\thanks{Corresponding author: Yisen Wang (yisen.wang@pku.edu.cn).} \qquad Xianghua Ying \textsuperscript{1} \\
\textsuperscript{1} State Key Laboratory of General Artificial Intelligence,\\School of Intelligence Science and Technology, Peking University\\
\textsuperscript{2} CSAIL, MIT\\
\textsuperscript{3} Institute for Artificial Intelligence, Peking University\\
}

\begin{document}
\maketitle
\doparttoc
\faketableofcontents

\begin{abstract}
    Enhancing node-level Out-Of-Distribution (OOD) generalization on graphs remains a crucial area of research. In this paper, we develop a Structural Causal Model (SCM) to theoretically dissect the performance of two prominent invariant learning methods---Invariant Risk Minimization (IRM) and Variance-Risk Extrapolation (VREx)---in node-level OOD settings. Our analysis reveals a critical limitation: due to the lack of class-conditional invariance constraints, these methods may struggle to accurately identify the structure of the predictive invariant ego-graph and consequently rely on spurious features. To address this, we propose Cross-environment Intra-class Alignment (CIA), which explicitly eliminates spurious features by aligning cross-environment representations conditioned on the same class, bypassing the need for explicit knowledge of the causal pattern structure. To adapt CIA to node-level OOD scenarios where environment labels are hard to obtain, we further propose CIA-LRA (Localized Reweighting Alignment) that leverages the distribution of neighboring labels to selectively align node representations, effectively distinguishing and preserving invariant features while removing spurious ones, all without relying on environment labels. We theoretically prove CIA-LRA's effectiveness by deriving an OOD generalization error bound based on PAC-Bayesian analysis. Experiments on graph OOD benchmarks validate the superiority of CIA and CIA-LRA, marking a significant advancement in node-level OOD generalization. The codes are available at \url{https://github.com/NOVAglow646/NeurIPS24-Invariant-Learning-on-Graphs}.
\end{abstract}

\section{Introduction}
\label{sec_intro}
Generalizing to unseen testing distributions that differ from the training distributions, known as Out-Of-Distribution (OOD) generalization, is one of the key challenges in machine learning. 
Invariant learning, which aims to capture predictive features that remain consistent under distributional shifts, is a crucial strategy for addressing OOD generalization. Numerous invariant learning methods have been proposed to tackle OOD problems in computer vision (CV) tasks \citep{arjovsky2019invariant, krueger2021out, bui2021exploiting, rame2022fishr,wang2019symmetric,mahajan2021domain,zhang2021can, wang2022out,yi2022breaking, wang2022improving,xin2022connection}. While in recent years, enhancing OOD generalization on graph data is an emerging research direction attracting increasing attention. In this work, we focus on the challenge of node-level OOD generalization on graphs.

Straightforwardly adapting the above methods to node-level graph OOD scenarios presents several challenges: 1) the prediction of a node's label depends on its neighbored samples in an ego-subgraph, causing a gap from conventional CV OOD scenarios where samples are independently predicted; and 2) environment labels in node-level tasks are often unavailable \citep{wu2021handling, li2023invariant, liu2023flood}, rendering invariant learning methods based on environment partitioning infeasible. To illustrate the failure of directly adapting traditional invariant learning to graphs, we evaluate two representative OOD approaches, Invariant Risk Minimization (IRM \citep{arjovsky2019invariant}) and Variance-Risk Extrapolation (VREx \citep{krueger2021out}), in OOD node classification scenarios. We choose IRM and VREx for two reasons: 1) Numerous node-level graph OOD strategies \citep{zhang2021stable, wu2021handling,   liu2023flood,  tian2024graphs} utilize VREx as invariance regularization (details in Appendix \ref{works_using_vrex_sec}). Therefore, analyzing VREx can cover a significant portion of graph-OOD methods; and 2) IRM and VREx are two prominent OOD methods that we can theoretically prove to be effective on non-graph data (Proposition \ref{prop_VREx_IRM_suc}). By testing their performance on graph data, we can better reveal the differences between graph and non-graph data.
We choose real-world graph datasets: Arxiv, Cora, and WebKB; synthetic datasets: CBAS and a toy graph OOD dataset with spurious correlations between node features and labels for evaluation. From Table \ref{moti_acc}, we observe that IRM and VREx offer marginal or no improvement over ERM on both real-world and synthetic node-level graph OOD datasets. This naturally raises some questions here:
\begin{quote}
    \emph{On graphs, why do traditional invariant learning methods fail? How to make them work again?}
\end{quote}

\begin{wrapfigure}{r}{0.35\textwidth}
\vspace{-10pt}
    \centering
    \resizebox{0.99\linewidth}{!}{
        \begin{tabular}{cccc}
            \toprule
             Algorithms & Large-Cov. & Large-Con. & Toy  \\
            \midrule
             ERM & 57.74 & 59.57  & 33.6 \\
             \midrule
             IRM & 57.59 & 59.46  & 34.9 \\
             \midrule
             VREx & 58.46 & 59.83  & 33.9 \\
             \midrule
             CIA (ours) & 59.68 & 60.89  & 37.0 \\
             \midrule
             CIA-LRA (ours) & \textbf{61.94} & \textbf{63.03}  & \textbf{39.1} \\
            \bottomrule
        \end{tabular}
        }
    \captionof{table}{\textit{Real-Cov./Con.} are average OOD accuracy on the covariate/concept shift of Arxiv, Cora, CBAS, and WebKB. \textit{Toy} denotes results on our toy synthetic graph OOD dataset.}
    \label{moti_acc}

\vspace{-10pt}
\end{wrapfigure}

To theoretically analyze their failure modes, we build a Structural Causal Model (SCM) to model the data generation process under two types of distributional shifts: concept shift and covariate shift, and gain a high-level understanding of the challenges in node-level OOD generalization: To correctly predict a node's label, the structure of a predictive invariant neighboring ego-graph (which we call it a \textit{causal pattern}) and their invariant node features must be learned. However, identifying the correct structure of the causal pattern presents additional optimization requirements (compared to CV scenarios) for Graph Neural Networks (GNNs) since they must adjust the aggregation parameters (such as the attention weights in GAT \citep{veličković2018graph}) to achieve this. IRM and VREx lack class-conditional invariance constraints, which causes insufficient supervision for regularizing the training of these aggregation parameters, leading to non-unique solutions of GNN parameters and potentially resulting in the learning of spurious features. (detailed analysis is in Section \ref{theo_sec}).  
To overcome this, we propose Cross-environment Intra-class Alignment (\textbf{CIA}) that further considers class-conditional invariance to identify causal patterns\footnote{Although numerous node-level OOD strategies from different perspectives have been proposed such as GNN architecture design \citep{wu2024graph} or feature augmentation \citep{li2023graph}, we focus on developing an invariant learning objective that could be integrated into and improve other graph-OOD frameworks in a plug-and-play manner (validated in Section \ref{plug_and_play}), serving as a general solution.}. We theoretically demonstrate that by aligning node representations of the same class and different environments, CIA can eliminate spurious features and learn the correct causal pattern, as same-class different-environment samples share similar causal patterns while exhibiting different spurious features. Table \ref{moti_acc} shows CIA's empirical gains. To leverage the advantage of CIA and adapt it to scenarios without environment labels, we further propose \textbf{CIA-LRA} (Localized Reweighting Alignment), utilizing localized label distribution to find node pairs with significant differences in spurious features and small differences in causal ones for alignment, to eliminate the spurious features while alleviating the collapse of the causal ones. Our contributions are summarized as follows:
\begin{enumerate}
    \item By constructing an SCM, we provide a theoretical analysis revealing that VREx and IRM could rely on spurious features when using a GAT-like GNN (Section \ref{classic_fail}) in node-level OOD scenarios, revealing a key challenge of invariant learning on graphs. 
    \item We propose CIA and theoretically prove its effectiveness in learning invariant representations on graphs (Section \ref{CIA_theo}). To adapt CIA to node-level OOD scenarios where environment labels are unavailable, we further propose CIA-LRA that requires no environment labels or complex environmental partitioning processes to achieve invariant learning (Section \ref{CIA-LRA_method_sec}), with theoretical guarantees on its generalization performance (Section \ref{bound_sec}).
    \item We evaluate CIA and CIA-LRA on the Graph OOD benchmark (GOOD) \citep{gui2022good} on GAT and GCN \citep{kipf2016semi}. The results demonstrate CIA's superiority over non-graph invariant learning methods, and CIA-LRA achieves state-of-the-art performance (Section \ref{maiN^exp_sec}).
\end{enumerate}

We leave comparisons of our method and existing node-level OOD works in Appendix \ref{comparison_sec}.

\section{Dissecting Invariant Learning on Graphs
}
\label{theo_sec}

For OOD node classification, we are given a single training graph $\mathcal{G}=(A,X,Y)$ containing $N$ nodes $\mathcal{V}= \{v_i\}_{i=1}^N$ from multiple training environments $e \in \mathcal{E}_{\text{tr}}$. $A\in {\{0,1\}}^{N\times N}$ is the adjacency matrix, $A_{i,j}=1$ iff there is an edge between $v_i$ and $v_j$. $X\in \mathbb{R}^{N \times D}$ are node features. The $i$-th row $X_i\in \mathbb{R}^{D}$ represents the original node feature of $v_i$. $Y \in \{0,1,...,C-1\}^{N}$ are the labels, $C$ is the number of the classes.  Denote the subgraph containing nodes of environment $e$ as $\mathcal{G}^e=({A^e},X^e,Y^e)$, which follows the distribution $p_e$. Let ${A^e}\in {\{0,1\}}^{N^e\times N^e}$ and $D^e$ be the adjacency matrix and the diagonal degree matrix for nodes from environment $e$ respectively, where $D^e_{ii}=\sum_{j=1}{A^e}_{ij}$, $N^e$ is the number of samples in $e$. Denote the normalized adjacency matrix as $\bar{A^e}=(D^e+I_{N^e})^{-\frac{1}{2}}{A^e}(D^e+I_{N^e})^{-\frac{1}{2}}$, $I_{N^e}$ is the identity matrix. Let $\tilde{A^e}=\bar{A}+(D^e+I_{N^e})^{-\frac{1}{2}}I_{N^e}(D^e+I_{N^e})^{-\frac{1}{2}}$. Suppose the unseen test environments are $e'\in \mathcal{E}_{\text{te}}$. The test distribution $p_{e'}\neq p_{e} \forall e'\in \mathcal{E}_{\text{te}}, ~ \forall e\in \mathcal{E}_{\text{tr}}$. OOD generalization aims to minimize the prediction error over test distributions.

 To understand the obstacles in invariant learning on graphs, we start by examining whether IRMv1 (practical implementation of the original challenging IRM objective, proposed by \cite{arjovsky2019invariant}) and VREx can be successfully transferred to node-level graph OOD tasks. Their objectives are as follows:
\begin{equation}
\begin{aligned}
    \text{(IRMv1)}~&\min_{w, \phi} \mathbb{E}_e[ \mathcal{L}\left( w\circ \phi(X^e) ,Y^e \right) + \beta \|\nabla_{w|w=1.0} \mathcal{L}\left( w\circ \phi(X^e) ,Y^e \right)\|_2^2 ],\\
    \text{(VREx)}~&\min_{w, \phi} \mathbb{E}_e[ \mathcal{L}\left( w\circ \phi(X^e) ,Y^e \right)  ] +\beta \text{Var}_e[\mathcal{L}\left( w\circ \phi(X^e) ,Y^e \right)],
\end{aligned}
\end{equation}
where $w$ and $\phi$ denote the classifier and feature extractor, respectively. $\mathcal{L}$ is the cross-entropy loss. $\beta$ is some hyperparameter.

\subsection{A Causal Data Model on Graphs}
\begin{wrapfigure}{r}{0.35\textwidth}
    \centering
    \begin{subfigure}{0.4\linewidth}
        \centering
        \includegraphics[width=\linewidth]{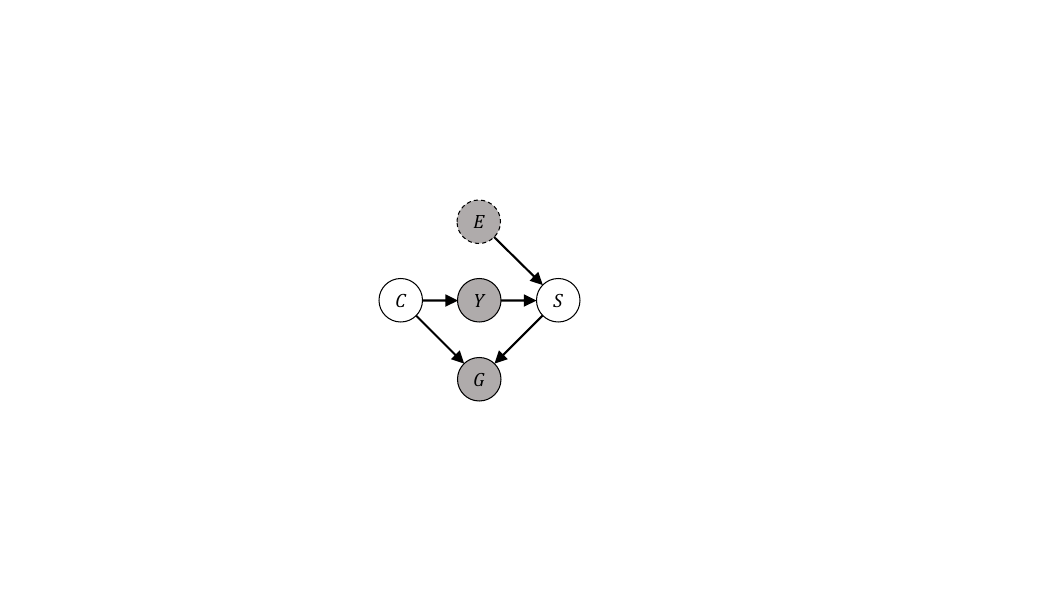}
        \caption{concept shift}
        \label{SCM_con}
    \end{subfigure}%
    \hspace{5mm} %
    \begin{subfigure}{0.4\linewidth}
        \centering
        \includegraphics[width=\linewidth]{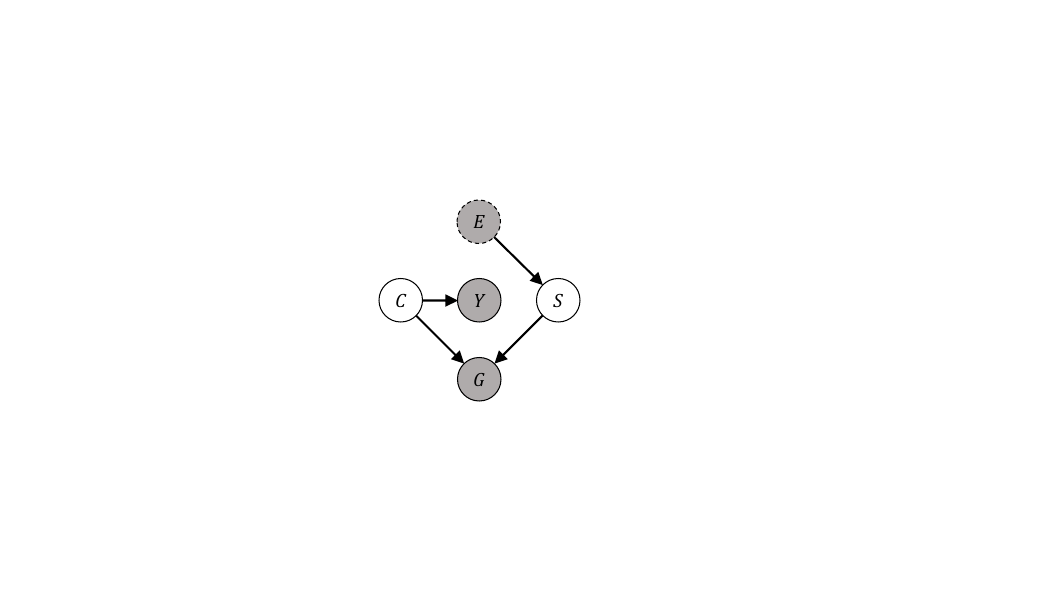}
        \caption{covariate shift}
        \label{SCM_cov}
    \end{subfigure}%
    \caption{Causal graphs of the SCMs considered in our work.}
    \label{SCM_figure}
    \vspace{-10pt}
\end{wrapfigure}
\label{theo_model}
  \textbf{Data generation process.} We construct an SCM to characterize two kinds of distribution shifts: concept shift (Figure \ref{SCM_con}) and covariate shift (Figure \ref{SCM_cov}). $C$ and $S$ denote unobservable causal/spurious latent variables that affect the generation of the graph $G$, and dashed $E$ are environmental variables usually unobservable. We consider a simple case that each node $v$ in environment $e$ has a 2-dim feature $[x_v^1, x_v^2]^\top$, $x_v^1, ~x_v^2\in \mathbb{R}$. Denote the concatenated node features of all nodes in $e$ as  $X_1\in \mathbb{R}^{N^e\times 1}$ and $X_2^e\in \mathbb{R}^{N^e\times 1}$ corresponding to $x^1_v$ and ${x^2_v}$, respectively. For the SCM in Figure \ref{SCM_con}\footnote{Due to space limitation, we only present the concept shift case in the main text and leave the covariate shift case in Appendix \ref{theo_sec_cov}. Our following results in Section \ref{theo_sec} hold for both shifts.}, the data generation process of environment $e$ is
\begin{equation}
\label{data_gene}
\begin{aligned}
    &Y^e=(\tilde{A^e})^kX_1+n_1,~X_2^e=(\tilde{A^e})^mY^e+n_2+{\epsilon^e},
\end{aligned}
\end{equation} where $k\in \mathbb{N}^+$ represents the "depth" (number of hops) of the causal pattern, and $m\in \mathbb{N}^+$ is the depth of the ego-graph determining the spurious node features. $n_1\in \mathbb{R}^{N^e\times 1}$ and $n_2\in\mathbb{R}^{N^e\times 1}$ represent random Gaussian noise. $\epsilon^e$ stands for an environmental variable, causing the spurious correlations between $X_2^e$ and $Y$. A detailed description of the model is in Appendix \ref{theo_model_detail_sec}.

\textbf{How the proposed model considers both node feature shifts and structural shifts?} $X_1$ represent invariant node features causing $Y^e$. $X_2^e$ denotes spurious node features that vary with environments. As for structural shifts, we consider an environmental $\tilde{A^e}$ in Equation (\ref{data_gene}), which means the structure can vary with environments. For example, there could be a spurious correlation between certain structures and the label; or, the graph connectivity or size may shift \citep{buffelli2022sizeshiftreg, xia2024learning}. We model the invariant structural feature as the structure of a node's $k$-layer neighboring ego-graph. See Appendix \ref{sec_structural_shifts} for more discussions of the structural shifts.

We also have the following assumption about the stability of the causal patterns across environments:

\begin{assumption}
\label{assump_stable_causal} \textbf{(Stability of the causal patterns)} 
The $k$-layer causal pattern in Equation (\ref{data_gene}) is invariant across environments for every class $c$.
\end{assumption}

\textbf{A simple multi-layer GNN.} Consider a $L$-layer GNN $f$ parameterized by $\Theta=\left\{\theta_1, \theta_2, {\theta^1_1}^{(l)}, {\theta^2_1}^{(l)}, {\theta^1_2}^{(l)}, {\theta^2_2}^{(l)} \right\}$, $l=1,2,...,L-1$:
\begin{equation}
\label{GNN}
    \begin{aligned}
         & f_{\Theta}(A,X)=H^{(L)}_1\theta_1+H^{(L)}_2\theta_2, ~\text{where}                                                      \\
         &\begin{pmatrix}
            H_1^{(l)} & H_2^{(l)}
        \end{pmatrix}
        =
        \begin{pmatrix}
            \bar{A} & \bar{I}
        \end{pmatrix}
        \begin{pmatrix}
            {\theta^1_1}^{(l-1)} & {\theta^1_2}^{(l-1)} \\
           {\theta^2_1}^{(l-1)} & {\theta^2_2}^{(l-1)}
        \end{pmatrix}
        \begin{pmatrix}
            H_1^{(l-1)} & 0\\
            0 & H_2^{(l-1)}
        \end{pmatrix},
        ~l=2,...,L,~ H^{(1)}_1=X_1, ~H^{(1)}_2=X_2,
    \end{aligned}
\end{equation}
where $\theta_i$ and ${\theta^i_j}^{(l)}$ are scalars for $i,j\in\{1,2\}$, $\forall l$. $H_i^{(l)}\in \mathbb{R}^{N\times D}$ are GNN representations.

\textbf{Remark.} In this GNN, we keep only the top-layer weight matrix $[\theta_1~\theta_2]^\top$, and let the weight matrix of lower layers $1,..., L-1$ be an identity matrix. This architecture resembles an SGC \citep{pmlr-v97-wu19e}. $\theta_1$ and $\theta_2$ are for invariant/spurious features, respectively.  ${\theta^1_1}^{(l)}$, ${\theta^1_2}^{(l)}$ are weights for aggregating features from neighboring nodes and ${\theta^2_1}^{(l)}$, ${\theta^2_2}^{(l)}$ are weights for features of a centered node, this setup can be seen as a GAT. When all lower-layer parameters equal 1, the GNN degenerates to a GCN (see Appendix \ref{sec_structural_shifts} for justification of the choice of the GNN). 

We consider a regression problem that we aim to minimize the MSE loss over all environments $\mathbb{E}_e[ R(e)]=\mathbb{E}_e \left[\mathbb{E}_{n_1,n_2}\left[\left\|f_\Theta({A^e}, X^e)-Y^e\right\|_2^2\right]\right]$. The optimal invariant parameter set $\Theta^*$ is
\begin{equation}
\label{optimal_solu}
    \left\{
    \begin{array}{l}
        \theta_1=1                                                             \\
        \theta_2=0   \quad \text {or}   \quad  \exists l\in \{1,...,L-1\} ~\text{s.t.}~ {\theta_2^1}^{(l)}={\theta_2^2}^{(l)}=0\\
        {\theta^1_1}^{(l)}=1, {\theta^2_1}^{(l)}=1, \quad l= L-1, ..., L-k+1 \\
        {\theta^1_1}^{(l)}=0, {\theta^2_1}^{(l)}=1, \quad l=L-k, L-k-1, ..., 1 \\
    \end{array}
    \right..
\end{equation} In Equation (\ref{optimal_solu}), the GNN parameters for spurious features (line 2) is zero, which means it removes spurious node features. Also, it learns the correct depth $k$ of the causal pattern $\tilde{A}^kX_1$ (line 3-4).

\subsection{Intriguing Failure of VREx and IRM on Graphs}
\label{classic_fail}
Now we are ready to present the failure cases in this node-level OOD task: optimizing IRMv1 and VREx induces a model that relies on spurious features $X_2^e$ to predict, leading to poor OOD generalization. To illustrate that this failure arises from the graph data, we first prove that IRMv1 and VREx can learn invariant features under the non-graph version of SCM of Equation (\ref{data_gene}).

\begin{proposition}
\label{prop_VREx_IRM_suc}
    \textbf{(IRMv1 and VREx can learn invariant features for non-graph tasks, proof is in Appendix \ref{proof_VREx_IRM_suc}.)} For the non-graph version of the SCM in Equation (\ref{data_gene}), 
    \begin{equation}
    \label{non_graph_SCM}
        Y^e=X_1+n_1,~X_2^e=Y^e+n_2+{\epsilon^e},    
    \end{equation}
    VREx and IRMv1 can learn invariant features when using a linear network: $f(X)=\theta_1 X_1 + \theta_2 X_2$.
\end{proposition}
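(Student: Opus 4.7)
The plan is to plug the non-graph SCM of Equation (\ref{non_graph_SCM}) into the population risk $R(e)=\mathbb{E}[(\theta_1 X_1+\theta_2 X_2^e - Y^e)^2]$ and exploit the fact that the only environment-dependent quantity on the right-hand side is $\epsilon^e$. After substitution, the residual reduces to $(\theta_1+\theta_2-1)X_1+(\theta_2-1)n_1+\theta_2 n_2+\theta_2\epsilon^e$, so under the natural assumption that $X_1,n_1,n_2,\epsilon^e$ are mutually independent and mean-zero (with $X_1,n_1,n_2$ having environment-invariant moments while $\mathrm{Var}(\epsilon^e)$ varies across $e$), the risk decomposes into a sum of squared-coefficient terms times variances whose only $e$-dependent term is $\theta_2^2\,\mathrm{Var}(\epsilon^e)$.

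For VREx, this decomposition immediately gives $\mathrm{Var}_e[R(e)]=\theta_2^4\,\mathrm{Var}_e[\mathrm{Var}(\epsilon^e)]$. Assuming at least two training environments with distinct $\mathrm{Var}(\epsilon^e)$, this variance penalty vanishes iff $\theta_2=0$; then minimizing $\mathbb{E}_e[R(e)]$ over $\theta_1$ is a one-dimensional least-squares problem whose unique solution is $\theta_1=1$. So at sufficiently large $\beta$ (or on the feasible set of invariant predictors), the VREx minimizer selects the invariant predictor $(\theta_1,\theta_2)=(1,0)$.

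For IRMv1 I would compute the dummy-classifier gradient $\nabla_{w|w=1}\mathcal{L}=2\,\mathbb{E}[f_\Theta(X^e)(f_\Theta(X^e)-Y^e)]$, using the same substitution to expand $f_\Theta(X^e)=\theta_1 X_1+\theta_2 X_2^e$ and the residual. The resulting expression is again a polynomial in $(\theta_1,\theta_2)$ whose coefficients depend on the invariant moments of $X_1,n_1,n_2$ plus a single term proportional to $\theta_2^2\,\mathrm{Var}(\epsilon^e)$. Demanding that this gradient vanish for every training environment with distinct $\mathrm{Var}(\epsilon^e)$ forces $\theta_2=0$; substituting this back into the remaining ERM equation recovers $\theta_1=1$.

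The main obstacle is not computational but conceptual: it lies in verifying that $\epsilon^e$ is truly the \emph{only} source of environment dependence in the risk. This is what drives both penalties to zero at the invariant solution, and the argument would break if, say, $X_1$ had environment-dependent variance, or if all training environments shared the same $\mathrm{Var}(\epsilon^e)$. Making these assumptions explicit is also what sets up the contrast with the graph setting in Section \ref{classic_fail}, where additional aggregation parameters generate new degrees of freedom that couple to $\epsilon^e$ and spoil the uniqueness of the invariant minimizer.
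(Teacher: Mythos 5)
Your core mechanism is right, and your IRMv1 argument is essentially the paper's: both demand $\nabla_{w|w=1}R(e)=0$ in every environment, observe that every environment-dependent contribution to this gradient carries a factor of $\theta_2$ (in the paper, terms in $X_1^\top\epsilon^e$, $\epsilon^{e\top}\epsilon^e$ and $N^e$), and conclude that no single $\theta_2\neq 0$ can satisfy all environments, after which the remaining equation gives $\theta_1=1$. For VREx, however, you take a genuinely different route. You argue globally: under your assumptions the penalty collapses to $\theta_2^4\,\mathrm{Var}_e[\mathrm{Var}(\epsilon^e)]$, so it is minimized (to zero) iff $\theta_2=0$, and the ERM term then pins $\theta_1=1$. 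The paper instead works with first-order conditions of $\mathbb{V}_e[R(e)]$ itself: after a long cancellation it finds $\partial\mathbb{V}_e[R(e)]/\partial\theta_1=\theta_2^3\,X_1^\top\mathbb{E}_e[\epsilon^{e\top}\epsilon^e\epsilon^e]$, forcing $\theta_2=0$ via a nonvanishing cross-environment \emph{third} moment, and then verifies $\partial\mathbb{V}_e[R(e)]/\partial\theta_2=0$ at $\theta_2=0$. Your version is cleaner, but be aware it rests on a slightly different probabilistic reading of the SCM than the paper's: in the paper $\epsilon^e$ is an environment-level quantity with only cross-environment moments specified ($\mathbb{E}_e[\epsilon^e]=0$, $\mathbb{E}_e[(\epsilon^e_i)^2]=\sigma^2$; intra-environment it acts as a nonzero offset, cf.\ the covariate appendix where its intra-environment mean is $\mu^e$), and the environment sizes $N^e$ may vary. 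Then $R(e)$ also depends on $e$ through $X_1^\top\epsilon^e$ and $N^e$, so your premise that the only $e$-dependent term is $\theta_2^2\mathrm{Var}(\epsilon^e)$ does not literally hold, the penalty is not exactly $\theta_2^4$ times a constant, and at $\theta_2=0$ it need not vanish at all (the $N^e$ term survives); this is precisely why the paper argues through stationarity and why its heterogeneity condition is a nonzero third moment rather than your ``at least two environments with distinct $\mathrm{Var}(\epsilon^e)$.'' So: correct under your explicitly stated assumptions, same argument as the paper for IRMv1, a different (global, large-$\beta$/constrained) argument for VREx whose assumptions you would need to reconcile with the paper's data model to claim the proposition exactly as stated.
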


Now we will give the main theorem revealing the failure of VREx and IRMv1 on graphs. 
\begin{theorem}
    \label{prop1}
    \textbf{(IRMv1 and VREx will use spurious features on graphs, informal)} Under the SCM of Equation (\ref{data_gene}), the IRMv1 and VREx objectives have non-unique solutions for parameters of the GNN (\ref{GNN}), and there exist solutions that use spurious features, i.e. $\theta_2\neq 0$.
\end{theorem}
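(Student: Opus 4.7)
The plan is to prove non-uniqueness by exploiting two structural redundancies of the GNN in Equation~(\ref{GNN}): (i) the $2\times 2$ per-layer weight matrix acts through a block-diagonal composition that decouples the $X_1$ and $X_2^e$ propagation paths, and (ii) per-layer scalars along each path compose multiplicatively, so zeroing any single layer along a path annihilates the whole contribution.

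First I would unroll the recursion to obtain closed forms $H_i^{(L)} = P_i^{(e)} X_i$ with
\[
    P_i^{(e)} \;=\; \prod_{l=1}^{L-1}\bigl(\bar{A}^e\, {\theta^1_i}^{(l)} + \bar{I}^e\, {\theta^2_i}^{(l)}\bigr),\qquad i\in\{1,2\}.
\]
Substituting the data model of Equation~(\ref{data_gene}) gives the residual
\[
    f_\Theta - Y^e = \bigl[\theta_1 P_1^{(e)} + \theta_2 P_2^{(e)}(\tilde{A}^e)^{m+k} - (\tilde{A}^e)^k\bigr] X_1 + \bigl[\theta_2 P_2^{(e)}(\tilde{A}^e)^m - I\bigr] n_1 + \theta_2 P_2^{(e)}(n_2 + \epsilon^e),
\]
and taking expectations over $(n_1,n_2)$ yields $R(e)$ in closed form; the only term that carries the environmental variable $\epsilon^e$ is $\theta_2 P_2^{(e)}\epsilon^e$. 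I would then verify that the invariant configuration of Equation~(\ref{optimal_solu}), which makes $P_1^{(e)} = (\tilde{A}^e)^k$ and $\theta_2=0$, collapses the residual to $-n_1$: its risk is constant across $e$, so $\mathrm{Var}_e[R(e)]=0$ (VREx), and the IRMv1 gradient vanishes because $-n_1$ is orthogonal in expectation to the prediction.

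Second, I would exhibit an explicit family of alternate optima with $\theta_2\neq 0$. Since $P_2^{(e)}$ is a product over layers, setting $({\theta^1_2}^{(l_0)}, {\theta^2_2}^{(l_0)}) = (0,0)$ for any single layer $l_0$ forces $P_2^{(e)}\equiv 0$, so the predictor equals the invariant one regardless of $\theta_2$. Consequently, for every $\eta\in\mathbb{R}$, the parameter point obtained from the invariant solution by resetting $\theta_2\leftarrow\eta$ and zeroing one layer of the spurious branch attains the same values of the ERM, IRMv1, and VREx objectives; picking $\eta\neq 0$ delivers the claim of the theorem exactly as stated (``non-unique solutions \ldots with $\theta_2\neq 0$'').

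The delicate issue, which I expect to be the main obstacle if one reads the theorem in the stronger sense of $\theta_2 P_2^{(e)} X_2^e \not\equiv 0$, is to produce an invariant-learning optimum in which the spurious branch truly contributes to the prediction. Here I would perturb a representative point from the above family, turning on small ${\theta^1_2}^{(l_0)}, {\theta^2_2}^{(l_0)}$ while compensating through the causal-branch weights ${\theta^i_1}^{(l)}$, and then solve the resulting invariance equations. A degrees-of-freedom count is favourable: the GNN has $2+4(L-1)$ scalar parameters, whereas VREx imposes $|\mathcal{E}_{\text{tr}}|-1$ and IRMv1 imposes $|\mathcal{E}_{\text{tr}}|$ scalar equations, so the feasible set is generically positive-dimensional. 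The technical work is to rewrite these equations in terms of moments of $\epsilon^e$ and spectral quantities of $\tilde{A}^e$, and to verify a non-degeneracy condition guaranteeing that the solution set meets $\{\theta_2 P_2^{(e)}\neq 0\}$.
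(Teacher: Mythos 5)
Your second step does not prove what the theorem is about. The family you exhibit --- zero out one layer of the spurious branch so that $P_2^{(e)}\equiv 0$ and then set $\theta_2=\eta\neq 0$ --- is precisely an instance of the paper's \emph{optimal invariant} solution: line 2 of Equation~(\ref{optimal_solu}) explicitly allows ``$\theta_2=0$ \emph{or} $\exists l$ s.t. ${\theta_2^1}^{(l)}={\theta_2^2}^{(l)}=0$''. In such configurations the predictor never touches $X_2^e$, so while you obtain ``non-unique solutions'' with $\theta_2\neq 0$ in a literal reading, you have not produced a solution that \emph{uses spurious features}, which is the substantive content of the theorem (and of its formal versions, Theorems~\ref{prop1_formal} and~\ref{prop2_formal}). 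You recognize this yourself, but the part that would close the gap --- an optimum at which $\theta_2 P_2^{(e)} X_2^e\not\equiv 0$ --- is only sketched via a degrees-of-freedom count and an unverified ``non-degeneracy condition''. That count is also not the right accounting: the relevant conditions are the first-order stationarity equations of the VREx/IRMv1 objectives in \emph{all} GNN parameters (plus, for IRMv1, the per-environment constraint $\nabla_w R(e)=0$ if one asks for exact feasibility), not merely $|\mathcal{E}_{\text{tr}}|$ scalar equations, so positive-dimensionality of a feasible set intersecting $\{\theta_2 P_2^{(e)}\neq 0\}$ does not follow from your dimension comparison.

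The paper's route is different and is the step you are missing. It fixes the lower layers at $\Theta_0$ of Equation~(\ref{f1}): the spurious branch is an \emph{identity} map (so $X_2^e=(\tilde A^e)^{k+m}X_1+n_2+\epsilon^e$ passes through, ${C_2^e}'=I$), and the invariant branch aggregates to depth $s$ which need not equal $k$. It then shows (Appendix~\ref{spec_fail_sec}) that at $\Theta_0$ the gradients with respect to the lower-layer parameters are proportional to the top-layer gradients, so stationarity of the whole network reduces to two equations in $(\theta_1,\theta_2)$; neither objective supplies supervision forcing $s=k$. Finally it computes those equations explicitly: for IRMv1, when $s\neq k$ the invariant choice $\theta_2=0$ is \emph{infeasible} because $\nabla_w R(e)=0$ would force $\theta_1$ to depend on $e$ (Equation~(\ref{illu_t1_solu2})), and the resulting cubic system admits $\theta_2\neq 0$; for VREx, the cubic system~(\ref{t1t2solu}) likewise has roots with $\theta_2\neq 0$. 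To repair your proposal you would need to carry out exactly this kind of explicit computation at a configuration where the spurious branch is active, rather than appealing to genericity.
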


\textbf{Intuitive illustration of the failure.} From Theorem \ref{prop1}, we find that the main reason for the failure lies in the message-passing mechanism in representation learning. Let's provide some key steps in the proof of the IRMv1 case as an illustration. For the non-graph OOD task Equation (\ref{non_graph_SCM}), we can verify that when IRMv1 objective is solved, i.e. $\nabla_w R(e)=0$ for all $e$, the invariant solution $\theta_2=0$ leads to $(\theta_1)^2X_1^\top X_1-\theta_1X_1^\top X_1=0$,
which can be satisfied when $\theta=1$. However, in the graph case, $\theta_2=0$ leads to 
\begin{equation}
\label{illu_t1_solu2}
    (\theta_1)^2 ((\tilde{A^e})^{s}X_1)^\top (\tilde{A^e})^sX_1 - \theta_1 ((\tilde{A^e})^{k}X_1)^\top (\tilde{A^e})^sX_1=0, ~\forall e,
\end{equation}
where $(\tilde{A^e})^{s}X_1$ is the learned representation of the GNN, $0<s\leq L$. $k$ is the depth of the causal pattern. Now we explain why the invariant solution may not hold on graphs. When the depth of the learned aggregation pattern $s\neq k$, Equation (\ref{illu_t1_solu2}) cannot hold for a fixed $\theta_1$ (since $\theta_1$ will depend on $e$ then). This means that identifying the underlying structure of the causal pattern imposes additional difficulty for invariant learning.  Moreover, even if the GNN can learn representations of different depths (e.g. GAT)\footnote{If the GNN is a GCN that has a fixed aggregation depth $L$, i.e. $s=L$, it will be even impossible to learn the true causal pattern if we choose an $L\neq k$ in advance.}, the proof in Appendix \ref{proof_irm_con_sec} shows that IRM failed to provide sufficient supervision to optimize the aggregation parameters $\theta^i_j$, $i,j\in \{1,2\}$ such that $s=k$. A similar analysis holds for VREx. In general, successful invariant learning on graphs requires capturing both invariant node features and the structure of the causal pattern, while methods like IRM and VREx that solely enforce a cross-environment invariance at the loss level\footnote{IRM minimizes the loss gradient w.r.t. the classifier in each environment, and VREx minimizes the loss variance across environments.} may not be able to achieve these goals. The formal versions and proof are in Appendix \ref{proof_IRM} (IRM) and \ref{proof_VREx} (VREx).%

\section{The Proposed Methods}
\subsection{Cross-environment Intra-class Alignment}
\label{CIA_theo}
Inspired by the examples of VREx and IRMv1, we aim to introduce additional invariance regularization to guide the model in identifying the underlying invariant node features and structures. We propose CIA (Cross-environment Intra-class Alignment), which aligns the representations from the same class across different environments. Intuitively, since such node pairs share similar invariant features and causal pattern structures while differ in spurious features, aligning their representations will help achieve our targets. Denote the representation of node $i$ as $\phi_\Theta(i)$ and the classifier parameterized by $\theta_h$ as $h_{\theta_h}$ CIA's objective is:
\begin{equation}
    \begin{aligned}
        &\min_{\theta_h, \Theta} ~ \mathbb{E}_e\left[\mathcal{L}(h_{\theta_h}\circ \phi_\Theta({A^e},X^e), Y^e)\right] \quad \text{s.t.} ~ &\min_\Theta ~\mathcal{L}_{\text{CIA}}= \mathbb{E}_{\substack{e, e'\\e\neq e'}}  \mathbb{E}_{c} \mathbb{E}_{\substack{i,j\\(i,j)\in \Omega^{e,e'}_c}}\left[\mathcal{D}(\phi_\Theta(i), \phi_\Theta(j))\right]
    \end{aligned}
\end{equation}
where $\Omega^{e,e'}_c=\{(i,j)|i\neq j \land Y^e_i=Y^{e'}_j=c \land E_i=e,~E_j=e' \}$ is the set of nodes with same label $c$ and from two different environments. $\mathcal{L}$ is the cross-entropy loss.  $\mathcal{D}$ is some distance metric and we adopt L-2 distance. Now we prove that CIA can learn invariant representations regardless of the unknown causal patterns:
\begin{theorem}
    \label{prop3}
    Under the SCM of Equation (\ref{data_gene}) and Assumption \ref{assump_stable_causal}, optimizing the CIA objective will lead to the optimal invariant solution $\Theta^*$ in Equation (\ref{optimal_solu}) for parameters of the GNN (\ref{GNN}).
\end{theorem}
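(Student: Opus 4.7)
The plan is to identify the zero-loss configurations of the CIA constraint, intersect them with the outer ERM objective, and show the intersection is precisely $\Theta^*$. Because the two feature channels evolve independently through the block structure of the GNN in Equation (\ref{GNN}), the scalar output at node $i$ decomposes as $\phi_\Theta(i) = [P_1(\bar{A^e}) X_1^e]_i\, \theta_1 + [P_2(\bar{A^e}) X_2^e]_i\, \theta_2$, where $P_1, P_2$ are polynomials in $\bar{A^e}$ and $\bar{I^e}$ whose coefficients are read off from the layerwise weights ${\theta^i_j}^{(l)}$. CIA vanishes iff this scalar is identical for every cross-environment same-class pair $(i,j)\in \Omega^{e,e'}_c$, and the outer ERM loss supplies the scale/sign constraint needed to fix $\theta_1$ and rule out degenerate aggregation depths.

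First I would kill the spurious channel. For any same-class pair $(i,j)$ across environments $e\neq e'$, the spurious contribution $[P_2(\bar{A^e}) X_2^e]_i\theta_2 - [P_2(\bar{A^{e'}}) X_2^{e'}]_j\theta_2$ carries the environmental shift $\epsilon^e - \epsilon^{e'}$ from the SCM of Equation (\ref{data_gene}). Since this shift is nondegenerate by construction, enforcing equality across all such pairs forces the spurious channel to be annihilated, which corresponds either to $\theta_2 = 0$ at the readout, or to ${\theta^1_2}^{(l)} = {\theta^2_2}^{(l)} = 0$ at some intermediate layer $l$---precisely the second line of Equation (\ref{optimal_solu}).

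Next, with the spurious branch silenced, CIA reduces to $[P_1(\bar{A^e})X_1^e]_i = [P_1(\bar{A^{e'}})X_1^{e'}]_j$ for all same-class cross-environment pairs. By Assumption \ref{assump_stable_causal}, the $k$-hop ego-graph rooted at a class-$c$ node is invariant across environments whereas higher-order structure is not; hence any $P_1$ whose support extends beyond radius $k$ breaks alignment on generic environmental realizations. Combining this with the outer ERM objective, which demands $\theta_1 [P_1 X_1^e]_i \approx Y^e_i = [(\tilde{A^e})^k X_1]_i + n_1$, then rules out radii strictly less than $k$: a single scalar $\theta_1$ cannot simultaneously rescale a shallower aggregate to the correct class value for multiple classes. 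Matching radius exactly $k$ pins down $P_1 = \tilde{A}^k$, and reading off the layerwise weights that realize this under the GNN of Equation (\ref{GNN}) recovers lines 3--4 of Equation (\ref{optimal_solu}) and fixes $\theta_1 = 1$.

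The main obstacle I anticipate is the depth-matching step: excluding shallow polynomials $P_1$ that yield class-constant aggregates and excluding deep polynomials that coincidentally align on the observed realizations. My plan is to exploit (i) across-class variation of $X_1$ together with the linearity of ERM in $\theta_1$, so that a single classifier weight cannot rescale an off-depth aggregate to the label on all classes at once unless $P_1 = \tilde{A}^k$, and (ii) the random degree/connectivity fluctuations of $\bar{A^e}$ outside the invariant causal pattern, which together with the nonzero support of $n_1$ ensure that any depth mismatch leaves a cross-environment residual CIA cannot absorb. Isolating $\Theta^*$ as the unique CIA-plus-ERM optimum then follows.
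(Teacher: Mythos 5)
Your proposal is correct and follows essentially the same two-stage route as the paper's proof: the CIA constraint annihilates the spurious branch (giving $\theta_2=0$ or a zeroed spurious layer, line 2 of Equation (\ref{optimal_solu})), and the outer ERM term then fixes $\theta_1=1$ and the depth-$k$ aggregation weights (lines 3--4). The only difference is technical dressing: the paper obtains the spurious-branch elimination from the first-order conditions of $\mathcal{L}_{\text{CIA}}$ plus a Cauchy--Schwarz argument showing the quadratic form multiplying $\theta_2$ is strictly positive unless ${\theta_2^1}^{(l)}={\theta_2^2}^{(l)}=0$ for some $l$, while you argue via exact alignment and nondegeneracy of $\epsilon^e-\epsilon^{e'}$, and it likewise pins the invariant depth through the ERM stationarity rather than through CIA itself.
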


The proof is in Appendix \ref{proof_CIA}. By enforcing class-conditional invariance, which is not considered in VREx and IRMv1, CIA overcomes the above obstacles and eliminates spurious features. As long as a GNN has the capacity to adaptively learn the true depth of the causal pattern (such as the one considered in Equation (\ref{GNN})) or a GAT), CIA can identify the invariant causal pattern. 

\textbf{Remark.} One might note that the objective of CIA is analogous to MatchDG \citep{mahajan2021domain}. However, we are the first to adapt such an idea to node-level OOD tasks and theoretically reveal its advantage. In Appendix \ref{related_inv_learning}, we compare our extension and the original MatchDG in detail.

\subsection{Localized Reweighting Alignment: an Adaptation to Graphs without Environment Labels}
\label{CIA-LRA_method_sec}

\begin{figure}[htbp]
  \centering
  \adjustbox{center}{
  \includegraphics[scale=0.4]{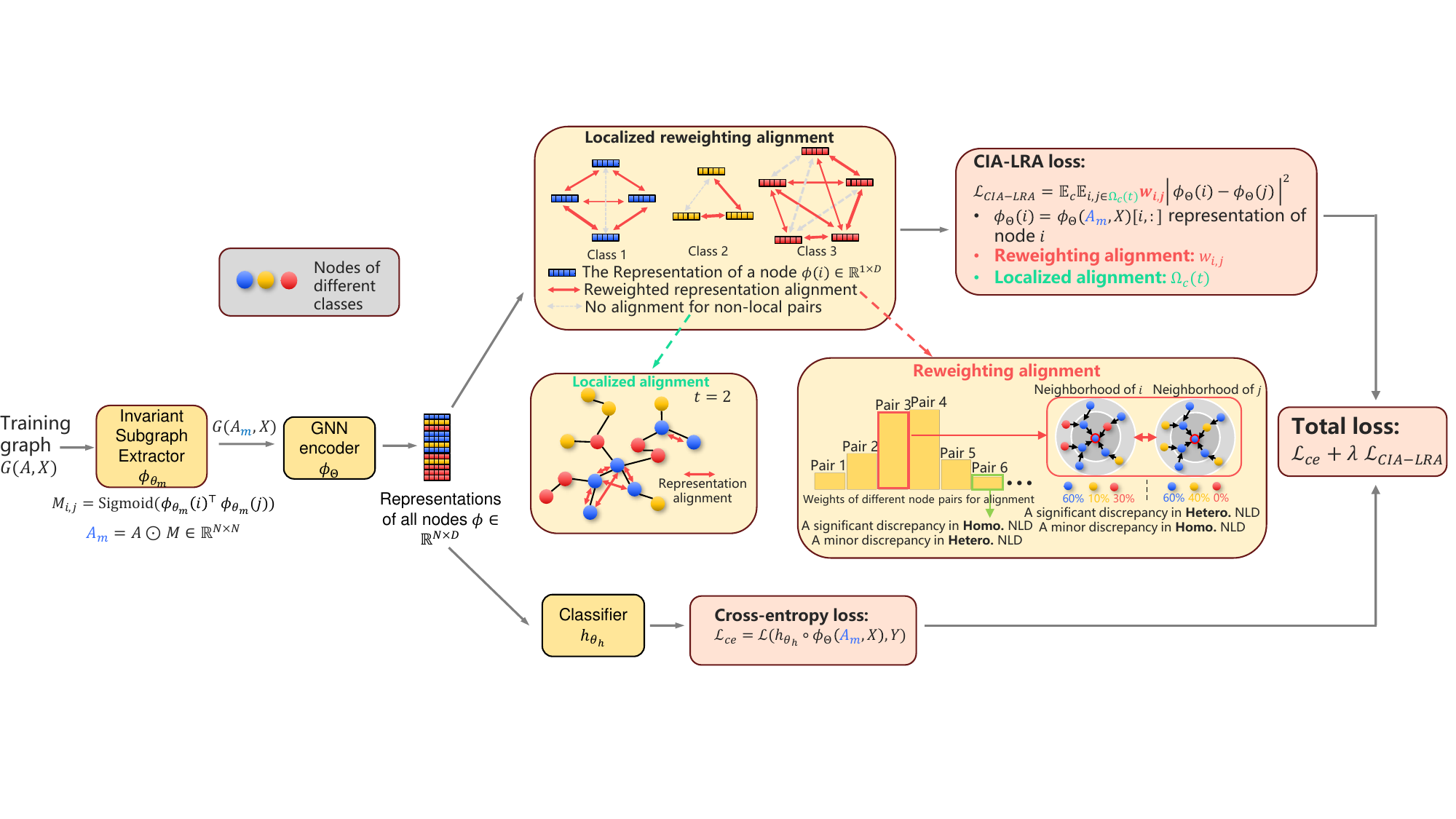}
  }
  \caption{The overall framework of our proposed CIA-LRA. The invariant subgraph extractor $\phi_{\theta_m}$ identifies the invariant subgraph for each node. Then the GNN encoder $\phi_{\Theta}$ aggregates information from the estimated invariant subgraphs to output node representations. CIA-LRA mainly contains two strategies: localized alignment and reweighting alignment. Localized alignment: we restrict the alignment to a local range to avoid overalignment that may cause the collapse of invariant features (shown in Appendix \ref{app:excessive-align-collapse}). Reweighting alignment: to better eliminate spurious features and preserve invariant features without using environment labels, we assign large weights to node pairs with significant discrepancies in heterophilic Neighborhood Label Distribution (NLD) and minor discrepancies in homophilic NLD. See Section \ref{CIA-LRA_method_sec} for a detailed analysis of CIA-LRA. 
  }
  \label{fig:methodIllu}
\end{figure}
\vspace{-0.2cm} %

So far, we have theoretically and empirically validated CIA's advantage on graphs, but it still requires environmental labels that are challenging to obtain in most node classification tasks \citep{wu2021handling, liu2023flood,li2023invariant}. In this section, we propose CIA-LRA (Localized Reweighting Alignment) that realizes CIA's objective without using environment labels by identifying node pairs with significant/minor differences in spurious/invariant features and then aligning their representations. As illustrated in Figure \ref{fig:methodIllu}, CIA-LRA mainly incorporates three components:

\textbf{Localized alignment.} To avoid learning a collapsed representation of invariant features, it is crucial to align node pairs that share similar invariant features. To achieve this, we align nodes close to each other (about 2 to 6 hops). This is based on two observations. First, we observe that spurious features tend to exhibit larger changes within local graph areas than invariant ones, and nodes from the same class that are too distant may differ more in their invariant features than the closer ones (evidence in Appendix \ref{intu1}). This is because invariant features are generally more stable than spurious ones, according to \cite{chen2022learning, scholkopf2021toward}. Second, we empirically find that alignment over an extensive range or too many nodes yields only marginal performance improvements, or even leads to performance degradation (see Appendix \ref{app:excessive-align-collapse}), while increasing computational costs. This may also be attributed to the feature collapse caused by excessive alignment of too many node pairs. Formally, the local pairs are defined as $\Omega_c(t)=\{(i,j)|i\neq j \land Y_i=Y_j=c \land d(i,j)\leq t\}$, where $d(i,j)$ represents the the shortest path length from node $v_i$ to $v_j$, $ t\in \mathbb{N}^+$ is a hyperparameter. Also, we propose to assign smaller weights to pairs more distant away from each other. 

\textbf{Reweighting Alignment.} Since environment labels are unavailable, we need a metric to reflect the distribution of the spurious and invariant representations so that node pairs with significant/small differences in spurious/invariant features can be identified. Since we assume the causal patterns of the same class are similar (Assumption \ref{assump_stable_causal}), the label distribution of homophilic (i.e., same-class) neighbors directly affects the invariant features aggregated to the centered node (empirical evidence in Appendix \ref{reflect_inv}). Therefore, pairs with smaller differences in the ratio of homophilic neighbors should be assigned larger weights for alignment. The ratio discrepancy can be calculated as follows: $r^{\text{same}}(c)_{i,j}=\left |r^c_i-r^c_j \right |$, where $r^c_i$ is the ratio of the neighbors of $v_i$ of class $c$ within $L$ hops ($L$ is the number of layers of the GNN). As for spurious features, we utilize \textit{Heterophilic} (i.e., different-class) \textit{Neighborhood Label Distribution} (HeteNLD) as a measurement, as it affects the two kinds of main distributional shifts on graphs: 1) environmental node feature shifts, and 2) \textit{Neighborhood Label Distribution} (NLD) shift (both empirically verified in Appendix \ref{reflect_sp}). HeteNLD determines the first kind of shift when correlations exist between labels and spurious node features, e.g., concept shift. The second kind, NLD shift, which is affected by HeteNLD, can be regarded as a structural shift as the discrepancy in neighborhood distribution will induce a gap in the aggregated representations (Theorem \ref{bound} shows this shift increases OOD error). Although aligning the representations significantly differing in homophilic neighbor ratio mitigates these two kinds of shifts, it also leads to the collapsed invariant representations and suboptimal performance (Table \ref{ablation_table} shows this effect). Therefore we assign larger weights to the pair with a larger discrepancy in HeteNLD when alignment. The discrepancy in HeteNLD is calculated as follows: $r^{\text{diff}}(c)_{i,j}=\sum_{c'\neq c}\left | r^{c'}_i-r^{c'}_j\right |$.  

\textbf{Invariant Subgraph Extractor.}
In Section \ref{theo_model}, the structural invariant features are defined as the $k$-hop neighboring ego-graph for ease of analysis. However, in practice, the invariant structure may merely be a subgraph of the neighborhood nodes. To better capture the invariant ego subgraph, we train an invariant subgraph extractor inspired by \cite{li2023invariant}. Concretely, we learn an auxiliary GNN encoder $\phi_{\theta_m}$ (parameterized by $\theta_m$) to predict an soft edge mask $M\in \mathbb{R}^{N\times N}$, and then apply it during training and test:
\begin{equation}
\label{edge_mask}
    M_{i,j}=\text{Sigmoid}(\phi_{\theta_m}(i)^\top\phi_{\theta_m}(j)), ~A_{m}=A\odot M~, \text{where}~\odot~\text{is sample-wise multiplication.}
\end{equation}

Now we are ready to present the formal objective of CIA-LRA: 
\begin{equation}
\label{LoReCIA}
\begin{aligned}
    &\min_{\theta_h, \Theta, \theta_m} ~ \mathcal{L}(h_{\theta_h}\circ \phi_\Theta({A_m},X), Y) 
    ~ \text{s.t.} ~ \min_{\Theta,\theta_m} ~ \mathbb{E}_{c}\mathbb{E}_{\substack{i,j\\(i,j)\in \Omega_c(t)}}\left[w_{i,j}\mathcal{D}(\phi_\Theta(i), \phi_\Theta(j))\right],\\
    &\text{where}~w_{i,j}=\text{Norm}\left(\frac{r^{\text{diff}}(c)_{i,j}}{d(i,j) r^{\text{same}}(c)_{i,j}}\right), ~\text{$\text{Norm}(\cdot)$ denotes a Min-Max normalization.}
\end{aligned}
\end{equation}

 In Equation (\ref{LoReCIA}), $\mathcal{L}$ is the cross-entropy loss, $\Omega_c(t)$, $r^{\text{diff}}(c)_{i,j}$, $r^{\text{same}}(c)_{i,j}$ and $d(i,j)$ are defined in the above analysis. In practice, we use CIA-LRA as a regularization term added to the cross-entropy loss with a weight $\lambda$ as a hyperparameter. The detailed implementation of CIA-LRA is in Appendix \ref{algo_sec}.

\section{Theoretical Justification: an OOD Generalization Error Bound}
\label{bound_sec}
Now will derive an OOD generalization error bound to show that optimizing CIA-LRA can minimize OOD error. To achieve this, we adopt a PAC-Bayesian framework following \cite{ma2021subgroup} and establish a Contextual Stochastic Block Model (CSBM, 
\citep{deshpande2018contextual}) for OOD multi-classification. The proposed CSBM-OOD is as follows (more discussions are in Appendix \ref{discussion_CSBM}):
\begin{definition}
\label{csbm}
    \textbf{(CSBM-OOD).} For node $i$ of class $c$ from environment $e$, its node feature $x_i\in \mathbb{R}^{D}$ consists of two parts, $x_i=[x_{\text{inv}}^\top; x_{\text{sp}}^\top]^\top$, 
    where $x_{\text{inv}}\in \mathbb{R}^{\frac{D}{2}}$ sampled from the Gaussian distribution $\mathcal{N}(\mu_c,\sigma^2I)$ is the invariant feature and 
    $x_{\text{sp}}\in \mathbb{R}^{\frac{D}{2}}$ sampled from the $\mathcal{N}(\mu_c^e,\sigma^2I)$ is the spurious feature.  \footnote{For OOD scenarios that spurious features are not correlated with $Y$, we just need to let $\mu_c^e$ are the same for all $c$ in the environment $e$, 
    so this this definition is without loss of generality.} Suppose $\{\mu_c\}$ and $\{\mu_c^e\}$ for all $c$ and $e$ form sets of orthonormal basis. We use $p^{\text{hm}}_i$ to denote the 
    homophilic ratio of node $i$'s one-hop neighbors and use $p^{\text{ht}}_i(c')$ to denote the heterophilic ratio of node $i$'s one-hop neighbors 
    of class $c'$ ($c\neq c'$). We assume $\text{Pr}(y_i=c)$ are the same for all classes $c$.
\end{definition}

\textbf{The GNN model used for deriving the error bound (following \cite{ma2021subgroup}):} The GNN model has a 1-layer mean aggregation $g$ that outputs the aggregated feature $g_i\in \mathbb{R}^D$ for 
node $i$. The GNN classifier $h$ on top of $g$ is a ReLU-activated $L$-layer MLP with $W_1, . . . , W_L$ as parameters for each layer. $h$ is from a function family $\mathcal{H}$. The prediction for node $i$
is $h_i\in \mathbb{R}^C $ with $h_i[c]$ representing the predicted logit for class $c$. Denote the largest width of all the hidden layers as $b$.

\textbf{Notations.} Denote nodes of environment $e$ as $V_e$. We consider the error of generalizing from a mixed training environment $e^{\text{tr}}$ to any test environment $e^{\text{te}}\in \mathcal{E}_{\text{te}}$, where $V_{e^{\text{tr}}}:=\cup_{e\in \mathcal{E}_{\text{tr}}} V_e$ represents all training nodes.
To guarantee the generalization, we need to characterize the distance between  $V_{e^{\text{te}}}$ and $V_{e^{\text{tr}}}$:
define $\epsilon_{e^{\text{te}},e^{\text{tr}}}=\max_{j \in V_{e^{\text{te}}}}\min_{i\in V_{e^{\text{tr}}}}\|g_i-g_j\|_2$ as the aggregated feature distance between the training and test subgroup. Define the number of nodes in environment $e$ as $N_e$. We consider the margin loss of environment $e$ that is used by \cite{ma2021subgroup, mao2023demystifying}: 
$\widehat{\mathcal{L}}_e^\gamma(h):=\frac{1}{N^e} \sum_{v_i \in V_e} 
    \mathbf{1}\left[h_i\left[y_i\right] \leq \gamma+\max _{c \neq y_i} h_i[c]\right]$. 

Now we introduce some assumptions adapted from \cite{ma2021subgroup} that are used in our proof.

\begin{assumption}
\label{ass:ma-ass-2}
    (Equal-sized and disjoint near sets, adapted from Assumption 2 of \cite{ma2021subgroup}) For each node $v_i\in V_{e^{\text{tr}}}$, define $V_{e^{\text{te}}}^{(i)}:=\left\{j \in V_{e^{\text{te}}} \mid\left\|g_i-g_j\right\|_2 \leq \epsilon_{e^{\text{tr}},e^{\text{te}}}\right\}$. For any test environment $e^{\text{te}}$, assume $V_{e^{\text{te}}}^{(i)}$ of each $v_i\in V_{e^{\text{tr}}}$  are disjoint and have the same size $N_{e^{\text{te}}}\in \mathbb{N}^+$.
\end{assumption}

\begin{assumption}
    \label{ass:ma-ass-3} (concentrated expected loss difference, adapted from Assumption 3 of \cite{ma2021subgroup}) Let $P$ be a distribution on $\mathcal{H}$, defined by sampling the vectorized MLP parameters from $\mathcal{N}\left(0, \sigma^2 I\right)$ for some $\sigma^2 \leq \frac{\left(\gamma / 8\epsilon_{e^{\text{te}},e^{\text{tr}}}\right)^{2 / L}}{2 b\left(\lambda N_{e^{\text{tr}}}^{-\alpha}+\ln 2 b L\right)}$. For any $L$ layer GNN classifier $h \in \mathcal{H}$ with model parameters $W_1^h, \ldots, W_L^h$, define $T_h:=\max _{l=1, \ldots, L}\left\|W_l\right\|_2$. Assume that there exists some $0<\alpha<\frac{1}{4}$ satisfying
$$
\operatorname{Pr}_{h \sim P}\left(\mathcal{L}_{e^{\text{te}}}^{\gamma / 4}(h)-\mathcal{L}_{e^{\text{tr}}}^{\gamma / 2}(h)>N_{e^{\text{tr}}}^{-\alpha}+H C \epsilon_{e^{\text{te}},e^{\text{tr}}} \left\lvert\, T_h^L \epsilon_{e^{\text{te}},e^{\text{tr}}}>\frac{\gamma}{8}\right.\right) \leq e^{-N_{e^{\text{tr}}}^{2 \alpha}}
$$

\end{assumption}

Now we present the node-level OOD generalization bound (proof in Appendix \ref{bound_proof}):

\begin{theorem}
\label{bound}
\textbf{ (Subgroup OOD Generalization Bound for GNNs, informal)}. Let $\tilde{h}$ be any classifier in a function family $\mathcal{H}$ with parameters 
$\left\{\widetilde{W}_l\right\}_{l=1}^L$. 
Under Assumption \ref{ass:ma-ass-2} and \ref{ass:ma-ass-3}, for any $e^{\text{te}}\in \mathcal{E}_{\text{te}}, \gamma \geq 0$, 
and large enough $N_{e^{\text{tr}}}$, there exist $0<\alpha<\frac{1}{4}$ with probability at least $1-\delta$, we have\begin{equation}
    \begin{aligned}
        &\mathcal{L}_{e^{\text{te}}}^0(\tilde{h}) \leq \widehat{\mathcal{L}}_{e^{\text{tr}}}^\gamma(\tilde{h})
        +O( \underbrace{\frac{1}{\sigma^2}(\sum^{C}_{c=1} \sum_{c'\neq c}(\sqrt{|[(\mu_c-\mu_{c'})^\top;(\mu_c^{e^\text{te}}-\mu_{c'}^{e^\text{te}})^\top]|})\epsilon_{e^{\text{te}},e^{\text{tr}}}}_{\textbf{(a)}}\\
        + & \underbrace{2\sum_{c=1}^C(C-1)B_{e^{\text{te}}}|\mu_c^{e^\text{te}}-\mu_c^{e^\text{tr}}|)}_{\textbf{(b)}} 
        + \underbrace{ \frac{1}{2 \sigma^2} \frac{1}{N_{e^{\text{tr}}}}\sum_{i\in V_{e^{\text{tr}}}}\frac{1}{N_{e^{\text{te}}}}\sum_{j \in V_{e^{\text{te}}}^{(i)}}\sum^{C}_{c=1} \sum_{c'\neq c}
            |p^{\text{ht}}_j(c'|c)-p^{\text{ht}}_i(c'|c)|}_{\textbf{(c)}})+const\\
    \end{aligned}
\end{equation}
where $p^{\text{ht}}_i(c'|c)$ is the ratio of heterophilic neighbors of class $c'$ when $y_i=c$, $B_{e^{\text{te}}}=\max_{i\in V_{e^{\text{tr}}}\cup V_{e^{\text{te}}}}\|g_i\|_2$ is the maximum feature norm, $V_{e^{\text{te}}}^{(i)}:=\left\{j \in V_{e^{\text{te}}} \mid\left\|g_i-g_j\right\|_2 \leq \epsilon_{e^{\text{tr}},e^{\text{te}}}\right\}$. $const$ is a constant depending on $\alpha$, $\delta$, and $\gamma$. 
\end{theorem}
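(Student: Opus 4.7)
}
The plan is to plug our CSBM-OOD structure into the subgroup PAC-Bayesian framework of \cite{ma2021subgroup} and then carry out the CSBM-OOD-specific calculation of the aggregated-feature distance $\epsilon_{e^{\text{te}},e^{\text{tr}}}$, decomposing it and the expected-loss gap into the three sources of shift (class-mean distance, spurious-mean drift, and neighborhood-label-distribution drift) that appear in the stated bound.

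First I would instantiate a McAllester-style PAC-Bayesian bound using the Gaussian prior $P$ from Assumption~\ref{ass:ma-ass-3} and a Gaussian posterior $Q$ centered at the parameters $\{\widetilde{W}_l\}_{l=1}^L$ of $\tilde h$. Combining the standard perturbation analysis for a ReLU-MLP (which controls $|h_i(\text{pert.}) - h_i|$ by $T_h^L$ times the input perturbation) with Assumption~\ref{ass:ma-ass-2} on disjoint equal-sized near sets and the tail assumption on $\operatorname{Pr}_{h\sim P}(\mathcal{L}^{\gamma/4}_{e^{\text{te}}}(h)-\mathcal{L}^{\gamma/2}_{e^{\text{tr}}}(h)>\cdots)$ from Assumption~\ref{ass:ma-ass-3}, one obtains an intermediate inequality of the shape
\[
    \mathcal{L}^{0}_{e^{\text{te}}}(\tilde h) \;\le\; \widehat{\mathcal{L}}^{\gamma}_{e^{\text{tr}}}(\tilde h) \;+\; O\bigl( \Delta_{\mathrm{loss}}(\tilde h) + H\,C\,\epsilon_{e^{\text{te}},e^{\text{tr}}} \bigr) \;+\; const,
\]
where $\Delta_{\mathrm{loss}}(\tilde h)$ is the expected $P$-posterior gap between the test and training margin losses, and $const$ collects the usual $\delta,\gamma,\alpha$-dependent pieces. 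This part of the argument does not use the CSBM-OOD specifics.

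The main technical work is computing $\epsilon_{e^{\text{te}},e^{\text{tr}}}$ and $\Delta_{\mathrm{loss}}$ under CSBM-OOD. Under 1-layer mean aggregation, a node $i$ of class $c$ in environment $e$ satisfies
\[
    g_i \;=\; p^{\text{hm}}_i\,\bar x^{(e)}_c \;+\; \sum_{c'\neq c} p^{\text{ht}}_i(c'\mid c)\,\bar x^{(e)}_{c'} \;+\; \xi_i,
\]
with $\bar x^{(e)}_c=[\mu_c^\top;(\mu_c^{e})^\top]^\top$ and $\xi_i$ a mean-zero subgaussian of scale $O(\sigma/\sqrt{\deg_i})$. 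For $j\in V^{(i)}_{e^{\text{te}}}$ of the same class as $i$, I add and subtract $\sum_{c'}p^{\text{ht}}_j(c'\mid c)\bar x^{(e^{\text{tr}})}_{c'}$ to split
\[
    g_j-g_i \;=\; \underbrace{\sum_{c'}p^{\text{ht}}_j(c'\mid c)\bigl(\bar x^{(e^{\text{te}})}_{c'}-\bar x^{(e^{\text{tr}})}_{c'}\bigr)}_{\text{spurious-mean shift}} + \underbrace{\sum_{c'}\bigl(p^{\text{ht}}_j(c'\mid c)-p^{\text{ht}}_i(c'\mid c)\bigr)\bar x^{(e^{\text{tr}})}_{c'}}_{\text{NLD shift}} + (\xi_j-\xi_i).
\]
Taking norms and using orthonormality of $\{\mu_c\}\cup\{\mu_c^{e}\}$, the spurious-mean shift, after multiplying by the feature-norm bound $B_{e^{\text{te}}}$ coming from the Lipschitz step of the MLP on the test side and summing over classes, produces exactly term \textbf{(b)}. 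Averaging the NLD shift with the equal-sized, disjoint near-set partition of Assumption~\ref{ass:ma-ass-2} turns $\sum_{i\in V_{e^{\text{tr}}}}\sum_{j\in V^{(i)}_{e^{\text{te}}}}$ into the normalized double average $\frac{1}{N_{e^{\text{tr}}}}\sum_i \frac{1}{N_{e^{\text{te}}}}\sum_{j\in V^{(i)}_{e^{\text{te}}}}$ and yields term \textbf{(c)}. Term \textbf{(a)} comes from combining $\epsilon_{e^{\text{te}},e^{\text{tr}}}$ with the Gaussian-feature quadratic-form behavior: because the per-node features are $\mathcal{N}(\mu_c,\sigma^2 I)$, the change of a margin indicator between a training and a test node is, up to constants, bounded by $\|\bar x^{(e^{\text{te}})}_{c}-\bar x^{(e^{\text{te}})}_{c'}\|/\sigma^2$ times the aggregated-feature perturbation, which is exactly the $\frac{1}{\sigma^2}\sqrt{\bigl|[(\mu_c-\mu_{c'})^\top;(\mu_c^{e^{\text{te}}}-\mu_{c'}^{e^{\text{te}}})^\top]\bigr|}\,\epsilon_{e^{\text{te}},e^{\text{tr}}}$ shape in \textbf{(a)} once summed over ordered pairs $(c,c')$. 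Plugging the three pieces back into the intermediate PAC-Bayesian inequality and absorbing $H,C,T_{\tilde h}^L$ into $O(\cdot)$ finishes the argument.

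The hardest part, I expect, is the careful accounting of the variance-dependent factor $1/\sigma^2$ that produces the unusual square-root form in \textbf{(a)}: this demands a quadratic (Hanson--Wright-type) tail rather than a naive Lipschitz estimate when bounding how the MLP margin changes under $\xi_j-\xi_i$, and must be kept consistent with the prescribed range $\sigma^2\le (\gamma/8\epsilon_{e^{\text{te}},e^{\text{tr}}})^{2/L}/(2b(\lambda N_{e^{\text{tr}}}^{-\alpha}+\ln 2bL))$ in Assumption~\ref{ass:ma-ass-3}. A secondary obstacle is the bookkeeping of the near-set partition: without the disjointness and equal-size guarantees of Assumption~\ref{ass:ma-ass-2}, the double sum over $(i,j)$ would pick up an extra $N_{e^{\text{te}}}$ factor and destroy term \textbf{(c)}, so the partition must be used explicitly at the averaging step.
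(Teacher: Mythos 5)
Your high-level scaffolding (adopt the subgroup PAC-Bayesian framework of Ma et al., note that Assumption \ref{ass:ma-ass-2}'s equal-sized disjoint near sets supply the $\frac{1}{N_{e^{\text{tr}}}}\sum_i\frac{1}{N_{e^{\text{te}}}}\sum_{j\in V^{(i)}_{e^{\text{te}}}}$ normalization, and absorb the classifier-perturbation pieces into $O(\cdot)$ and $const$) matches the paper. But the mechanism you propose for producing terms \textbf{(a)}--\textbf{(c)} is not the one that works, and I do not think it can be made to work as written. In Ma et al.'s decomposition (their Lemma 5, which the paper takes as the starting point), the train--test loss gap splits into a classifier-perturbation part (controlled by $T_h^L\epsilon_{e^{\text{te}},e^{\text{tr}}}$, Assumption \ref{ass:ma-ass-3}, and the $\epsilon^{2/L}$ term) \emph{plus} the difference of posterior label probabilities $\Pr(y_j=c\mid g_j)-\Pr(y_i=c\mid g_i)$ at matched nodes. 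All three terms \textbf{(a)}, \textbf{(b)}, \textbf{(c)} come from this second, label-noise/concept-shift piece: the paper writes the posteriors via Bayes' rule using the CSBM-OOD Gaussian likelihoods (whose means are $p^{\text{hm}}_i\mu_c+\sum_{c'}p^{\text{ht}}_i(c')\mu_{c'}$ concatenated with the environment-specific analogues), bounds differences of products of Gaussian densities by differences of the quadratic exponents via $|e^{-x}-e^{-y}|\le|x-y|$, and then the expanded quadratic forms split into a group bounded by $\|g_i-g_j\|_2\le\epsilon_{e^{\text{te}},e^{\text{tr}}}$ and $\|g\|_2\le B_{e^{\text{te}}}$ (giving \textbf{(a)} and \textbf{(b)}), a group that vanishes by the normalization of the class means, and a group of squared-heterophilic-ratio differences giving \textbf{(c)}. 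Your plan never touches this posterior-difference quantity: you decompose the feature displacement $g_j-g_i$ into shift sources and push it through a Lipschitz/Hanson--Wright analysis of the MLP. That only re-derives (a bound on) $\epsilon_{e^{\text{te}},e^{\text{tr}}}$ itself and the perturbation part of the gap; it cannot produce \textbf{(b)} or \textbf{(c)} as additive terms, because for matched pairs $\|g_j-g_i\|$ is already capped at $\epsilon$ by the definition of the near sets, while $|\mu_c^{e^{\text{te}}}-\mu_c^{e^{\text{tr}}}|$ and $|p^{\text{ht}}_j-p^{\text{ht}}_i|$ enter the true bound through the likelihood ratio of the data distribution, not through how far the classifier's input moved.

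Two concrete symptoms of this mismatch in your write-up: you attribute $B_{e^{\text{te}}}$ in \textbf{(b)} to "the Lipschitz step of the MLP on the test side," whereas it actually arises as the bound $|\langle g,\;\mu_c^{e^{\text{te}}}-\mu_c^{e^{\text{tr}}}\rangle|\le B_{e^{\text{te}}}|\mu_c^{e^{\text{te}}}-\mu_c^{e^{\text{tr}}}|$ on a cross term in the exponent expansion; and you attribute the $1/\sigma^2$ and the square-root in \textbf{(a)} to a Hanson--Wright-type tail for the margin of $h$ under the noise $\xi_j-\xi_i$, whereas $h$ never sees $\sigma$ at all --- the $1/\sigma^2$ is the prefactor of the Gaussian exponent in $\Pr(g\mid y=c)$, and the square-root comes from the normalization $|[\mu_c^\top;(\mu_c^{e^{\text{tr}}})^\top]|=\sqrt{2}$ used when collecting the $\|g_i-g_j\|_2$ coefficients. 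The missing idea, then, is the Bayes-posterior comparison at matched nodes (the content of the paper's Lemma in Appendix \ref{bound_proof}), after which the theorem follows by substituting this lemma for Ma et al.'s Lemma 5 and reusing their Lemma 6/Theorem 3 under Assumption \ref{ass:ma-ass-3}; without it, your argument leaves the concept-shift part of the test loss uncontrolled.
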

The observations from Theorem \ref{bound} is summarized as follows: Term $\textbf{(a)}$ reflects the separability of the original features of different classes $|[(\mu_c-\mu_{c'})^\top;(\mu_c^{e^\text{te}}-\mu_{c'}^{e^\text{te}})^\top]|$ and the distance of the aggregated features between the training and test set $\epsilon_{e^{\text{te}},e^{\text{tr}}}$. The former factor is the nature of the dataset itself. Term $\textbf{(b)}$ is the distributional discrepancy between the training and test subgroups, caused by the distribution shifts in spurious features. When there exist correlations between labels and spurious features, CIA-LRA can minimize this term by minimizing the representation distance of node pairs with large discrepancy in the label distribution of heterophilic neighbors\footnote{Although the term $|\mu_c^{e^\text{te}}-\mu_c^{e^\text{tr}}|$ is the cross-environment distance of the original data, it can be minimized implicitly by aligning the representation induced by the two subgroups. This is because minimizing the distance between learnable representations of two node groups is equivalent to using a fixed mean aggregation on two groups with closer original features (in Theorem \ref{bound} we fix the feature extractor to be a mean aggregation layer).}. Term $\textbf{(c)}$ measures the shift in HeteNLD between the training and test subgroups, representing the OOD error caused by the shift in the aggregated features of the same class.  CIA-LRA minimizes this term by enforcing stronger alignment on pairs with greater HeteNLD differences.

\renewcommand{\arraystretch}{1.05} %
\begin{table*}[htbp]
    \centering
    \caption{OOD test accuracy (\%). Our methods are marked in \textbf{bold}. The best and second-best results are shown in \textbf{bold} and \underline{underlined}, respectively. The values in parentheses are standard deviations. 'OOM' denotes Out Of Memory. Results of EERM on Cora GCN marked by '*' are from \cite{gui2022good} since we got OOM during our running. }
    \label{main_exp_table}
\begin{subtable}{\textwidth}
    \centering
    \caption{Results on GAT}
    \resizebox{\linewidth}{!}{
        \begin{tabular}{cccccccc|ccccccc}
            \toprule
            \multicolumn{15}{c}{Non-graph-specific methods}\\
            \midrule
            \multicolumn{8}{c|}{Covariate shift} & \multicolumn{7}{c}{Concept shift} \\
            \cmidrule{1-15}
            Dataset & \multicolumn{2}{c}{Arxiv} & \multicolumn{2}{c}{Cora} & CBAS & WebKB &   \multirow{2}{*}{average}& \multicolumn{2}{c}{Arxiv} & \multicolumn{2}{c}{Cora} & CBAS & WebKB &    \multirow{2}{*}{average}\\
            \cmidrule{1-7} \cmidrule{9-14}
            Domain & degree & time & degree & word & color & university &
            & degree & time & degree & word & color & university &\\
            \toprule
           ERM  &59.12(0.09)& 71.53(0.16)& 55.30(0.34) & 63.75(0.39)  & 65.24(2.69) & 31.48(6.12) & 57.74
           &65.70(0.18)& 65.79(0.05)& 61.36(0.38) & 64.38(0.13)  & 74.52(1.87) & 25.69(1.30) & 59.57\\
            \midrule
            IRM &\underline{59.16(0.14)}&\underline{71.60(0.13)}& 55.07(0.30)   & 63.75(0.26)& 65.24(1.78) & 30.69(8.63) & 57.59
            &65.79(0.35)&66.24(0.20) &61.42(0.34) &64.31(0.39) &73.33(0.89) &25.69(1.98)) & 59.46\\
            \midrule
            VREx &59.10(0.12)&71.41(0.20) & 55.34(0.16)  & 64.13(0.05) & 66.67(2.93)   & 34.13(8.27) & \underline{58.46}
            &\underline{65.94(0.03)}&66.16(0.22)& 61.77(0.30) &64.02(0.22) & 73.57(3.50) & 27.52(6.74) & 59.83\\
            \midrule
            GroupDRO  & 59.04(0.15) & 71.30(0.04) & 55.03(0.45) & 63.82(0.06) & \underline{67.62(2.43)}  & 31.75(2.82) & 58.09 &65.93(0.22) & 66.14(0.15) & 61.31(0.20) & 64.07(0.25) &73.81(1.78)  & 26.91(2.16) & 59.70\\
            \midrule
            Deep Coral & 59.12(0.10) & 71.43(0.12) & 55.35(0.45)  & 63.96(0.07) & 64.76(2.43) & 33.86(6.93) & 58.08
            &65.76(0.21) & \underline{66.25(0.19)} & 61.62(0.26) & 64.26(0.28) & \textbf{75.95(3.21)} & \underline{30.27(3.26)} & \underline{60.69}\\
            \midrule
            IGA & 59.01(0.17) & 71.49(0.32) & 55.56(0.12) & 65.07(0.25) & 65.71(5.08) & 29.89(6.56) & 57.79	& 65.87(0.19) & 65.93(0.16) & 62.62(0.13) & 64.56(0.23) & 70.95(3.51) & 28.14(2.16) & 59.68\\
            \midrule
             MatchDG & OOM & OOM & \underline{55.74(0.25)} & \underline{65.30(0.11)} & 67.15(5.08) & \underline{35.45(8.10)} & - & OOM & OOM & \textbf{62.91(0.27)} & \underline{64.83(0.08)} & \underline{75.24(2.36)} & 29.36(5.40)  & - \\
            \midrule
            \textbf{CIA}  &\textbf{59.26(0.04)}& \textbf{71.65(0.25)}& \textbf{56.34(0.35)}  & \textbf{65.55(0.19)} & \textbf{69.05(3.75)} & \textbf{36.24(0.75)} & \textbf{59.68} & \textbf{65.96(0.07)}&\textbf{66.39(0.25)} & \underline{62.72(0.22)}  &\textbf{64.92(0.28)} & 74.76(2.63) & \textbf{30.58(3.12)} & \textbf{60.89} \\
            \toprule
            \multicolumn{15}{c}{Graph-specific methods}\\
            \midrule
            EERM &OOM&OOM&  46.63(1.75)  & 62.57(0.50) & 60.47(4.10)  & \underline{33.33(14.60)} & - &OOM& OOM& 48.05(2.03) & 53.02(1.23)  & 60.95(3.56)  & 25.38(4.26) & - \\
            \midrule
            SRGNN & 58.76(0.20)  & 71.37(0.37) &55.87(0.32) & 64.50(0.35) &  \underline{68.09(0.67)} & 28.84(1.35) & \underline{57.91} &\underline{65.87(0.35)} & \underline{66.02(0.14)} & 61.21(0.29) & 64.10(0.28) & 72.38(1.22)  &23.55(1.56) & 58.86 \\
            \midrule
            Mixup &59.32(0.11)&\underline{71.78(0.08)}& \underline{56.77(0.36)} & \underline{65.70(0.28)} & 63.33(8.60) & 20.37(11.38) & 56.21  &63.11(0.10)&65.33(0.30)& \underline{63.97(0.18)} & \underline{65.42(0.32)} & 73.33(1.47)  & \textbf{38.53(0.75)} &\underline{61.62} \\
            \midrule
            GTrans & OOM & OOM &  51.49(0.23) & 62.48(0.25) & 61.90(3.56) & 21.69(7.17) & -  &OOM & OOM & 60.93(0.37) & 62.68(0.28) & 73.57(2.10) & 25.08(1.88) & - \\
            \midrule
            CIT & OOM & OOM & 53.13(2.05) & 63.76(0.20) & 61.43(3.09) & 24.60(7.47) & - & OOM & OOM & 60.89(0.36) & 63.60(0.48) & 70.24(1.68) & 24.16(5.26) & -\\
            \midrule
            CaNet & OOM & OOM & 55.35(0.14) & 62.76(0.25) & \underline{68.09(1.78)} & 23.81(15.16) & - & OOM & OOM & 60.97(0.07) & 63.73(0.44) & \underline{75.95(3.41)} & 24.77(3.97) & -\\
            \midrule
            \textbf{CIA-LRA} &\textbf{59.44(0.10)}& \textbf{71.79(0.13)}& \textbf{57.95(0.13)} & \textbf{68.59(0.26)}  & \textbf{75.24(1.78)} & \textbf{38.62(3.57)} & \textbf{61.94}  &\textbf{66.41(0.22)}&\textbf{66.47(0.16)}&  \textbf{67.08(0.26)} &\textbf{68.05(0.14)} & \textbf{78.34(3.51)} & \underline{31.80(1.88)} & \textbf{63.03}\\
            \bottomrule
        \end{tabular}
        }
    \label{main_exp_non_graph_spec}
\end{subtable}

\begin{subtable}{\textwidth}
    \centering
    \caption{Results on GCN}
    \resizebox{\linewidth}{!}{
        \begin{tabular}{cccccccc|ccccccc}
            \toprule
            \multicolumn{15}{c}{Non-graph-specific methods}\\
            \midrule
            \multicolumn{8}{c|}{Covariate shift} & \multicolumn{7}{c}{Concept shift} \\
            \cmidrule{1-15}
            Dataset & \multicolumn{2}{c}{Arxiv} & \multicolumn{2}{c}{Cora} & CBAS & WebKB &   \multirow{2}{*}{average}& \multicolumn{2}{c}{Arxiv} & \multicolumn{2}{c}{Cora} & CBAS & WebKB &    \multirow{2}{*}{average}\\
            \cmidrule{1-7} \cmidrule{9-14}
            Domain & degree & time & degree & word & color & university &
            & degree & time & degree & word & color & university &\\
            \toprule
           ERM  &58.92(0.14)& 70.98(0.20)& 55.78(0.52) & 64.76(0.30)  & 78.57(2.02) & 16.14(1.35) & 57.52
           &62.92(0.21)& 67.36(0.07)& 60.24(0.40) & 64.32(0.15)  & 82.14(1.17) & 27.52(0.75) & 60.75\\
            \midrule
            IRM &58.93(0.17)&70.86(0.12)& 55.77(0.66)  & 64.81(0.33)& 78.57(1.17) & 13.75(4.91) & 57.12
            &62.79(0.11)&67.42(0.08) &\underline{61.23(0.08)}  & 64.42(0.18) &81.67(0.89) & 27.52(0.75) & 60.84\\
            \midrule
            VREx &58.75(0.16)&69.80(0.21) & 55.97(0.53)  & 64.43(0.38) & \underline{79.05(1.78)} & \underline{17.72(11.27)} & \underline{57.62}
            &63.06(0.43)&67.42(0.07)& 60.69(0.42) &64.32(0.22) & 82.86(1.17) & 27.52(1.50) & 60.98\\
            \midrule
            GroupDRO  &58.87(0.00)& 70.93(0.09)& 55.64(0.50) & 64.62(0.30) & 79.52(0.67) & 14.29(2.59) & 57.31 & 62.98(0.53)& 67.41(0.27)& 60.59(0.36) & 64.34(0.25) & 82.38(0.67) & \underline{28.44(0.00)} & \underline{61.02}\\
            \midrule
            Deep Coral &\textbf{59.04(0.16)}& \underline{71.04(0.07)}&  56.03(0.37)  & 64.75(0.26) &  78.09(0.67) & 11.90(1.72) & 56.81
            &\underline{63.09(0.28)}&\underline{67.43(0.24)}& 60.41(0.27) & 64.34(0.17) & 82.86(0.58) &26.61(0.75)  & 60.79\\
            \midrule
            IGA & 58.87(0.17) & 70.99(0.33)  & 55.94(0.58) & \underline{64.89(0.38)} & \underline{79.05(1.78)} & 15.87(2.82) & 57.60	&  62.04(0.02)  & 66.07(0.19) & 61.06(0.36) & 64.32(0.15) & 82.38(0.89) & \underline{28.44(1.50)} & 60.72\\
            \midrule
            MatchDG & OOM & OOM & \underline{56.57(0.46)} & 64.72(0.45) & 77.14(1.17)  & 16.14(5.88) & - & OOM & OOM & 60.49(0.14) & \textbf{64.71(0.33)} & \underline{84.05(0.89)} & 27.83(2.41) & -\\
            \midrule
            \textbf{CIA}  &\underline{59.03(0.39)}& \textbf{71.10(0.15)}& \textbf{56.80(0.54)}  & \textbf{65.07(0.52)} & \textbf{80.00(2.02)} & \textbf{18.25(2.33)} & \textbf{58.38}  &\textbf{63.87(0.26)}&\textbf{67.62(0.04)} & 
 \textbf{61.59(0.18)}  &\underline{64.61(0.11)} & \textbf{85.71(0.72)} & \textbf{28.75(0.87)} & \textbf{61.83} \\
            \toprule
            \multicolumn{15}{c}{Graph-specific methods}\\
            \midrule
            EERM &OOM&OOM&  56.88(0.32)*  & 61.98(0.10)* &  40.48(9.78) &  16.21(5.67) & - &OOM& OOM& 58.38(0.04)* & 63.09(0.36)* & 61.43(1.17)&  28.04(11.67) & - \\
            \midrule
            SRGNN &\underline{58.47(0.00)} &70.83(0.10) & 57.13(0.25) & 64.50(0.35) & 73.81(4.71) & 16.40(1.63) & 56.86 &\underline{62.80(0.25)}&\underline{67.17(0.23)}& 61.21(0.29) & 64.53(0.27) &  80.95(0.67) & 27.52(0.75)  & \underline{60.70} \\
            \midrule
            Mixup &57.80(0.19)&\underline{71.62(0.11)}& \underline{57.89(0.27)} & \underline{65.07(0.22)} & 70.00(5.34) &  16.67(1.12) & 56.51  &62.33(0.34)&65.28(0.43)& \underline{63.65(0.37)} & 64.45(0.12) & 65.48(0.67)  & \underline{30.28(1.50)} & 58.58 \\
            \midrule
            GTrans & OOM & OOM &  52.70(0.52) & 63.37(0.27) & 72.38(2.43) & 10.58(0.99) & -  &OOM & OOM & 59.74(0.14) & 63.56(0.18) & 78.81(1.47) & 26.91(1.56) & - \\
            \midrule
            CIT & OOM & OOM & 56.14(0.45) & 64.79(0.29) & 75.24(2.43) & \underline{19.31(4.32)} & - & OOM & OOM & 60.12(0.30) & 64.26(0.42) & 83.10(0.89) & 28.14(1.14) & - \\
            \midrule
            CaNet & OOM & OOM & 57.35(0.04) & 64.66(0.36) & \underline{80.95(0.67)} & 15.61(5.51) & -  & OOM & OOM & 60.34(0.20) & \underline{64.65(0.39)} & \underline{85.24(3.32)} & 26.30(0.43) & - \\
            \midrule
            \textbf{CIA-LRA} &\textbf{59.85(0.14)}& \textbf{71.81(0.20)}& \textbf{58.40(0.59)} & \textbf{65.95(0.04)}  & \textbf{82.86(1.17)} & \textbf{19.84(2.83)} & \textbf{59.79}  &\textbf{64.34(0.65)}&\textbf{67.52(0.10)}&  \textbf{63.71(0.32)} &\textbf{65.07(0.21)} & \textbf{94.53(0.33)} & \textbf{36.70(0.75)} & \textbf{65.31}\\
            \bottomrule
        \end{tabular}
        }
    \label{GCN_exp_non_graph_spec}
\end{subtable}

\vspace{-10pt}
\end{table*}

\section{Experiments}

\subsection{Experiment Setup}
We run experiments using 3-layer GAT and GCN on GOOD \citep{gui2022good}, a graph OOD benchmark. We reported the results on both covariate shift and concept shift. The detailed experimental setup and hyperparameter settings are in Appendix \ref{exp_detail}. %
We compare our methods with the following algorithms: \textbf{ERM} \citep{vapnik1999overview}; traditional invariant learning methods: \textbf{IRM}, \textbf{VREx},  \textbf{GroupDRO} \citep{sagawa2019distributionally}, \textbf{Deep Coral} \citep{sun2016deep}, \textbf{IGA} \citep{koyama2020invariance}; graph OOD methods: \textbf{EERM}, \textbf{SRGNN}, \textbf{CIT} \citep{xia2024learning}, \textbf{CaNet} \citep{wu2024graph}; graph data augmentation: \textbf{Mixup} \citep{wang2021mixup}, \textbf{GTrans} \citep{jin2022empowering}.

\subsection{Main Results of OOD Generalization }
\label{maiN^exp_sec}

Table \ref{main_exp_table} reports the main OOD generalization results. The observations are summarized as follows: 1) CIA-LRA improves the best baseline methods by 2.44\% and 3.23\% on GAT and GCN, respectively, achieving state-of-the-art performance. 2) CIA outperforms IRM and VREx on all splits, which validates our theoretical findings in Section \ref{theo_sec}. Notably, it performs best among the non-graph-specific methods. 3) CIA-LRA improves CIA in most cases. This suggests that our reweighting strategy can enhance generalization on graphs even without environment labels. 4) MatchDG outperforms IRM and VREx on 12 out of 16 splits but underperforms CIA on average (averaged over 16 splits except on Arxiv, CIA: 57.56, MatchDG: 56.73). %

\subsection{CIA can be Integrated into and Improve other Graph-OOD Methods}
\begin{wrapfigure}{r}{0.4\textwidth}
\vspace{-12pt}
    \begin{minipage}{\linewidth}
    \centering
    \resizebox{0.99\linewidth}{!}{
        \begin{tabular}{ccccccc}
            \toprule
             Algorithms & Cora degree & Cora word & CBAS & WebKB \\
            \midrule
             EERM & 47.34 & 57.80  & 60.71 & 29.36 \\
             \midrule
             EERM-CIA & \underline{57.27} & \underline{62.37} & \underline{65.01}  & \underline{29.50} \\
             \midrule
             CIA & \textbf{59.53} & \textbf{65.24}  & \textbf{71.91} & \textbf{33.41} \\
            \bottomrule
        \end{tabular}
        }
    \captionof{table}{By replacing VREx in EERM with CIA (marked as \textit{EERM-CIA}), the performance is significantly improved.}
    \label{EERM_CIA}
\end{minipage}
\vspace{-30pt}
\end{wrapfigure}

\label{plug_and_play}
 We replace VREx with CIA in the loss function of EERM to show that CIA can improve generalization in a plug-and-play manner. Table \ref{EERM_CIA} shows that this improves original EERM by a large margin or has comparable performances, indicating the performance of node-level OOD algorithms can be limited by VREx.

\subsection{Empirical Understanding of the Role of CIA-LRA}
\label{empiri_understand_sec}
\textbf{A synthetic dataset.} We construct a synthetic dataset (mentioned in Section \ref{sec_intro}) to validate the role of each module in CIA-LRA in eliminating spurious features and preventing the collapse of invariant representations. We generate a random graph and create a 4-class OOD classification task. Each node has a 4-dim feature, with the first/last two dimensions representing invariant/spurious features (details in Appendix \ref{syn_dataset_sec}), so we can disentangle the learned invariant and spurious representations. Figure \ref{toy_exp_fig} depicts the OOD accuracy, the variance of the invariant representation, and the norm of the spurious representation across training epochs for CIA and CIA-LRA. The observations are summarized below.

\begin{figure}[htbp]
\begin{minipage}{0.7\linewidth}
    \centering
    \begin{minipage}{0.05\linewidth}
    \centering
    \rotatebox{90}{Concept} %
    \end{minipage}%
    \begin{minipage}{0.9\linewidth}
    \includegraphics[width=\linewidth]{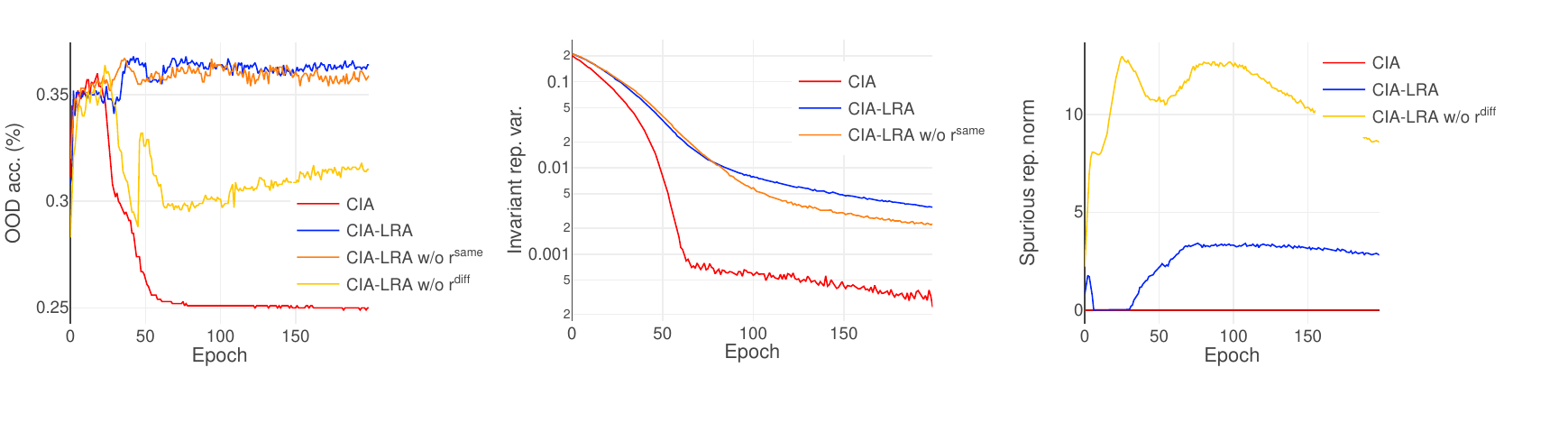} %
    \end{minipage}%

    \centering
    \begin{minipage}{0.05\linewidth}
        \centering
        \rotatebox{90}{Covariate} %
    \end{minipage}%
    \begin{minipage}{0.9\linewidth}
    \includegraphics[width=\linewidth]{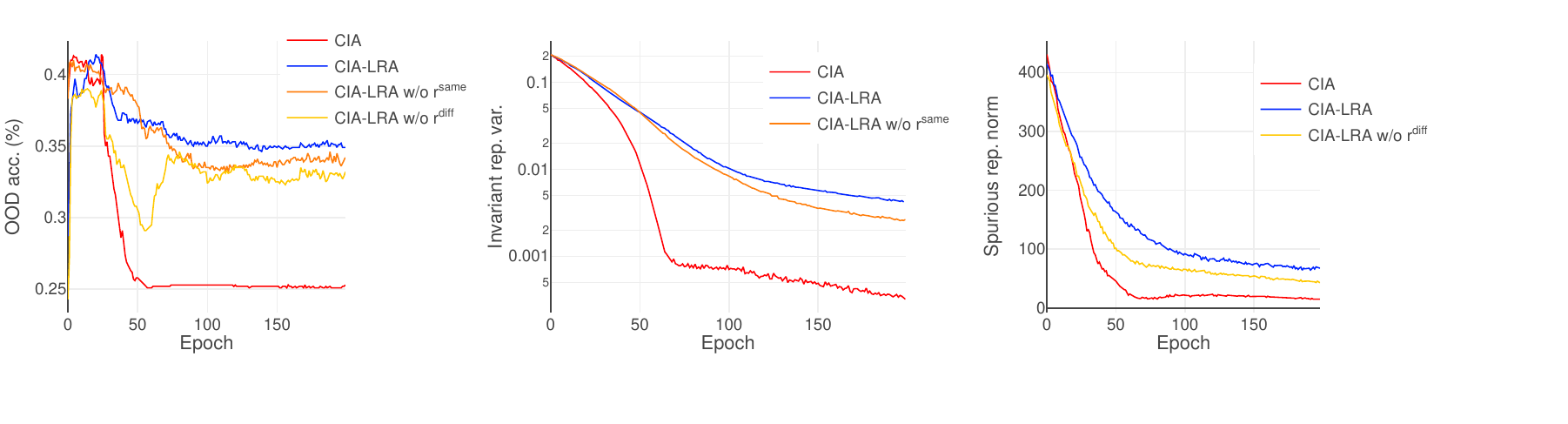} %
    \end{minipage}%
    \caption{Left: OOD test accuracy. Mid: the variance of the invariant representation. Right: the norm of the spurious representation. CIA and CIA-LRA use $\lambda=0.5$ in this figure.}
    \label{toy_exp_fig}
\end{minipage}
    \begin{minipage}{0.25\linewidth}
    \centering
    \resizebox{0.99\linewidth}{!}{
        \begin{tabular}{cc}
            \toprule
             Algorithms & Acc. (\%)  \\
             \midrule
             IRM  & 61.14  \\
             \midrule
             VREx & 61.32  \\
            \midrule
             no $r^{\text{diff}}(c)_{i,j}$  & 63.22  \\
             \midrule
             no $\frac{1}{r^{\text{same}}(c)_{i,j}}$ & 63.91 \\
             \midrule
             no $\frac{1}{d(i,j)}$ & 63.64  \\
             \midrule
             no $M$ & 63.39 \\
             \midrule
             use $r^{\text{same}}(c)_{i,j}$ in numerator & 62.70\\
             \midrule
             full CIA-LRA & \textbf{65.42} \\
            \bottomrule
        \end{tabular}
        }
    \captionof{table}{Ablation study of CIA-LRA. Results are averaged on the four splits of Cora.}
    \label{ablation_table}
\end{minipage}
    \vspace{-10pt}
    
\end{figure}

 1) Aligning the large discrepancy in HeteNLD helps to eliminate spurious features on concept shift and improves generalization. As evident from the right column, incorporating $r^\text{diff}$ diminishes the norm of spurious features under concept shift. For covariate shift, while $r^\text{diff}$ will not remove environmental spurious features due to their independence from labels, it still helps generalization since it reduces the error caused by shifts in HeteNLD as predicted by Theorem \ref{bound}.  
 2) CIA-LRA alleviates collapse of causal representation that CIA may suffer when adopting a substantial $\lambda$.  When using a large $\lambda$ ($=0.5$), the performance of CIA deteriorates to the level of random guessing (25\%) after approximately 50 epochs. In contrast, CIA-LRA sustains its accuracy at a high level because it avoids excessive alignment by aligning only local pairs and reweighting (further evidence in Appendix \ref{sec_fc}). The mid column shows that the invariant features learned by CIA progressively collapse, even if CIA removes most spurious features (right column). 
 3) Maintaining the discrepancy in homophilic neighboring label distribution $r^{\text{same}}$ helps keep the variance of the invariant representation, slightly improving performance.

\textbf{Ablation study.} We also conduct ablation studies on CIA-LRA. Table \ref{ablation_table} shows that removing any module causes a significant performance drop, demonstrating the effectiveness of each module.

\subsection{Effects of the Hyperparameters of CIA-LRA}
\begin{wrapfigure}{r}{0.5\textwidth}
\vspace{-20pt}
    \centering
    \begin{subfigure}{0.23\textwidth}
        \includegraphics[width=\textwidth]{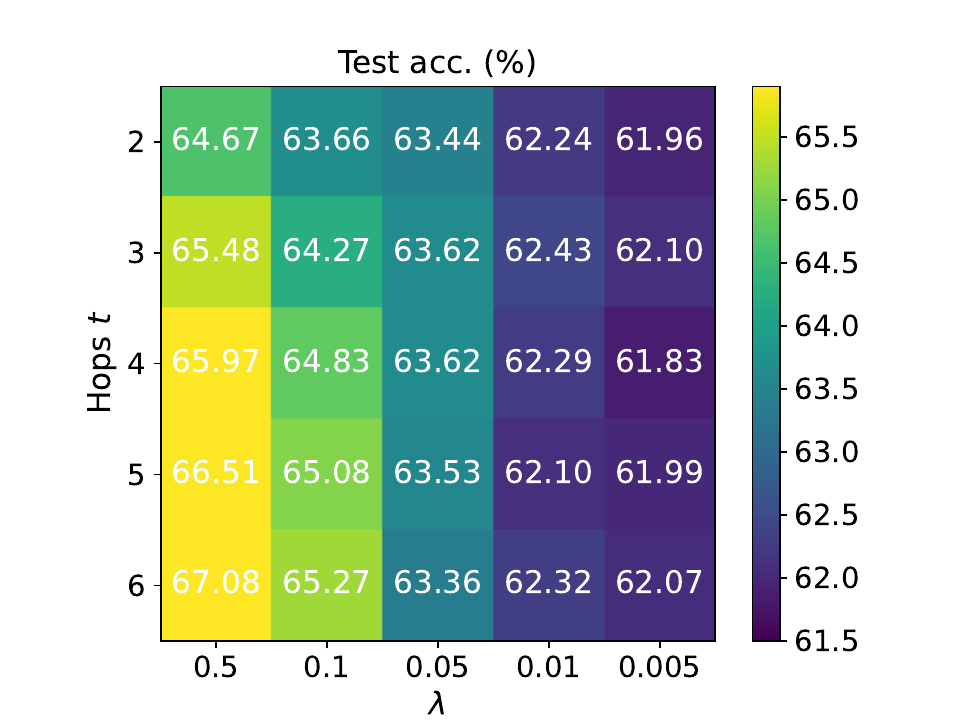}
        \caption{Cora degree concept shift. ERM: 61.36, VREx: 61.42, IRM: 61.77.}
    \end{subfigure}
    \begin{subfigure}{0.23\textwidth}
        \includegraphics[width=\textwidth]{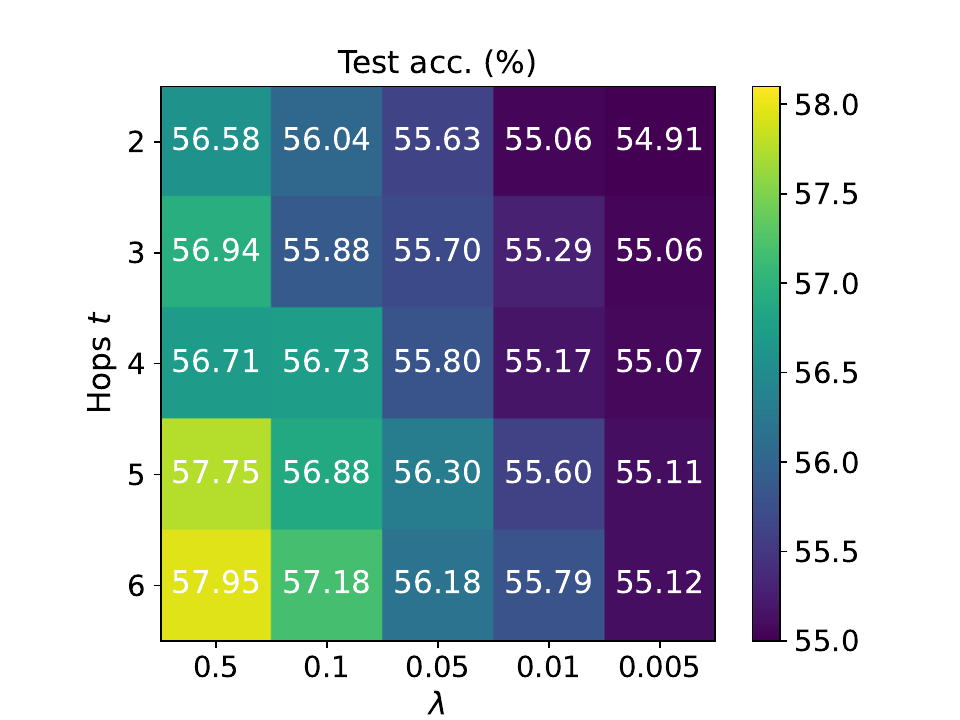}
        \caption{Cora degree covariate shift. ERM: 55.30, VREx: 55.07, IRM: 55.34.}
    \end{subfigure}
    \caption{Effect of $\lambda$ and the number of hops $t$ on OOD test accuracy (\%).}
    \label{para_heat_map}
    \vspace{-38pt}
\end{wrapfigure}
This section analyzes the effect of $\lambda$ and $t$ of CIA-LRA. Figure \ref{para_heat_map} shows that the test accuracy increases with $\lambda$ when $\lambda\leq0.5$. Too small $t$ leads to a sub-optimal performance due to insufficient regularization from aligning only a few pairs. Also, most parameter combinations outperform the baseline methods, indicating that CIA-LRA leads to consistently superior performance. Additional studies of the effects of $\lambda$ and $t$ are in Appendix \ref{sec_fc} and \ref{hops_sec}, respectively.

\section{Conclusion}
In this work, by theoretically dissecting the failure of IRM and VREx in node-level graph OOD tasks, we attribute it to the difficulty in identifying the graph-specific causal pattern structures. To address this, we propose CIA with additional class-conditional invariance constraints and its environment-label-free variant CIA-LRA tailored for graph OOD scenarios. Further theoretical and experimental results validate their efficacy. Notably, CIA can be incorporated in other graph OOD frameworks, serving as a better invariant learning objective than the widely-used VREx on graphs. %

\section*{Acknowledgement}
Yisen Wang was supported by National Key R\&D Program of China (2022ZD0160300), National Natural Science Foundation of China (92370129, 62376010), and Beijing Nova Program (20230484344, 20240484642). Xianghua Ying was supported by the National Natural Science Foundation of China (62371009).

\bibliographystyle{unsrtnat}
\bibliography{refs}

\newpage
\appendix

\onecolumn

\renewcommand\thepart{}
    \renewcommand \partname{}
    \part{Appendix}
    \setcounter{secnumdepth}{4}
    \setcounter{tocdepth}{4}
    \parttoc

\newpage

\section{Additional Related Works}
\label{related_work}
\subsection{Invariant Learning for OOD Generalization}
\label{related_inv_learning}
Invariant learning seeks to find stable features across multiple training environments to achieve OOD generalization \citep{arjovsky2019invariant, krueger2021out, bui2021exploiting, rame2022fishr, shi2021gradient, wang2020improving,mahajan2021domain, wang2022out,yi2022breaking, wang2022improving}. IRM \citep{arjovsky2019invariant} and VREx are two of the most well-known methods. The goal of IRM  is to learn a representation that elicits a classifier to achieve optimality in all training environments. VREx \citep{krueger2021out} reduces the variance of risks across training environments to improve robustness against distribution shifts. \cite{mahajan2021domain} argues that the invariant features can also change across environments. Hence, they proposed MatchDG to align the representations of the so-called same "object" of different environments. To the best of our knowledge, we are the first work to theoretically analysis the limitations of IRM and VREx in OOD node classification problems. 

It's worth emphasizing that although the CIA objective is similar to MatchDG \citep{mahajan2021domain}, our extensions of MatchDG on graphs are non-trivial:

\textbf{1) The extensions from MatchDG to CIA are mainly theoretical.}
\begin{itemize}
    \item We extend the idea of MatchDG to the node-level OOD task by providing a theoretical characterization of CIA's working mechanism on graphs (Theorem \ref{prop3}), revealing its superiority in node-level OOD scenarios for the first time.
    \item We establish the connection between node-level OOD error and two kinds of distributional shifts: (1) environmental node feature shifts and (2) the heterophilic neighborhood label distribution shifts (see Theorem \ref{bound}), giving further explanation of CIA and CIA-LRA's performance gain in node-level OOD generalization.
\end{itemize}
\textbf{2) The methodological extensions of CIA-LRA:}
\begin{itemize}
\item It identifies the node pairs with significant differences in spurious features without using environment labels, providing a new perspective that the widely adopted environment partition paradigm \cite{wu2021handling,yu2023mind,li2023graph,li2023invariant,li2022learning,liu2023flood} may not be necessary for node-level OOD generalization. One can remove spurious features by leveraging neighboring label distributions (analysis in Section \ref{CIA-LRA_method_sec}), shedding light on the role of neighborhood label distribution as compensation for the absence of environment labels.
\item It's the first node-level OOD method explicitly considering the OOD error caused by shifts in heterophilic neighborhood label distribution, (pointed out by term (c) in Theorem \ref{bound}). Such shifts can be regarded as a kind of structural shift that has been first observed in \cite{mao2023demystifying}.
\item It's the first node-level OOD method using homophilic neighborhood label distribution to reflect the find-grained distribution of the invariant features, avoiding the collapse of invariant features.
\end{itemize}

\subsection{Graph-OOD Works Using VREx and Similar Variants}
\label{works_using_vrex_sec}
We summarize the graph-OOD works that include VREx or IGA as part of the training objectives below. 
Node-level works, which can be covered by our analysis:

\begin{itemize}
    \item \textbf{EERM} \citep{wu2021handling}, Equation (5): \begin{equation}
\min _\theta \operatorname{Var}\left(\left\{L\left(g_{w_k^*}(G), Y ; \theta\right): 1 \leq k \leq K\right\}\right)+\frac{\beta}{K} \sum_{k=1}^K L\left(g_{w_k^*}(G), Y ; \theta\right),
\end{equation} where $g_{w_k^*}(G)$ is the generated $k$-th environment.

    \item \textbf{INL} \citep{li2023invariant}, Equation (8): \begin{equation}
    \mathbb{E}_{\text {e } \operatorname{supp}\left(\mathcal{E}_{\text {infer }}\right)} \mathcal{R}^e\left(f\left(\mathrm{G}_{\mathrm{v}}\right), \mathrm{y} ; \theta\right)+\lambda \text { trace }\left(\operatorname{Var}_{\mathcal{E}_{\text {infer }}}\left(\nabla_\theta \mathcal{R}^e\right)\right),
    \end{equation} where $\mathcal{E}_{\text {infer }}$ is the inferred environment label.

    \item \textbf{FLOOD} \citep{liu2023flood}, Equation (12): \begin{equation}
    \min _{\theta, \omega, \psi} \mathcal{L}_{\text {train }}=\mathcal{L}(\theta, \omega)+\alpha \mathcal{R}_{\mathrm{V}-\mathrm{REx}}(\omega, \psi)
    \end{equation}

    \item \cite{zhang2021stable} Equation (10): \begin{equation}
    \mathcal{L}_{\text {global }}=\sum_{e, e^{\prime} \in \mathcal{E}}\left(\mathcal{L}_{\text {pred }}^{0, e}-\mathcal{L}_{\text {pred }}^{0, e^{\prime}}\right)^2,
    \end{equation} as they claimed: "We note that minimizing the pair-wise distance of losses as defined in Equation 10 is equivalent to minimizing the variance of losses."
    
    \item \textbf{GLIDER} \citep{tian2024graphs}, Equation (9): \begin{equation}
    \min \mathbb{V}_e\left[\ell\left(f_c\left(g\left(G_v^e\right)\right), y_v^e\right)\right]+\alpha \mathbb{E}_e\left[\ell\left(f_c\left(g\left(G_v^e\right)\right), y_v^e\right)\right]
\end{equation}
\end{itemize}

Graph-level works, which are not covered in our analysis:

\begin{itemize}

    \item \textbf{LiSA} \citep{yu2023mind}, Equation (15): \begin{equation}
    \begin{aligned}
    & \min _f \mathcal{L}_{c l s}\left(f,\left\{g_i^*\right\}_{i=1}^n\right)+\operatorname{Var}_e\left(\mathcal{L}_{c l s}\left(f, g_i^*\right)\right), i=1 \sim n ,\\
    \end{aligned}
    \end{equation}where $g^*_i$ are the augmented training subgraphs of representing different environments.

    \item  \textbf{G-splice} \citep{li2023graph}, Equation (6): \begin{equation}
\psi^*:=\underset{\psi}{\operatorname{argmin}} \mathbb{E}_{(G, y) \sim \cup_{\varepsilon \in\left\{\mathcal{E} \cup \mathcal{E}_A\right\}} P_{\varepsilon}}\left[\ell\left(f_\psi(G), y\right)\right]+\gamma \operatorname{Var}_{\varepsilon \in\left\{\mathcal{E} \cup \mathcal{E}_A\right\}}\left[\mathbb{E}_{(G, y) \sim P_{\varepsilon}} \ell\left(f_\psi(G), y\right)\right],
\end{equation} where $\mathcal{E}_A$ are the augmented environments.

    \item \textbf{GIL} \citep{li2022learning}, Equation (8): \begin{equation}
    \mathbb{E}_{e \in \operatorname{supp}\left(\mathcal{E}_{i n f e r}\right)} \mathcal{R}^e(f(\mathrm{G}), \mathrm{Y} ; \theta)+\lambda \operatorname{trace}\left(\operatorname{Var}_{\mathcal{E}_{\text {infer }}}\left(\nabla_\theta \mathcal{R}^e\right)\right),
    \end{equation}  where $\mathcal{E}_{\text {infer }}$ is the inferred environment label.

\end{itemize}

\subsection{Comparison with Existing Node-level OOD Generalization Works}
\label{comparison_sec}
 Among the graph OOD methods, one line of research focuses on the node-level OOD generalization \citep{ wu2021handling, liu2023flood,  zhu2021shift, li2023invariant, xia2024learning}. 
 We summarize the drawbacks of previous node-level OOD methods as follows. 1) it is hard for environment-inference-based methods to generate reliable environments. To generate environments, FLOOD \citep{liu2023flood} uses random data augmentation that lacks sufficient prior. EERM \citep{wu2021handling} generates environments by maximizing loss variance, which may not necessarily enlarge differences in spurious features across environments that have been proven to be crucial for invariant learning \citep{chen2023rethinking}. Also, the adversarial learning of its environment generation process may lead to unstable performance (Table \ref{main_exp_table}) and high training costs. INL \citep{li2023invariant} relies on an estimated invariant ego-graph of each node, whose quality could significantly affect performance. Moreover, all these methods need to manually specify the number of environments, which could be inaccurate. 2) previous node-level invariant learning objectives also have some limitations. For instance, \cite{zhang2021stable, wu2021handling, liu2023flood, li2023invariant, tian2024graphs} use VREx \cite{krueger2021out} as their invariance regularization. However, we theoretically prove its potential failure on node-level OOD tasks in Section \ref{classic_fail}. SRGNN \citep{zhu2021shift} only aligns the marginal distribution $p(X)$ between the biased training distribution and the given unbiased distribution, which has been proved to have failure cases \citep{johansson2019support}. 3) some work are based on intuitive guidelines and lacks theoretical guarantees on the OOD generalization performance \citep{liu2023flood, zhu2021shift, yang2023individual}. Our proposed CIA-LRA achieves invariant learning without complex environment inference that could be unstable through a representation alignment manner. Additionally, we provide theoretical guarantees for our invariant learning objective and empirically validate its working mechanism. 

\textbf{Comparison between our Theorem \ref{prop1} and Theorem 1 of \cite{wu2021handling}}
It's worth mentioning that Theorem 1 of \cite{wu2021handling} proves $\min_\Theta \mathbb{V}_e[R(e)]$ will $\min I(y,e|z)$, where $q(z|x)$ is the induced distribution by encoder $\phi$. This seems to conflict with Theorem \ref{prop1}. This is because the upper bound derived in \cite{wu2021handling} $I(y,e|z)\leq D_{\text{KL}}(q(y|z)\|\mathbb{E}_e[q(y|z)])\leq \mathbb{V}_e[R(e)]$ is not tight. Thus, minimizing the variance of loss across training environments does not necessarily minimize mutual information between the label and the environment given the learned representation.

\subsection{Graph-level OOD Generalization}
There has been a substantial amount of work focusing on the OOD generalization problem on graphs. The vast majority have centered on graph classification tasks\citep{chen2022learning,  wu2022discovering, li2022learning, li2023graph, yang2022learning, yu2023mind, chen2023rethinking, buffelli2022sizeshiftreg, jia2023graph, li2022ood, chen2024does, chen2024unifying, gui2024joint}. Most works aim at identifying the invariant subgraph of a whole graph through specific regularization so that the model can use it when inference. Compared to node-level OOD generalization, the graph-level one is more akin to traditional OOD generalization, as the individual samples (graphs) are independently distributed. We focus on the more challenging node-level OOD generalization in this work.

\section{Additional Theoretical Results of the Covariate Shift Case}
\label{theo_sec_cov}
\subsection{Theoretical Model Setup of the Covariate Shift Case}
In this section, we will extend our theoretical model in the main text to the covariate shift setting. The causal graph of the covariate shift is shown in Figure \ref{SCM_cov}. For the covariate shift setting, 
spurious features are independent of $Y$, while $X$ changes with environment $e$. Thus we can model the data generation process for environment $e$ as
\begin{equation}
\label{data_gene_cov}
    Y^e=(\tilde{A^e})^kX_1+n_1,\quad X_2^e=n_2+{\epsilon^e},
\end{equation}
where the definition of $n_1$ and $n_2$ are the same as Section \ref{theo_sec}, ${\epsilon^e}$ represents environmental spurious features. $\epsilon^e_i$ (each dimension of $\epsilon^e$) is a random variable that are independent for $i=1,...,N^e$. We assume the intra-environment expectation of the environment spurious variable is $\mathbb{E}_{\epsilon_i\sim p_e}[\epsilon_i]=\mu^e\in \mathbb{R}$ since spurious features are consistent in a certain environment. We further assume the cross-environment expectation $\mathbb{E}_e[\epsilon^e]=\mathbf{0}$ and cross-environment variance $\mathbb{V}_e[\epsilon_i^e]=\sigma^2$, $i=1,...,N^e$ for simplicity. This is consistent with the covariate shift case that 
$p(X)$ can arbitrarily change across different domains, and the support set of $X$ may vary. Also, we require $L\geq k$ to ensure the GNN has enough capacity to learn the causal representations.

\subsection{Theoretical Results of the Covariate Shift Case}
Now we will present similar conclusions as the concept shift case. Even if VREx and IRMv1 can successfully capture invariant features in the non-graph task, they induce a model that uses spurious features. Still, CIA can learn invariant representations under covariate shift. 

\begin{proposition}
\label{prop_VREx_IRM_suc_cov}
    \textbf{(VREx and IRMv1 learn invariant features for non-graph tasks under covariate shift, proof is in Appendix \ref{proof_IRM_VREx_suc_cov})} For the non-graph version of the SCM in Equation (\ref{data_gene_cov}), 
    \begin{equation}
        Y^e=X_1+n_1,~X_2^e=n_2+{\epsilon^e},    
    \end{equation}
    Optimizing VREx $\min_\Theta \mathcal{L}_{\text{VREx}}=\mathbb{V}_e[R(e)]$ and IRMv1 $\min_\Theta \mathcal{L}_{\text{IRMv1}}=\mathbb{E}_e[\|\nabla_{w|w=1.0} R(e)\|^2]$ will learn invariant features when using a 1-layer linear network: $f(X)=\theta_1 X_1 + \theta_2 X_2$.
\end{proposition}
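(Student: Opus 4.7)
The plan is to mirror the non-graph concept-shift argument of Proposition~\ref{prop_VREx_IRM_suc} and reduce both objectives to algebraic constraints on $(\theta_1,\theta_2)$ whose unique joint solution is the invariant one $(\theta_1,\theta_2)=(1,0)$. First I would write the residual in environment $e$ as $f(X^e)-Y^e=(\theta_1-1)X_1+\theta_2 n_2+\theta_2\epsilon^e-n_1$, take expectation over the zero-mean Gaussian noises $n_1,n_2$, and obtain $R(e)=(\theta_1-1)^2\|X_1\|^2+2(\theta_1-1)\theta_2\langle X_1,\epsilon^e\rangle+\theta_2^2\|\epsilon^e\|^2+\theta_2^2\sigma_{n_2}^2+\sigma_{n_1}^2$. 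The key structural observation is that the only $e$-dependence sits inside the two middle terms, via $\langle X_1,\epsilon^e\rangle$ and $\|\epsilon^e\|^2$, whose cross-environment statistics are dictated by the stated moments $\mathbb{E}_e[\epsilon^e]=\mathbf{0}$ and $\mathbb{V}_e[\epsilon^e_i]=\sigma^2>0$. Because $X_1$ and $\epsilon^e$ are independent under the covariate-shift SCM of Figure~\ref{SCM_cov}, the $e$-variation of these moments is genuine and cannot be annihilated by any single choice of $\theta_1$.

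For the VREx branch, I would compute $\mathbb{V}_e[R(e)]$ using the above decomposition. The $e$-invariant terms contribute nothing, so $\mathbb{V}_e[R(e)]$ becomes a non-negative polynomial in $(\theta_1,\theta_2)$ whose leading $e$-dependent part carries a strictly positive multiple of $\theta_2^2\sigma^2$ from $\|\epsilon^e\|^2$ together with a cross-term driven by the intra-environment mean $\mu^e$. Both vanish across $e$ only when $\theta_2=0$, so demanding $\mathbb{V}_e[R(e)]=0$ forces $\theta_2=0$; substituting back reduces the ERM risk $\mathbb{E}_e[R(e)]$ to $(\theta_1-1)^2\|X_1\|^2+\sigma_{n_1}^2$, which is minimized at $\theta_1=1$. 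For the IRMv1 branch, I would insert the scalar dummy classifier $w$ so that $f_w(X)=w(\theta_1 X_1+\theta_2 X_2)$, compute $\partial_w R(e,w)\big|_{w=1}$ in closed form, and observe that the result is an affine function of $\langle X_1,\epsilon^e\rangle$ and $\|\epsilon^e\|^2$ with coefficients depending on $\theta_1,\theta_2$. Requiring this gradient to vanish in every environment amounts to asking an $e$-varying quantity to equal zero for all $e$; this again isolates $\theta_2=0$, after which the ERM term pins $\theta_1=1$.

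The principal obstacle is bookkeeping: verifying that the $e$-dependent part of $R(e)$ is truly non-constant and cannot be killed by any tuning of $\theta_1$. The two non-degeneracy inputs are (i) $\sigma^2>0$, so that $\|\epsilon^e\|^2$ and the intra-environment mean $\mu^e$ genuinely fluctuate across environments, and (ii) the independence $X_1 \perp\!\!\!\perp \epsilon^e$ guaranteed by the covariate-shift SCM, which prevents accidental cancellation in the cross term $\langle X_1,\epsilon^e\rangle$. Once these are in place the argument reduces to the same structure as Proposition~\ref{prop_VREx_IRM_suc}, with the role previously played by the concept-shift environmental coupling now absorbed into the environment-dependent moments of $\epsilon^e$.
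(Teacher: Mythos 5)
Your proposal is correct and follows essentially the same route as the paper's proof: expand $R(e)$ over the Gaussian noises, isolate the environment-dependent terms involving $\epsilon^e$, and conclude that VREx (variance across $e$) and IRMv1 ($\nabla_w R(e)=0$ for every $e$) can only be satisfied with $\theta_2=0$, after which the risk term fixes $\theta_1=1$. The only cosmetic difference is that for VREx you argue directly that zero cross-environment variance forces the $e$-dependent terms to vanish, whereas the paper derives $\theta_2=0$ from the explicit first-order condition $\partial_{\theta_1}\mathbb{V}_e[R(e)]=0$ and then verifies stationarity in $\theta_2$; the underlying reduction is the same.
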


\begin{proposition}
\label{VREx_fail_cov}
    \textbf{(VREx will use spurious features on graphs under covariate shift)}  Under the SCM of Equation (\ref{data_gene_cov}), the objective $\min_\Theta \mathbb{V}_e[R(e)]$ has non-unique solutions for parameters of the GNN (\ref{GNN}) when part of the model parameters $\{{\theta^1_1}^{(l)},{\theta^2_1}^{(l)},{\theta^1_2}^{(l)},{\theta^2_2}^{(l)}\}$ take the values 
    \begin{equation}
    \label{spec_fail_solution}
        \Theta_0=\left\{
        \begin{array}{l}
            {\theta^1_1}^{(l)}=1, {\theta^2_1}^{(l)}=1, \quad l=L-1, ..., L-s+1  \\
            {\theta^1_1}^{(l)}=0, {\theta^2_1}^{(l)}=1, \quad l=L-s, L-s-1, ..., 1 \\
            {\theta^1_2}^{(l)}=0, {\theta^2_2}^{(l)}=1, \quad l=L-1, ..., 1       \\
        \end{array}
        \right.,
    \end{equation} $0<s<L$ is some positive integer. Specifically, the VREx solutions of $\theta_1$ and $\theta_2$ are the sets of solutions of the cubic equation, some of which are spurious solutions that $\theta_2\neq 0$ (although $\theta_2=0$ is indeed one of the solutions, VREx is not guaranteed to reach this solution):
    \begin{equation}
        \left\{
        \begin{array}{l}
            c_1\sigma^2(2\theta_1 \theta_2+ (\theta_2)^2-2 c_2 \sigma^2 \theta_2)+c_3\theta_2-\mathbb{E}_e[N^e]c_1 \sigma^2 \theta_1 \theta_2 + \mathbb{E}_e[N^e]c_2 \sigma^2 \theta_2 =0 \\
            \left[ c_3\theta_2-\mathbb{E}_e[N^e]c_1 \sigma^2 \theta_1 \theta_2 + \mathbb{E}_e[N^e]c_2 \sigma^2 \theta_2 \right]c_4-c_5(\theta_2)^2=0
        \end{array}
        \right. .
    \end{equation}
    where  $ c_1= \mathbb{E}[ (\tilde{{A^e}}^{s}X_1)^\top (\tilde{{A^e}}^{s}X_1)]$, $c_2= \mathbb{E}[(\tilde{{A^e}}^{s}X_1)^\top (\tilde{{A^e}}^{k}X_1)] $, 
    $ c_3=\mathbb{E}_e[{\epsilon^e}^\top {\epsilon^e} {\epsilon^e}^\top (\tilde{{A^e}}^{s} X_1)] \sigma^2$, $c_4=\mathbb{E}_e\left[  ({(\tilde{A^e})^kX_1})^\top \mathbf{1_{N^e}} \right]\sigma^2 $,
    $ c_5=\mathbb{E}_e \left[N^e\left(\text{tr} ( (\tilde{{A^e}}^{k})^\top \tilde{{A^e}}^{k}) + N^e (1+\sigma^2)  \right)   \right] $.
\end{proposition}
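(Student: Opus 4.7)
The plan is to reduce the VREx objective under the partially fixed aggregation parameters $\Theta_0$ to a scalar function of only the top-layer weights $(\theta_1,\theta_2)$, then solve the stationarity conditions and observe that a spurious branch with $\theta_2\neq 0$ coexists with the invariant branch $\theta_2=0$.

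First, I would unwind the recursion in (\ref{GNN}) under $\Theta_0$. In the spurious pathway every layer uses the coefficients $(0,1)$, so propagation reduces to iterated diagonal rescaling by $\bar{I}$ with no genuine aggregation; in the invariant pathway the contiguous run of $(1,1)$ layers produces a power of $\bar A + \bar I = \tilde{A^e}$, interleaved with diagonal rescalings by $\bar I$. After absorbing the $\bar I$ factors into rescaled features $\widetilde X_1, \widetilde X_2^e$, the GNN output collapses to
\begin{equation*}
f_\Theta(A^e,X^e) \;=\; \theta_1\,(\tilde{A^e})^{s}\,\widetilde X_1 \;+\; \theta_2\,\widetilde X_2^{e},
\end{equation*}
so the infinite-dimensional GNN optimization reduces to a two-variable problem.

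Next, I would substitute $Y^e = (\tilde{A^e})^{k}X_1 + n_1$ and $X_2^e = n_2 + \epsilon^e$ into $R(e) = \mathbb E_{n_1,n_2}[\|f_\Theta(A^e,X^e) - Y^e\|_2^2]$, expand the square, and use $\mathbb E[n_1]=\mathbb E[n_2]=0$ together with their independence from everything else to kill noise cross-terms. This gives a closed form for $R(e)$ whose $e$-dependent pieces involve $\langle(\tilde{A^e})^{s}X_1,(\tilde{A^e})^{k}X_1\rangle$, $\|(\tilde{A^e})^{s}X_1\|^2$, $\langle (\tilde{A^e})^{k}X_1,\mathbf 1_{N^e}\rangle$, $\|\epsilon^e\|^2$, $\langle\epsilon^e,(\tilde{A^e})^{s}X_1\rangle$, and $N^e$. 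Taking the cross-environment variance, and using $\mathbb E_e[\epsilon^e] = 0$ and $\mathbb V_e[\epsilon_i^e] = \sigma^2$ to eliminate first-moment contributions, collapses the surviving terms exactly into the bulk constants $c_1=\mathbb E_e[\|(\tilde{A^e})^{s}X_1\|^2]$, $c_2=\mathbb E_e[\langle(\tilde{A^e})^{s}X_1,(\tilde{A^e})^{k}X_1\rangle]$, $c_4=\mathbb E_e[\langle(\tilde{A^e})^{k}X_1,\mathbf 1_{N^e}\rangle]\sigma^2$, $c_5$, and the cubic moment $c_3=\mathbb E_e[\epsilon^{e\top}\epsilon^e\epsilon^{e\top}(\tilde{A^e})^{s}X_1]\sigma^2$ appearing in the statement.

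Finally, writing out $\partial\mathcal L_{\mathrm{VREx}}/\partial\theta_1=0$ and $\partial\mathcal L_{\mathrm{VREx}}/\partial\theta_2=0$ and collecting common factors recovers precisely the two equations displayed in the proposition. Setting $\theta_2=0$ kills every monomial on both sides, so the invariant branch is indeed a root; but when $\theta_2\neq 0$ the first equation becomes affine in $\theta_1$, allowing one to eliminate $\theta_1$ and substitute into the second, producing a genuine cubic in $\theta_2$ with leading coefficient $-c_5$, nonzero since $c_5>0$ by positive-definiteness of the inner products defining it. This cubic has at least one real nonzero root, yielding a stationary point of $\mathcal L_{\mathrm{VREx}}$ with $\theta_2\neq 0$, so VREx admits non-unique solutions and can converge to a predictor that uses spurious features. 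The main obstacle I anticipate is bookkeeping: the diagonal $\bar I$ factors do not exactly commute with $\tilde{A^e}$, so the absorption into $\widetilde X_1,\widetilde X_2^e$ must either be done carefully or folded into the constants $c_1,\ldots,c_5$; the secondary subtlety is that the cubic moment $c_3$ is the only place where independence of the coordinates $\epsilon_i^e$ is used, and it is precisely what prevents the second equation from degenerating into a quadratic with $\theta_2=0$ as the unique real root.
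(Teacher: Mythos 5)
Your plan follows essentially the same route as the paper's proof: freeze the aggregation parameters at $\Theta_0$ so that the invariant branch collapses to $(\tilde{A^e})^{s}X_1$ and the spurious branch to the identity acting on $n_2+\epsilon^e$, reduce $\mathbb{V}_e[R(e)]$ to a function of $(\theta_1,\theta_2)$ using $\mathbb{E}_e[\epsilon^e]=0$, $\mathbb{V}_e[\epsilon^e_i]=\sigma^2$ and the third moment that becomes $c_3$, set the two partial derivatives to zero, and read off that every term carries a factor $\theta_2$, so both the invariant root $\theta_2=0$ and spurious roots coexist. This is the paper's argument almost verbatim.

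One step you omit is needed to make the conclusion about ``solutions for parameters of the GNN'' legitimate: the lower-layer weights ${\theta^i_j}^{(l)}$ are optimization variables, not constants, so a stationary point of the restricted two-variable problem is only a solution of the full objective if the gradients with respect to those frozen parameters also vanish at $\Theta_0$. The paper closes this loop by observing (see the discussion around Equation (\ref{spec_fail_solution}) and Appendix \ref{spec_fail_sec}) that at $\Theta_0$ the representations degenerate so that $\partial \mathbb{V}_e[R(e)]/\partial {\theta^i_1}^{(l)}$ and $\partial \mathbb{V}_e[R(e)]/\partial {\theta^i_2}^{(l)}$ are proportional to $\partial \mathbb{V}_e[R(e)]/\partial \theta_1$ and $\partial \mathbb{V}_e[R(e)]/\partial \theta_2$ respectively, hence vanish simultaneously; without this remark your reduction only produces constrained stationary points. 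A second, minor point: after factoring $\theta_2$ out of both displayed equations the remaining system is affine in $(\theta_1,\theta_2)$, so eliminating $\theta_1$ gives a linear, not cubic, condition for the nonzero root; this only simplifies the existence argument (generically a unique spurious root with $\theta_2\neq 0$), so it does not affect the conclusion, but the ``cubic with leading coefficient $-c_5$'' description should be corrected. Your flagged bookkeeping concern about the $\bar{I}$ factors is fair -- the paper simply takes ${C_2^e}'=I_{N^e}$ and $C_1^e=(\tilde{A^e})^{s}X_1$ under $\Theta_0$ -- and handling it by folding the normalization into the constants $c_1,\dots,c_5$ is consistent with what the paper does.
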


\begin{proposition}
\label{IRM_fail_cov}
    \textbf{(IRMv1 will use spurious features on graphs under covariate shift)}   Under the SCM of Equation (\ref{data_gene_cov}),  there exists $s\in \mathbb{N}^+$ that satisfies $0<s<L$ and $s\neq k$ such that optimizing the IRMv1 objective $\min_\Theta \mathbb{E}_e[\|\nabla_{w|w=1.0} R(e)\|^2]$ will not lead to the invariant solution $\theta_2=0$ for parameters of the GNN (\ref{GNN}) when $\{{\theta^1_1}^{(l)},{\theta^2_1}^{(l)},{\theta^1_2}^{(l)},{\theta^2_2}^{(l)}\}$ take the special solution:
    \begin{equation}
        \Theta_0=\left\{
        \begin{array}{l}
            {\theta^1_1}^{(l)}=1, {\theta^2_1}^{(l)}=1, \quad l=L-1, ..., L-{s}+1                                                                                                                                                                                                                                                                                                      \\
            {\theta^1_1}^{(l)}=0, {\theta^2_1}^{(l)}=1, \quad l=L-{s}, L-{s}-1, ..., 1                                                                                                                                                                                                                                                                                                     \\
            {\theta^1_2}^{(l)}=0, {\theta^2_2}^{(l)}=1, \quad l=L-1, ..., 1                                                                                                                                                                                                                                                                                                           \\
        \end{array}
        \right..
    \end{equation}
\end{proposition}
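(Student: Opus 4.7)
The plan is to adapt the concept-shift argument behind Theorem \ref{prop1} to the covariate-shift SCM in Equation (\ref{data_gene_cov}): under the fixed lower-layer configuration $\Theta_0$, the GNN (\ref{GNN}) collapses to a two-scalar predictor, and the IRMv1 stationarity condition at $w=1$ turns into a per-environment scalar equation whose only $\theta_2=0$ solution would demand a different $\theta_1$ in each environment. Picking any $s\in\{1,\dots,L-1\}\setminus\{k\}$ (non-empty since $L\geq k\geq 1$ and we may assume $L\geq 2$) will supply the desired counterexample.

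First I would substitute $\Theta_0$ into the recursion of (\ref{GNN}). With ${\theta^1_1}^{(l)}={\theta^2_1}^{(l)}=1$ in the top $s-1$ layers and ${\theta^1_1}^{(l)}=0,{\theta^2_1}^{(l)}=1$ on the remaining layers, the invariant branch collapses (absorbing the diagonal rescalings coming from $\bar A$ and $\bar I$ into constants) to $(\tilde{A^e})^s X_1$; the spurious branch collapses to $X_2^e$ because ${\theta^1_2}^{(l)}=0,{\theta^2_2}^{(l)}=1$ disables neighbor aggregation on that branch. The effective predictor is therefore
\begin{equation*}
f_\Theta(A^e, X^e)=\theta_1 (\tilde{A^e})^s X_1 + \theta_2 X_2^e,
\end{equation*}
reducing the IRMv1 objective to optimization over the two scalars $(\theta_1,\theta_2)$. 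Using $Y^e=(\tilde{A^e})^k X_1 + n_1$, $X_2^e=n_2+\epsilon^e$, independence of $n_1,n_2$, and the intra-environment moments of $\epsilon^e$ given in Equation (\ref{data_gene_cov}), a direct expansion of $\nabla_{w\mid w=1}R(e)$ gives a per-environment scalar equation of the form
\begin{equation*}
\theta_1^{2}\,\bigl\|(\tilde{A^e})^s X_1\bigr\|^2 - \theta_1\,\bigl((\tilde{A^e})^s X_1\bigr)^{\!\top}(\tilde{A^e})^k X_1 + \theta_2\,Q(\theta_1,\theta_2;e) = 0,
\end{equation*}
where $Q$ collects every term carrying at least one factor of $\theta_2$ (the self-term $\theta_2^2(N^e\sigma^2+\|\epsilon^e\|^2)$ and cross terms coupling $\epsilon^e$ with the $s$-hop and $k$-hop features).

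The core step is the contradiction. Assume $\theta_2=0$ at an IRMv1 stationary point. The equation above collapses, for every $e\in\mathcal{E}_{\text{tr}}$, to the Rayleigh-type ratio
\begin{equation*}
\theta_1=\frac{\bigl((\tilde{A^e})^s X_1\bigr)^{\!\top}(\tilde{A^e})^k X_1}{\bigl\|(\tilde{A^e})^s X_1\bigr\|^2},
\end{equation*}
which depends on the environment through $\tilde{A^e}$. Since $s\neq k$ and the covariate-shift SCM allows $\tilde{A^e}$ to distinguish $s$-hop from $k$-hop aggregations across $e$, this ratio is generically non-constant in $e$, so no single scalar $\theta_1$ satisfies all the per-environment equations simultaneously. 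Hence any joint IRMv1 solution under $\Theta_0$ must have $\theta_2\neq 0$.

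The main obstacle I anticipate is rigorously justifying that this Rayleigh ratio is not accidentally constant across the training environments; this is the same graph-side genericity required for the concept-shift failure (Theorem \ref{prop1}) and must be read off from the covariate-shift SCM in which $\tilde{A^e}$ genuinely varies with $e$. A secondary technical nuisance is handling the cross terms in $\epsilon^e$ inside $Q$: one must verify that $Q$ does not vanish in a way that would permit the $\theta_2\neq 0$ branch to merge with $\theta_2=0$. Once these points are settled, the remainder is algebra parallel to the VREx-covariate proof (Proposition \ref{VREx_fail_cov}), with the loss-variance functional replaced by the squared classifier-gradient functional.
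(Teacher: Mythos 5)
Your reduction under $\Theta_0$ (invariant branch collapsing to $(\tilde{A^e})^s X_1$, spurious branch to $X_2^e=n_2+\epsilon^e$) and your core contradiction are the same first step the paper takes: with $\theta_2=0$ and $s\neq k$, forcing $\nabla_{w}R(e)=0$ in every environment pins $\theta_1$ to the environment-dependent ratio $\bigl((\tilde{A^e})^s X_1\bigr)^{\!\top}(\tilde{A^e})^k X_1\big/\|(\tilde{A^e})^s X_1\|^2$ (your version of this ratio is in fact the correct one; the paper's displayed ratio is inverted, evidently a typo), so no single $\theta_1$ works. The genericity caveat you raise is handled no more rigorously in the paper, which simply asserts the ratio varies with $e$.

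The gap is in how you get from that contradiction to the conclusion of the proposition. You phrase the contradiction as holding "at an IRMv1 stationary point", but a stationary point of $\mathbb{E}_e[\|\nabla_{w}R(e)\|^2]$ does not require $\nabla_w R(e)=0$ separately for each $e$; that equivalence only holds when the penalty is driven exactly to zero. Your argument therefore shows that $(\theta_1,0)$ cannot be an ideal (zero-penalty) IRM solution when $s\neq k$, but not yet that minimizing the IRMv1 objective fails to land on $\theta_2=0$. The paper closes this by differentiating the full objective $\mathbb{E}_e[\|\nabla_w R(e)\|^2]$ with respect to $\theta_1$ and $\theta_2$, obtaining two cubic stationarity equations, and then invoking the degeneracy argument of its Appendix on the special solutions: under configurations of the form $\Theta_0$, the gradients with respect to the lower-layer aggregation weights are scalar multiples of the gradients with respect to $\theta_1,\theta_2$, so first-order stationarity in the top layer already implies stationarity in all lower layers. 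Hence IRMv1 supplies no signal pushing $s$ toward $k$, and the top-layer stationarity system admits environment-average-dependent solutions with $\theta_2\neq 0$. Your closing remark that "the remainder is algebra parallel to the VREx-covariate proof" points in this direction, but the specific mechanism — stationarity of the squared-gradient functional plus the implied vanishing of lower-layer gradients under $\Theta_0$ — is the load-bearing step and needs to be carried out, not just the per-environment zero-gradient contradiction.
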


\begin{proposition}
    Optimizing the CIA objective will lead to the optimal solution $\Theta^*$:
    \begin{equation}
        \left\{
        \begin{array}{l}
            \theta_1= 1                                                                                      \\
            \theta_2=0 \\
            {\theta^1_1}^{(l)}=1, {\theta^2_1}^{(l)}=1, \quad l=L-1, ..., L-k+1                           \\
            {\theta^1_1}^{(l)}=0, {\theta^2_1}^{(l)}=1, \quad l=L-k, L-k-1, ..., 1                           \\
        \end{array}
        \right..
    \end{equation}
\end{proposition}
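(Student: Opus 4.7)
The plan is to adapt the proof of Theorem \ref{prop3} (the concept-shift analogue of this statement) to the covariate-shift data model of Equation (\ref{data_gene_cov}). Because the CIA regularizer uses a squared L-2 distance on the GNN embedding $\phi_\Theta(i) = [H_1^{(L)}[i];\, H_2^{(L)}[i]]$, the penalty splits channel-wise as
\begin{equation}
    \mathcal{L}_{\text{CIA}} = \mathbb{E}_{e \neq e',\, c,\, (i,j) \in \Omega^{e,e'}_c}\bigl[(H_1^{(L)}[i] - H_1^{(L)}[j])^2 + (H_2^{(L)}[i] - H_2^{(L)}[j])^2\bigr].
\end{equation}
I would then analyze the two summands independently: the invariant summand pins down the aggregation depth along the causal channel (lines 3--4 of Equation (\ref{optimal_solu})), while the spurious summand forces the spurious channel to be shut down. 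The top-layer weight $\theta_1 = 1$ then follows from the remaining ERM term.

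For the spurious summand I would unroll the recursion in Equation (\ref{GNN}) to express $H_2^{(L)}$ as a fixed polynomial in $\tilde{A}^e$ applied to $X_2^e = n_2 + \epsilon^e$, with coefficients determined by the weights $\{{\theta^1_2}^{(l)}, {\theta^2_2}^{(l)}\}$ and $\theta_2$. Because $\mathbb{E}_{\epsilon_i \sim p_e}[\epsilon_i] = \mu^e$ while $\mathbb{E}_e[\epsilon^e] = \mathbf{0}$, for two same-class nodes $i,j$ drawn from distinct environments $e, e'$ the expected squared gap contributes a strictly positive quadratic form in those coefficients, multiplied by a non-vanishing $(\mu^e - \mu^{e'})^2$ factor, plus independent noise contributions from $n_2$ that do not cancel across nodes. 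The minimum is therefore attained only when the whole $X_2$-path is annihilated, and combined with the top-layer classifier this is equivalent to $\theta_2 = 0$.

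For the invariant summand I would invoke Assumption \ref{assump_stable_causal}: same-class nodes $i,j$ from different environments share (in distribution) their $k$-hop causal patterns, so $(\tilde{A}^e)^k X_1[i]$ and $(\tilde{A}^{e'})^k X_1[j]$ agree in expectation up to the label noise $n_1$. By contrast, for any effective aggregation depth $s \neq k$, the quantity $(\tilde{A}^e)^s X_1$ is no longer constrained by sharing the label $c$ through Equation (\ref{data_gene_cov}), so it exhibits non-trivial within-class cross-environment variance, giving a strictly positive contribution. Minimizing therefore forces the aggregation weights to implement exactly $s = k$, which corresponds precisely to lines 3--4 of Equation (\ref{optimal_solu}). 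Once both channels are correctly configured, the training MSE collapses to $\mathbb{E}_e \|\theta_1 (\tilde{A}^e)^k X_1 + n_1 - Y^e\|_2^2$, whose unique minimizer is $\theta_1 = 1$.

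The hard part will be the invariant summand: rigorously ruling out accidental alignment at some depth $s \neq k$. This requires a mild non-degeneracy condition stating that, restricted to any class, the aggregated features $(\tilde{A}^e)^s X_1$ are not identically distributed across environments unless $s = k$, analogous to what is implicitly used in the proof of Theorem \ref{prop3}. I would state this condition up front and then invoke it to close the argument. A secondary subtlety is handling the node-wise noise $n_1, n_2$: their independence across nodes prevents cancellation inside the CIA expectation, so one must take the inner expectation over noise before the outer expectation over pair sampling; this is what upgrades the conclusion from a bias-only statement to the ``forced-zero'' conclusion in the second paragraph.
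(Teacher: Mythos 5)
Your plan deviates from the paper in a way that breaks the key conclusion $\theta_2=0$. You split the CIA penalty channel-wise on the concatenation $[H_1^{(L)};H_2^{(L)}]$, but in the paper's theoretical setup (Appendix \ref{proof_cov}, Theorem \ref{CIA_succ_cov2}) the aligned representation is the network output $C_1^e\theta_1 + Z^e\theta_2$ of the GNN (\ref{GNN}), so the top-layer weights sit inside the alignment term. Under your split, $\theta_2$ never appears in the spurious summand: minimizing it only forces the internal spurious path to be annihilated (some layer with ${\theta_2^1}^{(l)}={\theta_2^2}^{(l)}=0$), after which $\theta_2$ is completely unconstrained; conversely, setting $\theta_2=0$ alone does not make your spurious summand vanish. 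Your remark that path annihilation is ``combined with the top-layer classifier \ldots equivalent to $\theta_2=0$'' is therefore not correct, and your route does not establish line 2 of the claimed solution. The paper avoids this by differentiating the output-level alignment with respect to $\theta_2$: the cross terms with $C_1^i-C_1^j$ vanish in expectation (zero-mean $\epsilon^e$, $n_2$ independent of $X_1$ and $\tilde{A}^e$), leaving $\partial\mathcal{L}_{\text{CIA}}/\partial\theta_2 = 2\sigma^2\theta_2$, which forces $\theta_2=0$ directly.

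There is a second gap in your invariant-channel argument: the CIA summand for $H_1^{(L)}$ is also minimized (to zero) by collapsing the invariant path, e.g.\ zeroing ${\theta_1^1}^{(l)},{\theta_1^2}^{(l)}$ at some layer, so the alignment term alone cannot ``pin down'' the depth $s=k$. The paper's proof explicitly lists this trivial solution alongside the depth-$k$ one and then uses the outer ERM objective of the bilevel formulation to select the predictive parameters ($\theta_1=1$ and the depth-$k$ aggregation weights) among the CIA-feasible configurations; your proposal invokes ERM only after assuming ``both channels are correctly configured,'' so the collapse branch is never ruled out. On the other hand, your observation that one must exclude accidental cross-environment alignment at some $s\neq k$ is legitimate: the paper handles this only implicitly through Assumption \ref{assump_stable_causal}, and stating an explicit non-degeneracy condition there would be a genuine (minor) improvement, not a flaw.
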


\section{Detailed Experimental Setup}
\label{exp_detail}
\subsection{Basic Settings}
All experimental results were averaged over three runs with different random seeds. Following \cite{gui2022good}, we use an OOD validation set for model selection. 

\textbf{GAT experiments.} For experiments on GAT, we adopt the learning rate of $0.01$ for Arxiv, $0.001$ for Cora, $0.003$ for CBAS, and $0.1$ for WebKB. The reason why we didn't use the default learning rate in \cite{gui2022good} is that since the original GOOD benchmark didn't implement GAT, so we chose to tune a learning rate for adapting GAT to reach a decent performance. The settings of batch size, training epochs, weight decay, and dropout follow \cite{gui2022good}. 

\textbf{GCN experiments.} For experiments on GCN, we follow the default settings of batch size, training epochs, learning rate, weight decay, and dropout provided by \cite{gui2022good}. 

It's worth mentioning that we choose different normalization strategies for the invariant edge mask of CIA-LRA to achieve better performance for GCN. In Equation (\ref{edge_mask}), we use Sigmoid as normalization. However, we find it is better to use a Min-Max normalization for GCN on some of the datasets. Specifically, for experiments on GCN, we use Sigmoid normalization for CBAS, Arxiv (time concept); Sigmoid for training and Min-Max for testing for WebKB (covariate); Min-Max normalization for the other dataset splits. For GAT, we use Sigmoid during training and testing for all datasets.

\subsection{Hyperparameter Settings of the Main OOD Generalization Results}
Most hyperparameter settings are adopted from \cite{gui2022good}, except that for EERM we reduce the number of generated environments from 10 to 7 and reduce the number of adversarial steps from 5 to 1 for memory and time complexity concerns. For each method, we conduct a grid search for about 3$\sim$7 values of each hyperparameter. The hyperparameter search space is presented in Table \ref{hyper2}. 
\begin{table}[htbp] 
  \caption{Hyperparameter setting of the experiments.}
  \label{hyper2}
  \centering
  \resizebox{0.7\linewidth}{!}{
  \begin{tabular}{ll}
    \toprule
      Algorithm       & Search Space    \\
    \toprule
     IRM  & $\lambda$= 0.1, 1, 10, 100    \\
     \midrule
     VREx     & $\lambda$=1, 10, 100, 1000     \\
     \midrule
     GroupDRO    & $\lambda$=0.001, 0.01, 0.1         \\
     \midrule
     Deep Coral        &$\lambda$=0.01, 0.1, 1 \\
     \midrule
     IGA        &$\lambda$=0.1, 1, 10, 100 \\
     \midrule
     Mixup        & $\alpha$=0.4, 1.0, 2.0 \\
     \midrule
     \multirow{5}{*}{EERM}    & $\beta$=0.5, 1, 3 \\
     & number of generated environments $k$=7 \\
     & adversarial training steps $t$=1\\
     & numbers of nodes for each node should be modified the link with $s$=5\\
     & subgraph generator learning rate $r$=0.0001, 0.001, 0.005, 0.01\\
     \midrule
     SRGNN & 0.000001, 0.00001, 0.0001\\
     \midrule
     GTrans & feature learning rate $r_1$=0.000005, 0.00001, 0.0001\\
     & structure learning rate $r_2$=0.01, 0.1\\
     & $\delta$ optimization steps $t$=5, 10 \\
     \midrule
    \multirow{1}{*}{CIA} & $\lambda$=0.0001, 0.0005, 0.001, 0.005, 0.01, 0.05, 0.1 (GCN)\\
                         & $\lambda$=0.0001, 0.0005, 0.001, 0.01, 0.05 (GAT) \\
    \midrule
     \multirow{2}{*}{CIA-LRA} &$\lambda$= 0.001, 0.005, 0.01, 0.05, 0.1 (GCN)\\
     &edge mask GNN learning rate $r$=0.001 (GCN + Arxiv, Cora, CBAS); $r=0.1$ (GCN + WebKB)\\
                            & $\lambda$= 0.005, 0.05, 0.1, 0.5 (GAT) \\
                            & edge mask GNN learning rate $r$=0.0001 (GAT) \\
         & hops $t$=2, 3, 4, 5, 6 \\
    \bottomrule
  \end{tabular}
  }
\end{table}

\subsection{Details of the Toy Dataset}
\label{syn_dataset_sec}
Now we introduce the setting of the toy synthetic task of Figure \ref{toy_exp_fig}. The synthetic dataset consists of four classes. Each node has a 4-dim node feature. The first/last two dimensions correspond to the invariant/spurious feature for each of the four classes, as shown in Table \ref{syn_inv_feature_table} and \ref{syn_sp_feature_table}. We artificially create both concept shift and covariate shift in this dataset. 

\begin{table}[htbp]
\centering
    \caption{Information of the toy dataset of Figure \ref{toy_exp_fig}.}
    \label{toy_dataset_setting}
\begin{subtable}{0.49\textwidth}
    \centering
    \caption{The invariant part (the first two dimensions) of the node features in the synthetic dataset.}
    \resizebox{\linewidth}{!}{
        \begin{tabular}{cccc}
            \toprule
             class 0 & class 1 & class 2 & class 3 \\
            \midrule
              $\mathcal{N}((1,1),I)$  & $\mathcal{N}((-1,1),I)$& $\mathcal{N}((-1,-1),I)$ & $\mathcal{N}((1,-1),I)$\\
            \bottomrule
        \end{tabular}
    }
    \label{syn_inv_feature_table}
\end{subtable}
\begin{subtable}{0.49\textwidth}
    \centering
    \caption{The spurious part (the last two dimensions) of the nodes features in the synthetic dataset.}
    \resizebox{\linewidth}{!}{
        \begin{tabular}{cccccc}
            \toprule
             \multirow{2}{*}{environment} & \multicolumn{4}{c}{concept shift}   &    \multirow{2}{*}{covariate shift}\\
            \cmidrule{2-5}
              & class 0 & class 1 & class 2 & class 3 &   \\
            \toprule
            training 1& $\mathcal{N}((3,3),I)$ & $\mathcal{N}((-3,3),I)$ & $\mathcal{N}((-3,-3),I)$  & $\mathcal{N}((3,-3),I)$ & $\mathcal{N}((2,2),I)$ \\
            \midrule
             training 2& $\mathcal{N}((-3,3),I)$ & $\mathcal{N}((-3,-3),I)$ & $\mathcal{N}((3,-3),I)$  & $\mathcal{N}((3,3),I)$ & $\mathcal{N}((4,4),I)$ \\
             \midrule
             test& $\mathcal{N}((-3,-3),I)$ & $\mathcal{N}((3,-3),I)$ & $\mathcal{N}((3,3),I)$  & $\mathcal{N}((-3,3),I)$ & $\mathcal{N}((6,6),I)$ \\
            \bottomrule
        \end{tabular}
        }
    \label{syn_sp_feature_table}
\end{subtable}
\end{table}

To explicitly disentangle the learned invariant and the spurious components for quantitative analysis, we employ a 1-layer GCN. We take the output of the first/last two dimensions of the weight matrix as the invariant/spurious representation.

\section{Additional Experimental Results}
\label{additional_exp}

\subsection{Excessive Alignment Leads to the Collapse of the Invariant Features.}
\label{app:excessive-align-collapse}
One may wonder why CIA underperforms CIA-LRA even if CIA uses the ground truth environment labels. In this section, we will show that CIA may suffer excessive alignment, which will lead to the collapse of the learned invariant features and consequently hurt generalization. We use the intra-class variance of the representation corresponding to invariant features (averaged over all classes) to measure the degree of collapse of invariant features. Base on this measurement, the excessive alignment can be caused by:

\textbf{1) Using a that is too large. } Evidence: on the toy dataset of Figure \ref{toy_exp_fig}, a larger $\lambda$ leads to smaller intra-class variance of invariant representations. We also compute the intra-class variance of invariant representations at epoch 50 on the toy dataset, 

\textbf{2) Aligning the representations of too many nodes.} Evidence: we show that aligning fewer node pairs can alleviate the collapse of invariant representation. By modifying CIA to align local pairs (same-class, different-environment nodes within 1 hop), termed "CIA-local", the results in Table \ref{tab:inv-collapse-local-align} show that when by aligning local pairs instead of all pairs, CIA-local avoids the collapse that CIA suffers and achieves better performance.

\begin{table}[htbp]
\centering
    \caption{Experimental evidence of the factors that can cause the collapse of the learned invariant representations.}
    \label{toy_dataset_setting}
\begin{subtable}{0.49\textwidth}
    \centering
    \caption{\textbf{ Using a that is too large can cause the collapse.} The intra-class variance of invariant representations on the toy dataset of Figure \ref{toy_exp_fig} at epoch 50.}
    \resizebox{\linewidth}{!}{
        \begin{tabular}{cccc}
            \toprule
             CIA & $\lambda=0.05$ & $\lambda=0.1$ & $\lambda=0.5$\\
            \midrule
              variance of the invariant representation & 0.061 & 0.039 & 0.011\\
            \bottomrule
        \end{tabular}
    }
    \label{tab:inv-collapse-large-lam}
\end{subtable}
\begin{subtable}{0.49\textwidth}
    \centering
    \caption{\textbf{ Aligning the representations of too many nodes can cause the collapse.} Accuracy and the variance of the invariant representations on the toy dataset of Figure \ref{toy_exp_fig} at epoch 200.}
    \resizebox{\linewidth}{!}{
        \begin{tabular}{ccc}
            \toprule
             & CIA & CIA-local \\
            \midrule
              Concept shift, accuracy		& 0.253& 0.354 \\
                \midrule
              Concept shift, 	 variance of the invariant representation	& 0.0003& 0.2327 \\
              \midrule
              Concept shift, accuracy		& 0.250 & 0.312 \\
              \midrule
              Concept shift,  variance of the invariant representation		& 0.0002& 0.1699 \\
            \bottomrule
        \end{tabular}
    }
    \label{tab:inv-collapse-local-align}
\end{subtable}
\end{table}

\subsection{CIA-LRA Alleviates Representation Collapse Caused by Excessive Alignment}
\label{sec_fc}
In Figure \ref{lambda_figure}, we illustrate the impact of $\lambda$ on OOD accuracy. Both CIA and CIA-LRA experience a performance decline at $\lambda=0.5$, indicating that excessive alignment can hinder generalization. Furthermore, CIA shows an earlier and more pronounced performance drop than CIA-LRA. This suggests that the CIA-LRA method mitigates representation collapse by aligning fewer pairs and selectively focusing on pairs with smaller differences in invariant features.

\begin{figure}[htbp]
    \centering
    \begin{subfigure}{0.24\textwidth}
        \centering
        \includegraphics[width=\linewidth]{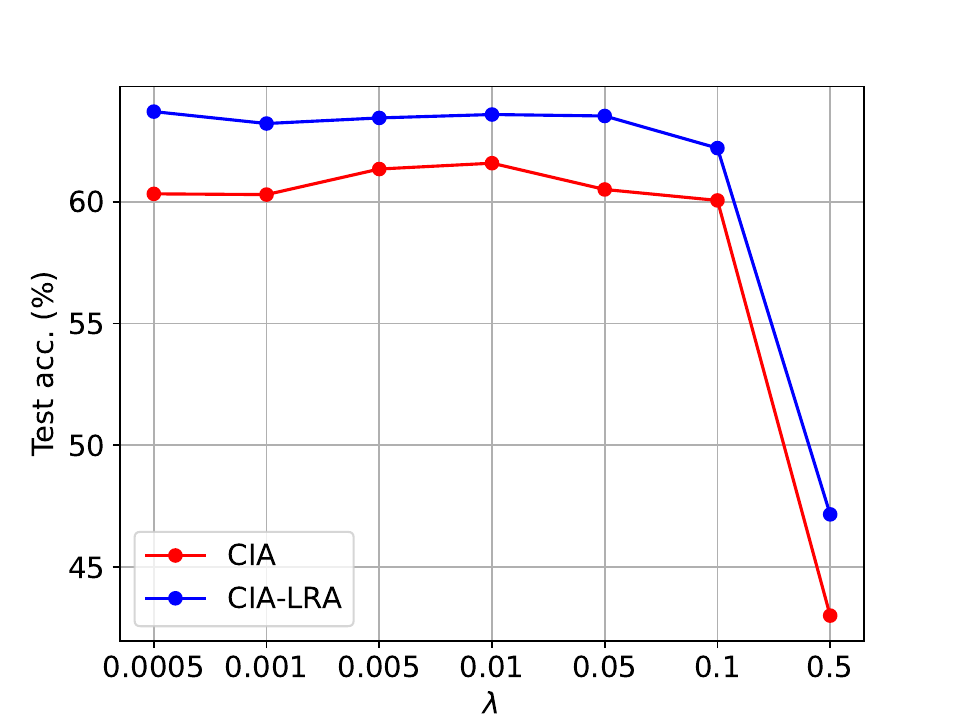}
        \caption{Cora degree concept}
    \end{subfigure}%
    \begin{subfigure}{0.24\textwidth}
        \centering
        \includegraphics[width=\linewidth]{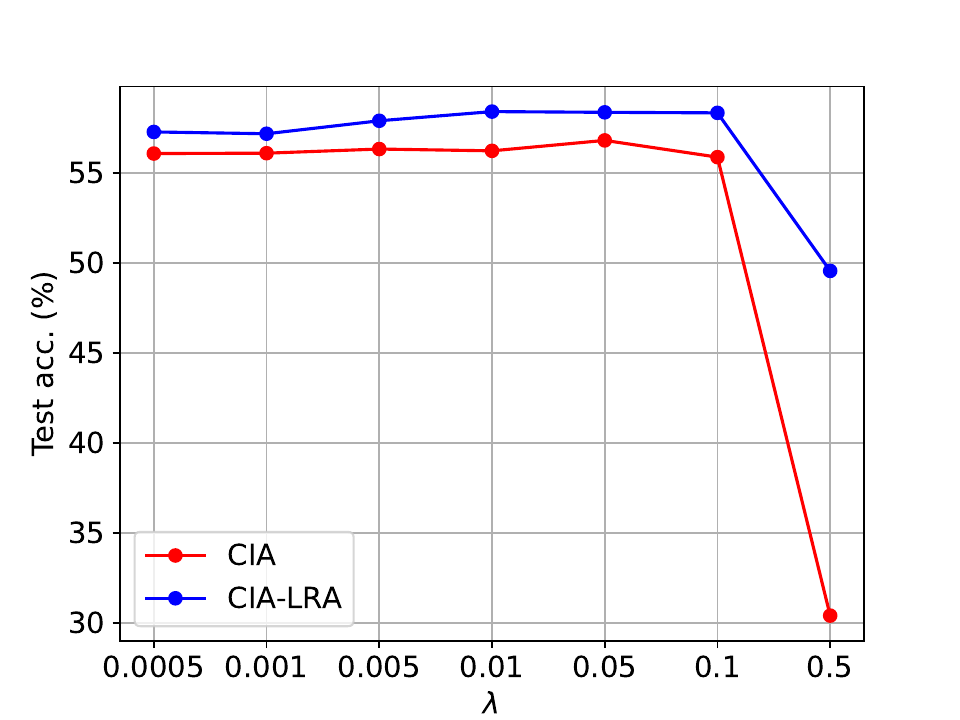}
        \caption{Cora degree covariate}
    \end{subfigure}%
    \begin{subfigure}{0.24\textwidth}
        \centering
        \includegraphics[width=\linewidth]{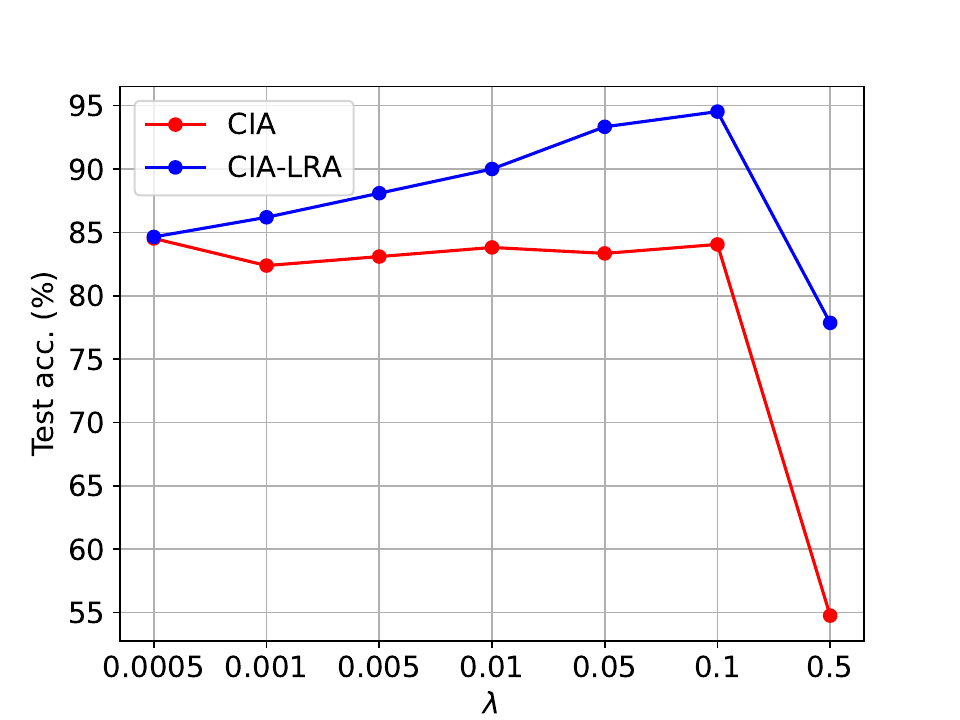}
        \caption{CBAS color concept}
    \end{subfigure}%
    \begin{subfigure}{0.24\textwidth}
        \centering
        \includegraphics[width=\linewidth]{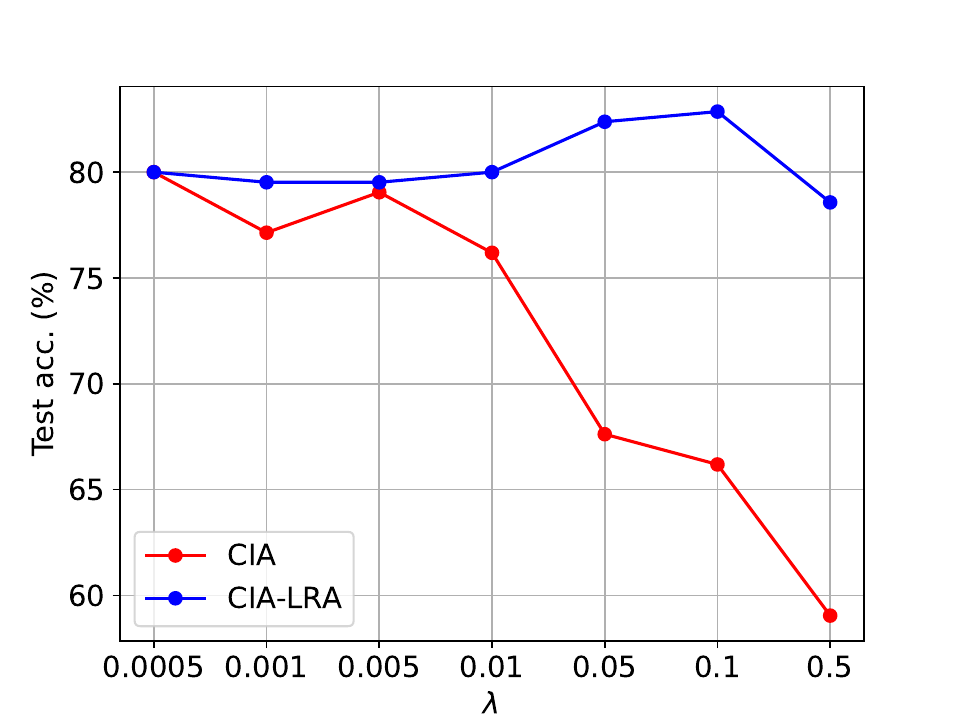}
        \caption{CBAS color covariate}
    \end{subfigure}
    \caption{The effect of $\lambda$ on OOD accuracy. CIA exhibits an earlier and more severe performance drop than CIA-LRA, demonstrating that CIA-LRA can alleviate the feature collapse caused by excessive alignment.}
    \label{lambda_figure}
\end{figure}

The role of CIA-LRA in alleviating the collapse of the invariant features can also be reflected in Figure \ref{collapse_figure}, in which the representation learned by CIA collapsed to a compact region. However, CIA-LRA does not exhibit such collapse, maintaining the diversity of the causal representation.

\begin{figure}[htbp]
    \centering
    \begin{subfigure}{0.24\textwidth}
        \centering
        \includegraphics[width=\linewidth]{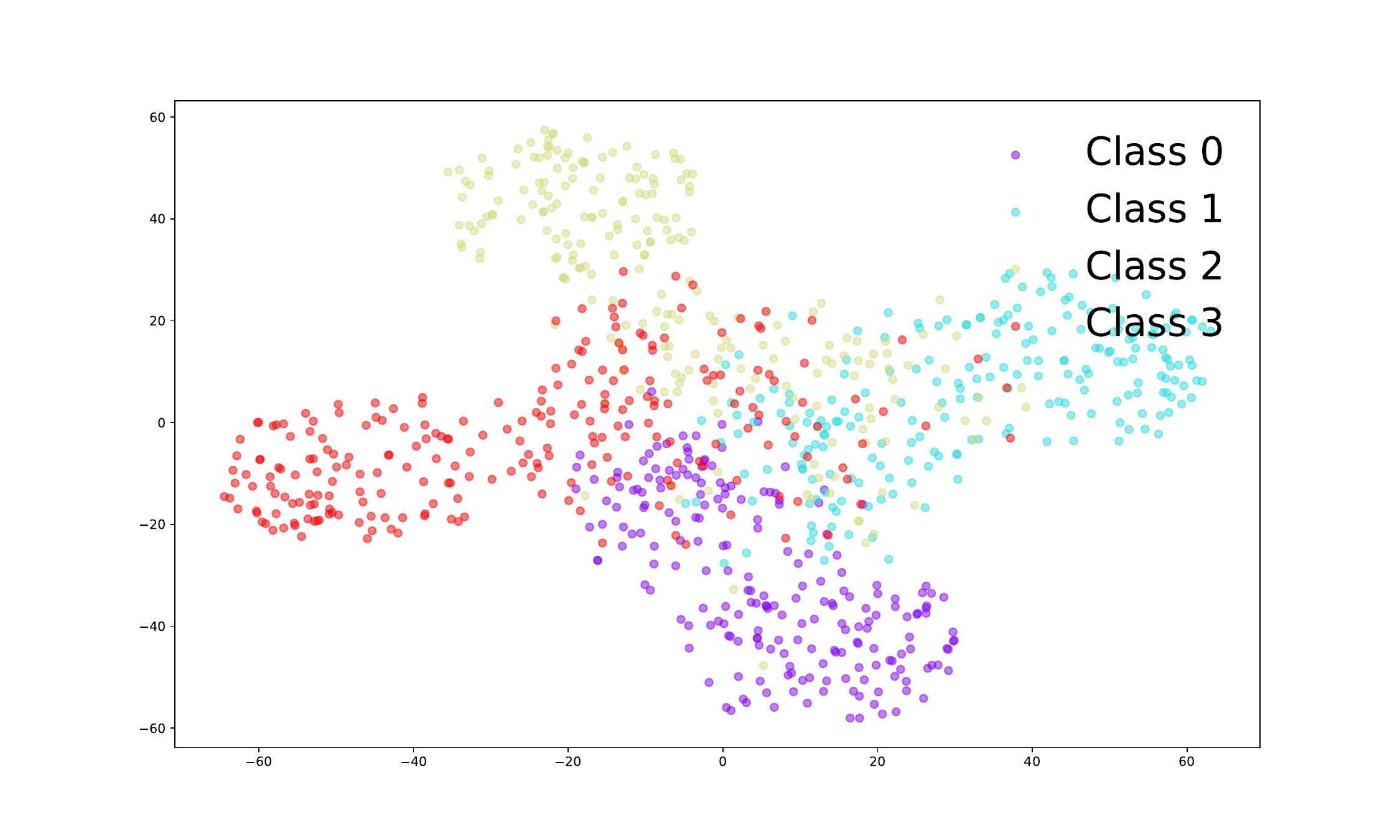}
        \caption{ERM}
    \end{subfigure}%
    \begin{subfigure}{0.24\textwidth}
        \centering
        \includegraphics[width=\linewidth]{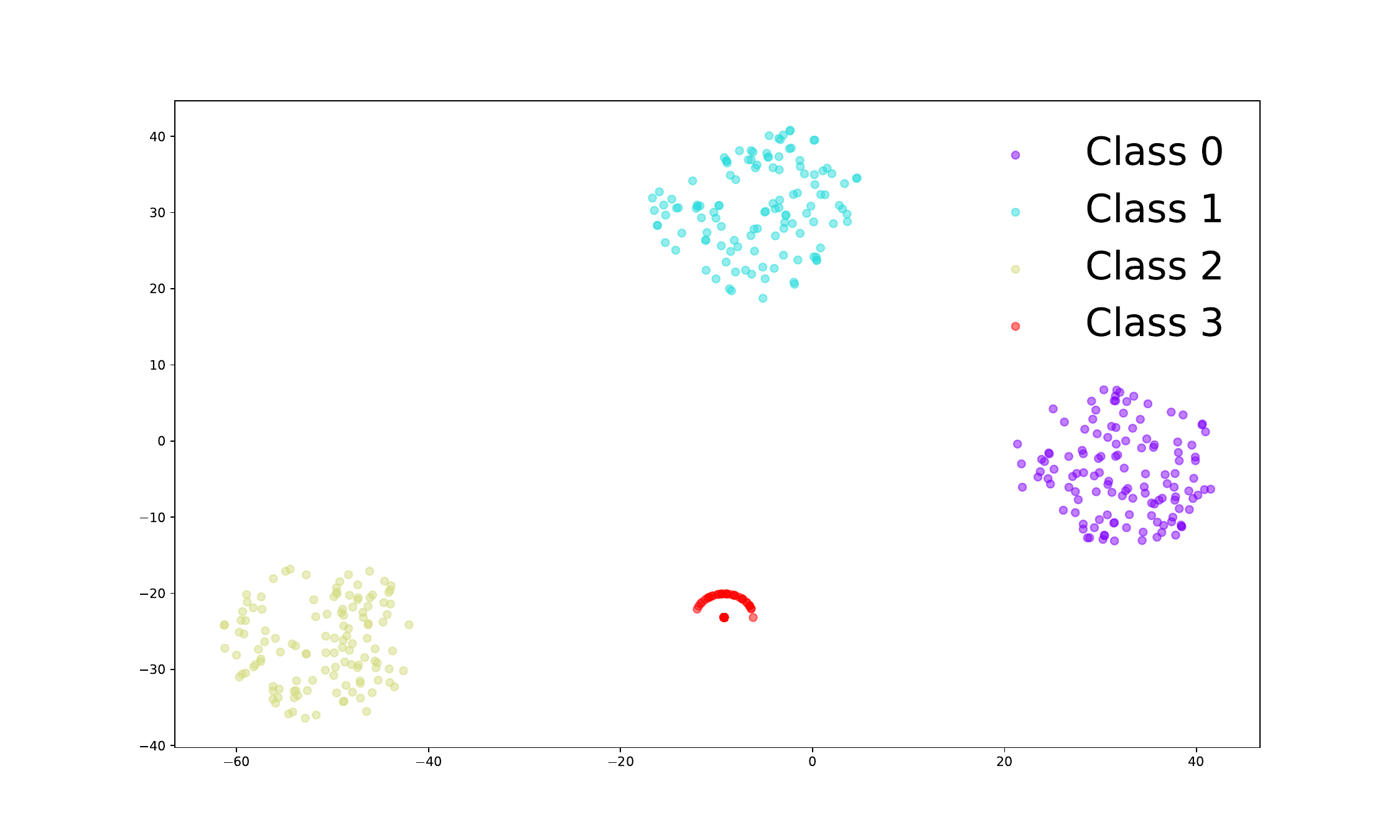}
        \caption{CIA }
    \end{subfigure}%
    \begin{subfigure}{0.24\textwidth}
        \centering
        \includegraphics[width=\linewidth]{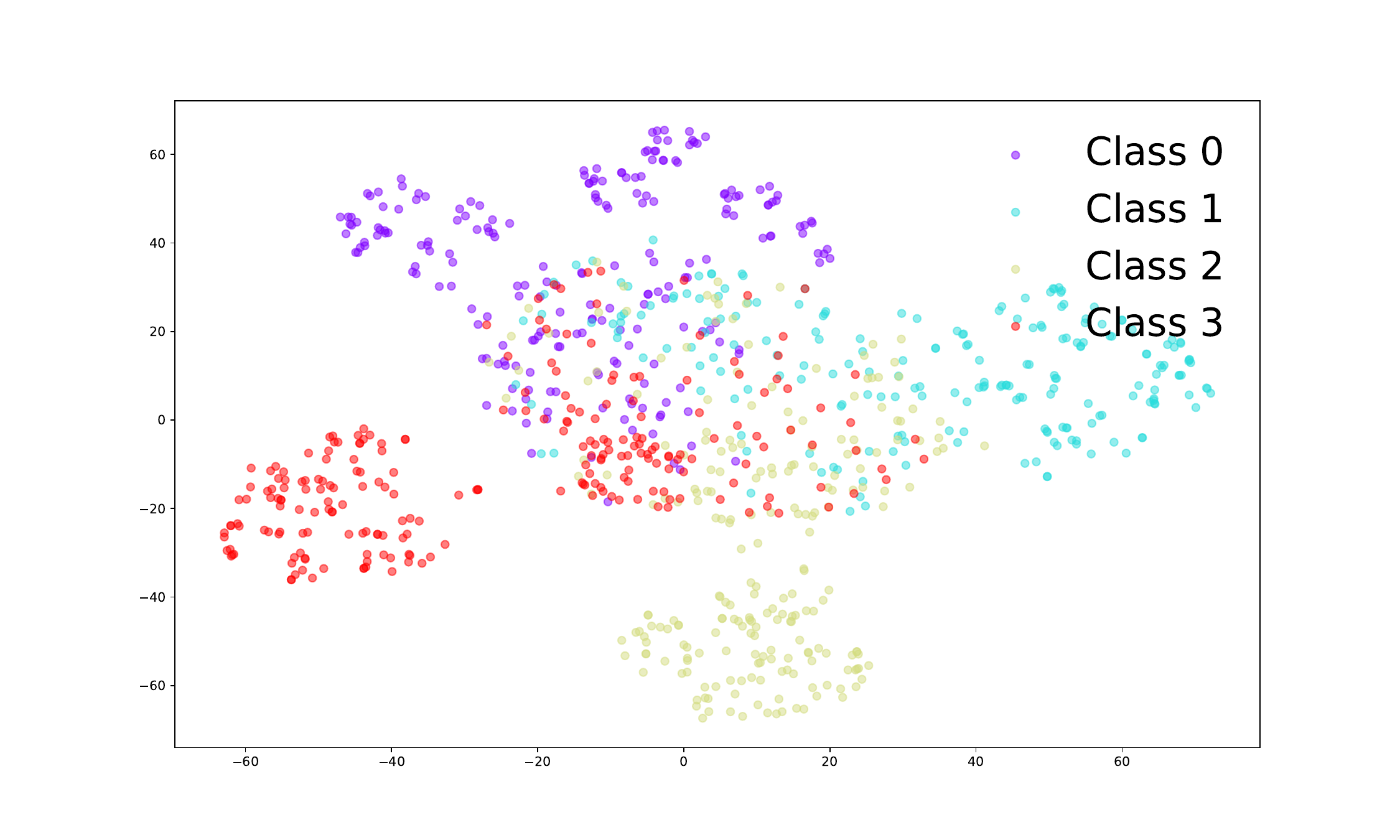}
        \caption{CIA-LRA }
    \end{subfigure}%
    \begin{subfigure}{0.24\textwidth}
        \centering
        \includegraphics[width=\linewidth]{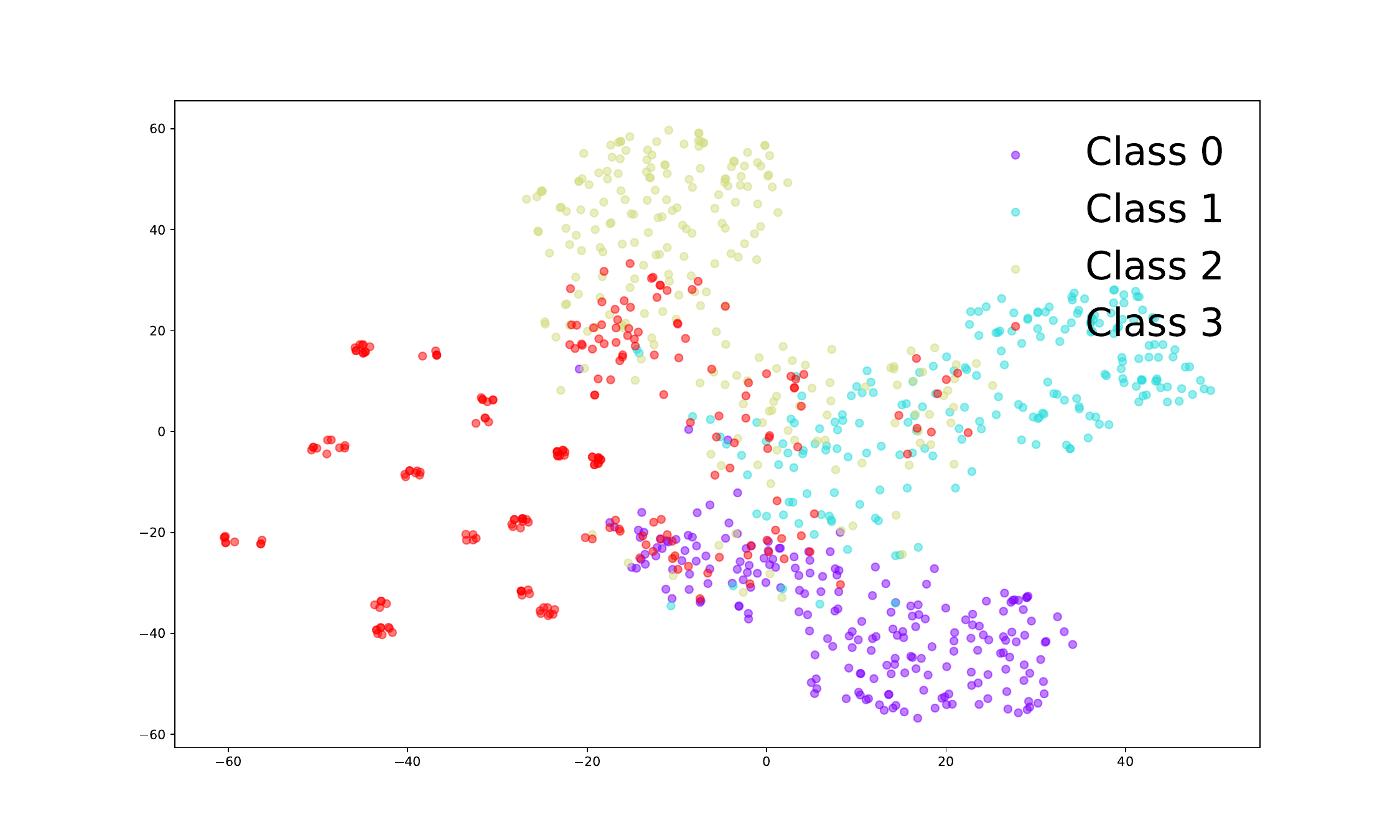}
        \caption{CIA-LRA w/o $r^{\text{same}}$ }
    \end{subfigure}
    \caption{Visualization of the learned representations at epoch 100 on the toy dataset (concept shift). Classes are distinguished by color. $\lambda=0.5$ for CIA and CIA-LRA.}
    \label{collapse_figure}
\end{figure}

\subsection{Effect of the Number of Hops for Localized Alignment}
\label{hops_sec}
In Figure \ref{hops_figure}, we plot the OOD accuracy curve of CIA-LRA against the number of hops $t$ for localized alignment (with $\lambda=0.05$). CIA-LRA achieves optimal performance within a local range of 6 to 10 hops. Performance is notably lower at smaller hops ($t=2$), due to limited regularization from aligning only a few pairs of representations. As $t$ increases, performance gains diminish and can even degrade, particularly on the CBAS color covariate. This underscores the importance of localized alignment: optimal OOD performance is attained by aligning nodes within about 10 hops. Extending the alignment range further does not enhance performance significantly and may lead to performance drops and higher computational costs. These findings support the hypothesis in Appendix \ref{intu1} that invariant features distant on the graph differ substantially, and their alignment could induce invariant feature collapse, leading to a suboptimal generalization performance.

\begin{figure}[htbp]
    \centering
    \begin{subfigure}{0.24\textwidth}
        \centering
        \includegraphics[width=\linewidth]{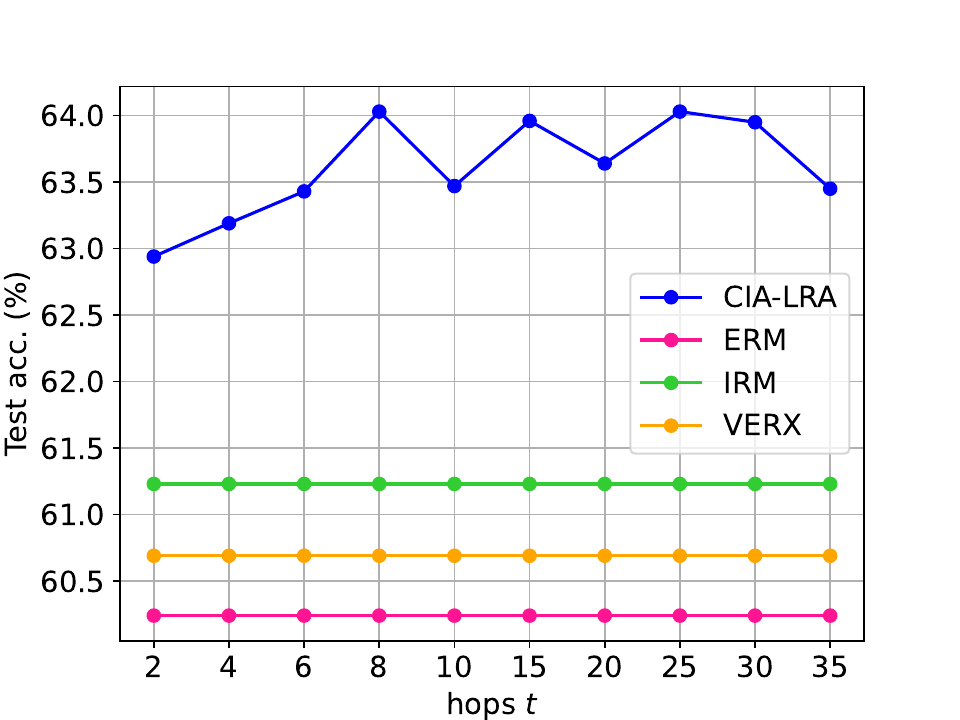}
        \caption{Cora degree concept}
    \end{subfigure}%
    \begin{subfigure}{0.24\textwidth}
        \centering
        \includegraphics[width=\linewidth]{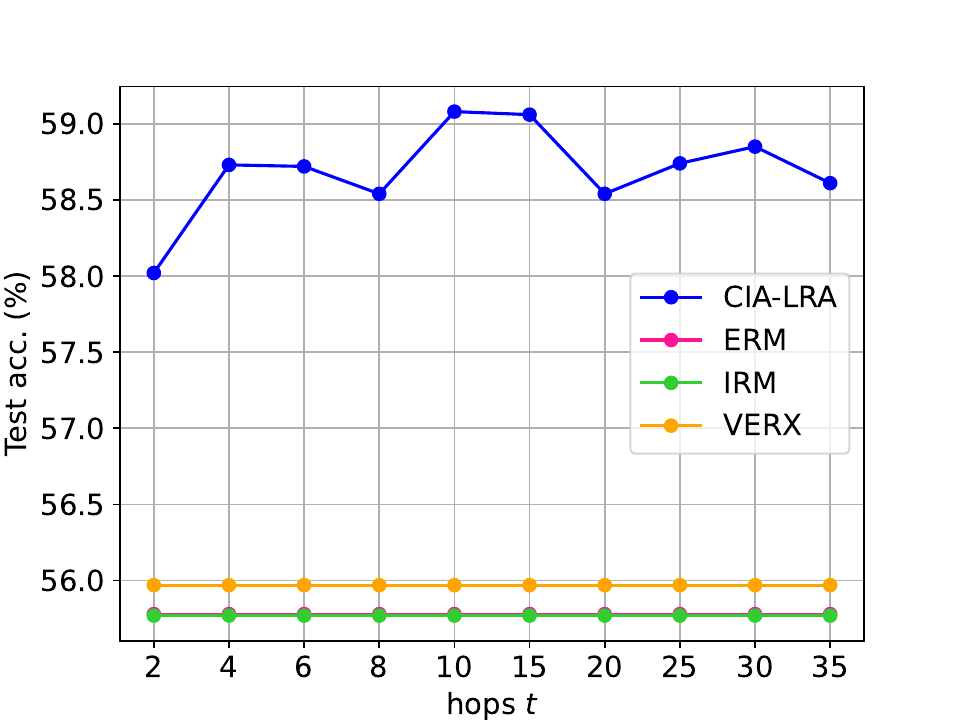}
        \caption{Cora degree covariate}
    \end{subfigure}%
    \begin{subfigure}{0.24\textwidth}
        \centering
        \includegraphics[width=\linewidth]{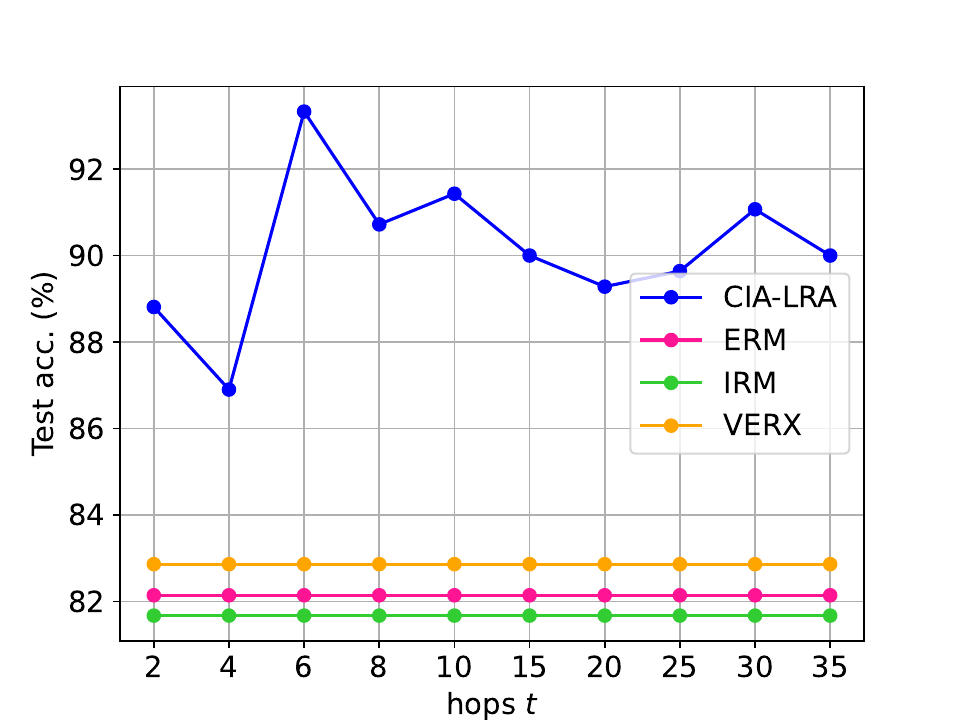}
        \caption{CBAS color concept}
    \end{subfigure}%
    \begin{subfigure}{0.24\textwidth}
        \centering
        \includegraphics[width=\linewidth]{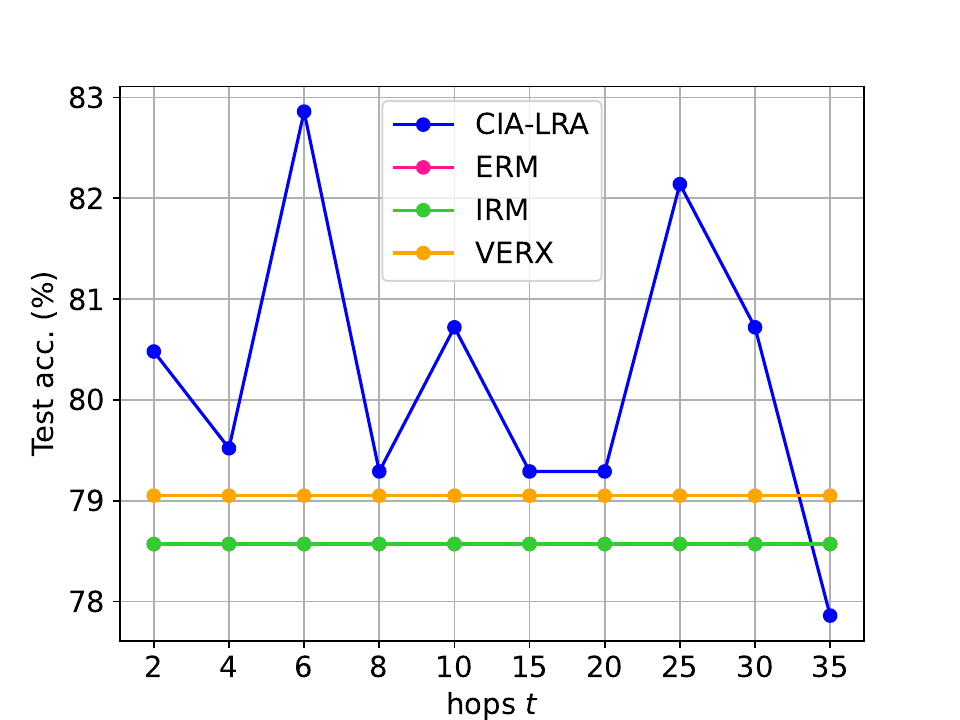}
        \caption{CBAS color covariate}
    \end{subfigure}
    \caption{The effect of the number of hops $t$ for localized alignment on OOD accuracy. Too small $t$ will lead to suboptimal performance. Too large $t$ brings limited performance gain or even deteriorates the performance.}
    \label{hops_figure}
\end{figure}

\subsection{Discussion and Validation of the Assumption on the Rate of Change of Causal and Spurious Features w.r.t Spatial Position}
\label{intu1}
To verify the intuition presented in Section \ref{CIA-LRA_method_sec} that spurious features exhibit larger changes within 
a local range (about 5 to 10 hops) on a graph compared to invariant features, we conduct experiments on real-world datasets Arxiv and Cora. To extract invariant features, we use a pre-trained VREx model and take the output of the last layer as invariant features\footnote{though we reveal in our theory that VREx may rely on spurious features, we still use VREx here to approximately extract invariant features as many previous graph OOD works have done since VREx already demonstrated some advantages in their works}. To obtain spurious features, we train an ERM model to predict the environment label and take the output of the last layer as spurious features. For each class, we randomly sample 10 nodes and generate corresponding 10 paths using Breadth-First Search (BFS). We extract invariant and spurious features of the nodes on each path and plot the L-2 distances between the node representations on the paths and the starting node. The results of Cora are in Figure \ref{rate_cora_1} and \ref{rate_cora_2}, and the results of Arxiv are in Figure \ref{rate_arxiv_1} and \ref{rate_arxiv_2}. We chose some of the classes to avoid excessive paper length; the results for the other classes are similar. 

\textbf{We observe that: despite the curve's slight fluctuations, the invariant feature difference shows a clear positive correlation with the distance from the starting point.} Specifically, within about 5$\sim$10 hops, the changes of spurious features grow more rapidly than those in invariant ones. This insight led us to align the representations of adjacent nodes to better eliminate spurious features and avoid the collapse of the invariant features. This also explains why we add a weighting term $d(i,j)$ in our loss function to assign smaller weight node pairs farther apart. Additional experimental evidence supporting the importance of localized alignment is in Appendix \ref{hops_sec}, which shows that alignment over a large range may lead to suboptimal performance and increasing computational costs.

This assumption aligns with those adopted in a series of previous works on causality and invariant learning \citep{chen2022learning, burshtein1992minimum, scholkopf2022causality, scholkopf2021toward}. These works assume that invariant features are better clustered than spurious features. In the node-level graph OOD scenario, we observe this phenomenon primarily within local parts of a graph. In some cases, when two nodes are too far apart, their invariant features can vary more than the spurious features, as seen in Figure \ref{rate_arxiv_2} (a) path 1,2,4,6,9 and 10. Therefore, matching the representations in a local region helps alleviate the invariant feature collapse problem.

\begin{figure}[htbp]
    \centering
    \begin{subfigure}{0.91\textwidth}
        \includegraphics[width=\textwidth]{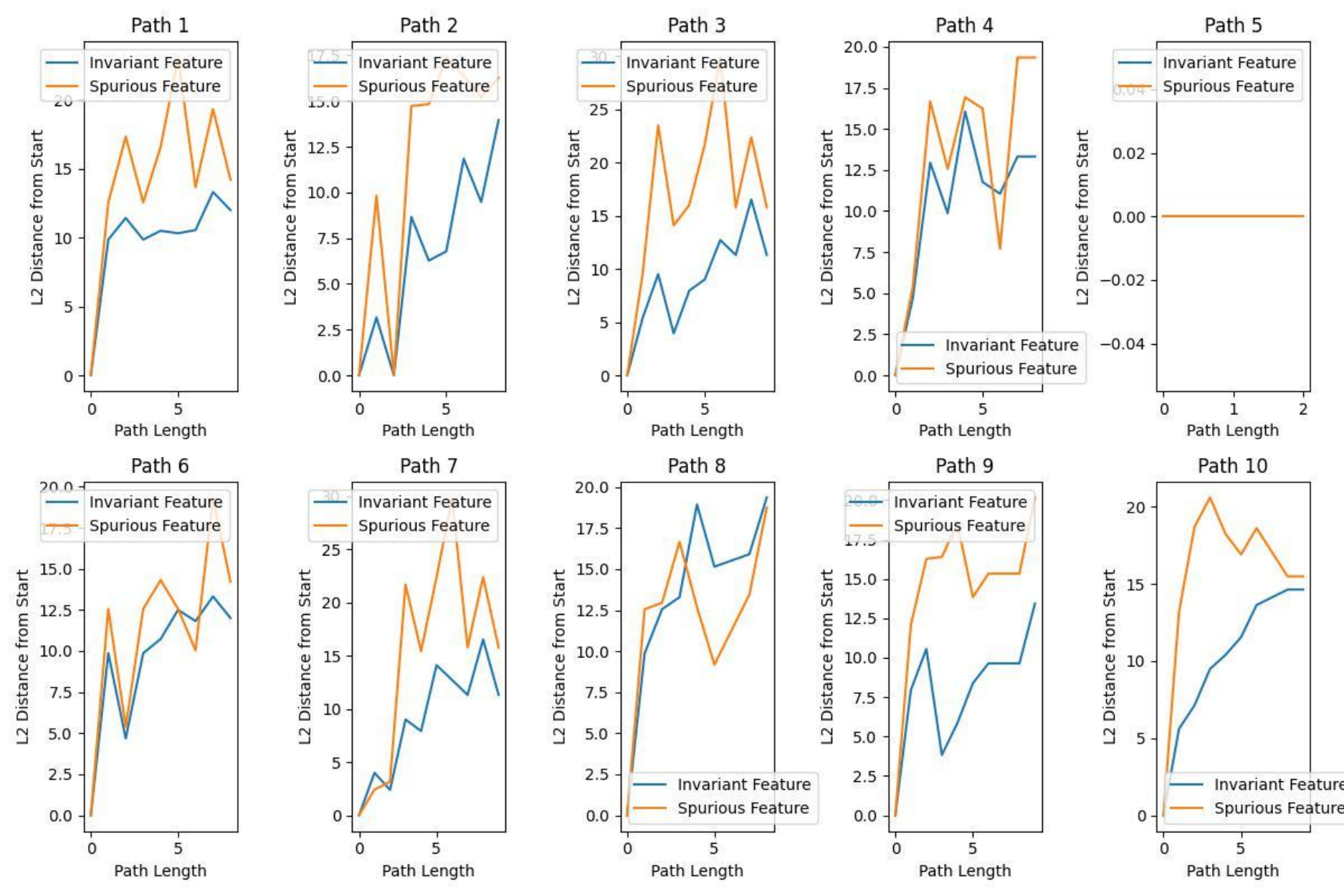}
        \caption{class 16 of Cora}
        \label{fig:image7}
    \end{subfigure}
    
    \begin{subfigure}{0.91\textwidth}
        \includegraphics[width=\textwidth]{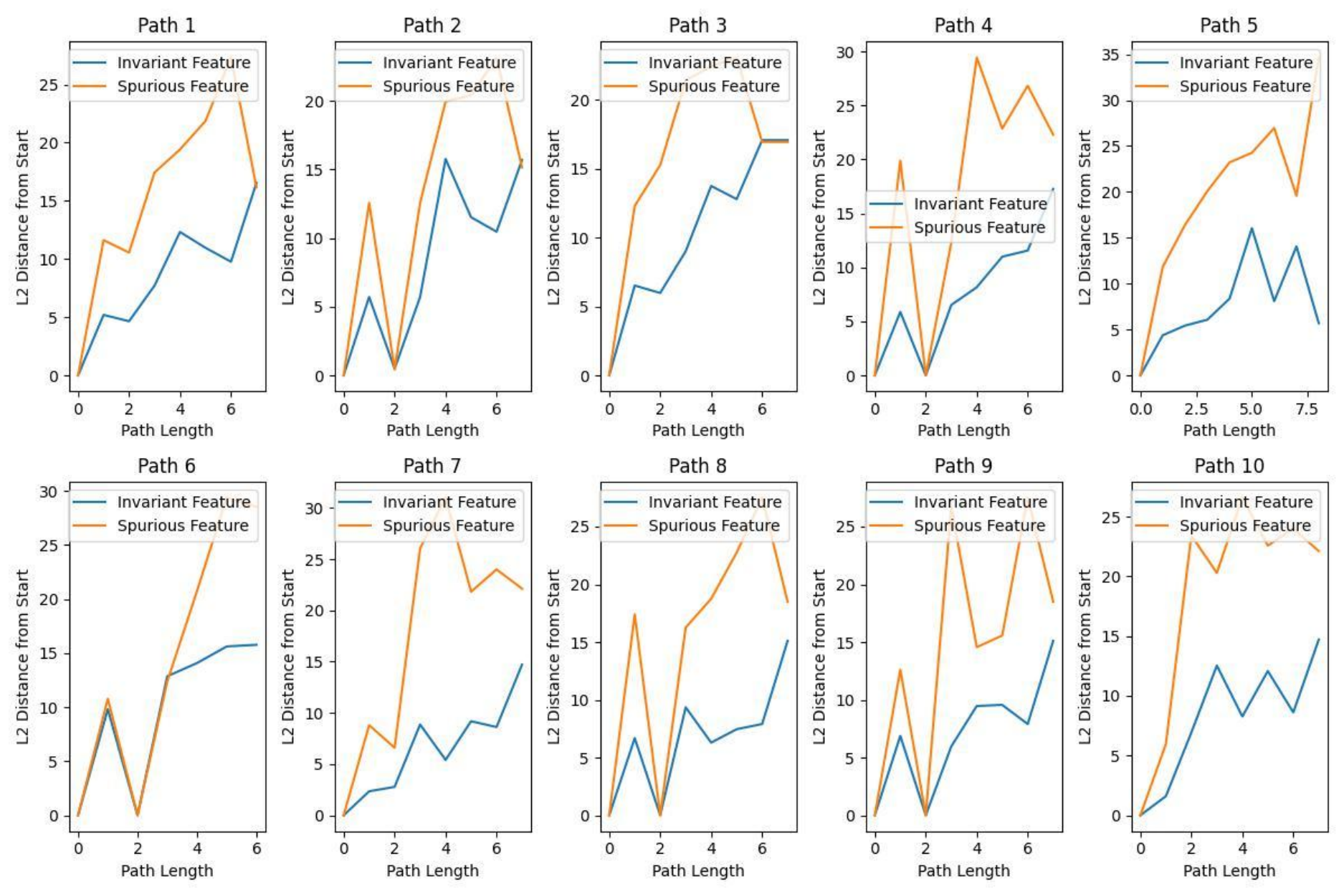}
        \caption{class 17 of Cora}
        \label{fig:image8}
    \end{subfigure}
    
    \caption{Visualization of the rate of change of invariant features and spurious features on Cora (part 1).}
    \label{rate_cora_1}
\end{figure}  

\begin{figure}[htbp]
    \centering
    \begin{subfigure}{0.91\textwidth}
        \includegraphics[width=\textwidth]{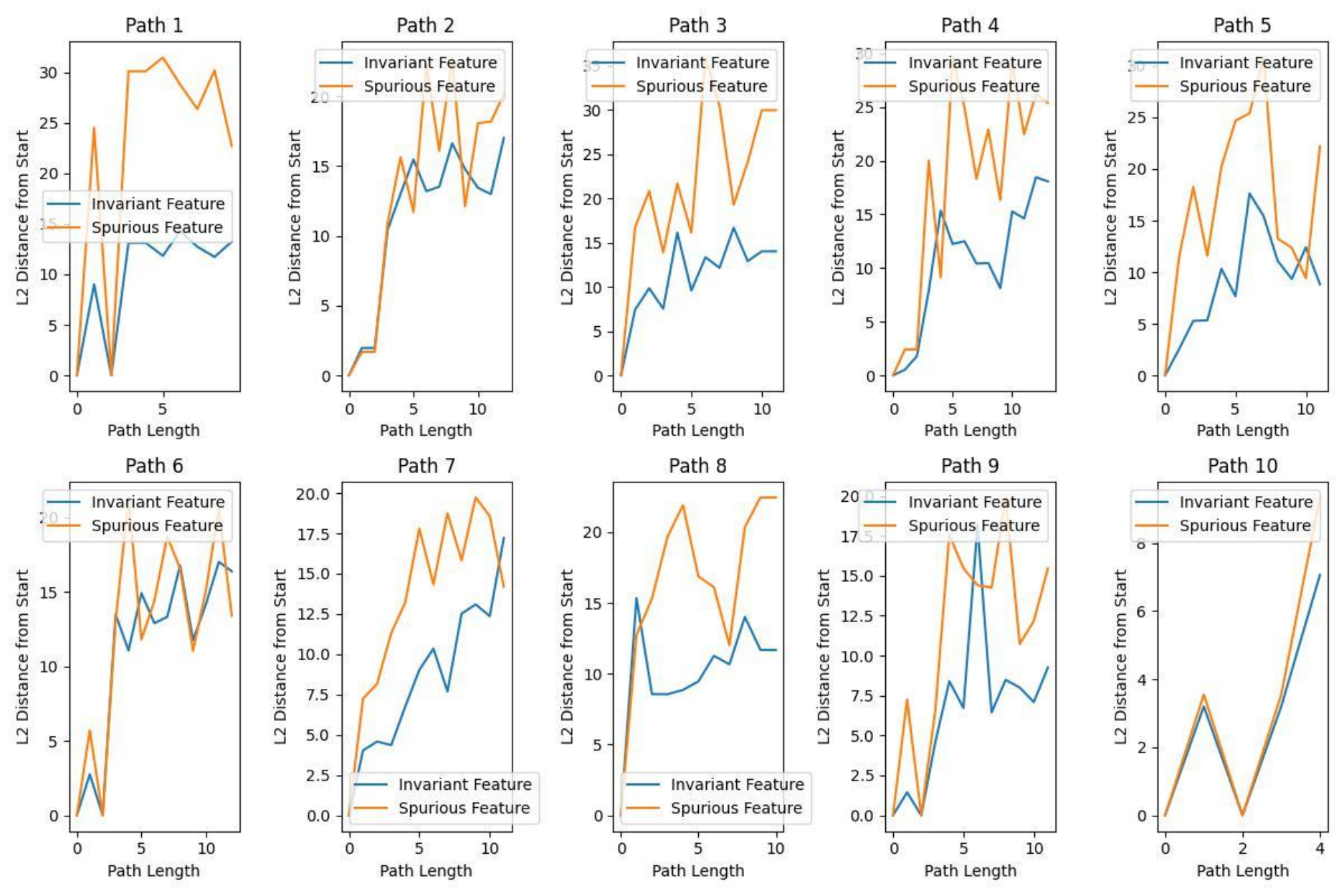}
        \caption{class 39 of Cora}
        \label{fig:image9}
    \end{subfigure}

    \begin{subfigure}{0.91\textwidth}
        \includegraphics[width=\textwidth]{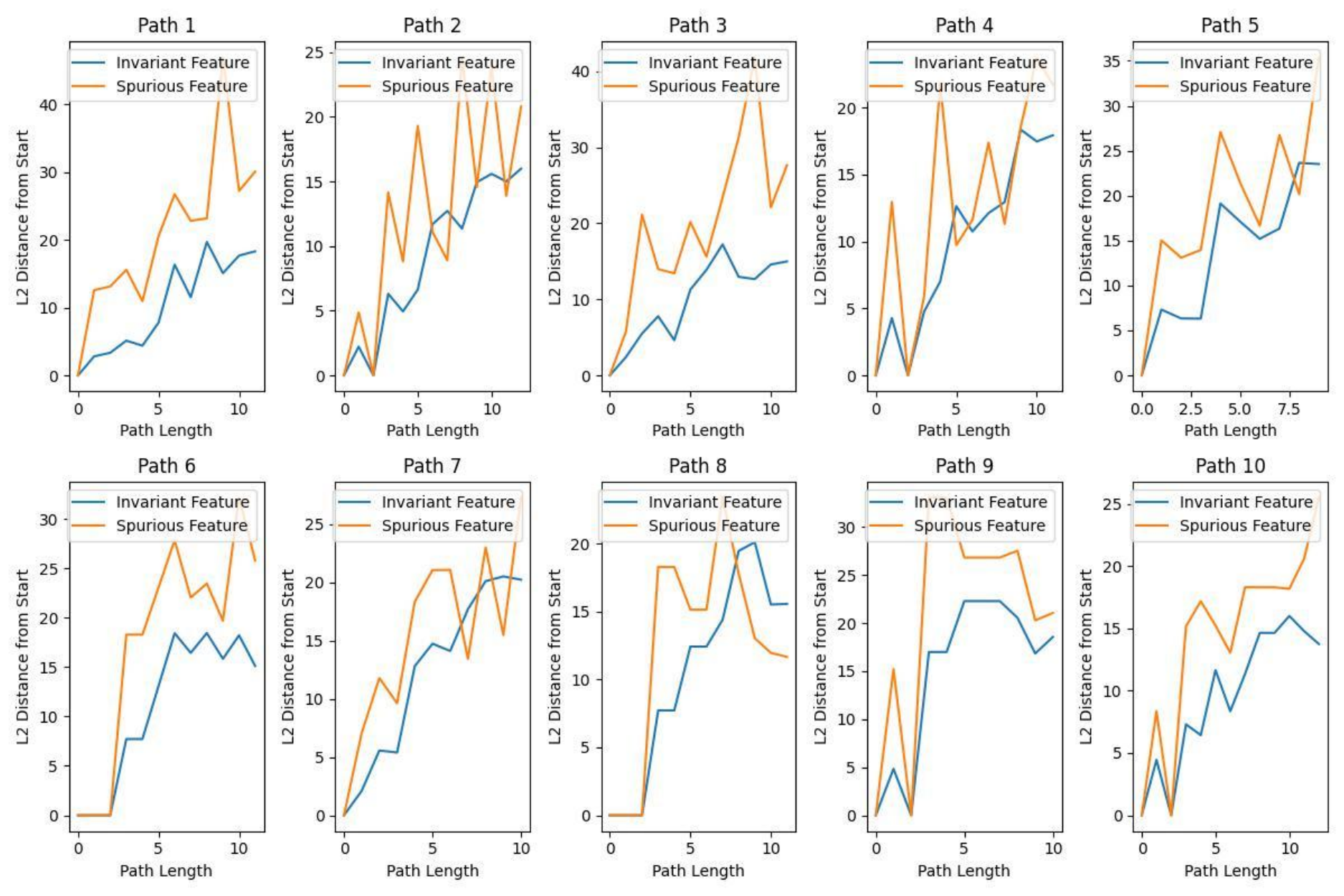}
        \caption{class 41 of Cora}
        \label{fig:image10}
    \end{subfigure}
    \caption{Visualization of the rate of change of invariant features and spurious features on Cora (part 2).}
    \label{rate_cora_2}
\end{figure}
 
\begin{figure}[htbp]
    \centering
    \begin{subfigure}{0.91\textwidth}
        \includegraphics[width=\textwidth]{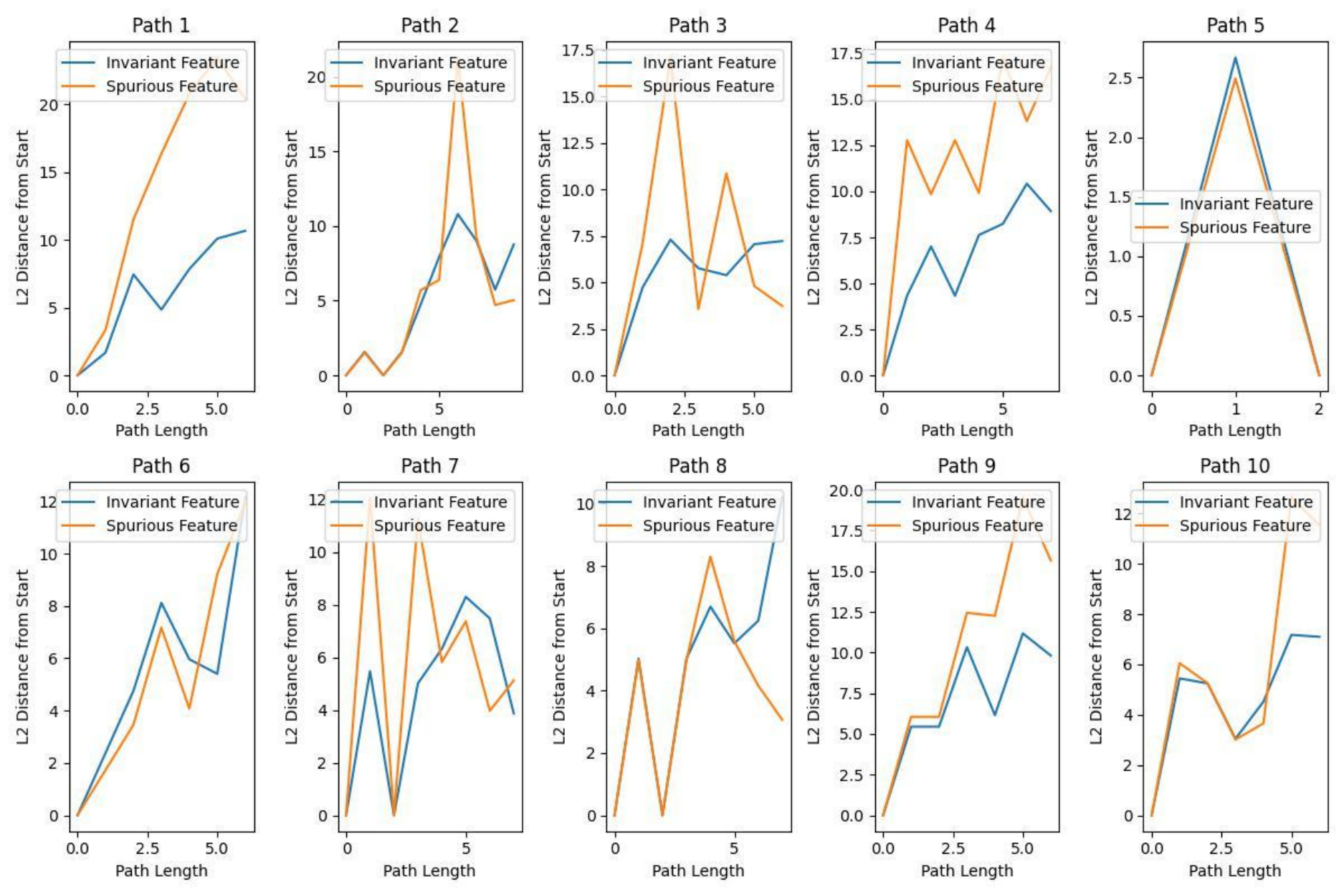}
        \caption{class 25 of Arxiv}
        \label{fig:image11}
    \end{subfigure}
    
    \begin{subfigure}{0.91\textwidth}
        \includegraphics[width=\textwidth]{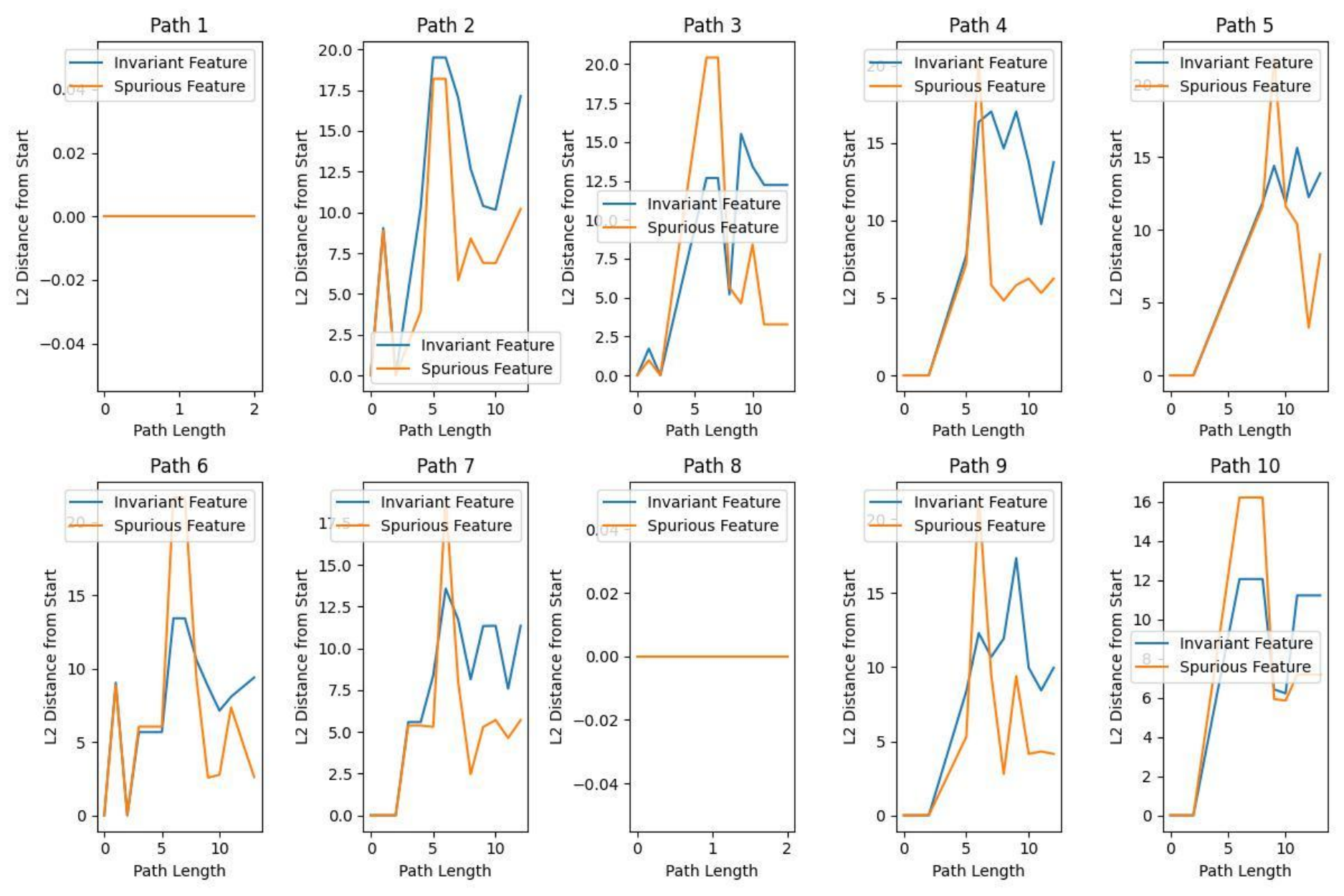}
        \caption{class 29 of Arxiv}
        \label{fig:image12}
    \end{subfigure}
    
    \caption{Visualization of the rate of change of invariant features and spurious features on Arxiv (part 1).}
    \label{rate_arxiv_1}
\end{figure}  

\begin{figure}[htbp]
    \centering
    
    \begin{subfigure}{0.91\textwidth}
        \includegraphics[width=\textwidth]{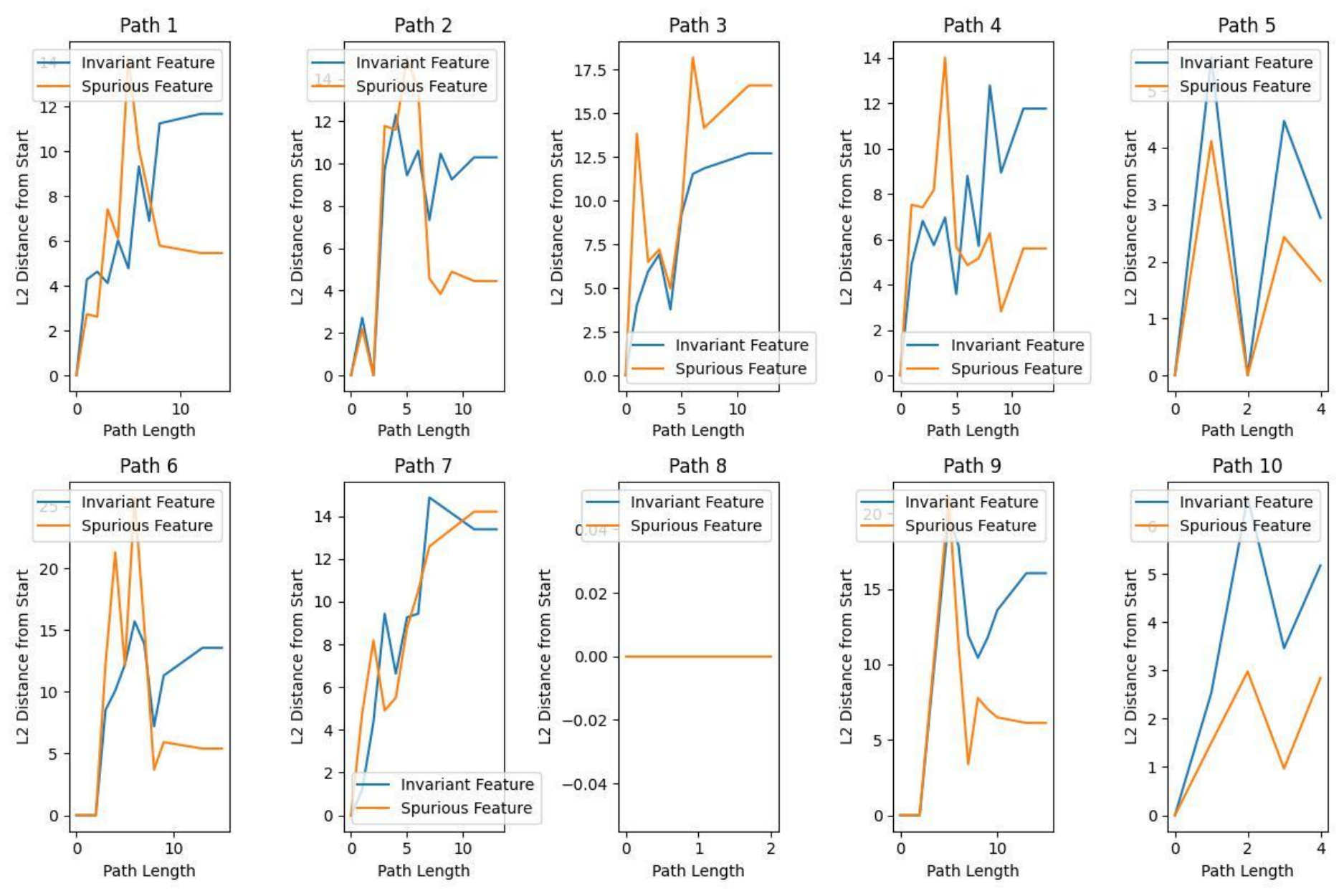}
        \caption{class 13 of Arxiv}
        \label{fig:image13}
    \end{subfigure}

    \begin{subfigure}{0.91\textwidth}
        \includegraphics[width=\textwidth]{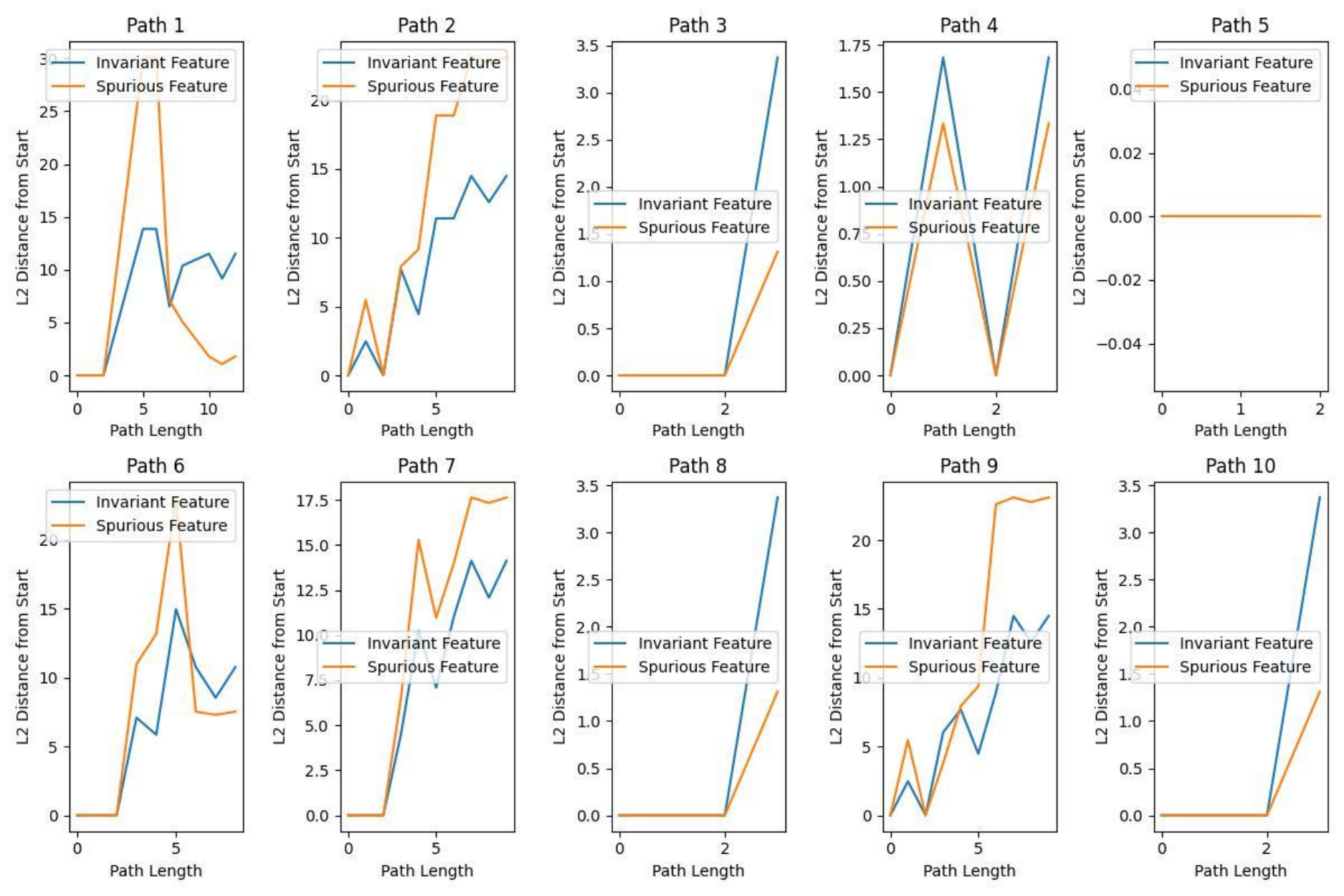}
        \caption{class 17 of Arxiv}
        \label{fig:image14}
    \end{subfigure}

    \caption{Visualization of the rate of change of invariant features and spurious features on Arxiv (part 2).}
    \label{rate_arxiv_2}
\end{figure}

\subsection{Discussion and Validation of the Assumption on the Feature Distance and Neighborhood Label Distribution Discrepancy}

\subsubsection{Heterophilic Neighborhood Labels Distribution Reflect Spurious Feature Distribution}
\label{reflect_sp}
In this section, we will empirically validate the key intuition of CIA-LRA: the label distribution of the neighbors from different classes (which we call \textit{Heterophilic Neighborhood Label Distribution}, HeteNLD) reflects the spurious representation of the centered node.   %
In node-level OOD scenarios, the distributional shifts of spurious features originate from two main sources: (1) the shifts in spurious node features associated with environments, and (2) the shifts in \textit{Neighborhood Label Distribution} (NLD), which affects the aggregated representation of the centered node.   The first type of spurious feature is analogous to those defined in Computer Vision (CV) OOD domains, while the second type is specific to graph structures. The NLD shift is a more general instance of the graph heterophily problem \citep{ma2021homophily, huang2023revisiting, mao2023demystifying}, where changes in the ratio of homophilic neighbors from training to test graphs can degrade performance. This occurs because the changes in the homophilic ratio lead to the distributional shift in the aggregated representation of the same-class nodes. Most previous methods \citep{ma2021homophily, huang2023revisiting, mao2023demystifying} only focus on the binary-classification setting, where changes in the homophilic neighbor ratio are equivalent to changes in the heterophilic neighbor ratio. However, we consider the more general multi-classification tasks. Therefore, we propose to use HeteNLD as a measurement, considering every class different from the central class and using their distribution to reflect shifts in the aggregated representation. Although the ratio of homophilic neighbors also affects environmental spurious features and NLD, it affects the invariant representation as well. Assigning larger weights to the pair with significant differences in the ratio of homophilic neighbors will simultaneously eliminate environmental spurious features and learn a collapsed invariant representation. As evidenced in Table \ref{ablation_table}, moving the $r^{\text{same}}(c)_{i,j}$ to the numerator of Equation (\ref{LoReCIA}) will lead to a significant performance decrease. Hence we use $\frac{1}{r^{\text{same}}(c)_{i,j}}$ instead of $r^{\text{same}}(c)_{i,j}$ in $w_{ij}$.

In the following part, we will empirically validate our intuition that HeteNLD can reflect the two spurious representation distributions on \textit{concept shift}, where $p(Y|X)$ varies across environments, and \textit{covariate shift}, where $p(X)$ changes with environments, respectively. We will show that HeteNLD affects the spurious features of the centered node in different manners under concept shift and covariate shift.

 \textbf{Covariate shift.} For covariate shifts on graphs, since spurious features are not necessarily correlated with labels, the environmental spurious features cannot be reflected by HeteNLD. However, we can still measure how HeteNLD affects the aggregated neighborhood representation. To obtain neighborhood representation, we train a 1-layer GCN that aggregates neighboring features and discards the features of the centered node. We hope to observe whether the gap of HeteNLD accurately reflects the distance of neighborhood representation. 
 To ensure that the discrepancy in the aggregated neighboring feature is caused solely by heterophilic neighbors, we only use point pairs with the same number of homophilic neighbors. 
 Specifically, we compute the L-2 distance between the neighborhood representations of two nodes with the same number of class-same neighbors, and plot its trend w.r.t. the distance of HeteNLD (according to the definition of $r_{i,j}^\text{diff}$ in Equation (\ref{LoReCIA}), except that we didn't normalize by the node degree here). We run experiments on Cora to verify this. We evaluate on both \textit{word} shifts (node feature shifts) and \textit{degree} (graph structure shifts) for a comprehensive understanding. We show the results of the first 30 classes of Cora. \textbf{The results in Figure \ref{reflect_sp_cora_word_cov} and \ref{reflect_sp_cora_deg_cov} show a clear positive correlation between the neighborhood representation distance and HeteNLD discrepancy under covariate shifts, indicating HeteNLD discrepancy can reflect the distance of the aggregated representation.} 

\begin{figure}
    \centering
    \begin{subfigure}{.19\textwidth}
        \centering
        \includegraphics[width=\linewidth]{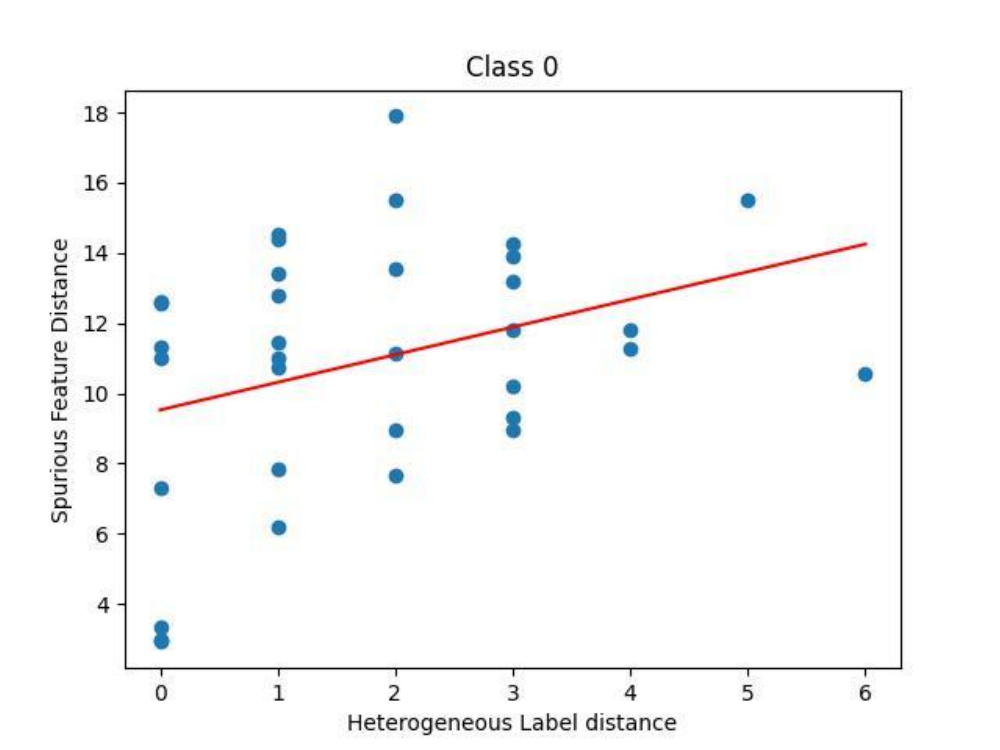}
    \end{subfigure}%
    \begin{subfigure}{.19\textwidth}
        \centering
        \includegraphics[width=\linewidth]{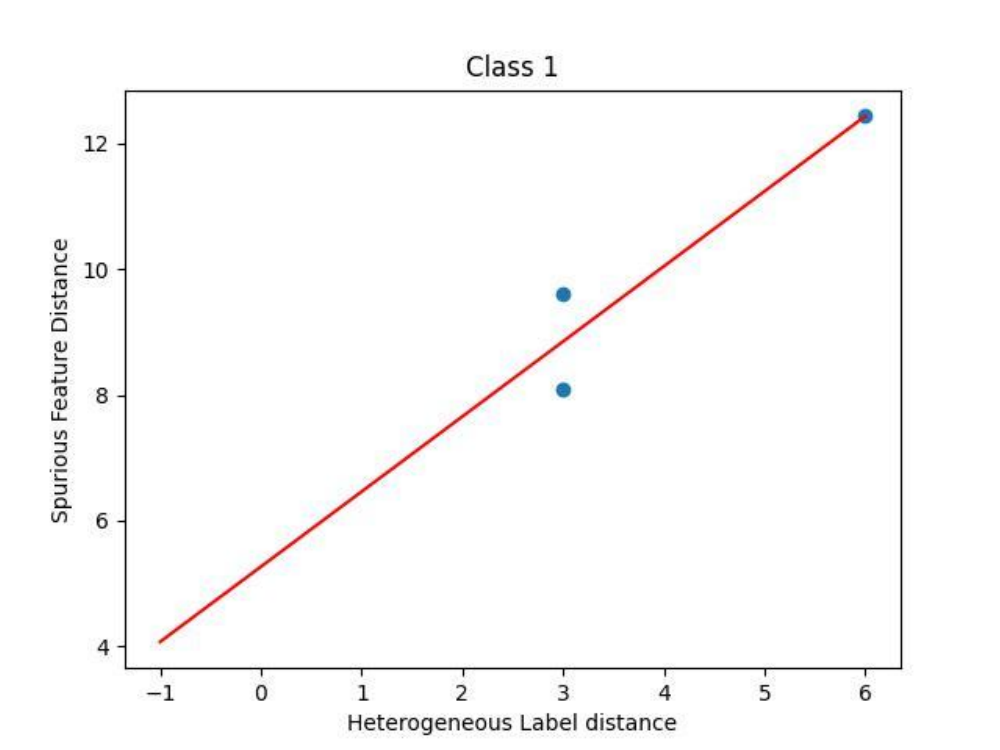}
    \end{subfigure}
    \begin{subfigure}{.19\textwidth}
        \centering
        \includegraphics[width=\linewidth]{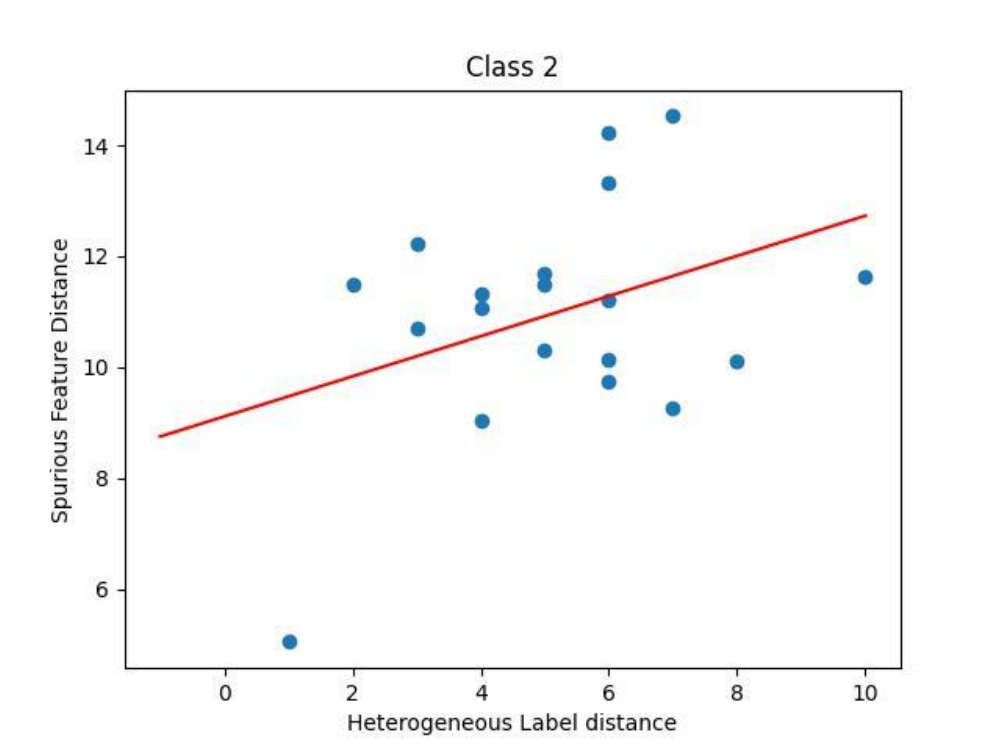}
    \end{subfigure}
    \begin{subfigure}{.19\textwidth}
        \centering
        \includegraphics[width=\linewidth]{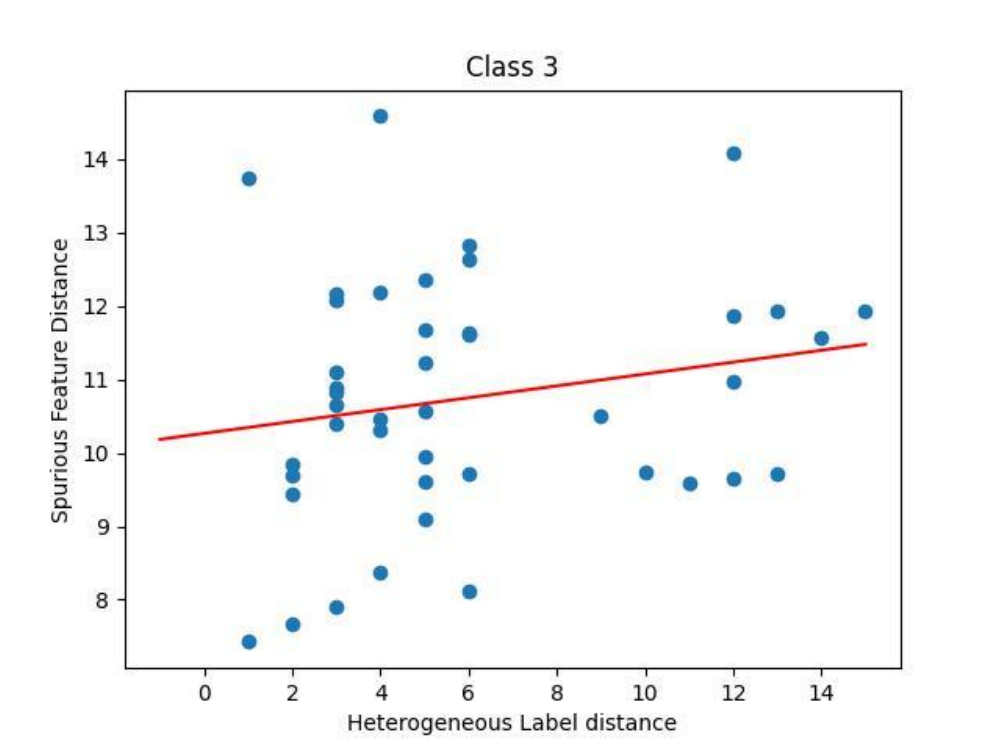}
    \end{subfigure}
    \begin{subfigure}{.19\textwidth}
        \centering
        \includegraphics[width=\linewidth]{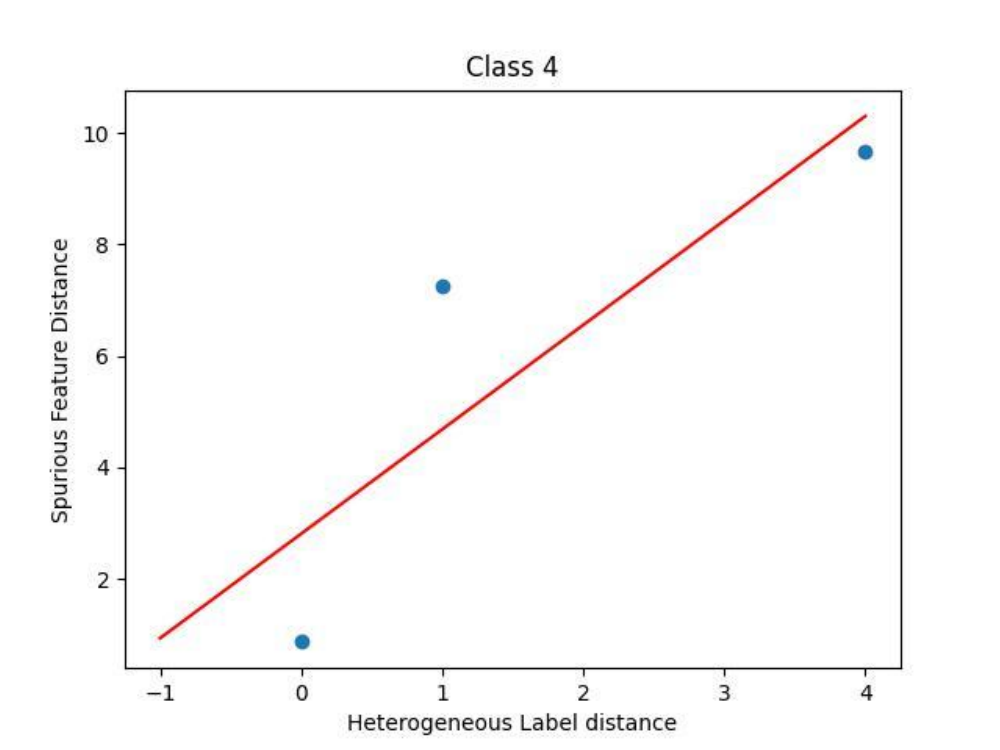}
    \end{subfigure}
    
    \begin{subfigure}{.19\textwidth}
        \centering
        \includegraphics[width=\linewidth]{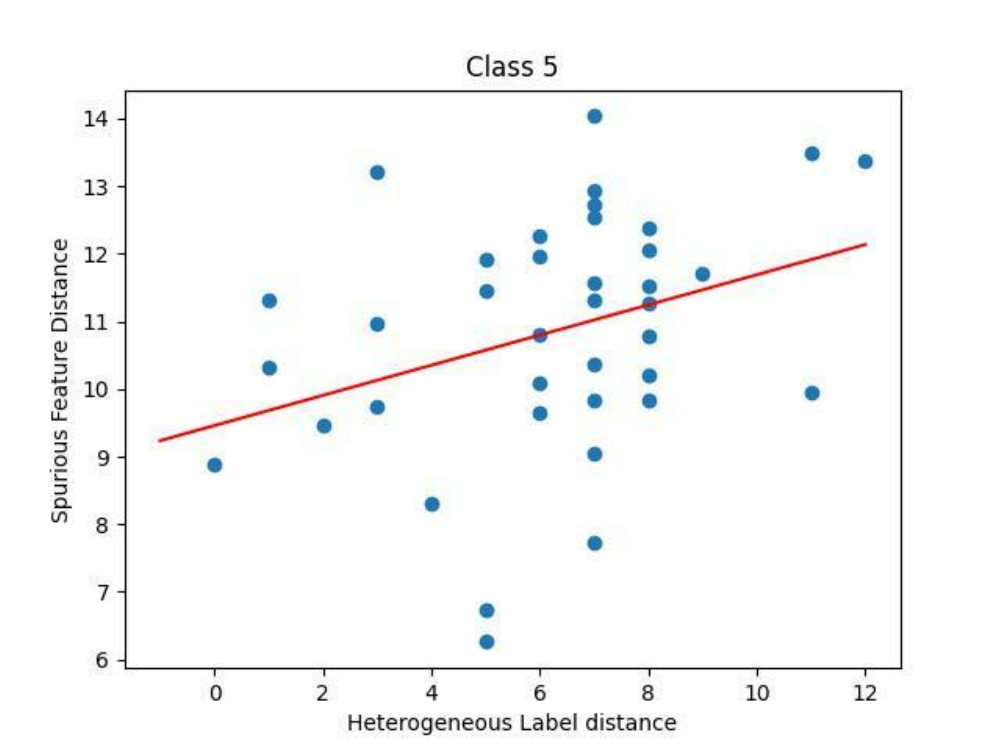}
    \end{subfigure}
    \begin{subfigure}{.19\textwidth}
        \centering
        \includegraphics[width=\linewidth]{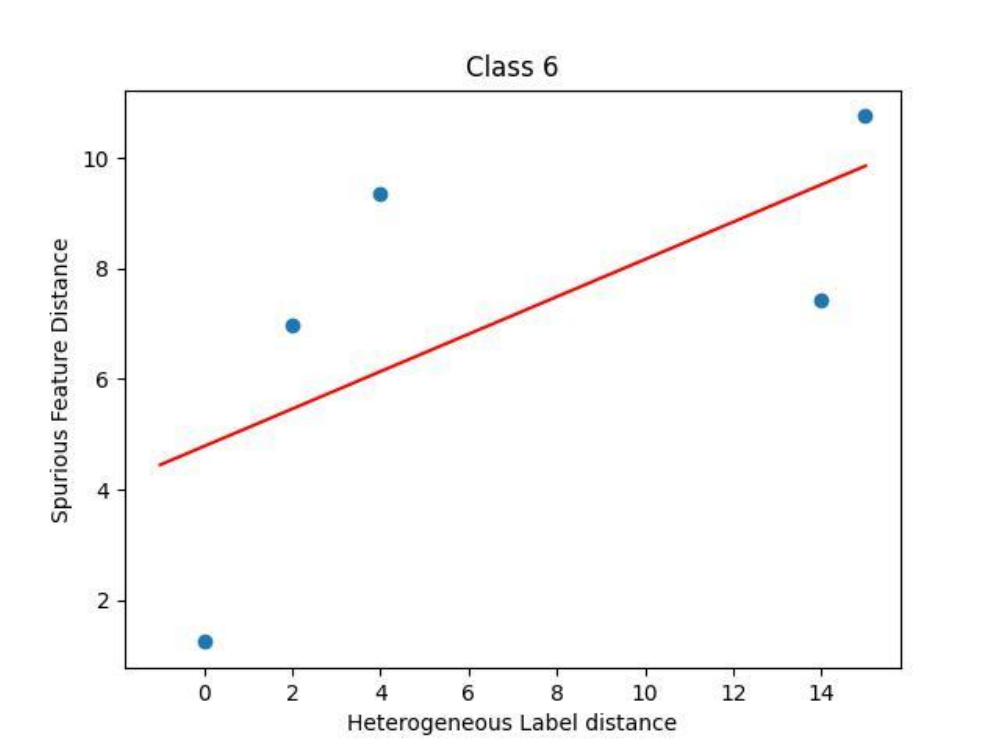}
    \end{subfigure}%
    \begin{subfigure}{.19\textwidth}
        \centering
        \includegraphics[width=\linewidth]{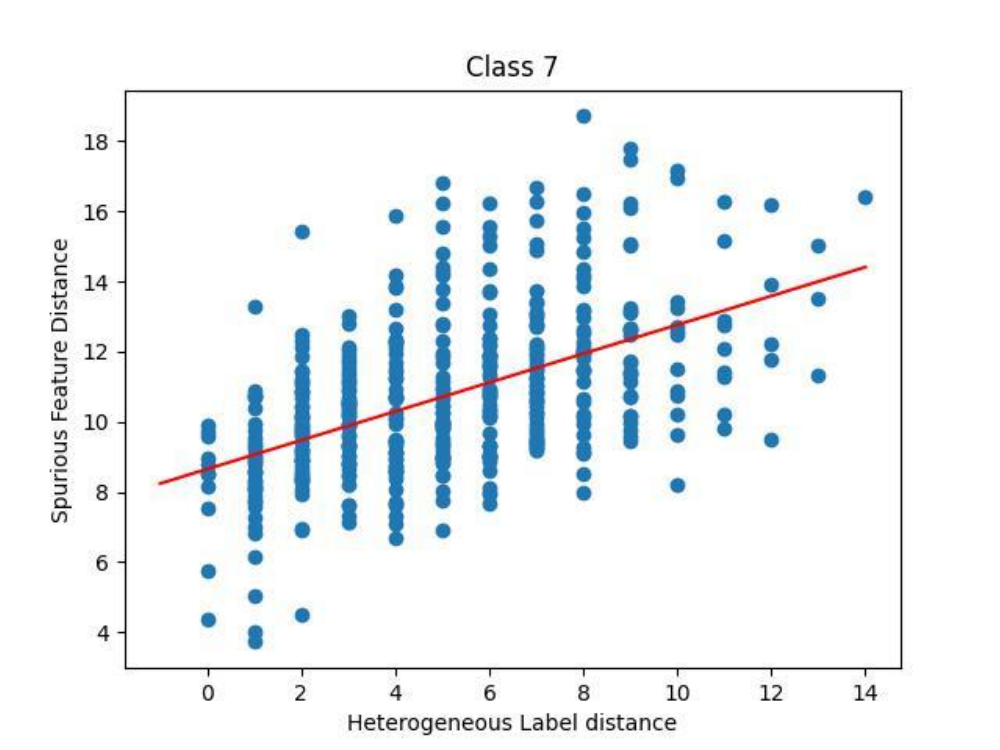}
    \end{subfigure}
    \begin{subfigure}{.19\textwidth}
        \centering
        \includegraphics[width=\linewidth]{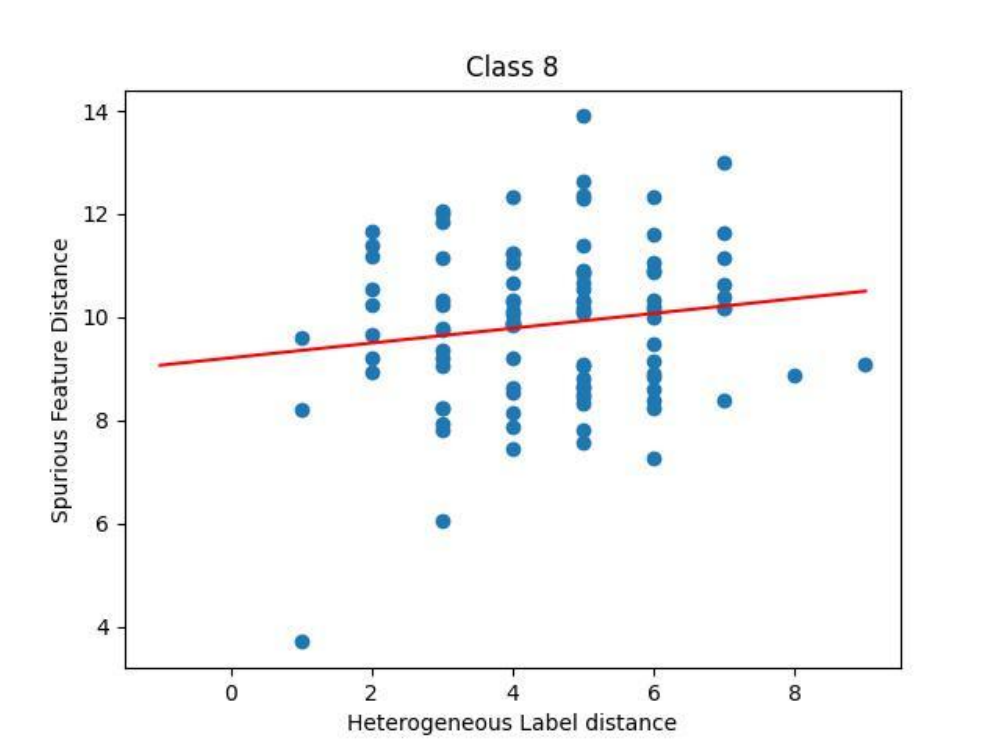}
    \end{subfigure}
    \begin{subfigure}{.19\textwidth}
        \centering
        \includegraphics[width=\linewidth]{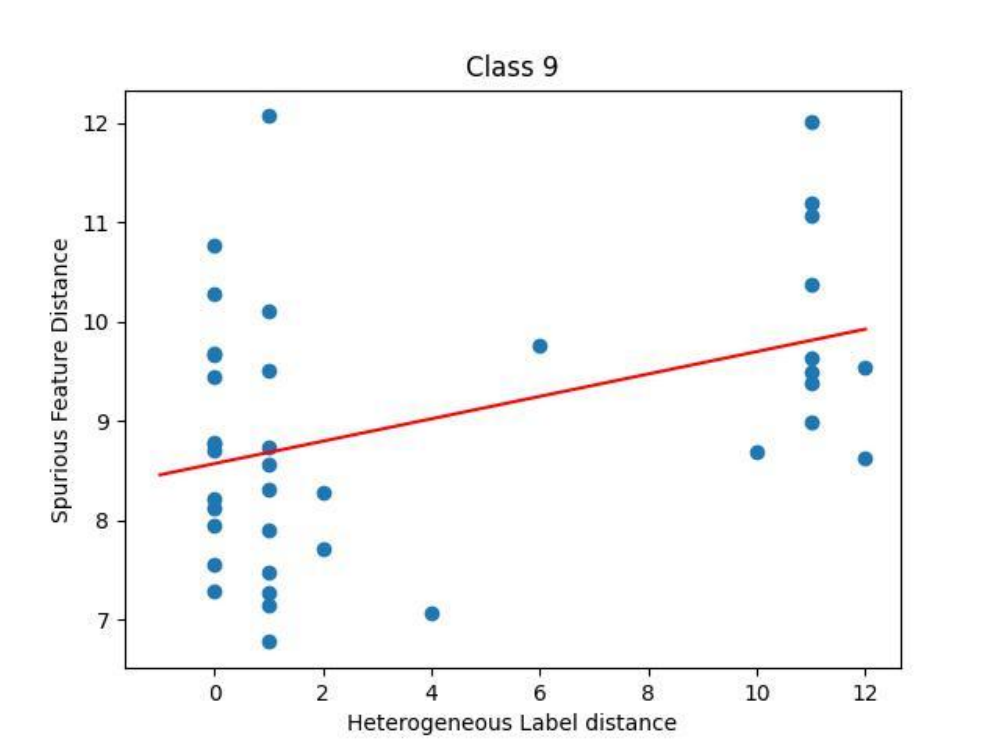}
    \end{subfigure}

    \begin{subfigure}{.19\textwidth}
        \centering
        \includegraphics[width=\linewidth]{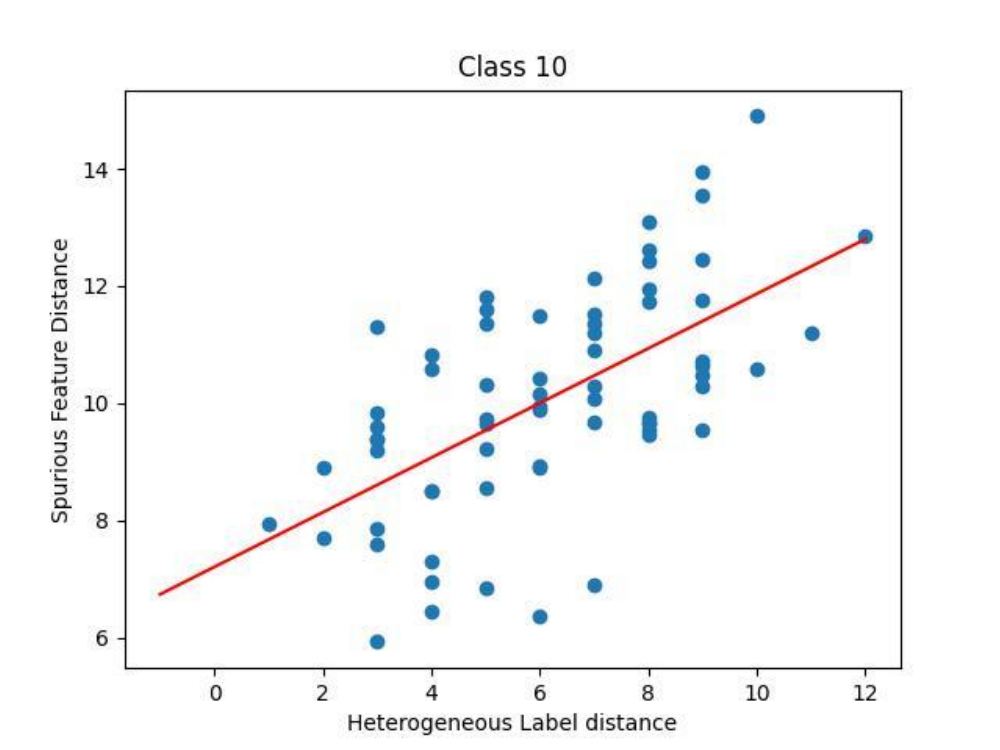}
    \end{subfigure}
    \begin{subfigure}{.19\textwidth}
        \centering
        \includegraphics[width=\linewidth]{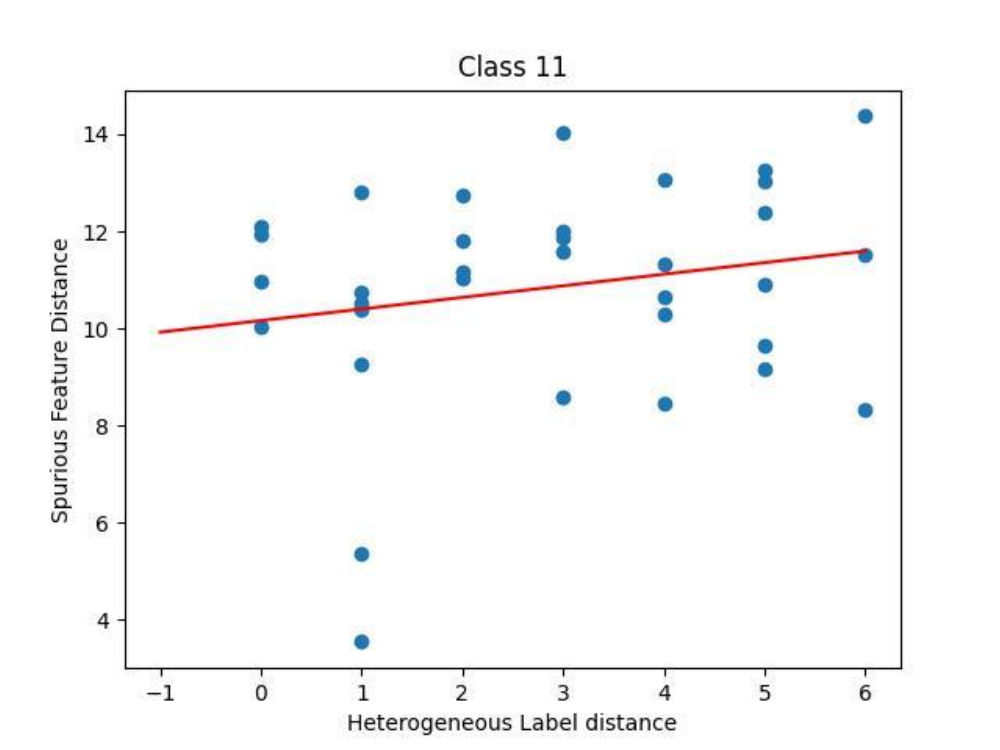}
    \end{subfigure}%
    \begin{subfigure}{.19\textwidth}
        \centering
        \includegraphics[width=\linewidth]{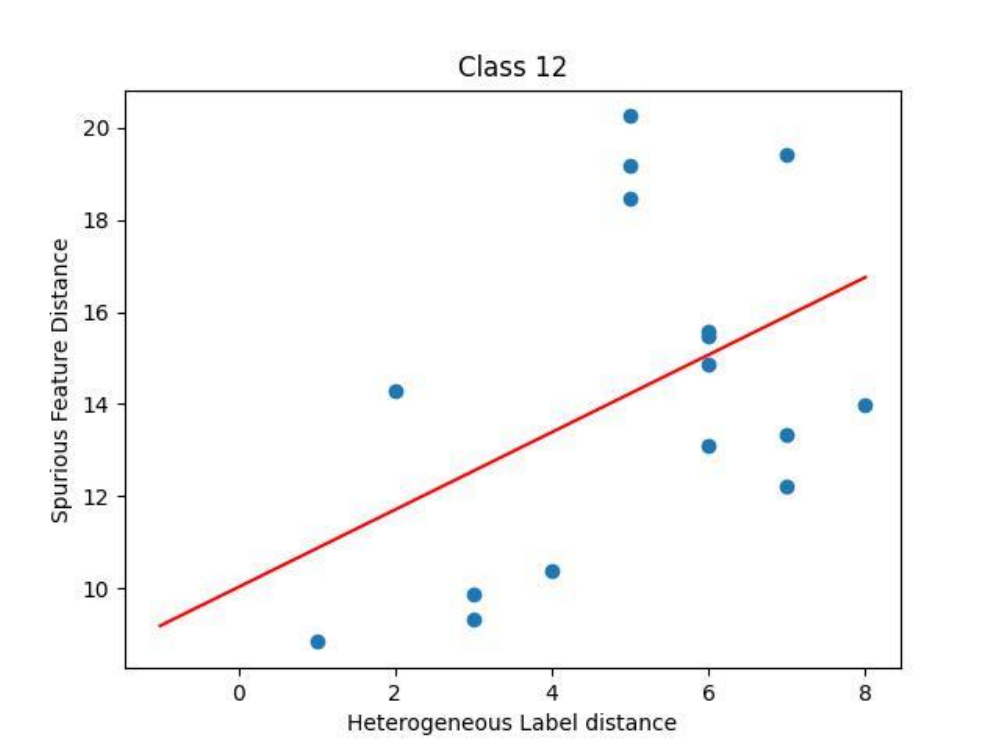}
    \end{subfigure}
    \begin{subfigure}{.19\textwidth}
        \centering
        \includegraphics[width=\linewidth]{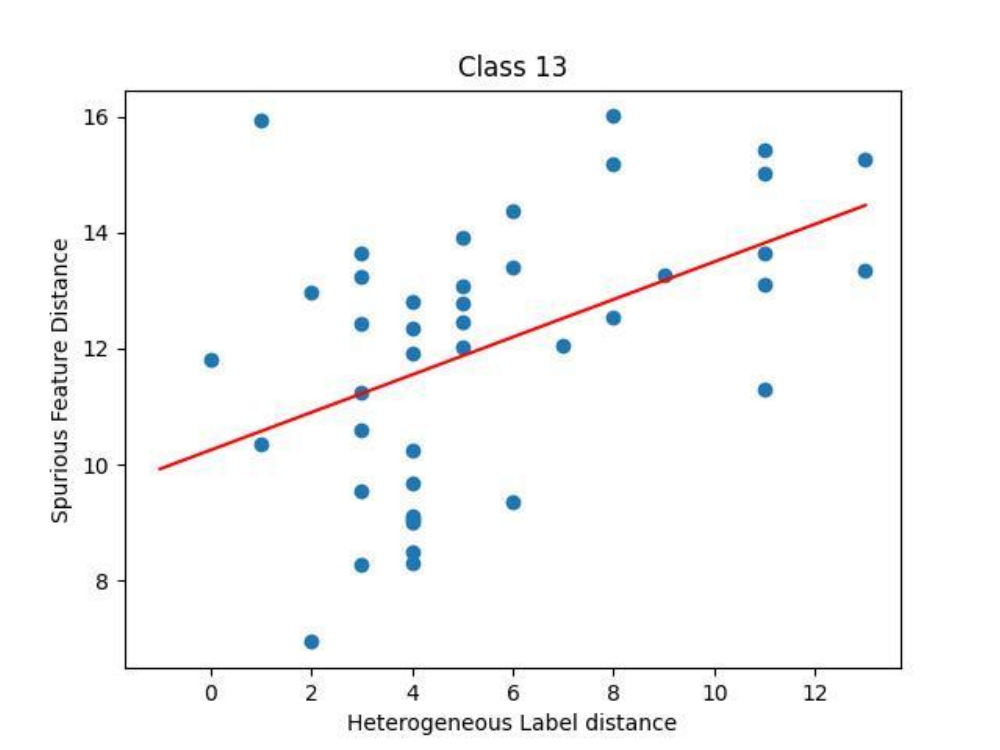}
    \end{subfigure}
    \begin{subfigure}{.19\textwidth}
        \centering
        \includegraphics[width=\linewidth]{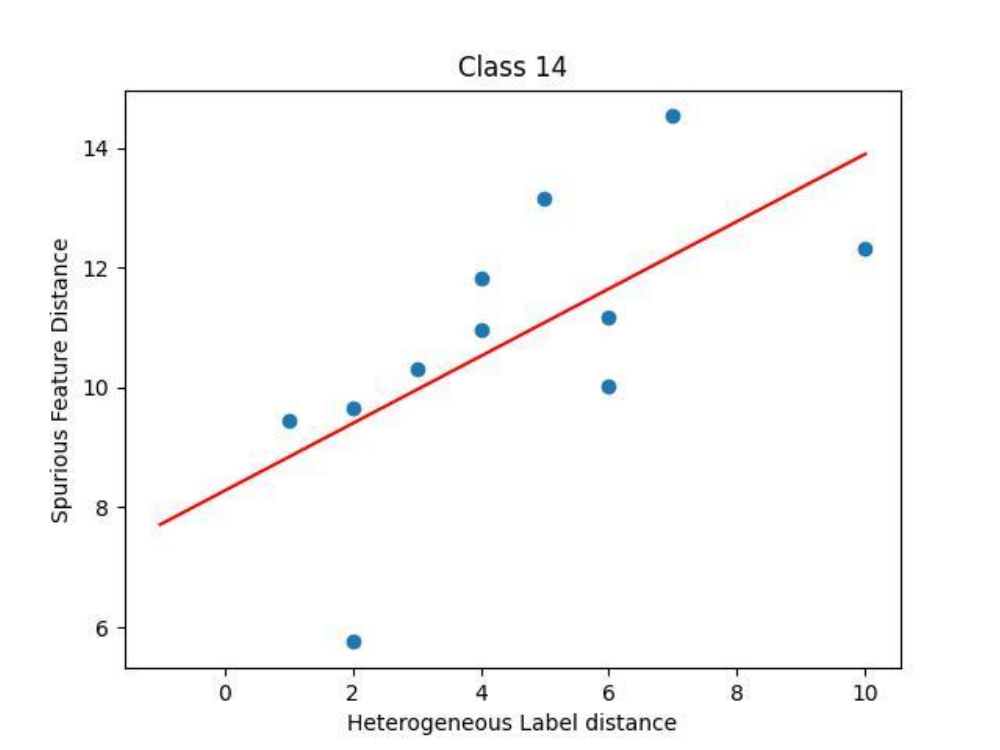}
    \end{subfigure}

    \begin{subfigure}{.19\textwidth}
        \centering
        \includegraphics[width=\linewidth]{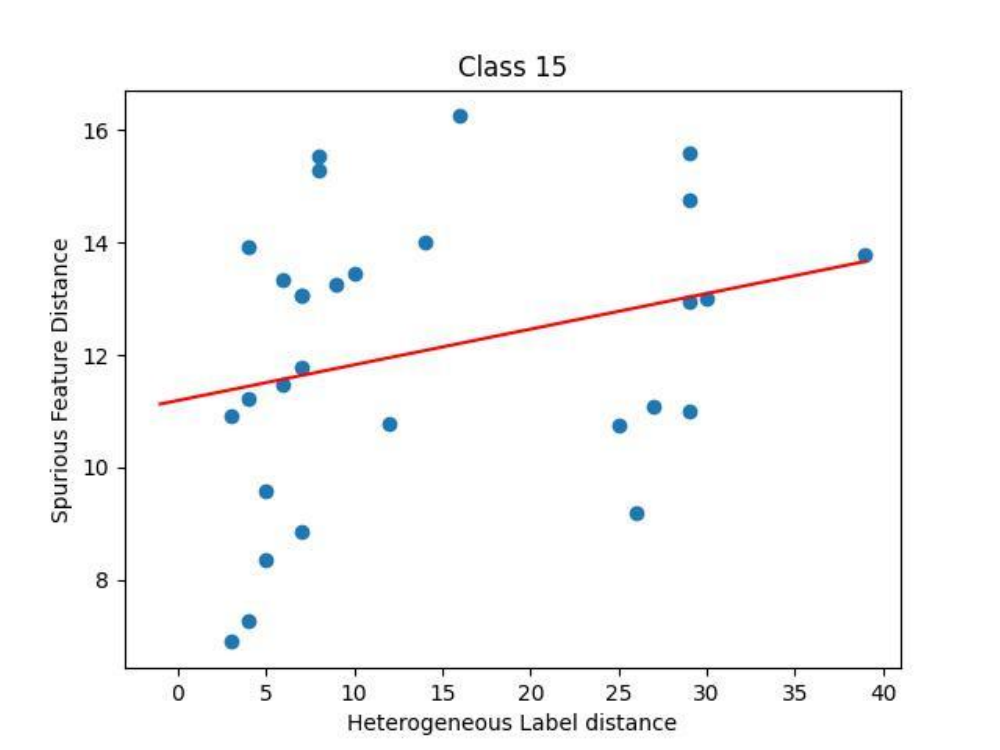}
    \end{subfigure}
    \begin{subfigure}{.19\textwidth}
        \centering
        \includegraphics[width=\linewidth]{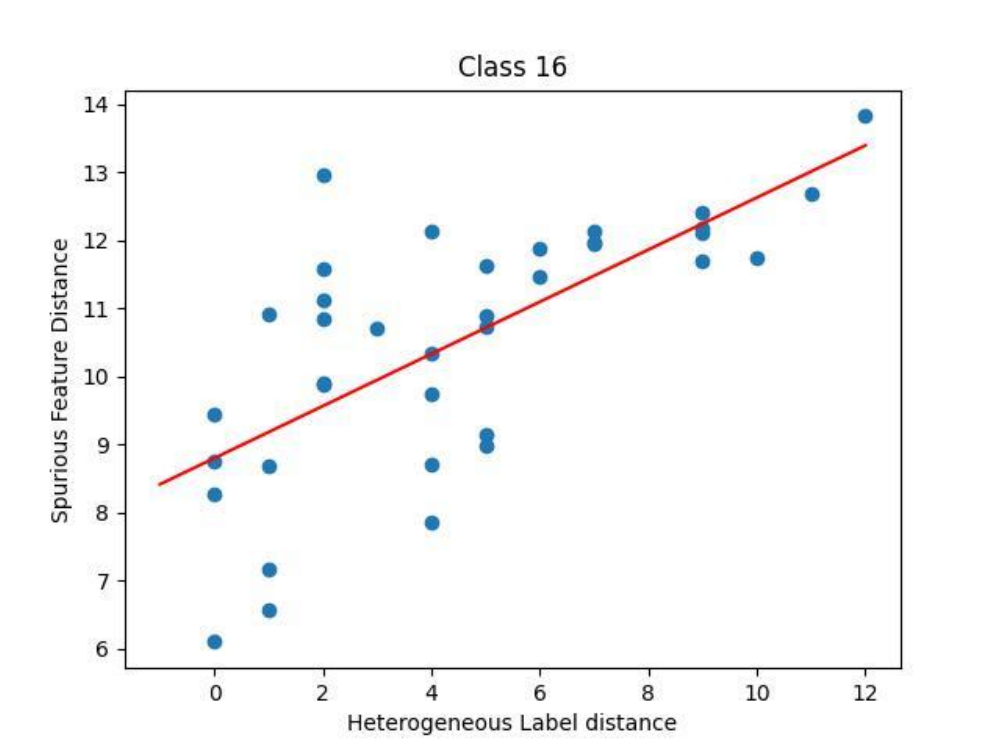}
    \end{subfigure}%
    \begin{subfigure}{.19\textwidth}
        \centering
        \includegraphics[width=\linewidth]{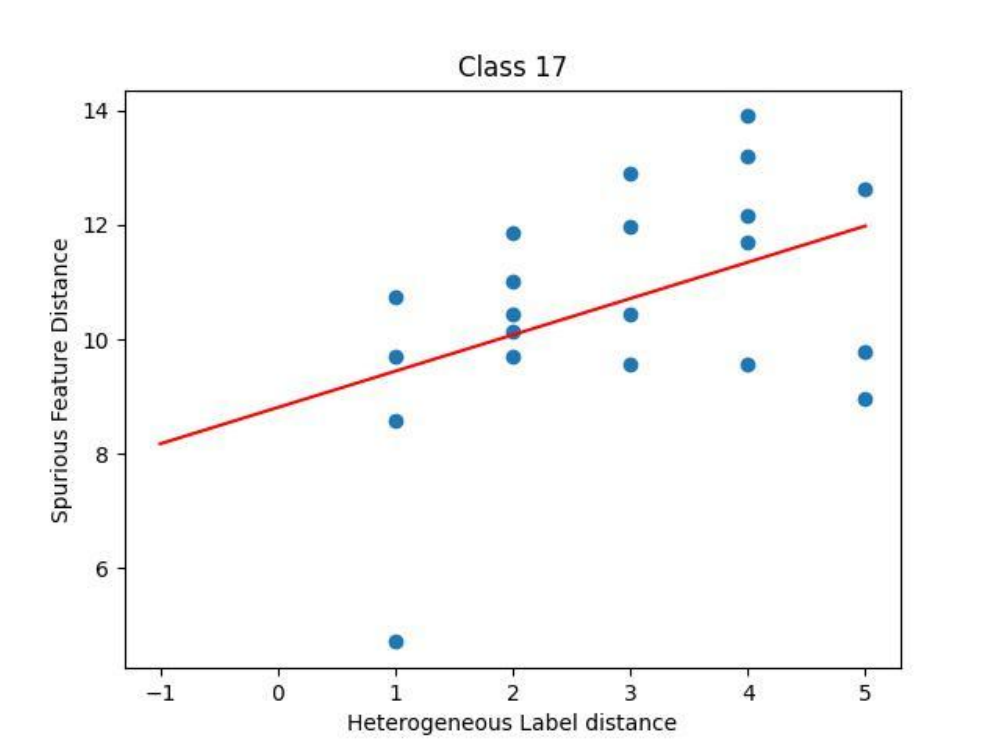}
    \end{subfigure}
    \begin{subfigure}{.19\textwidth}
        \centering
        \includegraphics[width=\linewidth]{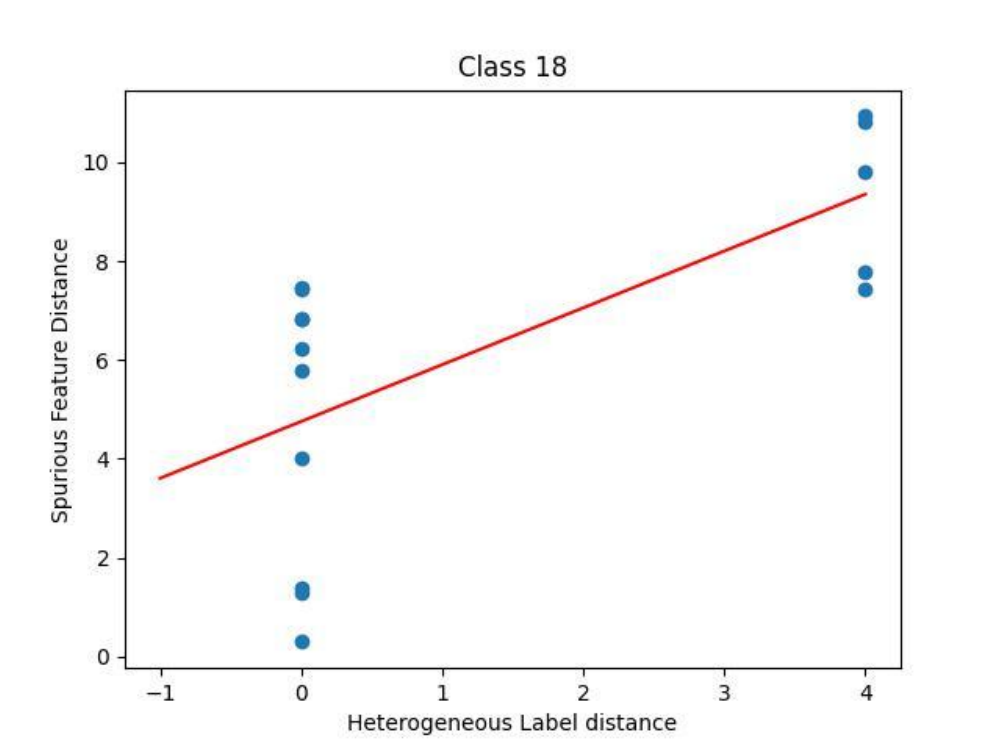}
    \end{subfigure}
    \begin{subfigure}{.19\textwidth}
        \centering
        \includegraphics[width=\linewidth]{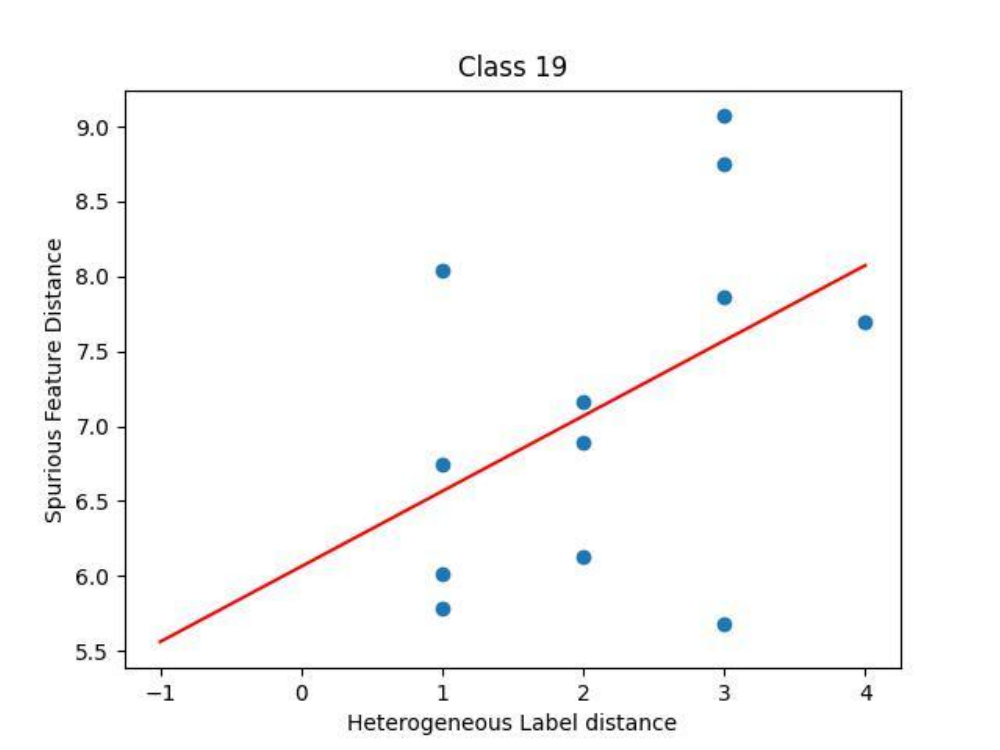}
    \end{subfigure}
    \caption{The relationship between the distance of the aggregated neighborhood representation and distance of HeteNLD on Cora \textit{word} domain, \textbf{covariate shift}. Each sub-figure is a class, and each dot in the figure represents a node pair in the graph. The red line is obtained by linear regression. The positive correlation is clear.}
    \label{reflect_sp_cora_word_cov}
\end{figure}

\begin{figure}
    \centering
    \begin{subfigure}{.19\textwidth}
        \centering
        \includegraphics[width=\linewidth]{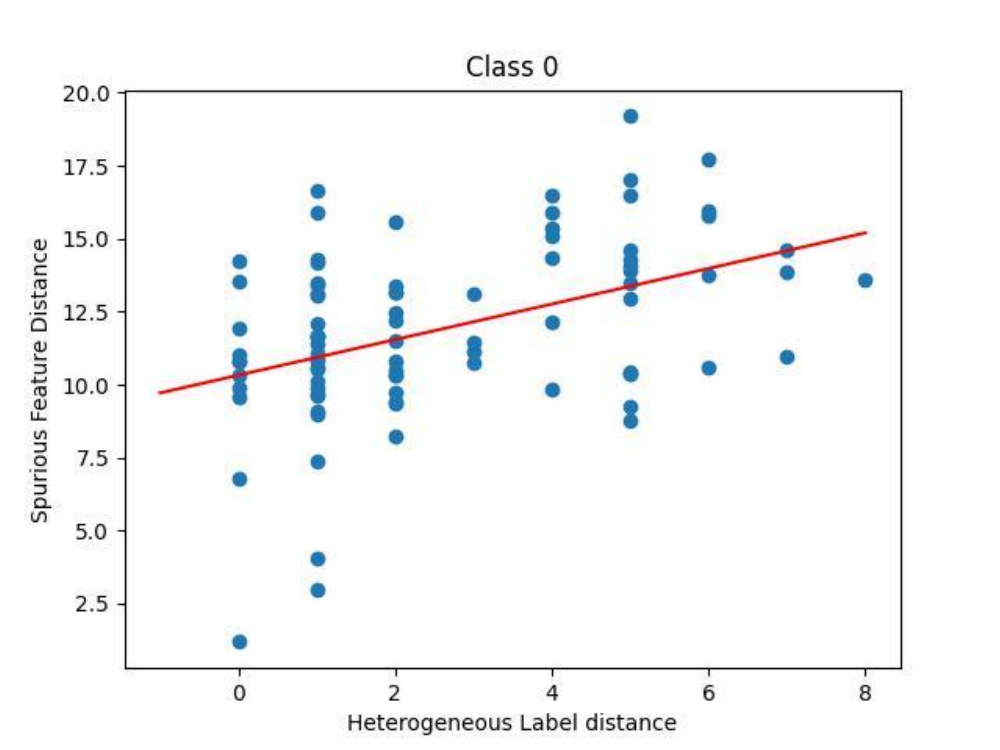}
    \end{subfigure}%
    \begin{subfigure}{.19\textwidth}
        \centering
        \includegraphics[width=\linewidth]{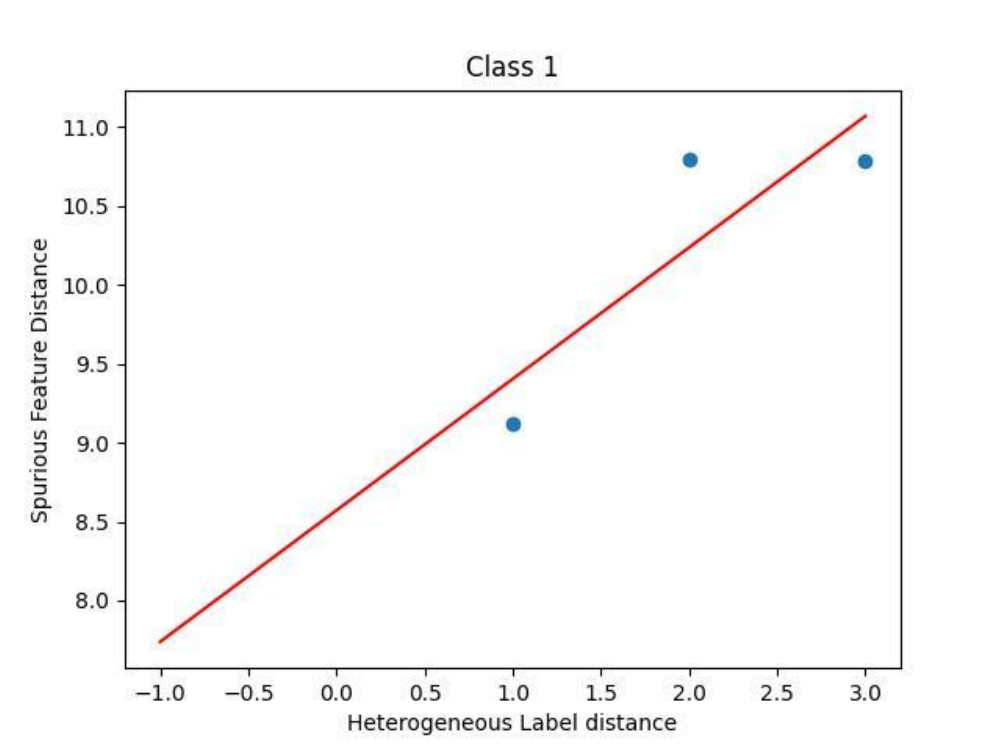}
    \end{subfigure}
    \begin{subfigure}{.19\textwidth}
        \centering
        \includegraphics[width=\linewidth]{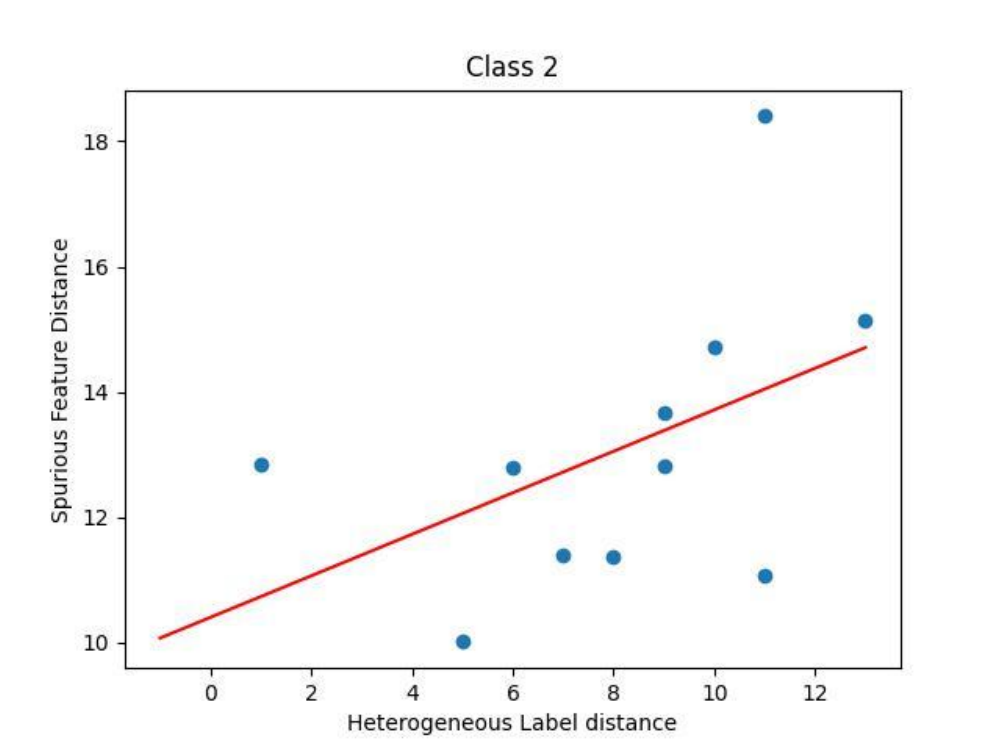}
    \end{subfigure}
    \begin{subfigure}{.19\textwidth}
        \centering
        \includegraphics[width=\linewidth]{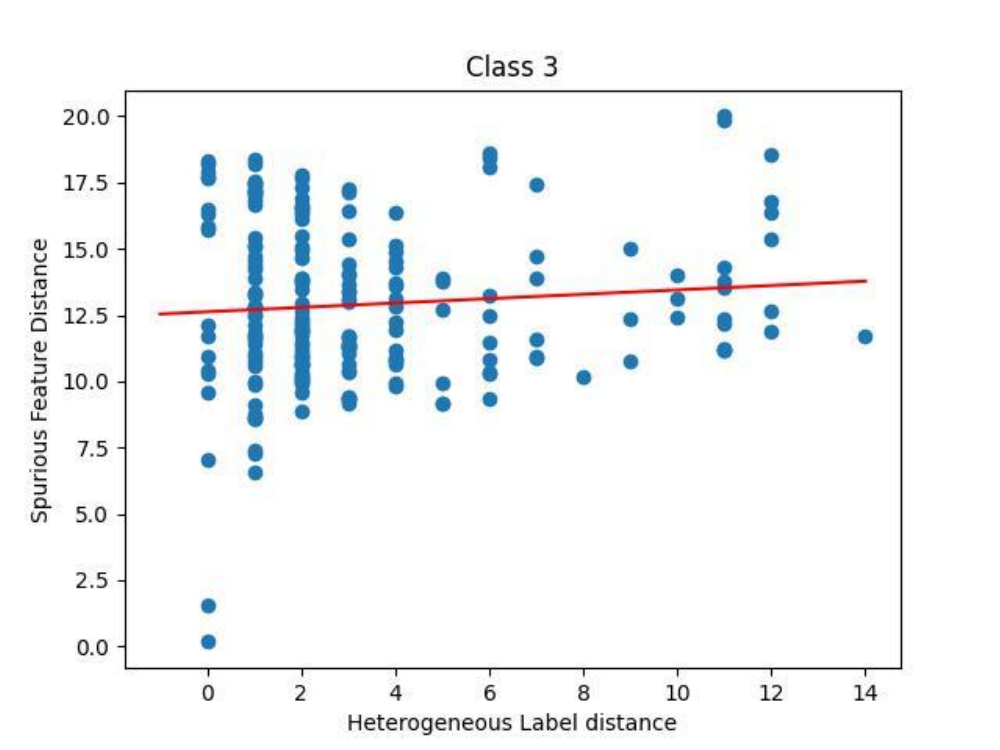}
    \end{subfigure}
    \begin{subfigure}{.19\textwidth}
        \centering
        \includegraphics[width=\linewidth]{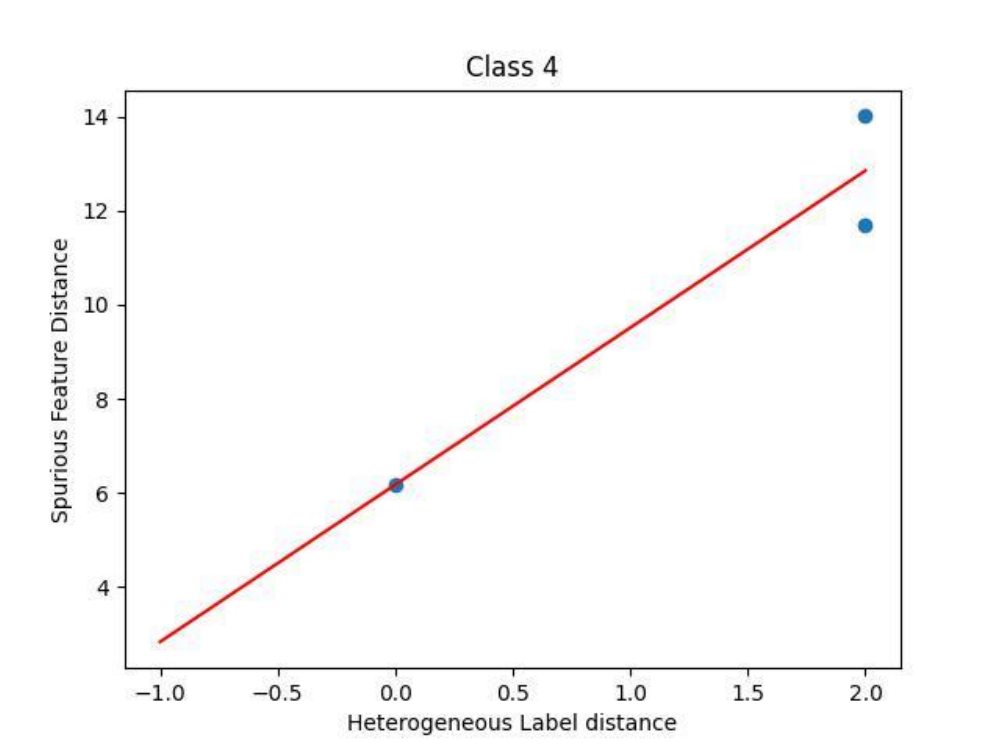}
    \end{subfigure}
    
    \begin{subfigure}{.19\textwidth}
        \centering
        \includegraphics[width=\linewidth]{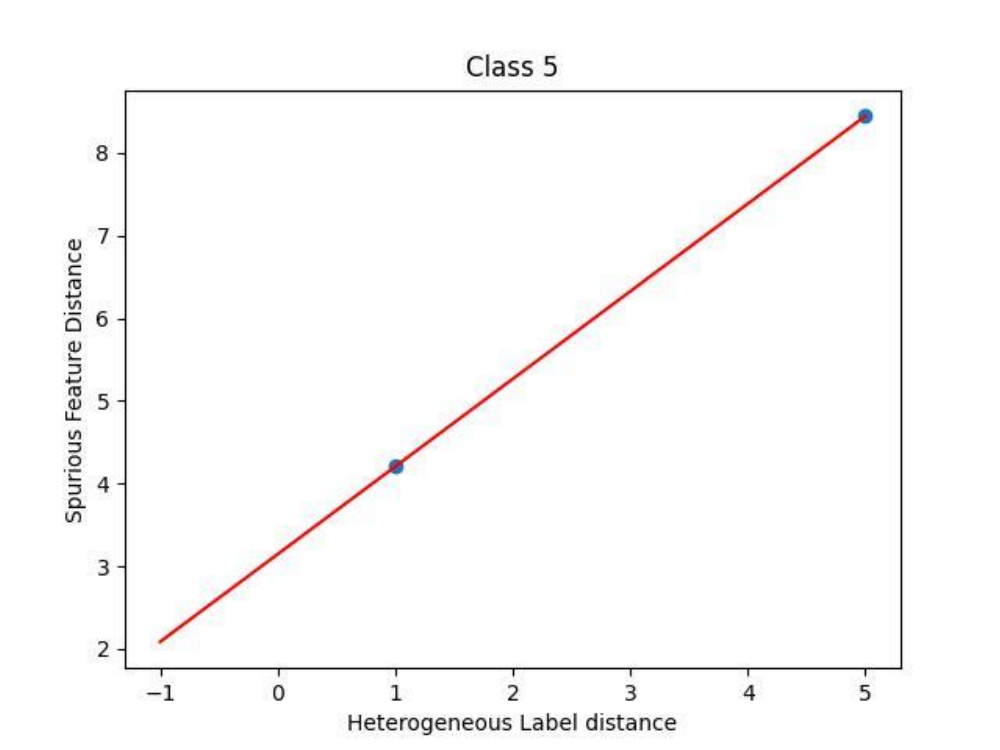}
    \end{subfigure}
    \begin{subfigure}{.19\textwidth}
        \centering
        \includegraphics[width=\linewidth]{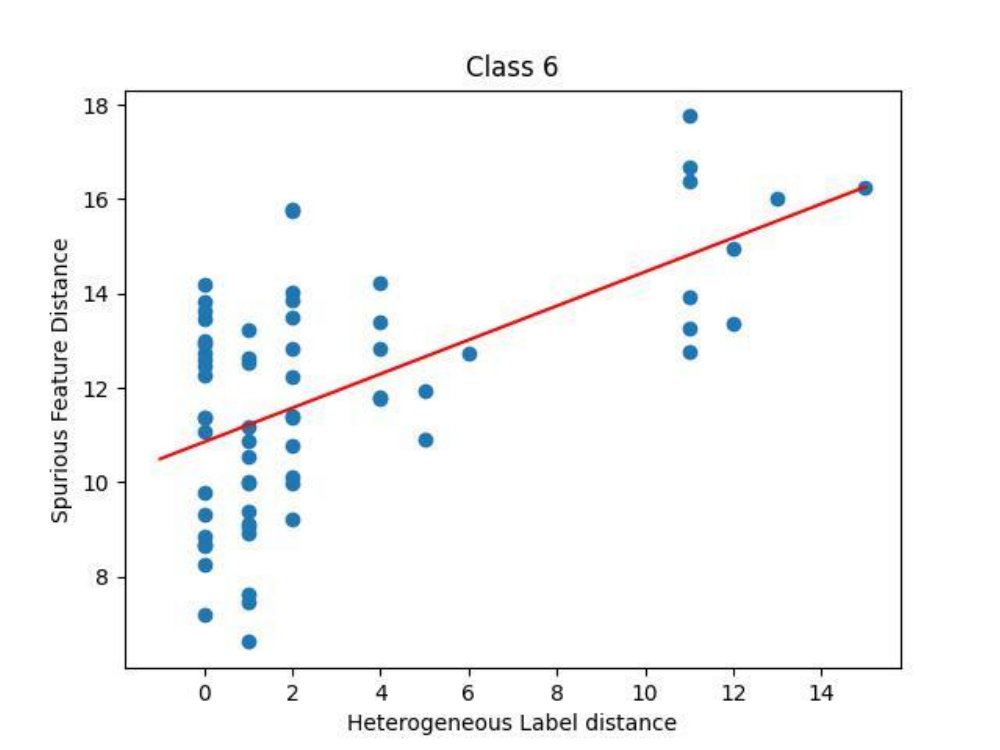}
    \end{subfigure}%
    \begin{subfigure}{.19\textwidth}
        \centering
        \includegraphics[width=\linewidth]{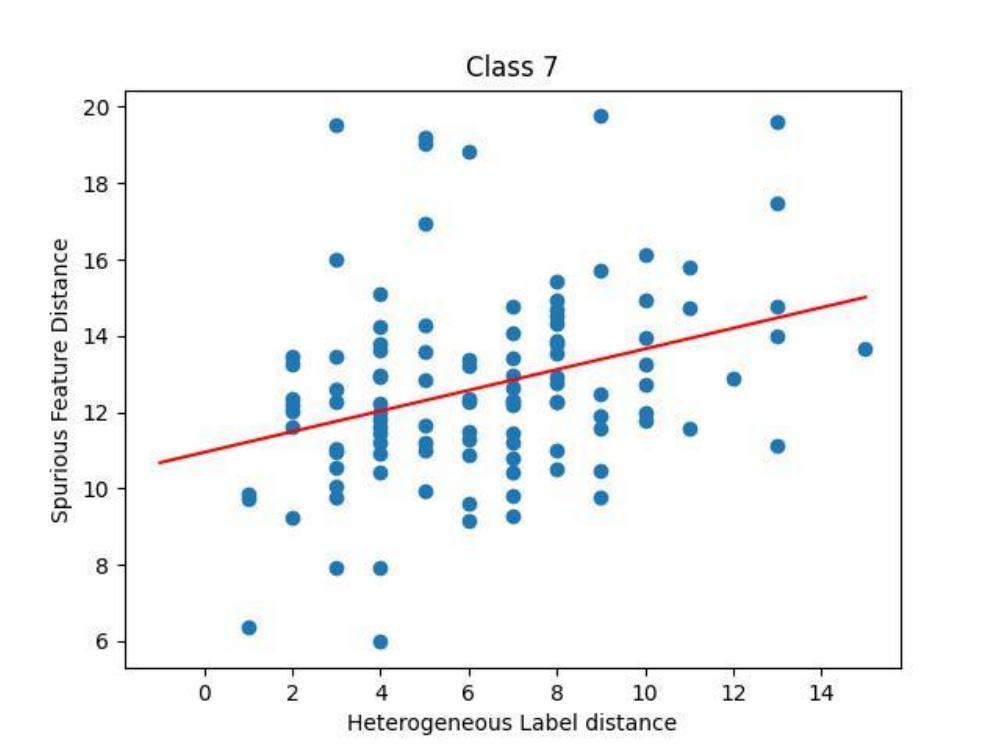}
    \end{subfigure}
    \begin{subfigure}{.19\textwidth}
        \centering
        \includegraphics[width=\linewidth]{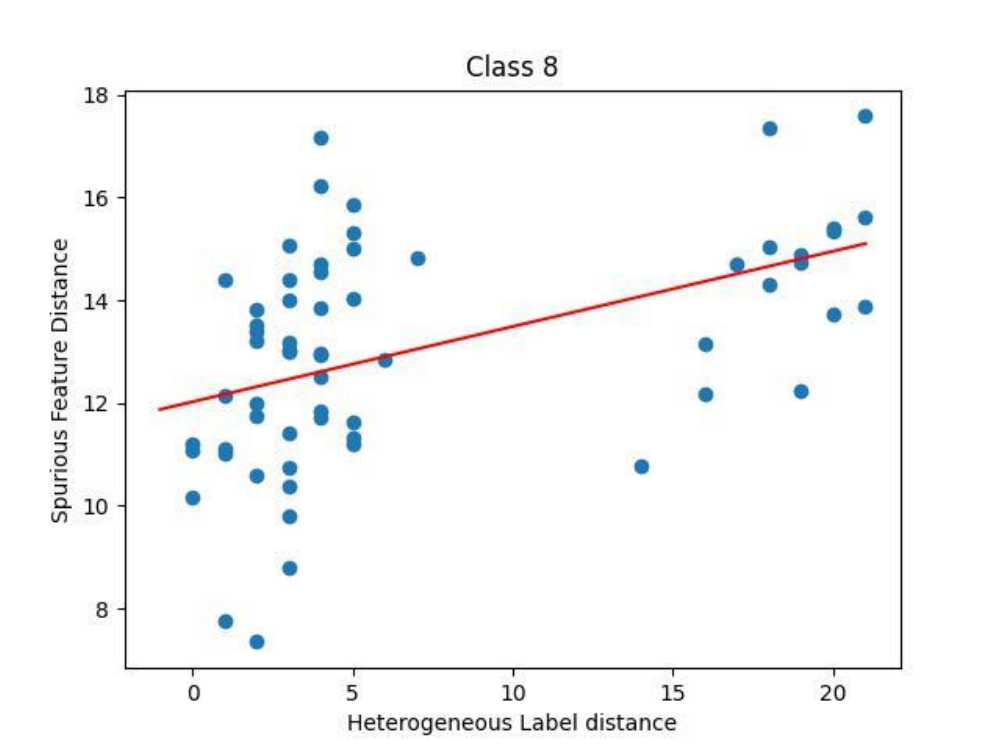}
    \end{subfigure}
    \begin{subfigure}{.19\textwidth}
        \centering
        \includegraphics[width=\linewidth]{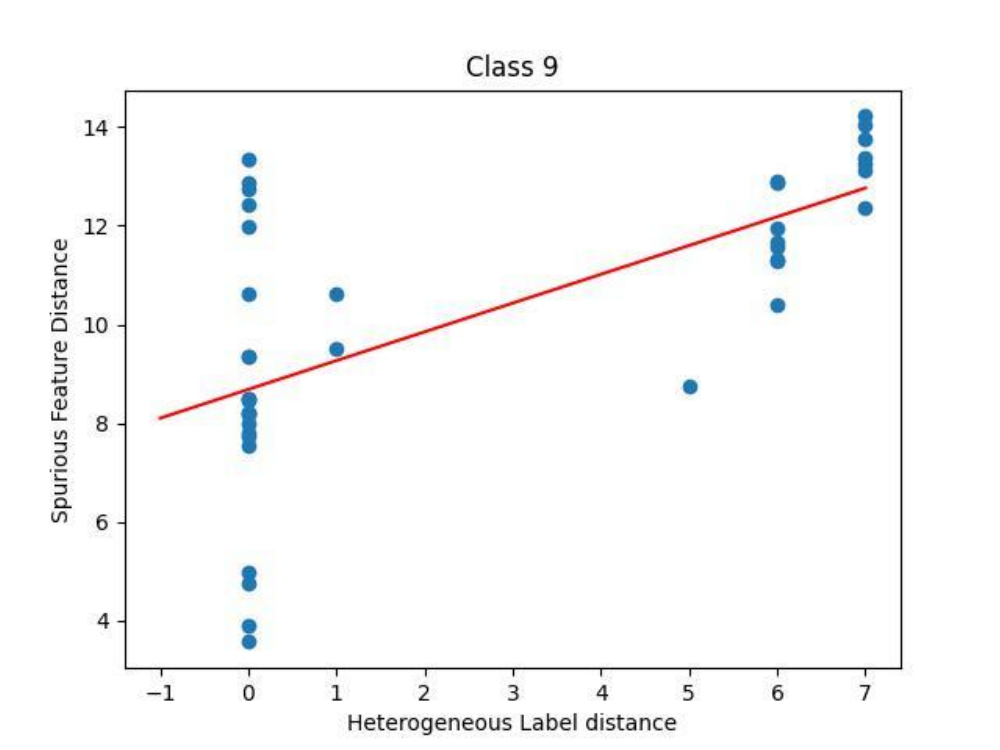}
    \end{subfigure}

    \begin{subfigure}{.19\textwidth}
        \centering
        \includegraphics[width=\linewidth]{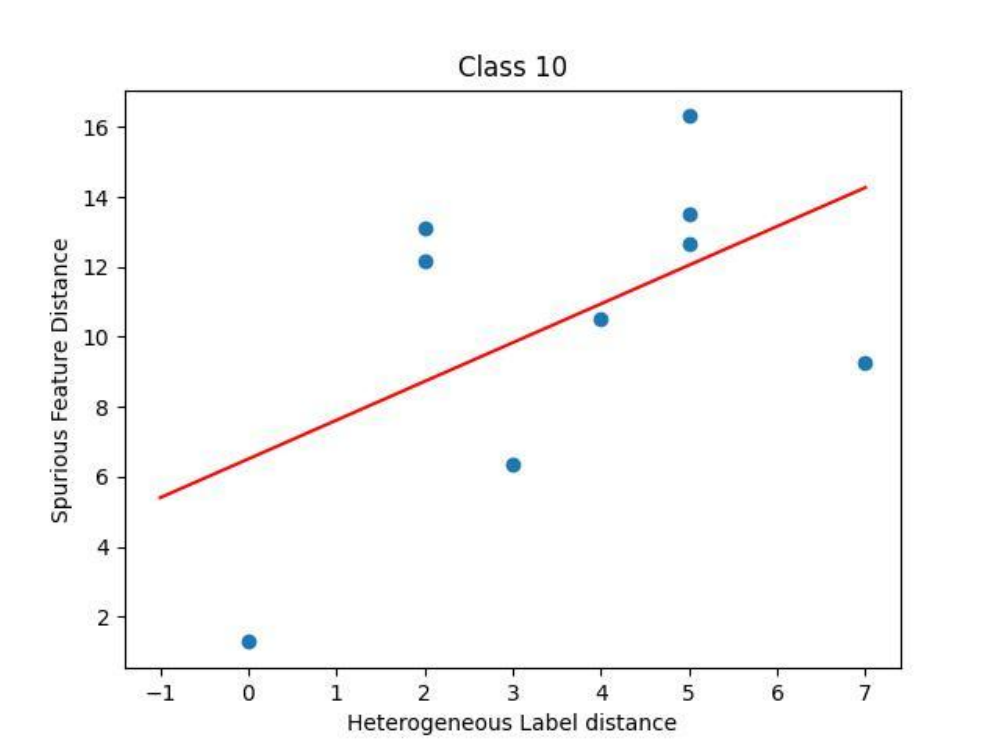}
    \end{subfigure}
    \begin{subfigure}{.19\textwidth}
        \centering
        \includegraphics[width=\linewidth]{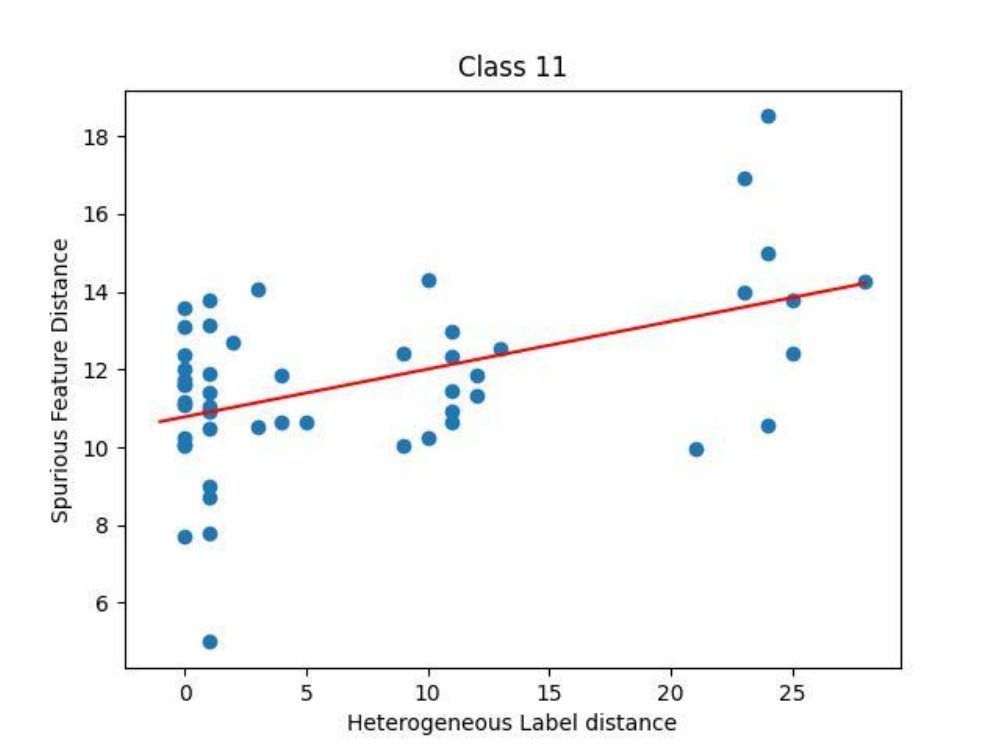}
    \end{subfigure}%
    \begin{subfigure}{.19\textwidth}
        \centering
        \includegraphics[width=\linewidth]{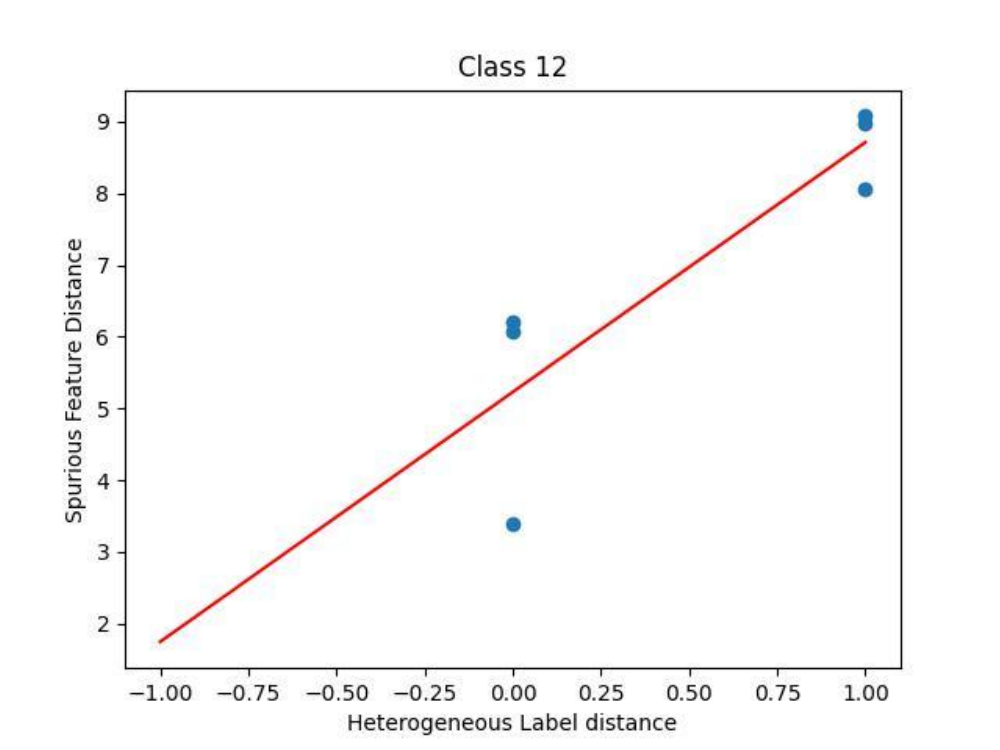}
    \end{subfigure}
    \begin{subfigure}{.19\textwidth}
        \centering
        \includegraphics[width=\linewidth]{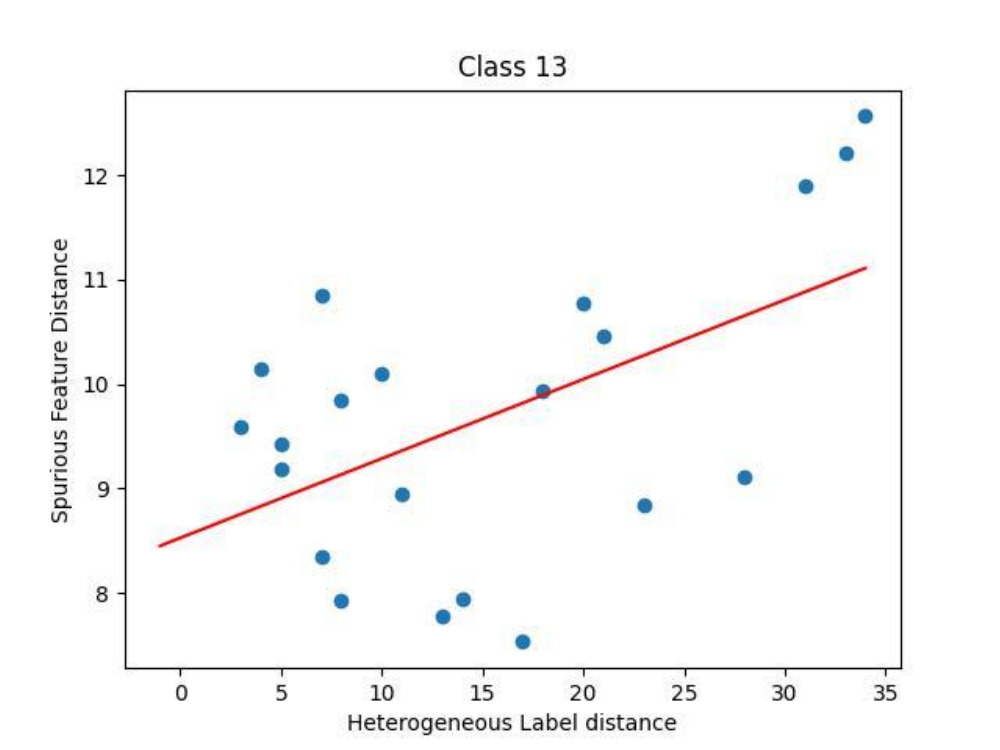}
    \end{subfigure}
    \begin{subfigure}{.19\textwidth}
        \centering
        \includegraphics[width=\linewidth]{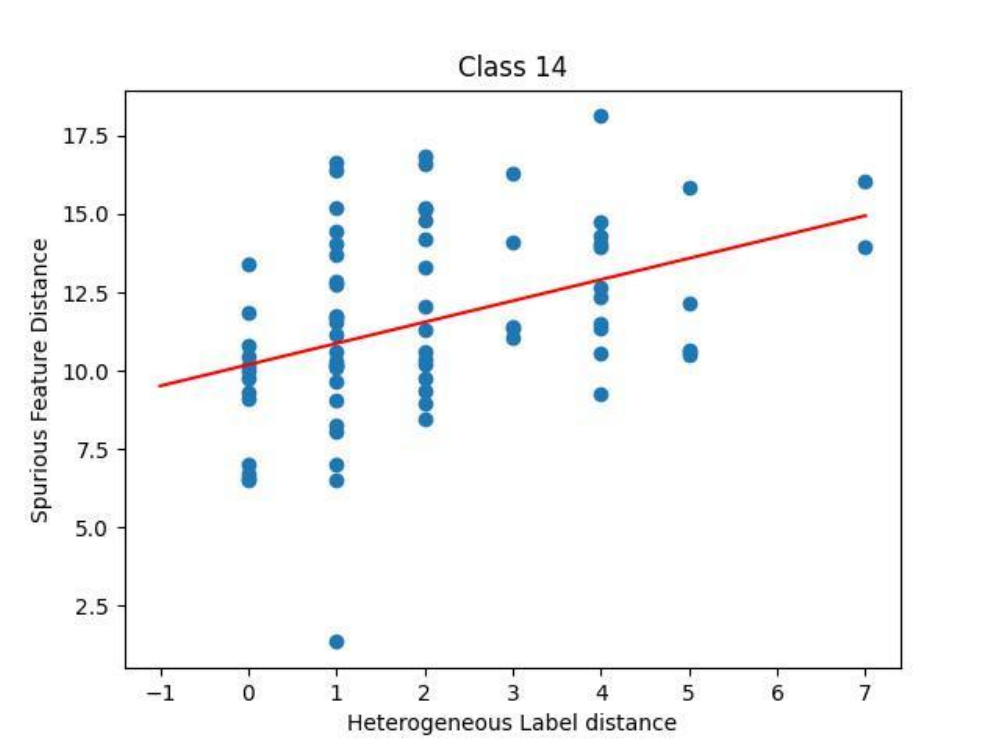}
    \end{subfigure}

    \begin{subfigure}{.19\textwidth}
        \centering
        \includegraphics[width=\linewidth]{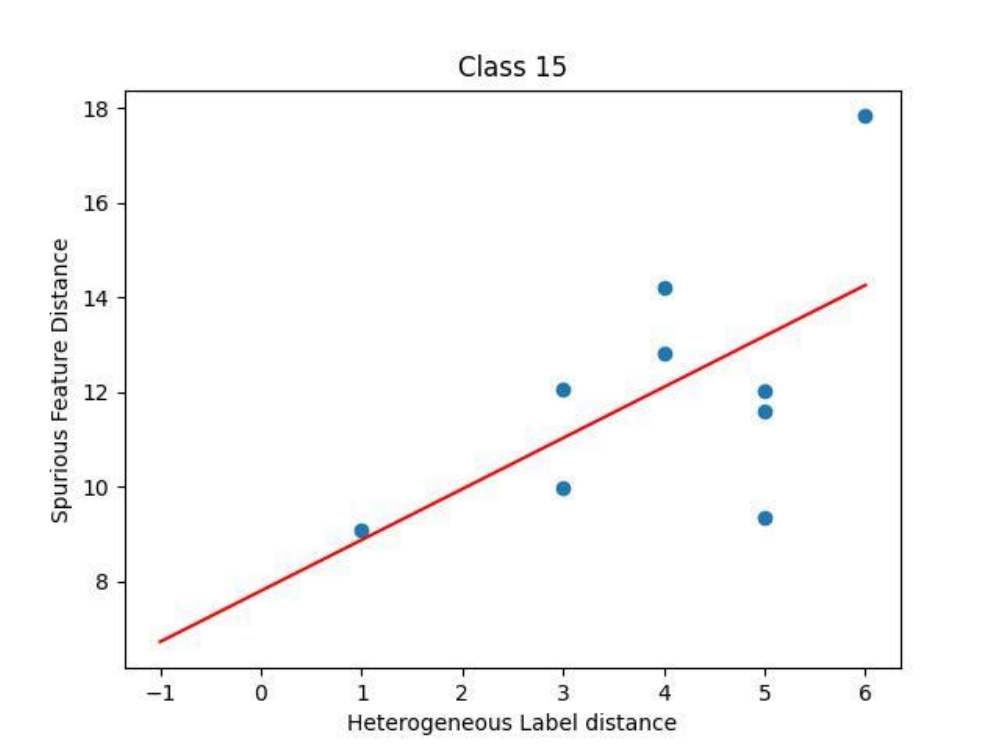}
    \end{subfigure}
    \begin{subfigure}{.19\textwidth}
        \centering
        \includegraphics[width=\linewidth]{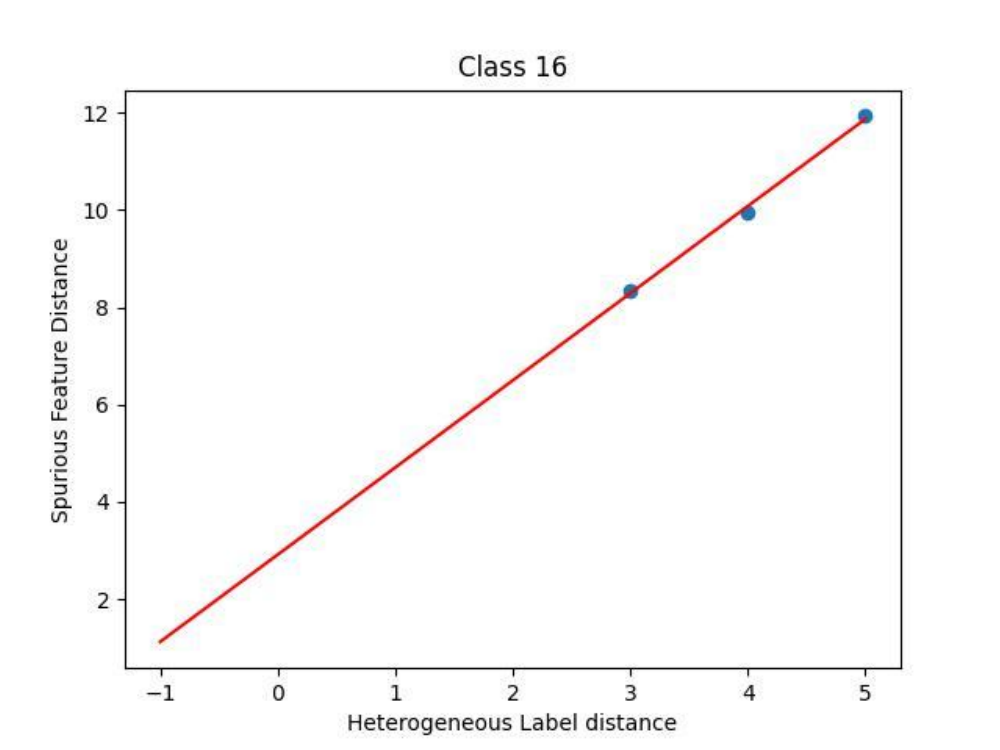}
    \end{subfigure}%
    \begin{subfigure}{.19\textwidth}
        \centering
        \includegraphics[width=\linewidth]{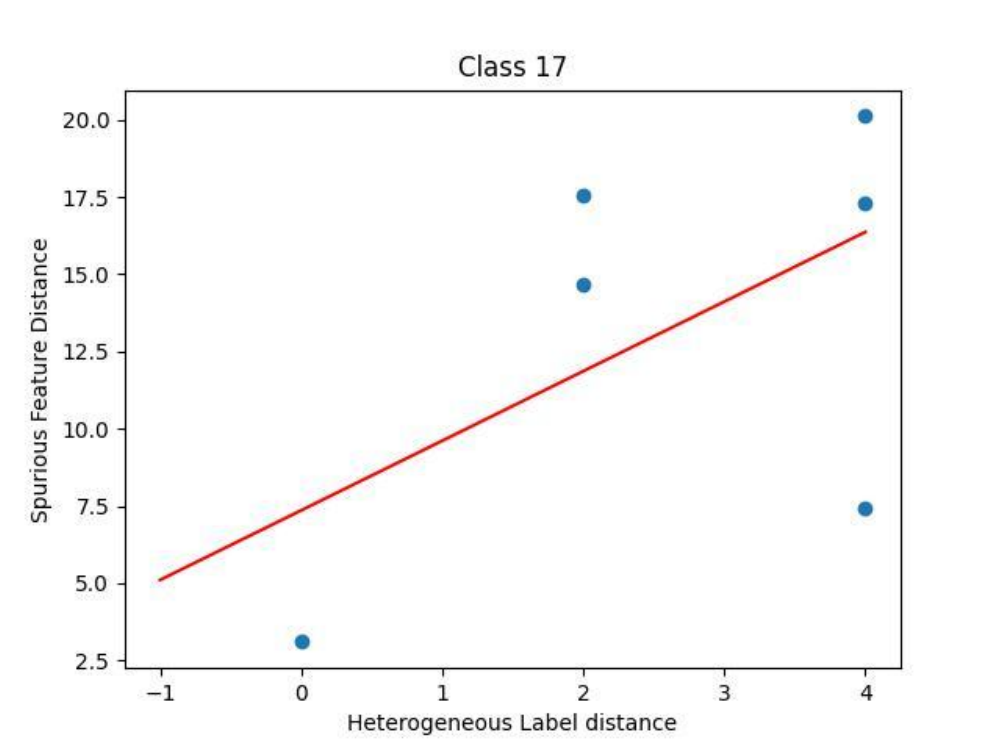}
    \end{subfigure}
    \begin{subfigure}{.19\textwidth}
        \centering
        \includegraphics[width=\linewidth]{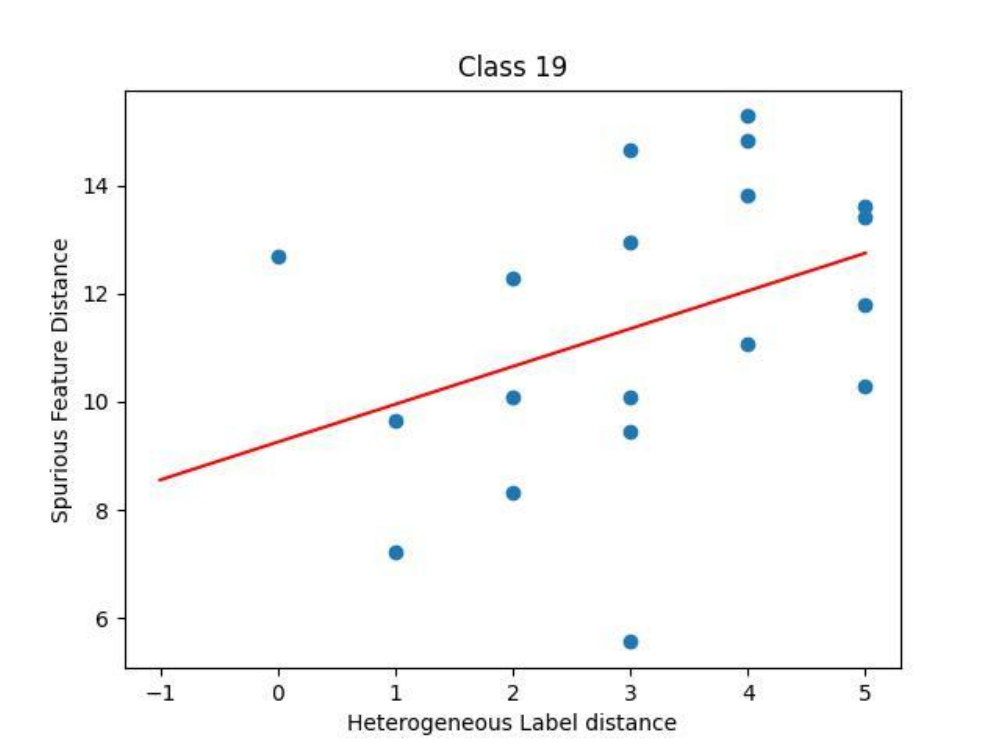}
    \end{subfigure}
    \begin{subfigure}{.19\textwidth}
        \centering
        \includegraphics[width=\linewidth]{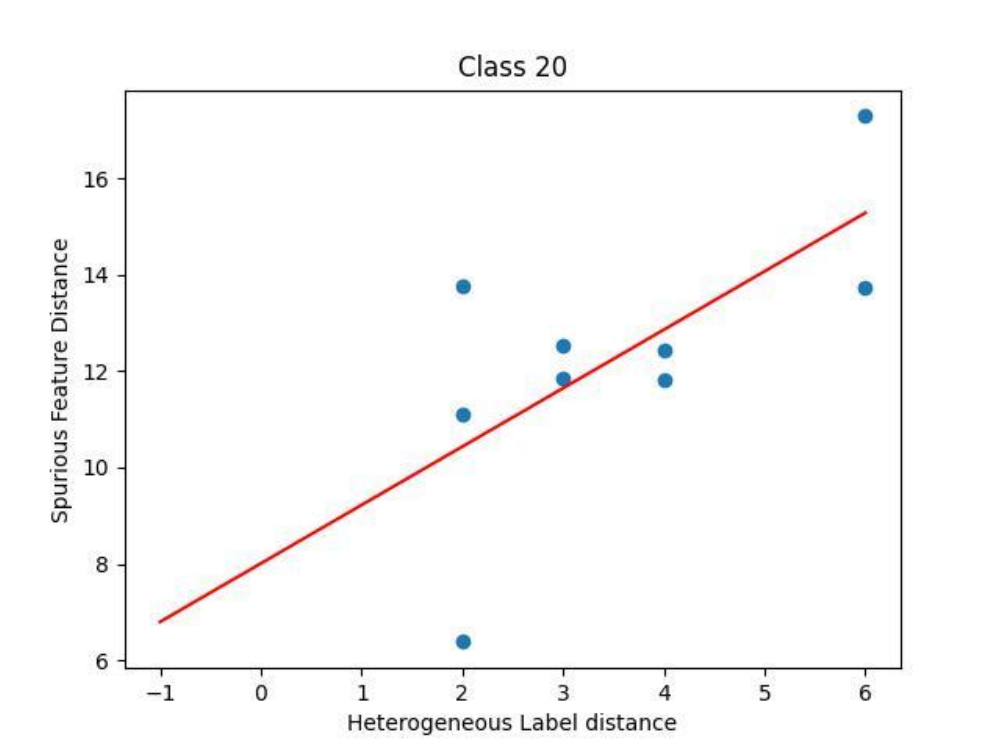}
    \end{subfigure}
    \caption{The relationship between the distance of the aggregated neighborhood representation and distance of HeteNLD on Cora \textit{degree} domain, \textbf{covariate shift}. The positive correlation is clear.}
    \label{reflect_sp_cora_deg_cov}
\end{figure}

\textbf{Concept shift.} As for concept shift, spurious features are correlated with labels, thus the label of a node contains information about spurious features correlated with this class. 
Hence, by observing HeteNLD, we can measure the distribution of the spurious feature. For concept shift, we train a GNN to predict environment labels to obtain spurious representations. \textbf{Table \ref{reflect_sp_cora_word_con} and \ref{reflect_sp_cora_deg_con} also show a clear positive correlation between spurious feature distance and HeteNLD discrepancy on concept shift, indicating that HeteNLD discrepancy can reflect the distance of the environmental spurious features. }

\begin{figure}
    \centering
    \begin{subfigure}{.19\textwidth}
        \centering
        \includegraphics[width=\linewidth]{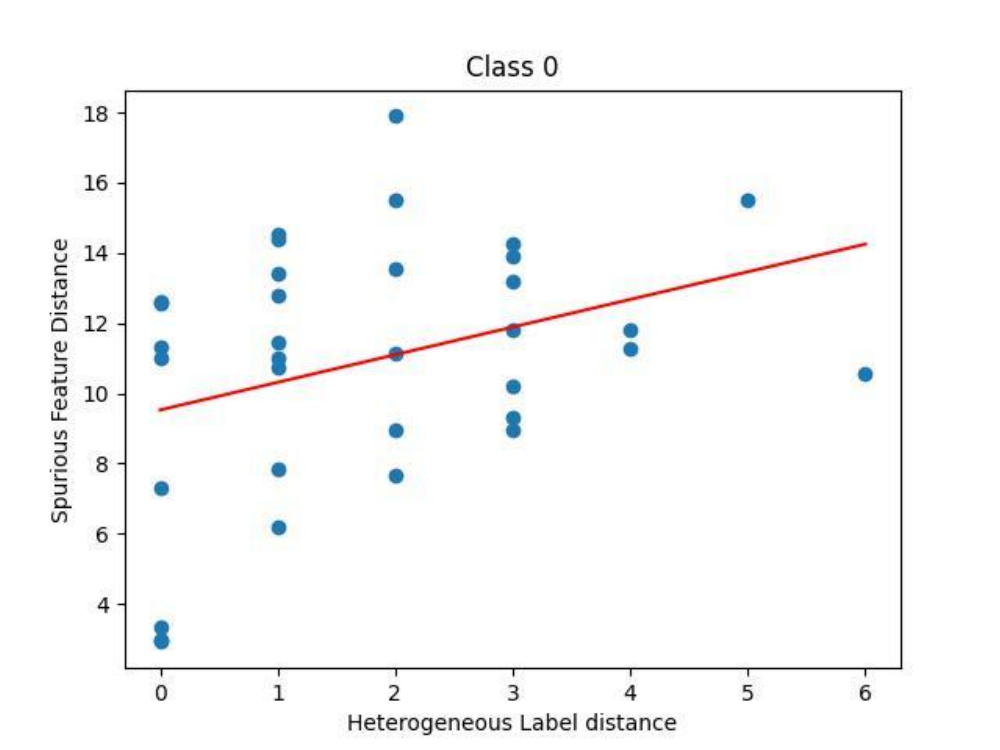}
    \end{subfigure}%
    \begin{subfigure}{.19\textwidth}
        \centering
        \includegraphics[width=\linewidth]{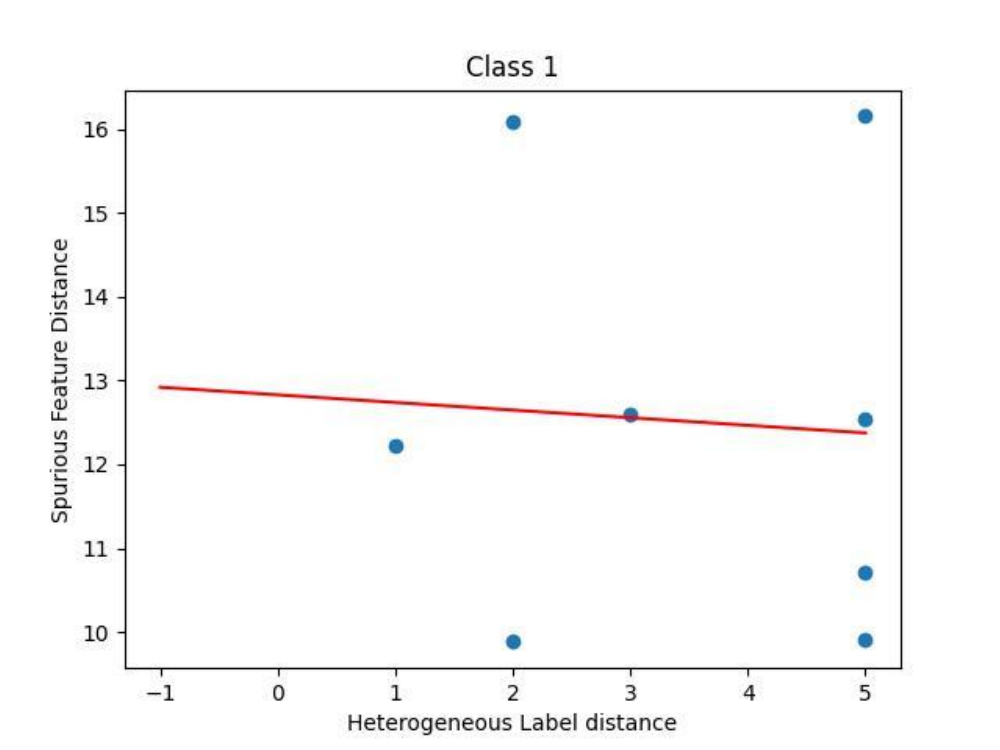}
    \end{subfigure}
    \begin{subfigure}{.19\textwidth}
        \centering
        \includegraphics[width=\linewidth]{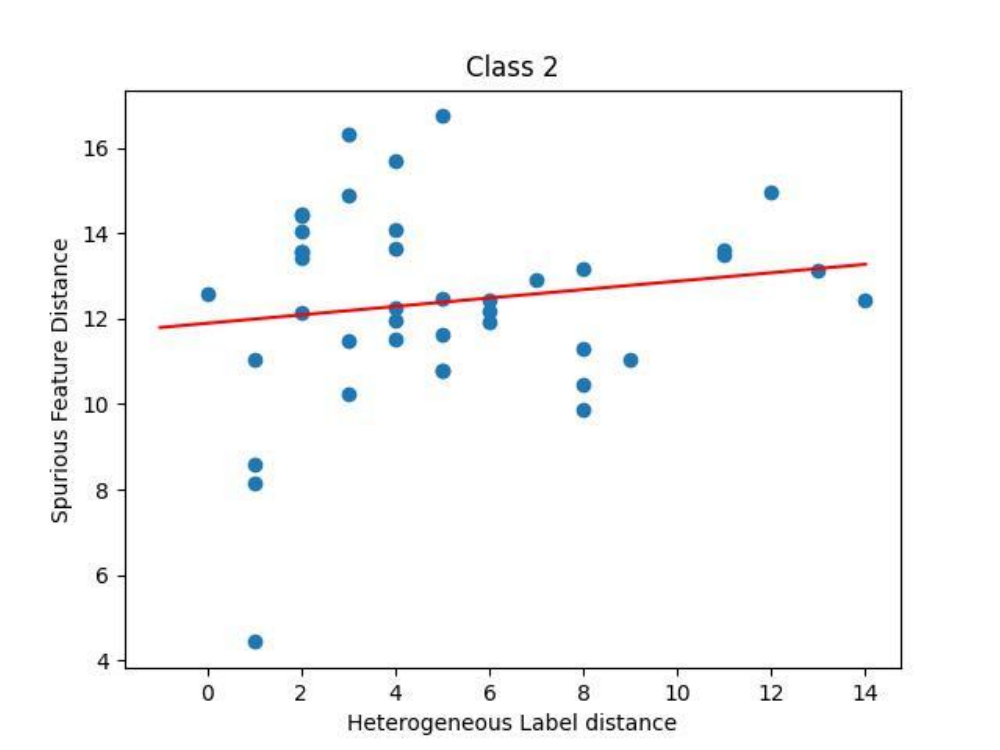}
    \end{subfigure}
    \begin{subfigure}{.19\textwidth}
        \centering
        \includegraphics[width=\linewidth]{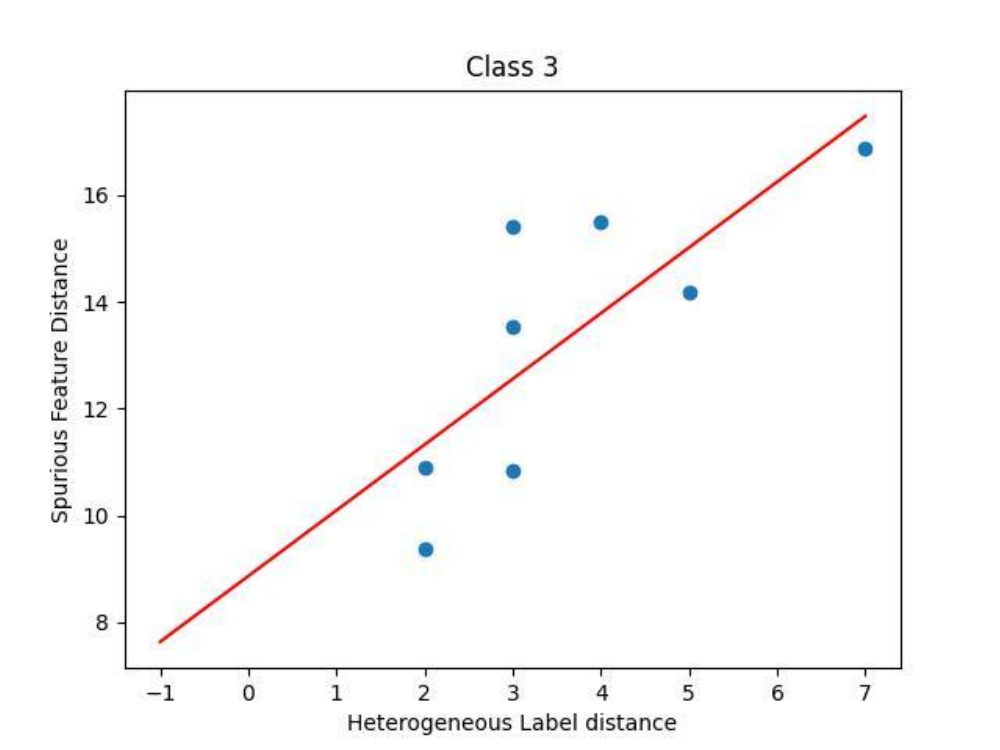}
    \end{subfigure}
    \begin{subfigure}{.19\textwidth}
        \centering
        \includegraphics[width=\linewidth]{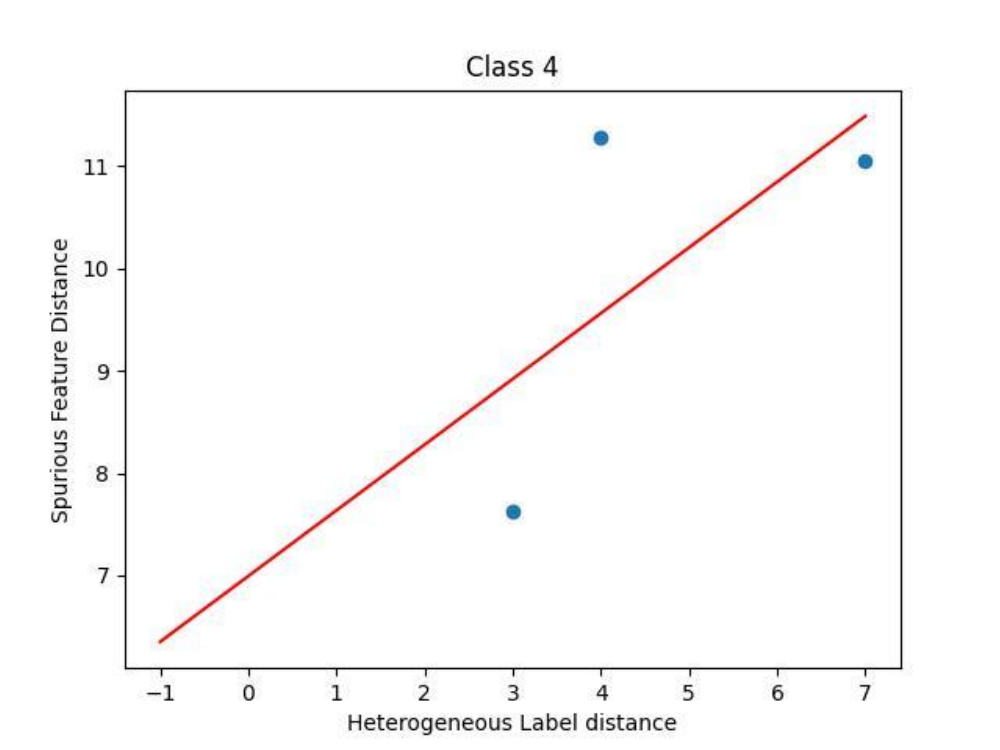}
    \end{subfigure}
    
    \begin{subfigure}{.19\textwidth}
        \centering
        \includegraphics[width=\linewidth]{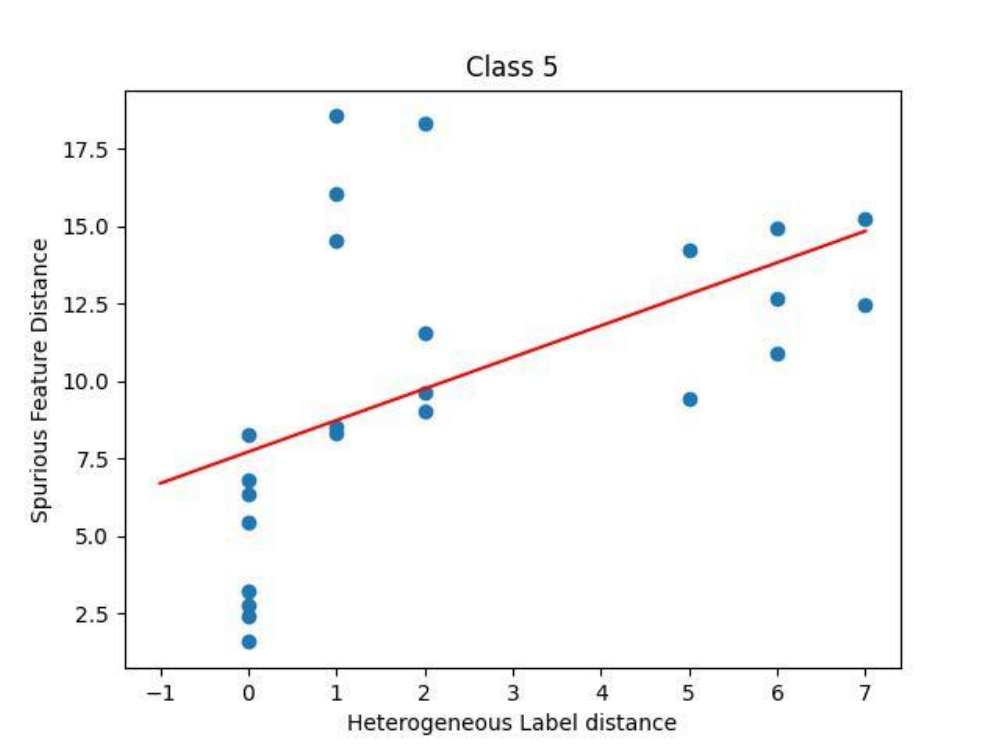}
    \end{subfigure}
    \begin{subfigure}{.19\textwidth}
        \centering
        \includegraphics[width=\linewidth]{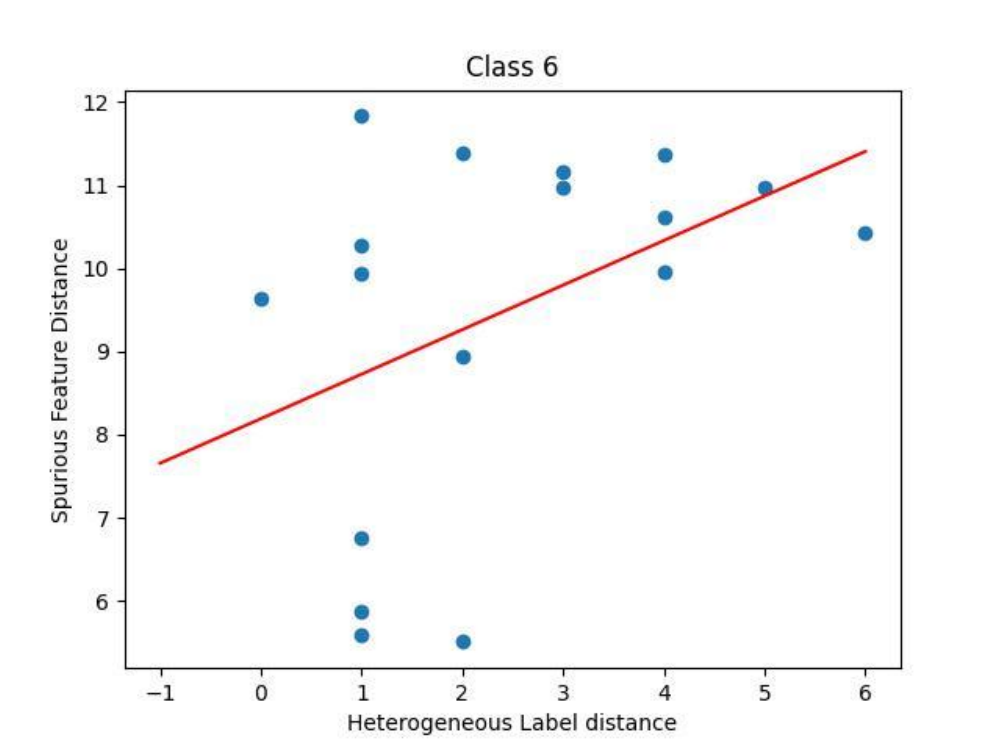}
    \end{subfigure}%
    \begin{subfigure}{.19\textwidth}
        \centering
        \includegraphics[width=\linewidth]{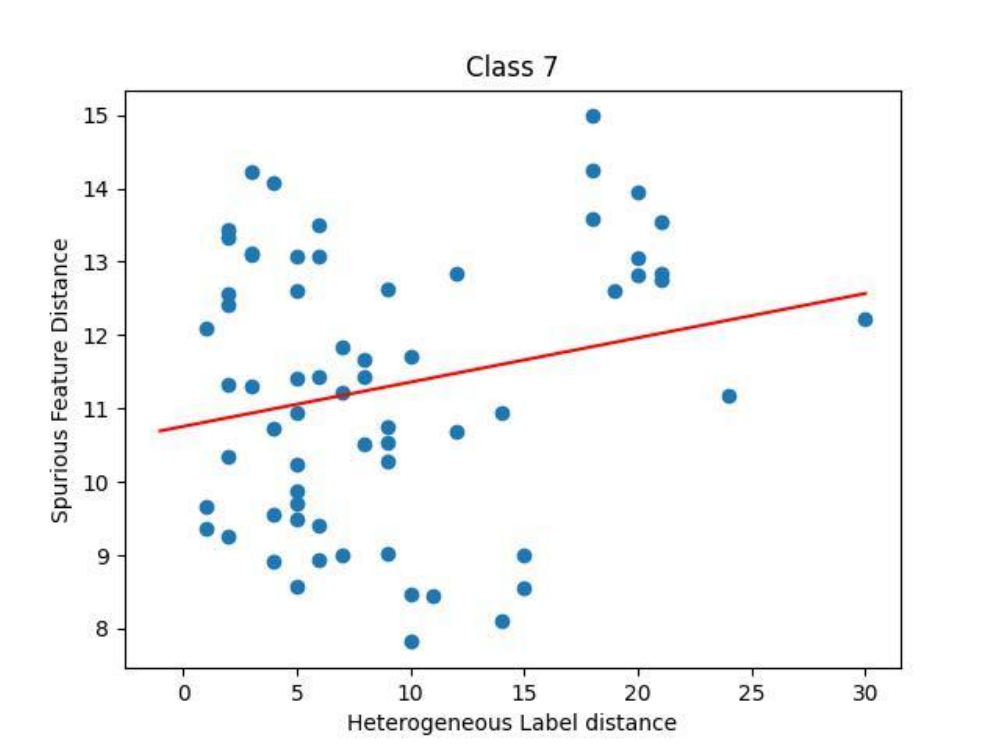}
    \end{subfigure}
    \begin{subfigure}{.19\textwidth}
        \centering
        \includegraphics[width=\linewidth]{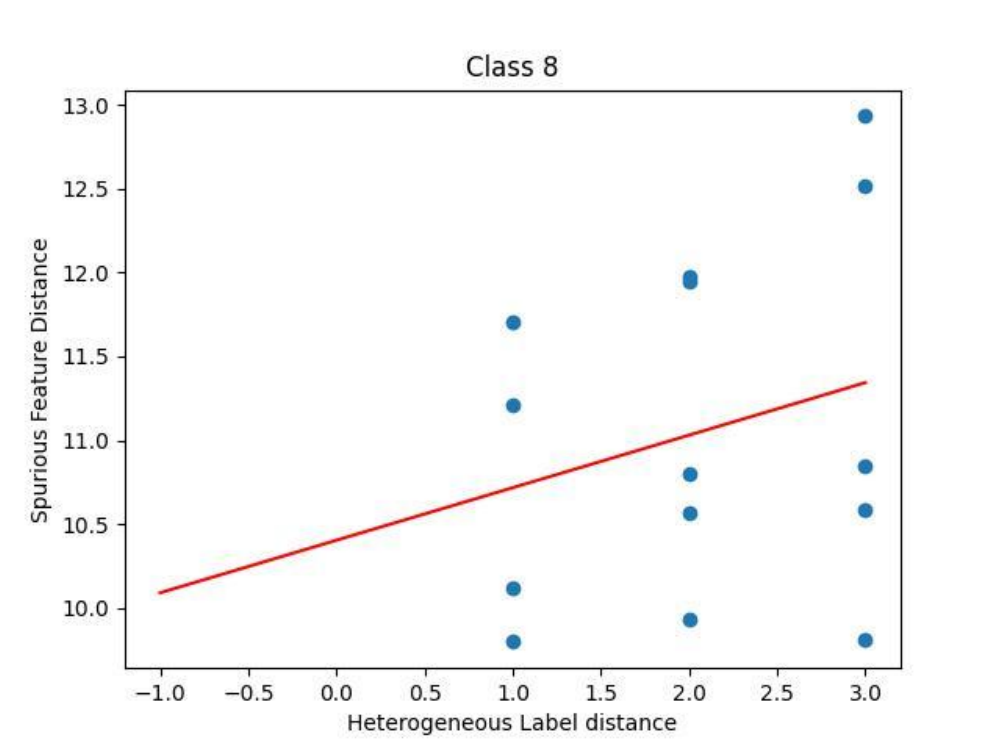}
    \end{subfigure}
    \begin{subfigure}{.19\textwidth}
        \centering
        \includegraphics[width=\linewidth]{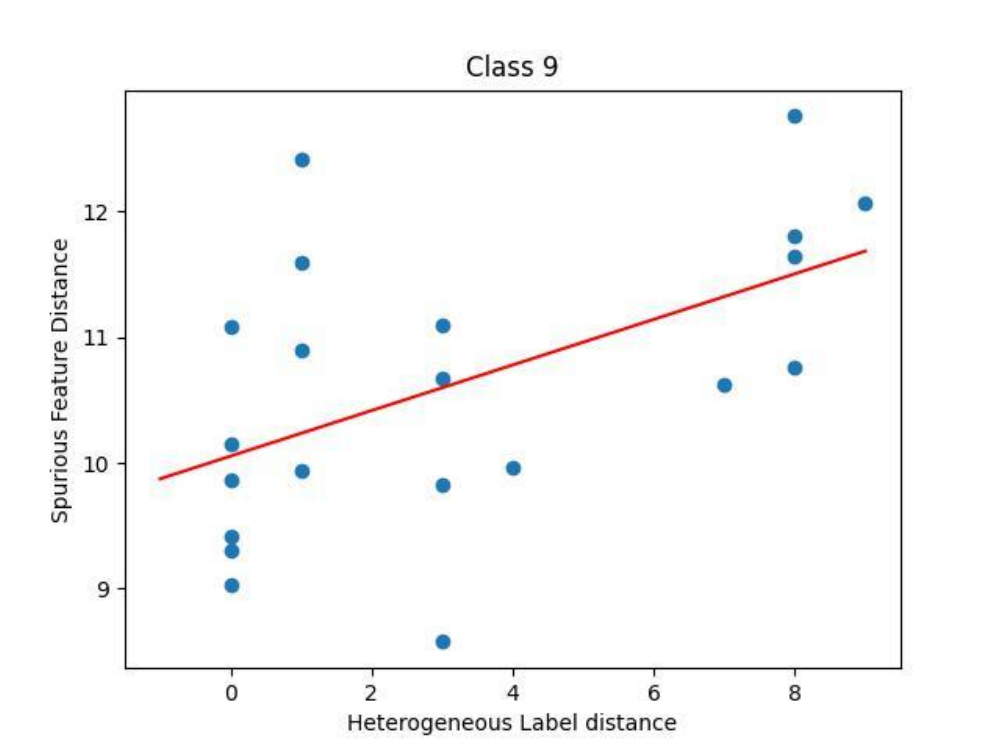}
    \end{subfigure}

    \begin{subfigure}{.19\textwidth}
        \centering
        \includegraphics[width=\linewidth]{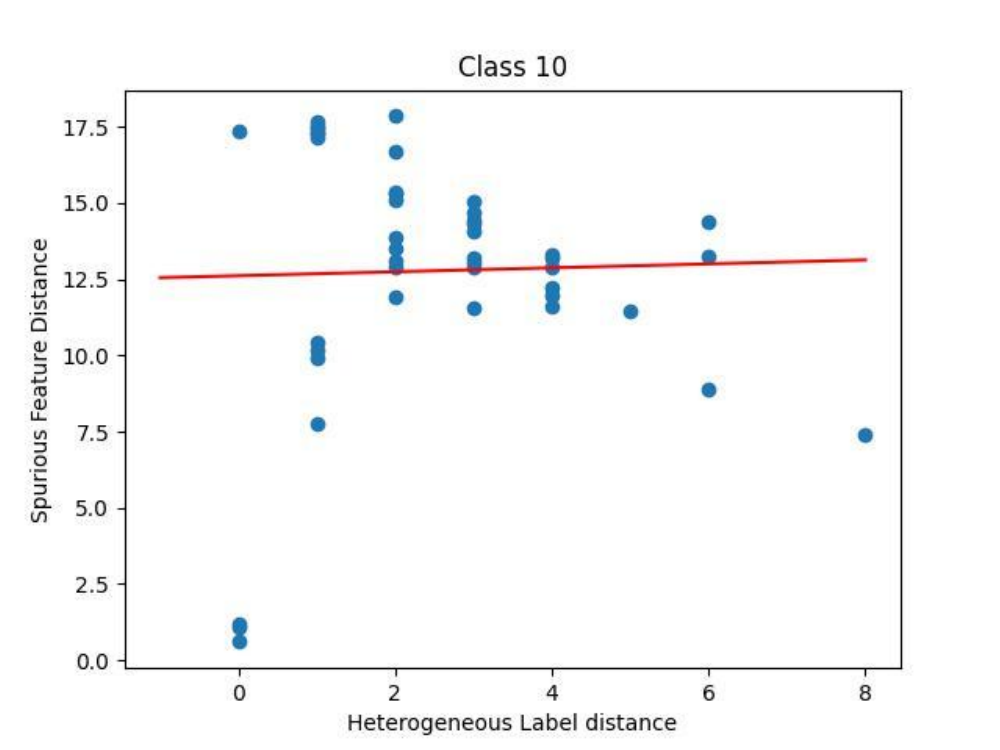}
    \end{subfigure}
    \begin{subfigure}{.19\textwidth}
        \centering
        \includegraphics[width=\linewidth]{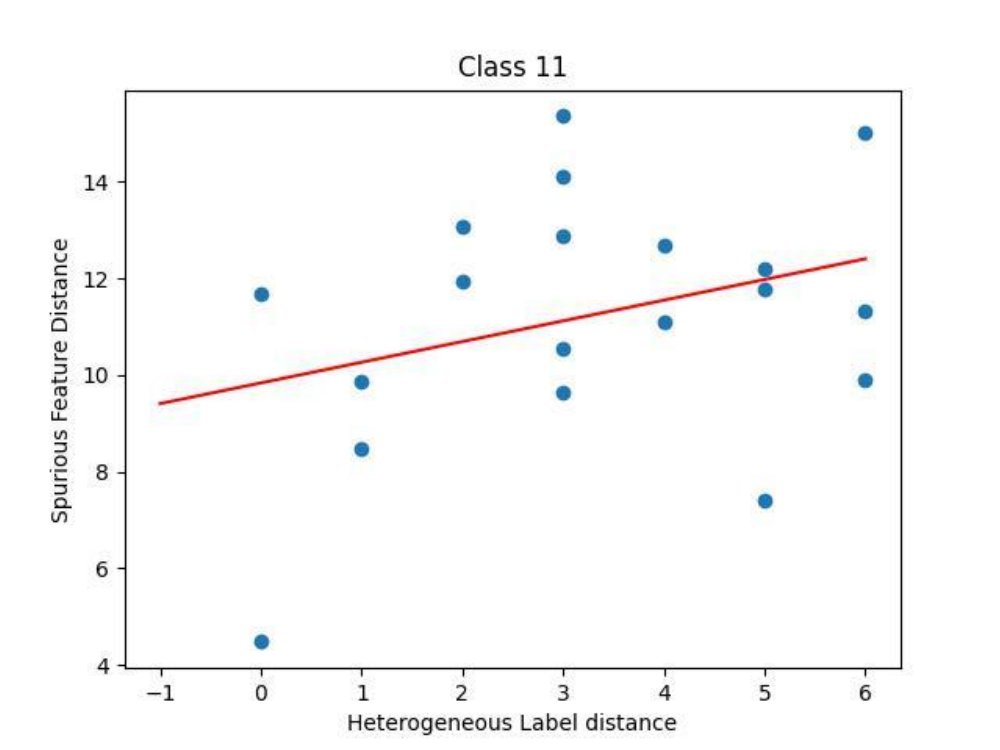}
    \end{subfigure}%
    \begin{subfigure}{.19\textwidth}
        \centering
        \includegraphics[width=\linewidth]{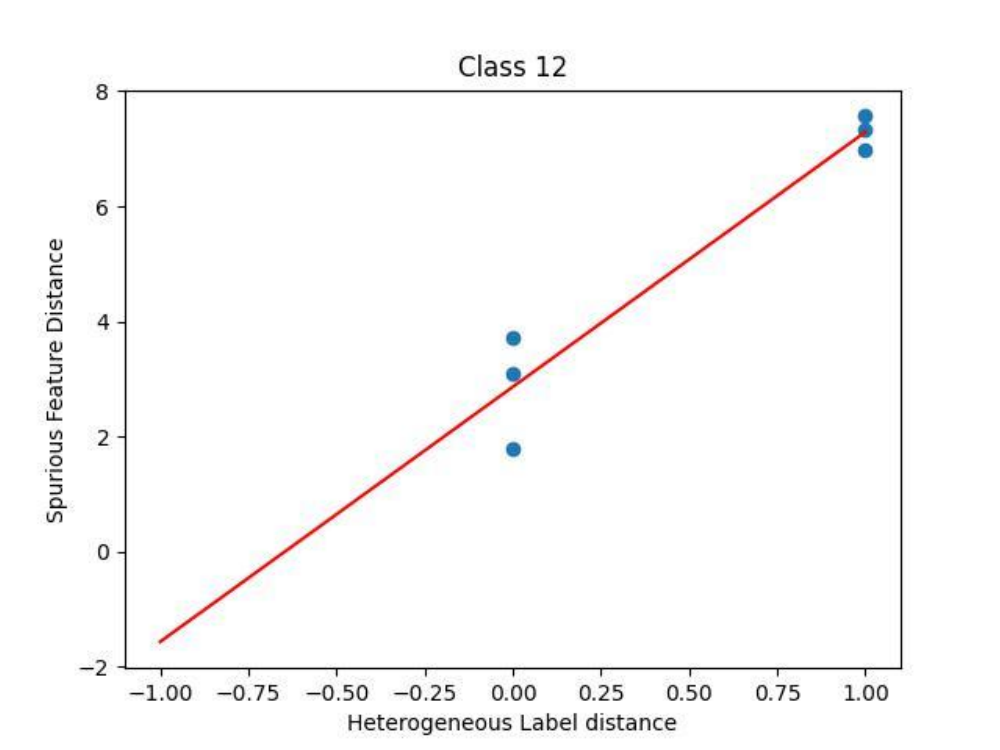}
    \end{subfigure}
    \begin{subfigure}{.19\textwidth}
        \centering
        \includegraphics[width=\linewidth]{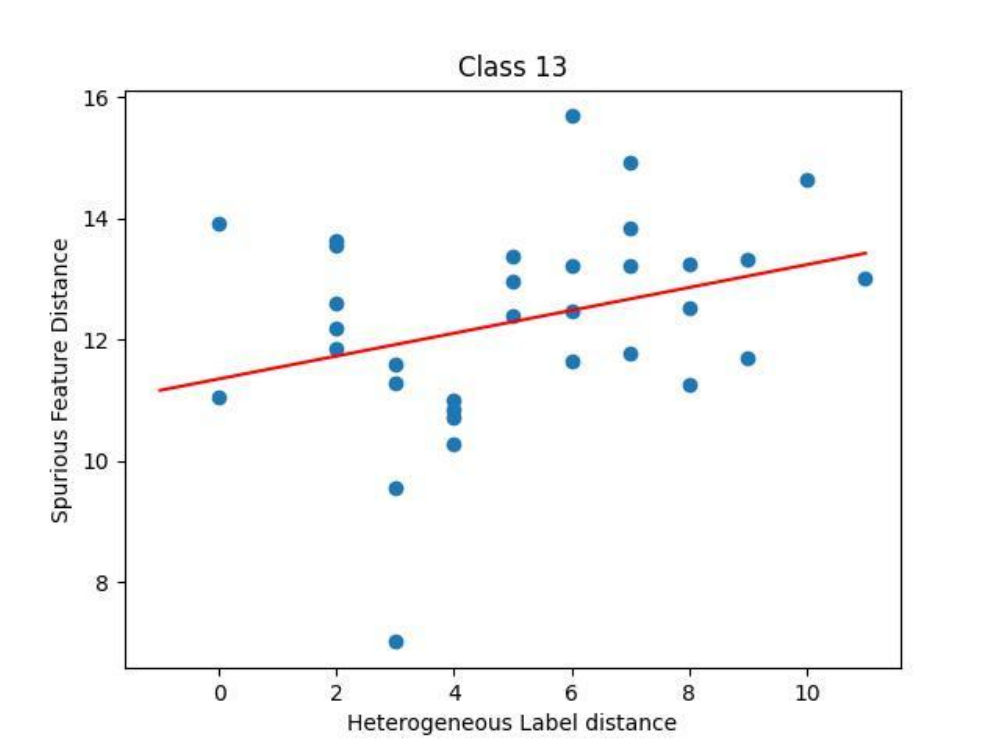}
    \end{subfigure}
    \begin{subfigure}{.19\textwidth}
        \centering
        \includegraphics[width=\linewidth]{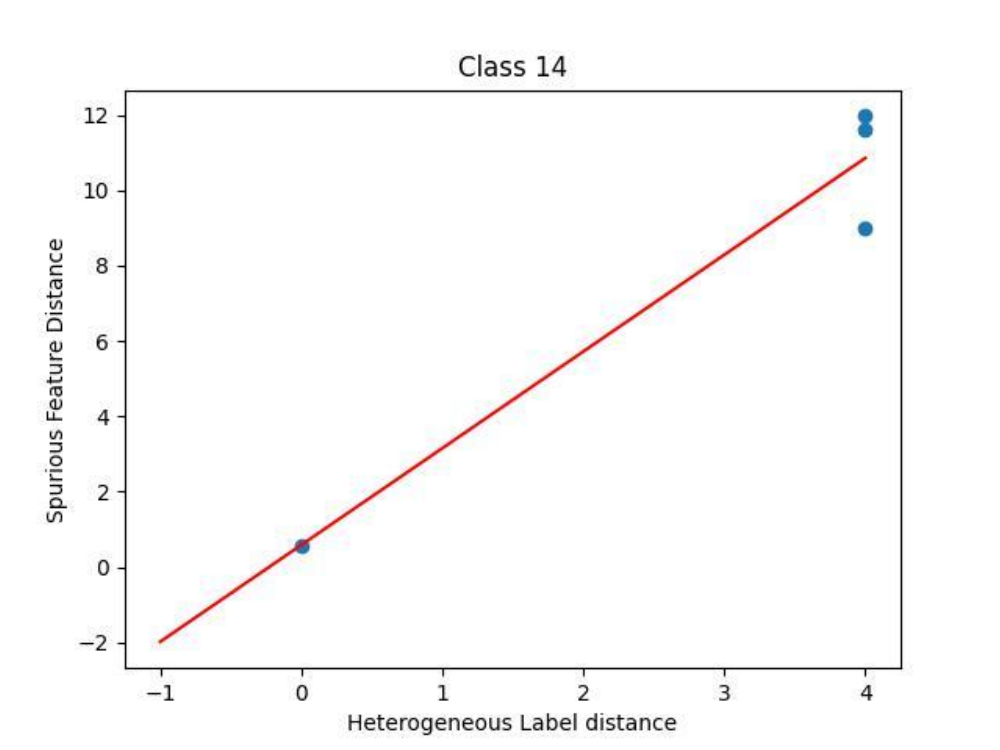}
    \end{subfigure}

    \begin{subfigure}{.19\textwidth}
        \centering
        \includegraphics[width=\linewidth]{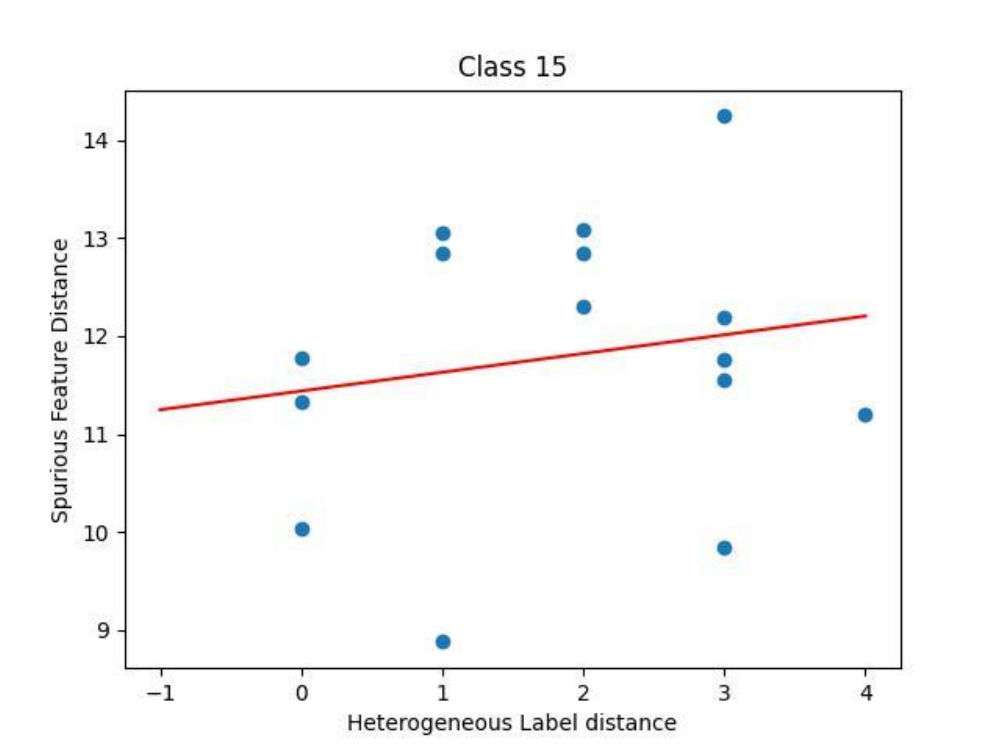}
    \end{subfigure}
    \begin{subfigure}{.19\textwidth}
        \centering
        \includegraphics[width=\linewidth]{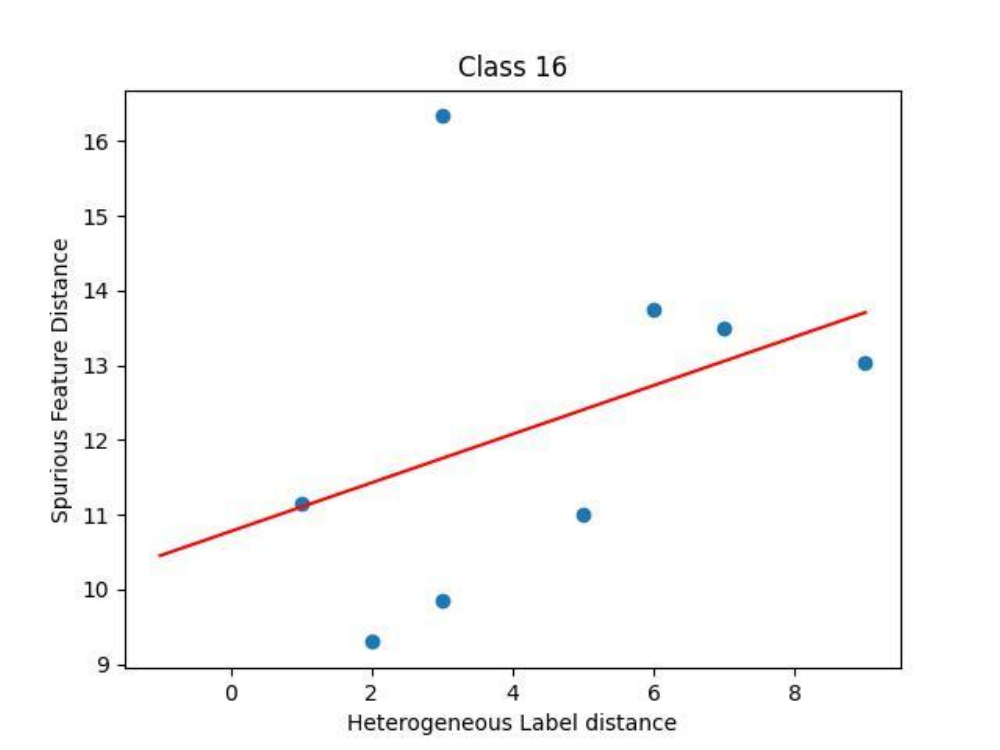}
    \end{subfigure}%
    \begin{subfigure}{.19\textwidth}
        \centering
        \includegraphics[width=\linewidth]{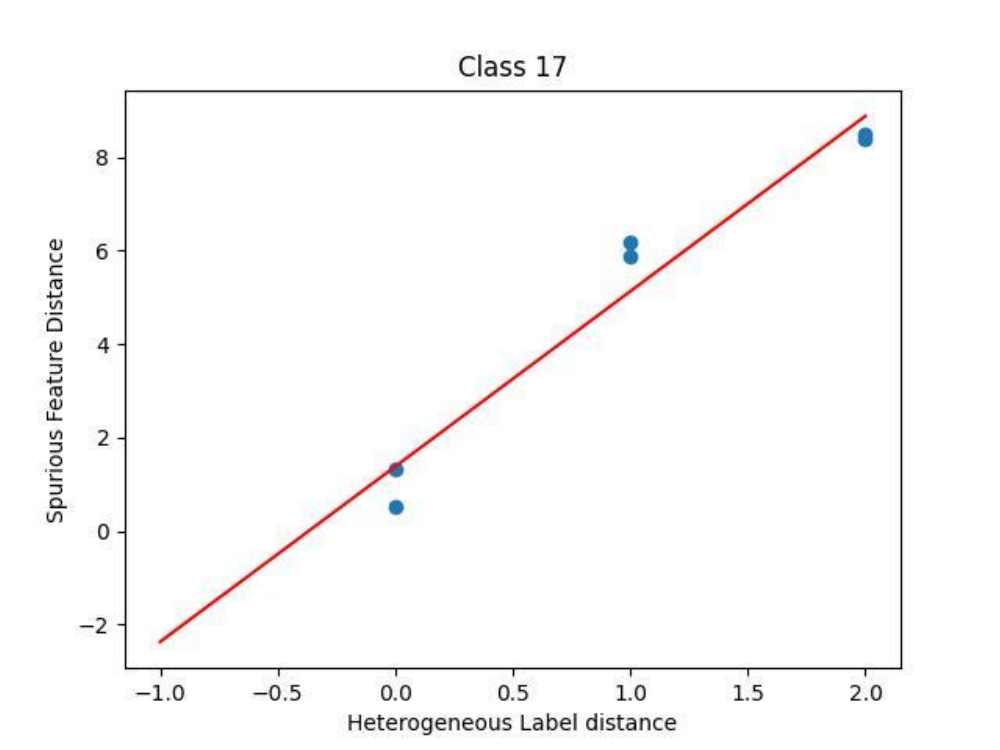}
    \end{subfigure}
    \begin{subfigure}{.19\textwidth}
        \centering
        \includegraphics[width=\linewidth]{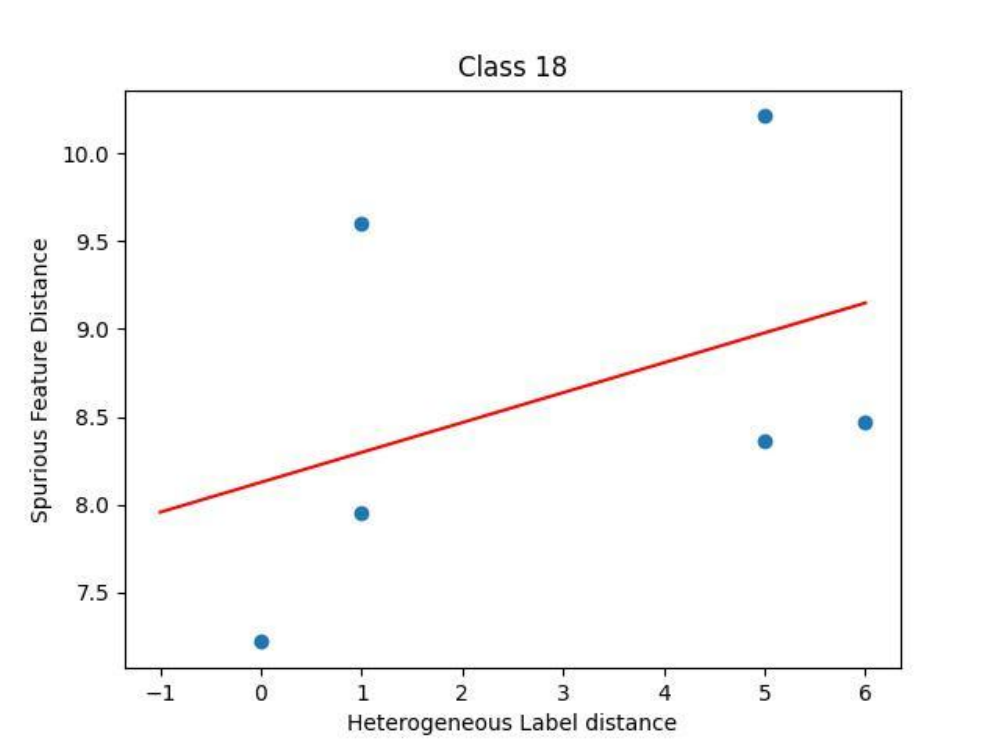}
    \end{subfigure}
    \begin{subfigure}{.19\textwidth}
        \centering
        \includegraphics[width=\linewidth]{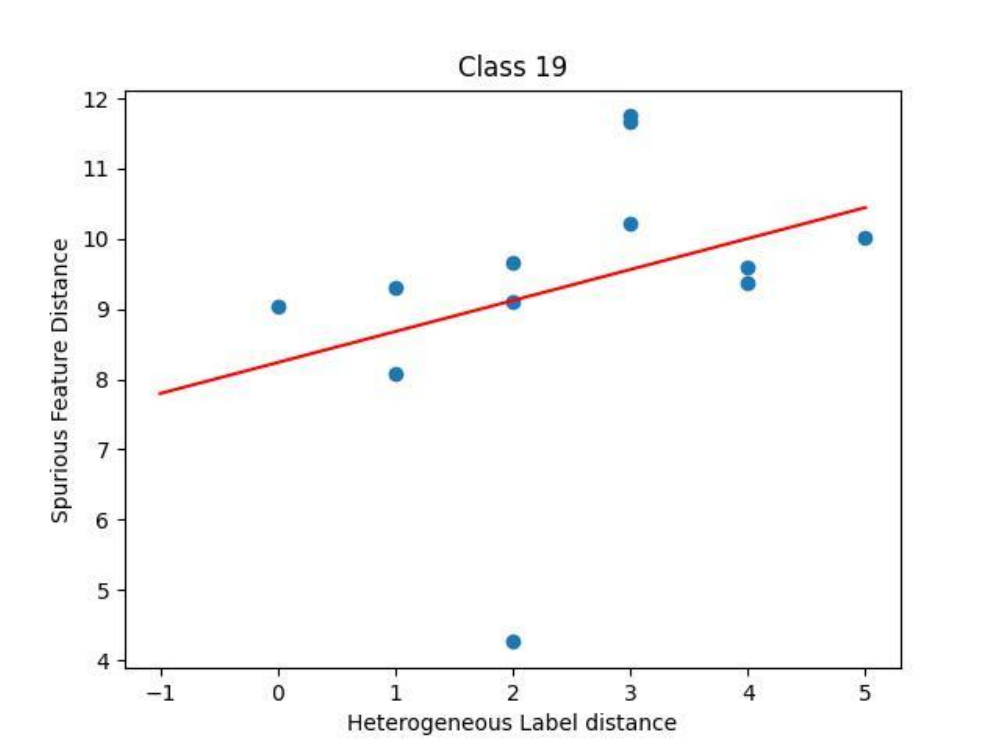}
    \end{subfigure}
    \caption{The relationship between the distance of environmental spurious features and distance of HeteNLD on Cora \textit{word}, \textbf{concept shift}. The positive correlation holds for most classes.}
    \label{reflect_sp_cora_word_con}
\end{figure}

\begin{figure}
    \centering
    \begin{subfigure}{.19\textwidth}
        \centering
        \includegraphics[width=\linewidth]{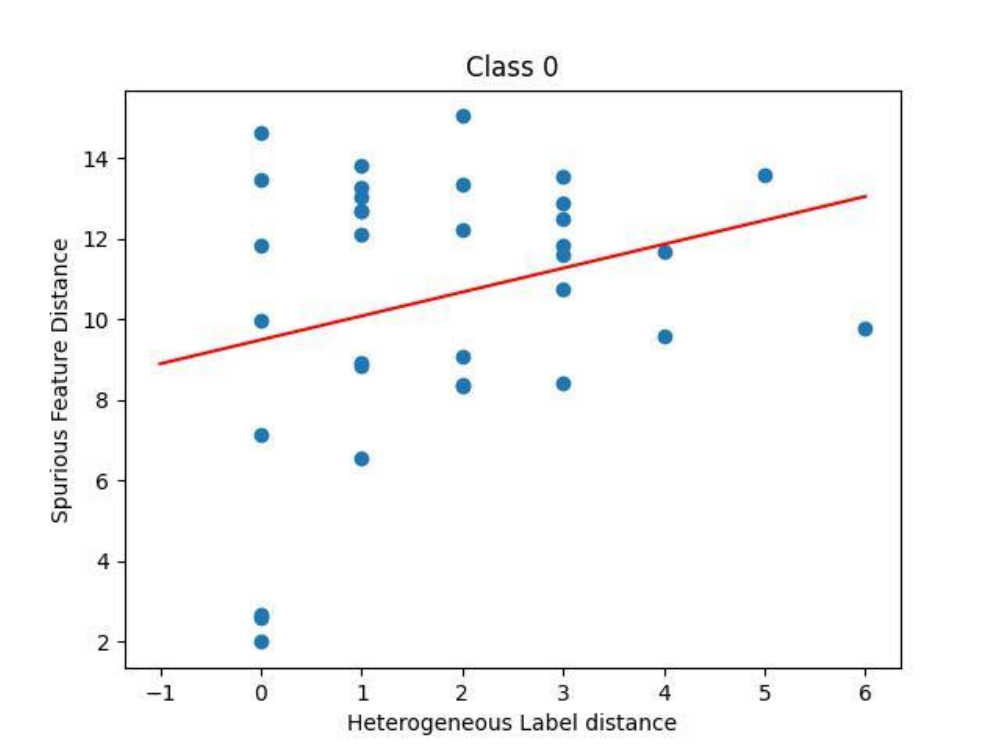}
    \end{subfigure}%
    \begin{subfigure}{.19\textwidth}
        \centering
        \includegraphics[width=\linewidth]{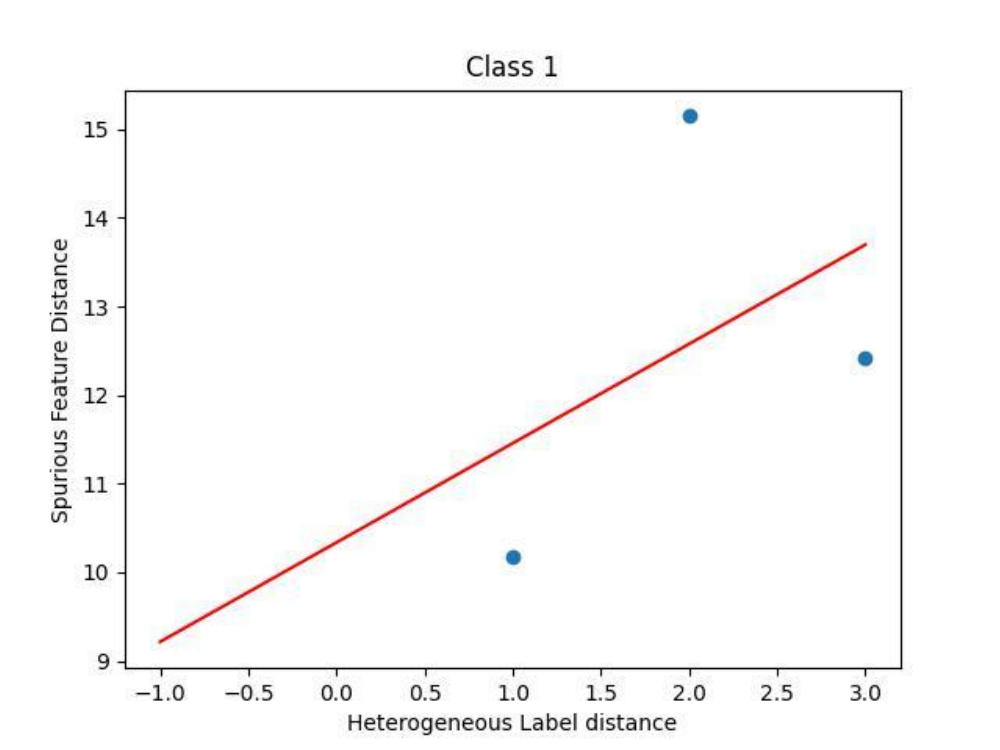}
    \end{subfigure}
    \begin{subfigure}{.19\textwidth}
        \centering
        \includegraphics[width=\linewidth]{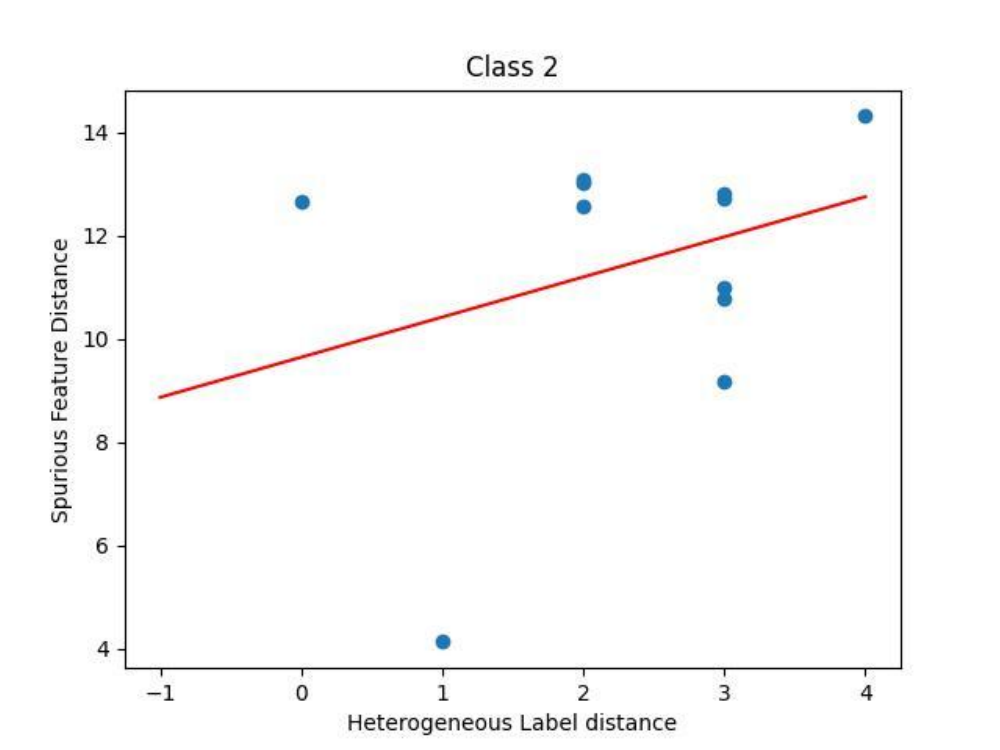}
    \end{subfigure}
    \begin{subfigure}{.19\textwidth}
        \centering
        \includegraphics[width=\linewidth]{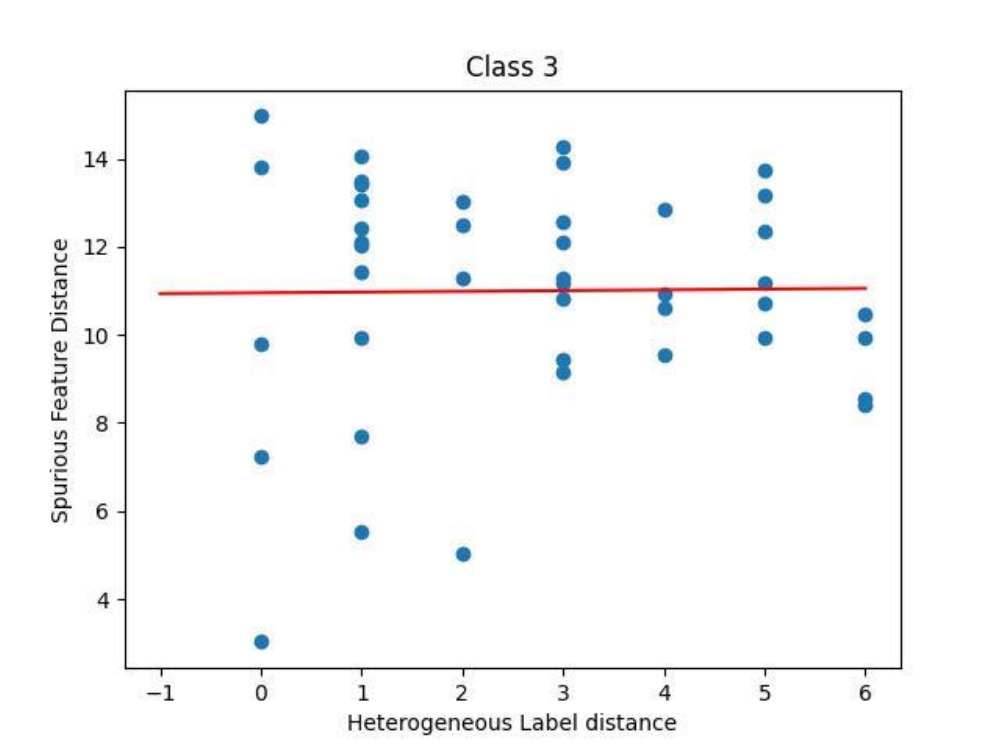}
    \end{subfigure}
    \begin{subfigure}{.19\textwidth}
        \centering
        \includegraphics[width=\linewidth]{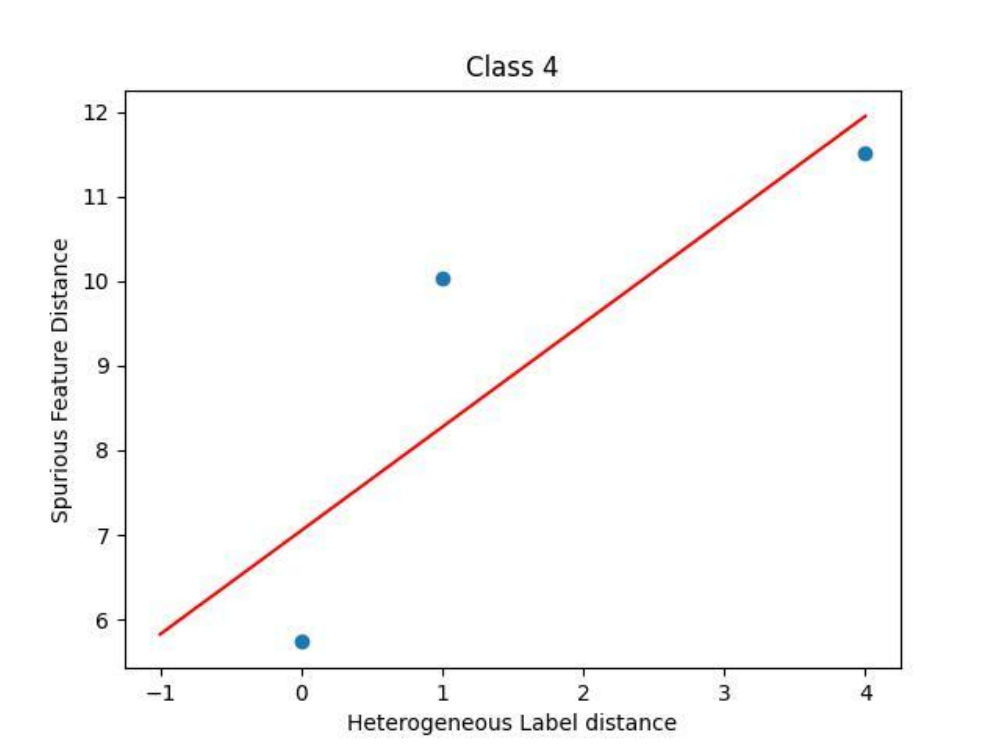}
    \end{subfigure}
    
    \begin{subfigure}{.19\textwidth}
        \centering
        \includegraphics[width=\linewidth]{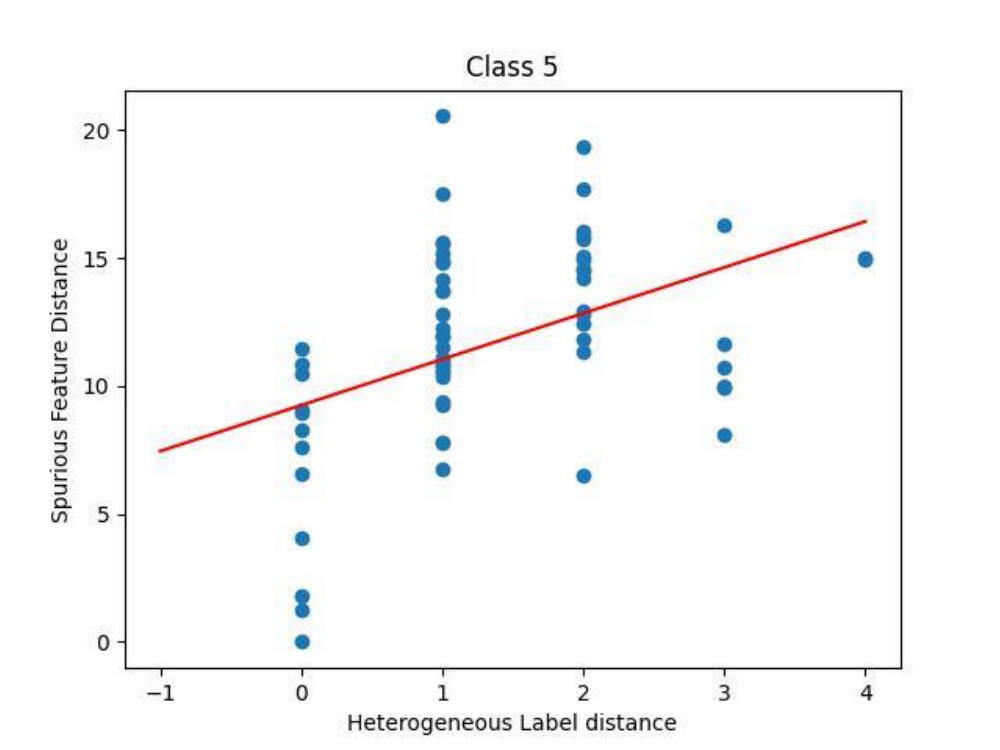}
    \end{subfigure}
    \begin{subfigure}{.19\textwidth}
        \centering
        \includegraphics[width=\linewidth]{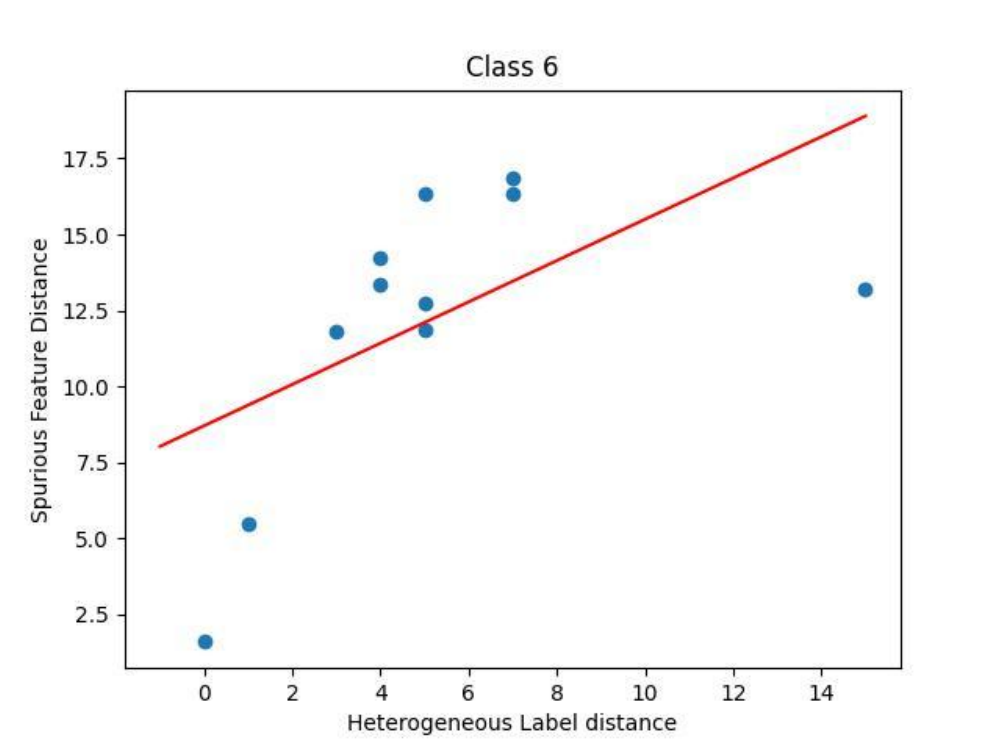}
    \end{subfigure}%
    \begin{subfigure}{.19\textwidth}
        \centering
        \includegraphics[width=\linewidth]{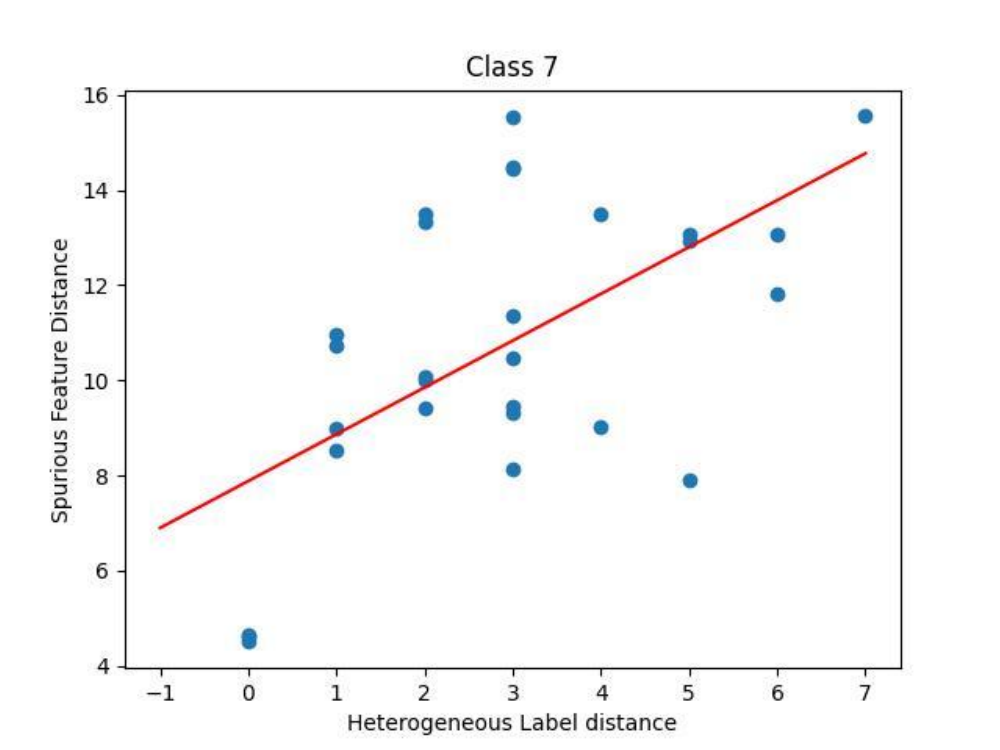}
    \end{subfigure}
    \begin{subfigure}{.19\textwidth}
        \centering
        \includegraphics[width=\linewidth]{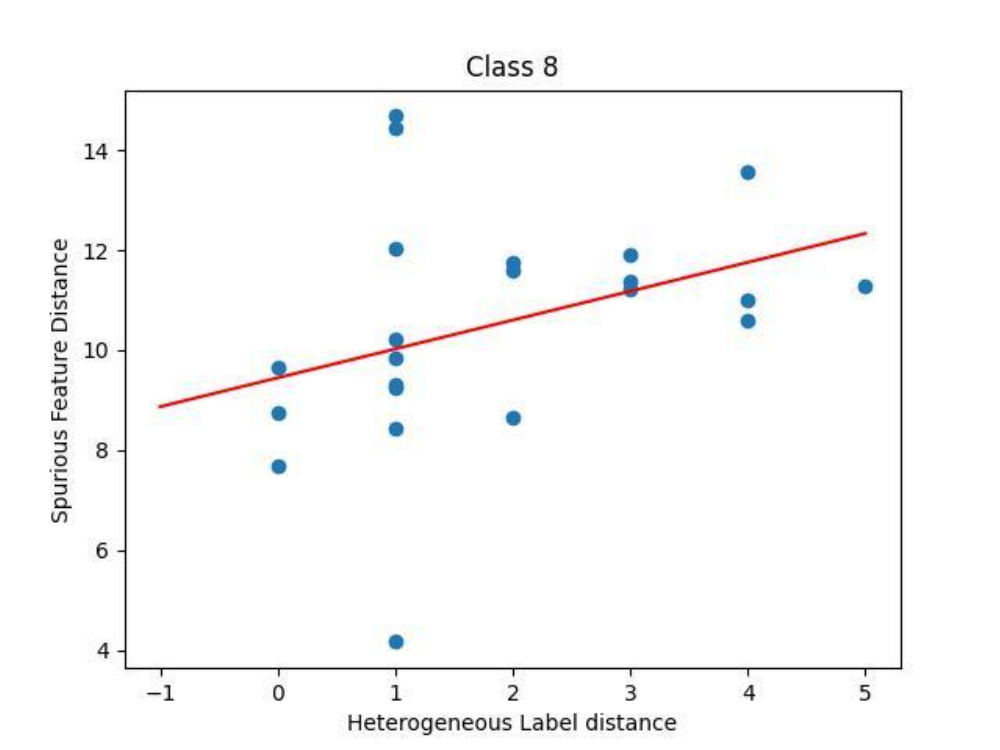}
    \end{subfigure}
    \begin{subfigure}{.19\textwidth}
        \centering
        \includegraphics[width=\linewidth]{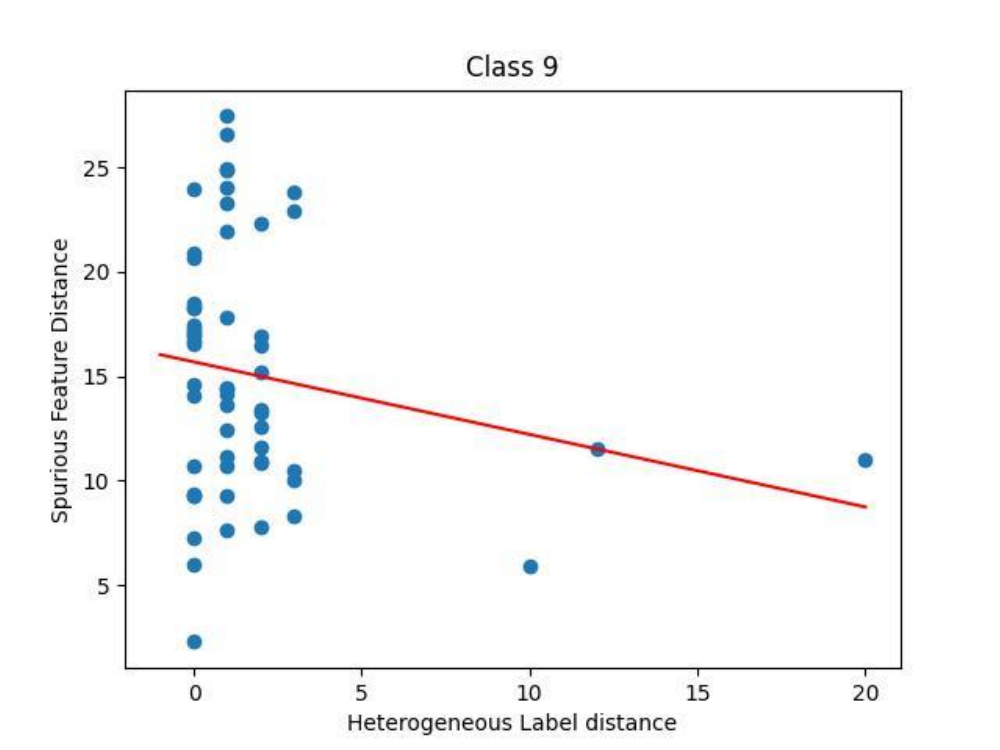}
    \end{subfigure}

    \begin{subfigure}{.19\textwidth}
        \centering
        \includegraphics[width=\linewidth]{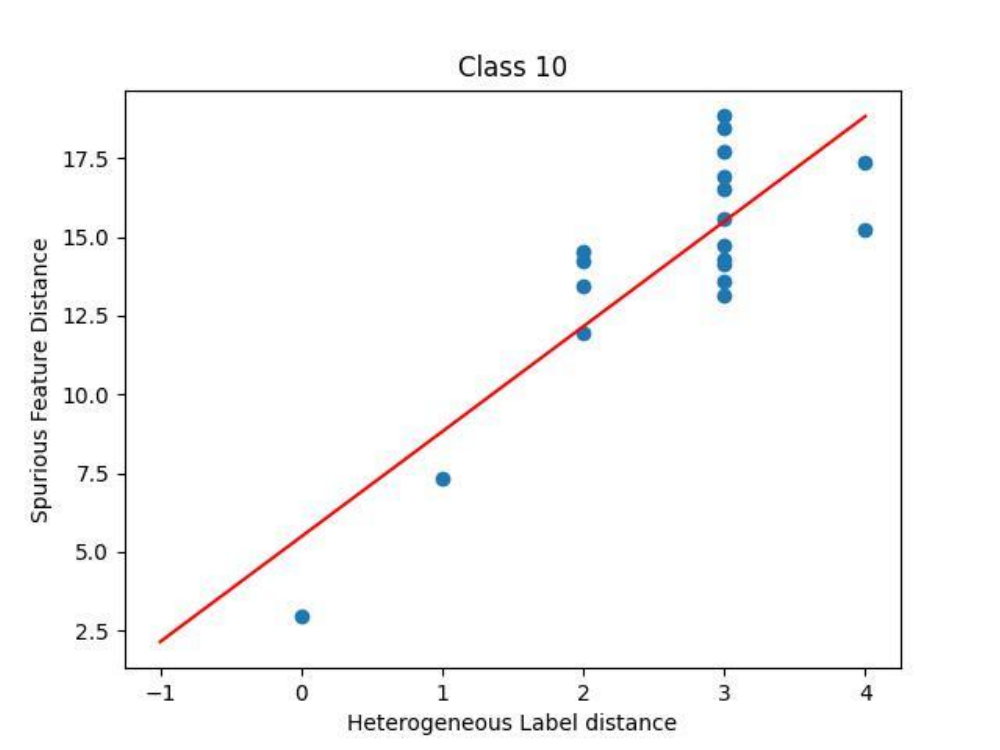}
    \end{subfigure}
    \begin{subfigure}{.19\textwidth}
        \centering
        \includegraphics[width=\linewidth]{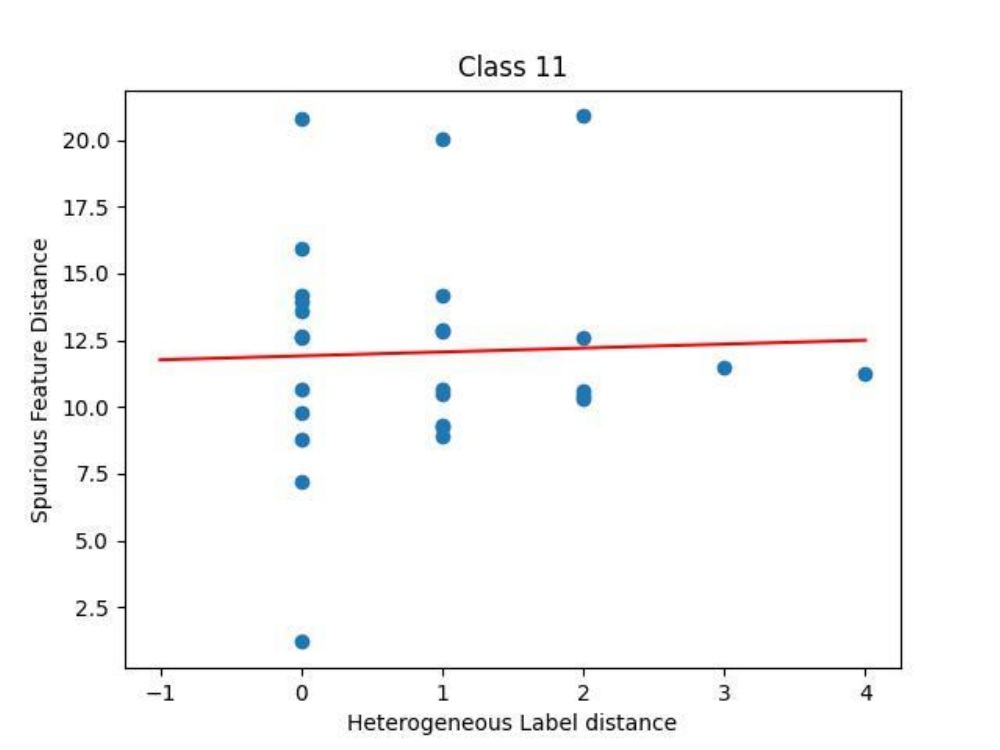}
    \end{subfigure}%
    \begin{subfigure}{.19\textwidth}
        \centering
        \includegraphics[width=\linewidth]{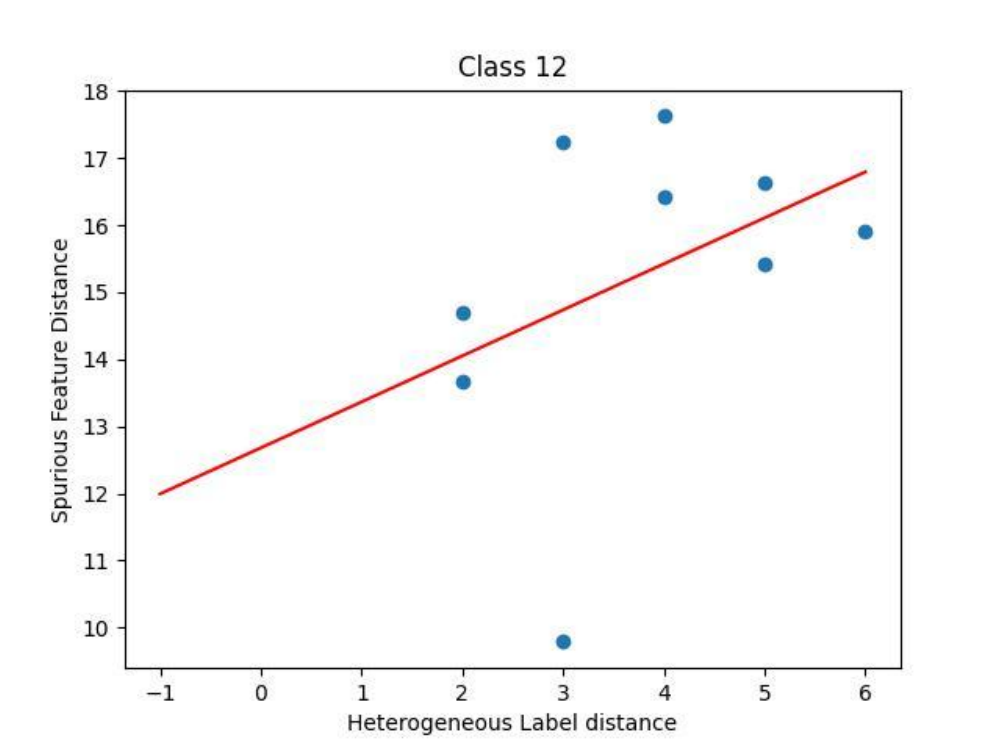}
    \end{subfigure}
    \begin{subfigure}{.19\textwidth}
        \centering
        \includegraphics[width=\linewidth]{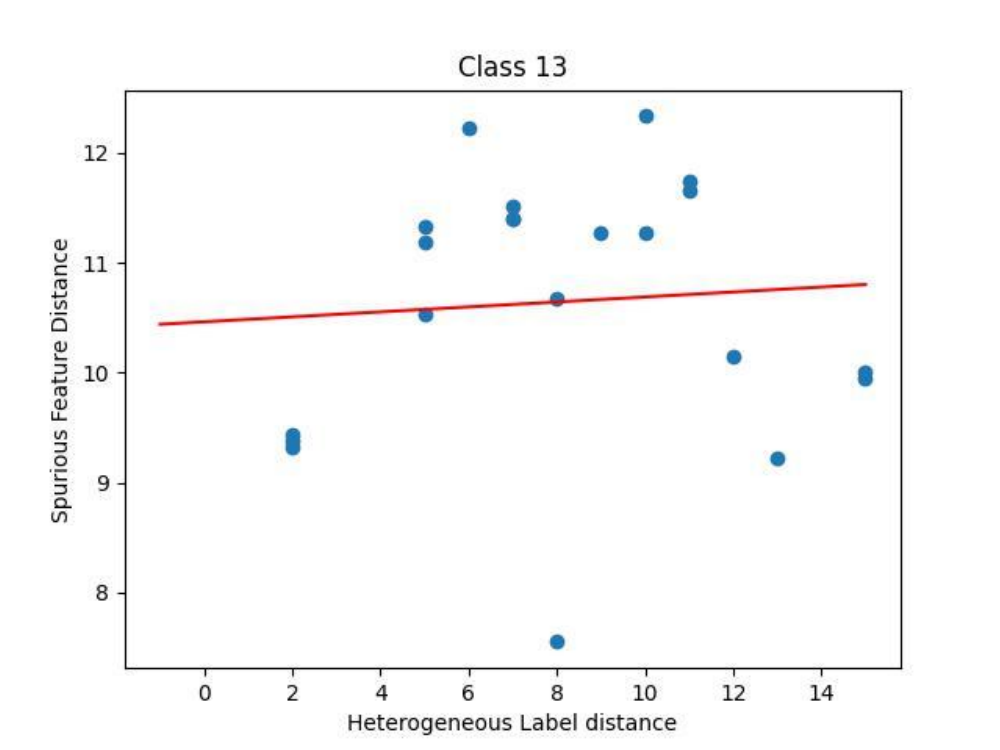}
    \end{subfigure}
    \begin{subfigure}{.19\textwidth}
        \centering
        \includegraphics[width=\linewidth]{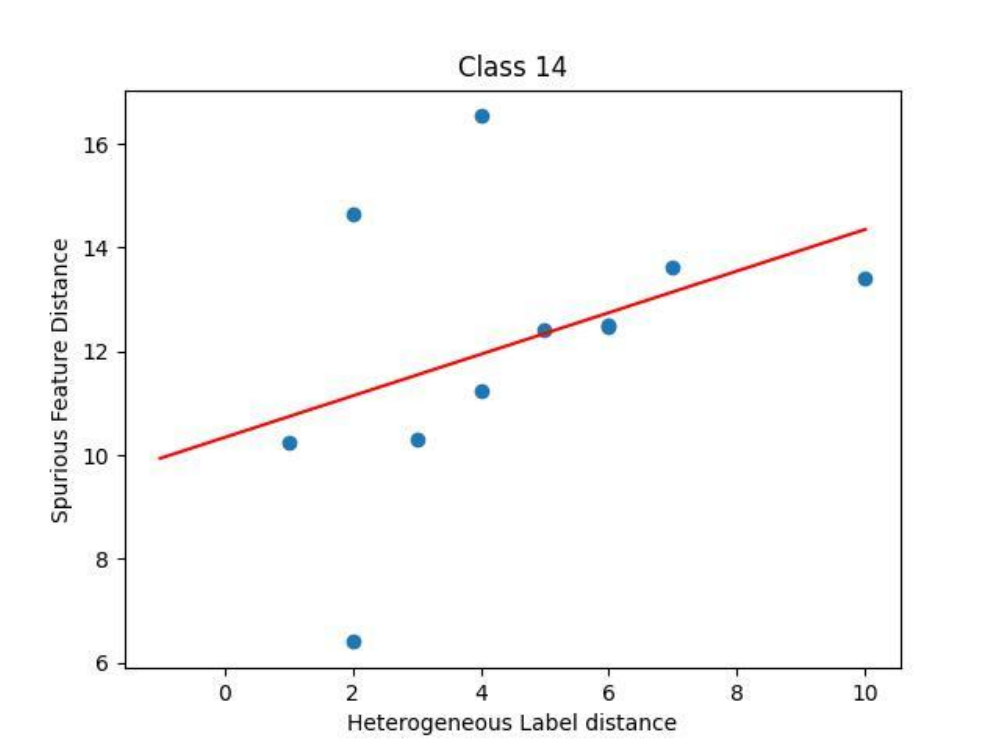}
    \end{subfigure}

    \begin{subfigure}{.19\textwidth}
        \centering
        \includegraphics[width=\linewidth]{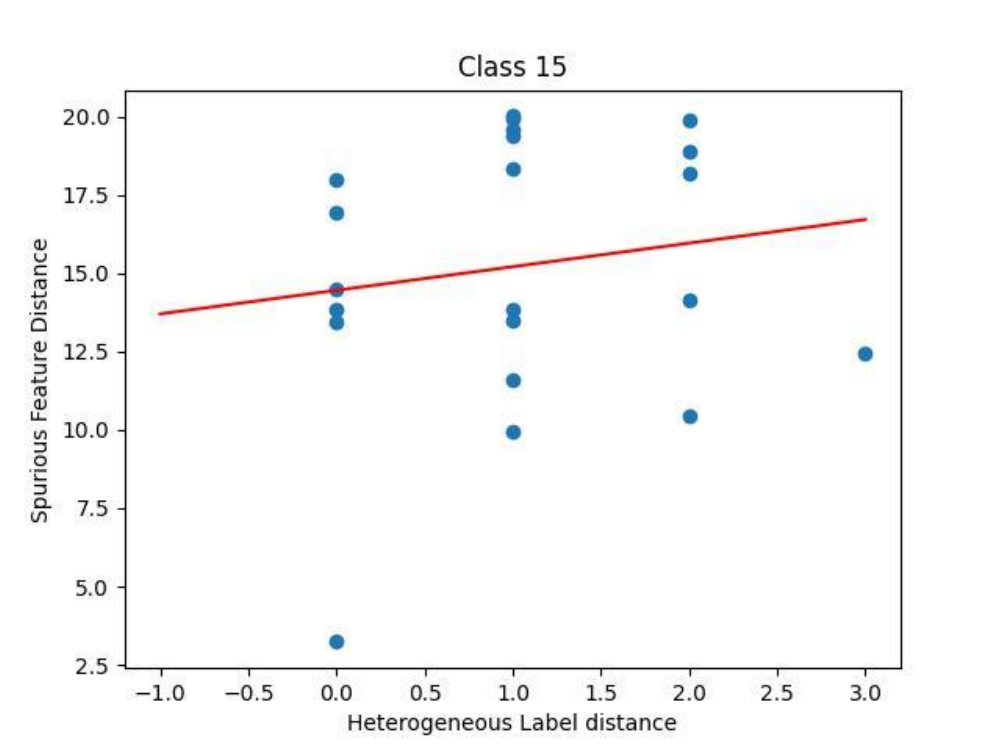}
    \end{subfigure}
    \begin{subfigure}{.19\textwidth}
        \centering
        \includegraphics[width=\linewidth]{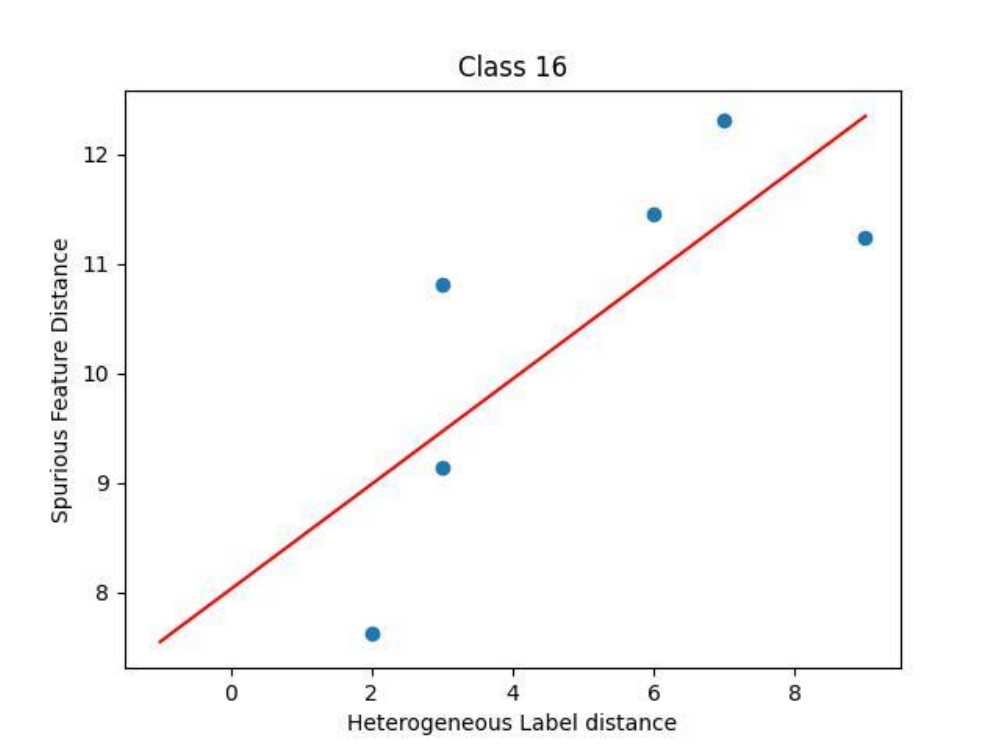}
    \end{subfigure}%
    \begin{subfigure}{.19\textwidth}
        \centering
        \includegraphics[width=\linewidth]{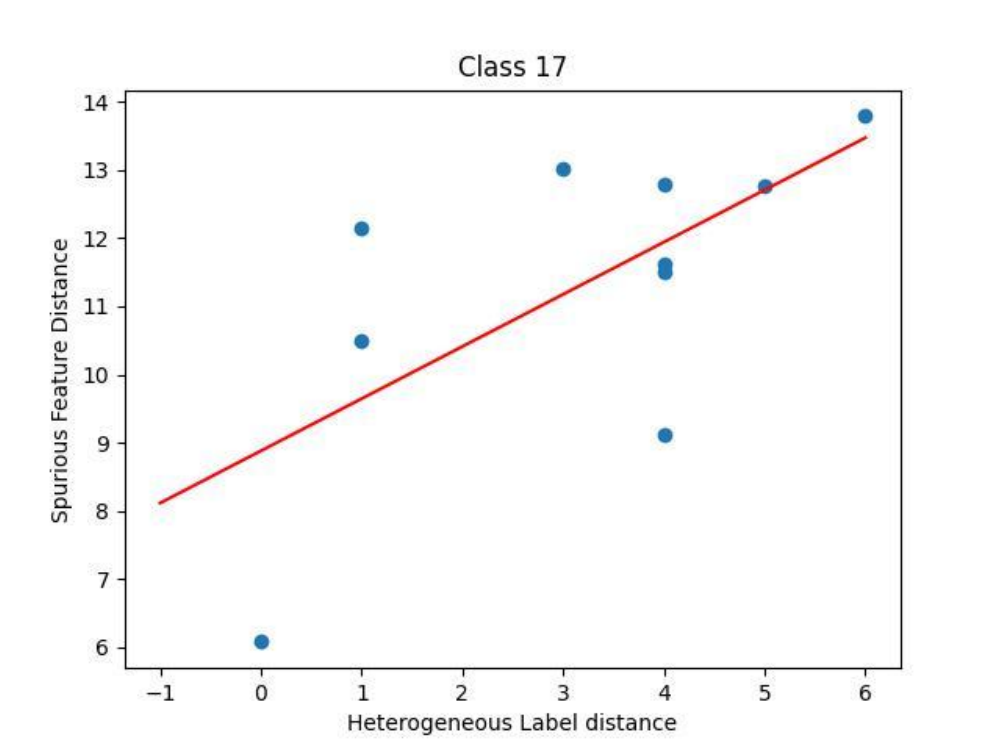}
    \end{subfigure}
    \begin{subfigure}{.19\textwidth}
        \centering
        \includegraphics[width=\linewidth]{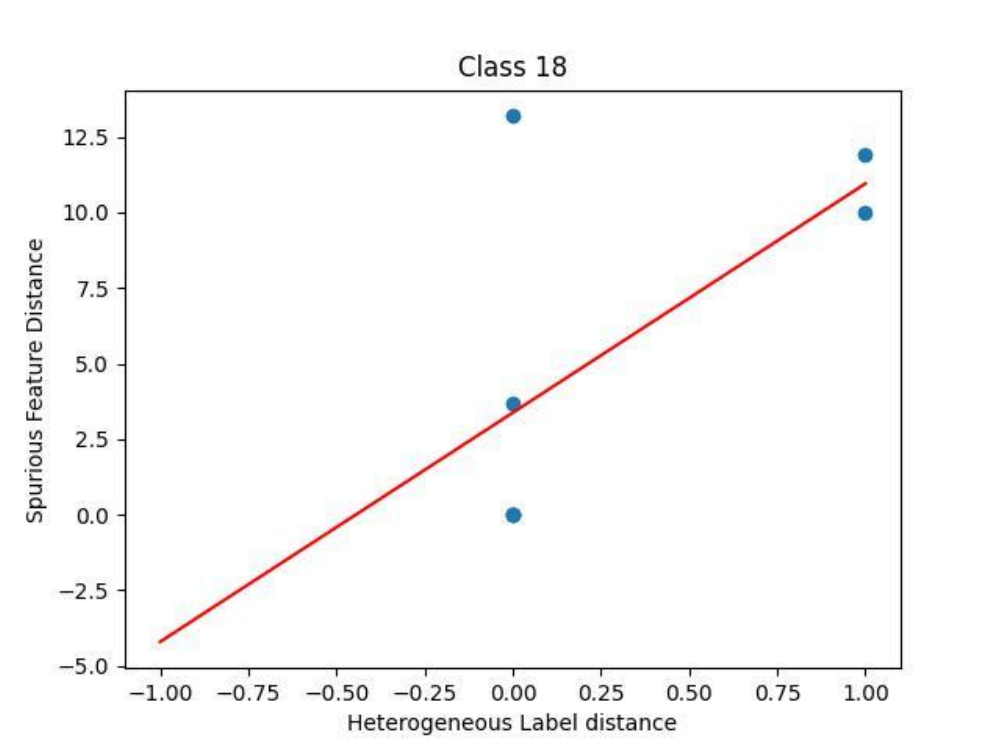}
    \end{subfigure}
    \begin{subfigure}{.19\textwidth}
        \centering
        \includegraphics[width=\linewidth]{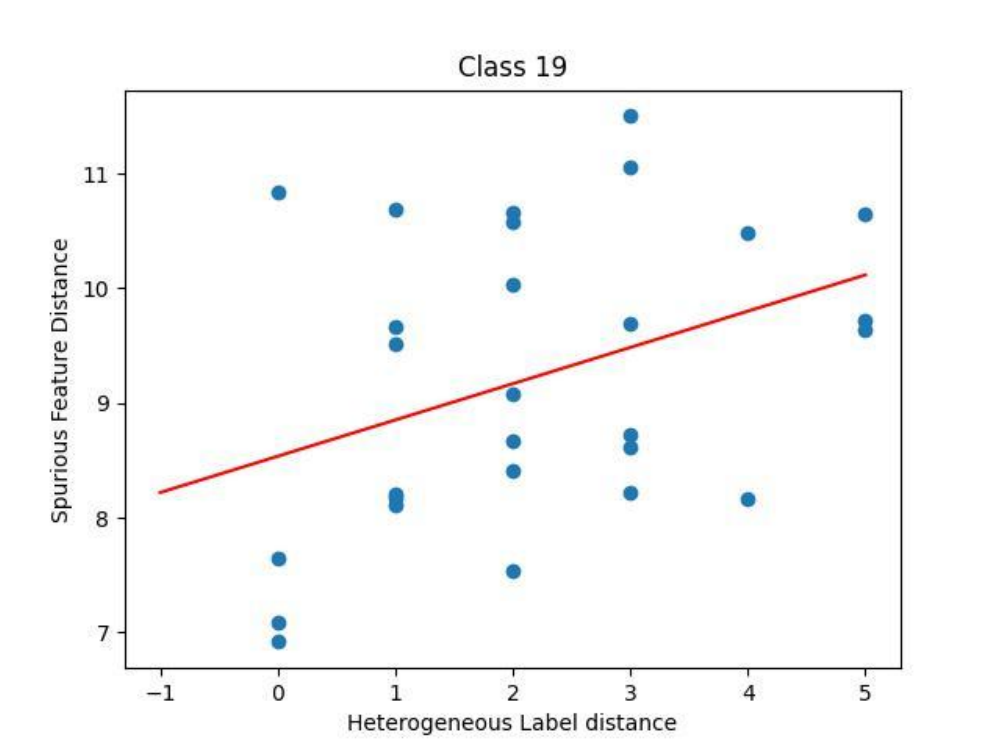}
    \end{subfigure}
    \caption{The relationship between the distance of environmental spurious features and distance of HeteNLD on Cora \textit{degree}, \textbf{concept shift}. The positive correlation holds for most classes.}
    \label{reflect_sp_cora_deg_con}
\end{figure}

\subsubsection{Homophilic Neighboring Labels Reflect Invariant Feature Distribution}
\label{reflect_inv}

Now will validate that the ratio of the same-class neighbors reflects the aggregated invariant representation. We use VREx to approximately extract invariant features and compute their distance w.r.t. the discrepancy of the ratio of the same-class neighbors. We evaluate on 4 splits of Cora: \textit{word}+\textbf{covariate}, \textit{word}+\textbf{concept}, \textit{degree}+\textbf{covariate} and \textit{degree}+\textbf{concept}. For each data split, we randomly choose 5 classes with sufficiently large differences in homophilic neighbor ratios for visualization.
\textbf{The results in Figure \ref{reflect_inv_cora} also show a positive correlation trend between the distance of the invariant representations and the difference in the ratio of same-class neighbors, indicating the latter can reflect the former.}

\begin{figure}
    \centering
    \begin{subfigure}{.19\textwidth}
        \centering
        \includegraphics[width=\linewidth]{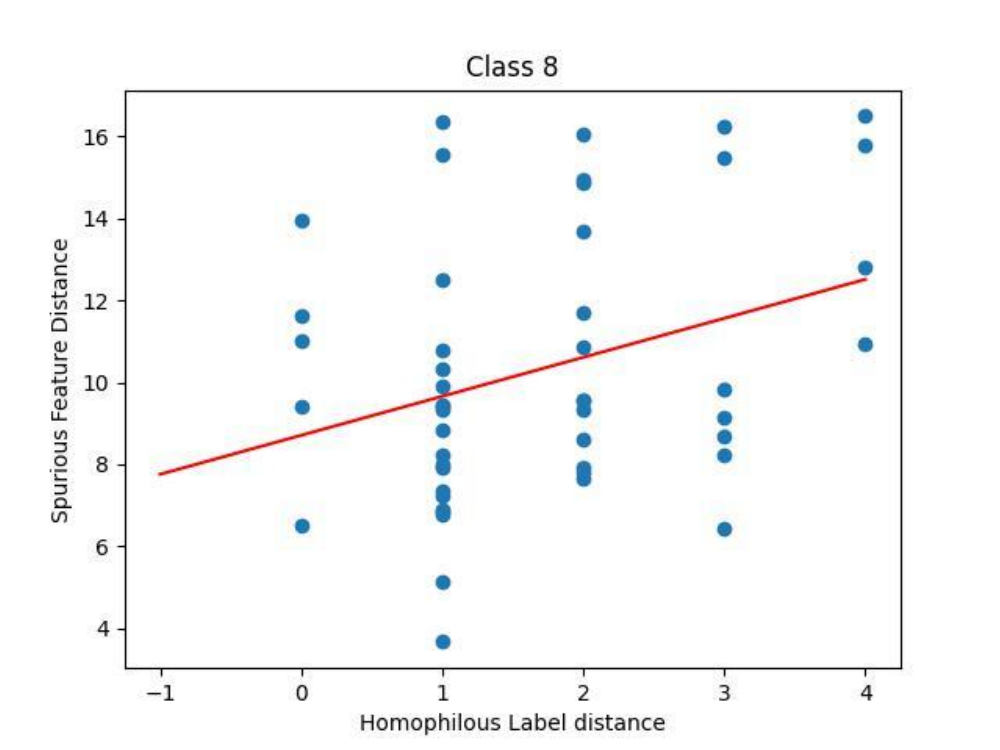}
    \end{subfigure}%
    \begin{subfigure}{.19\textwidth}
        \centering
        \includegraphics[width=\linewidth]{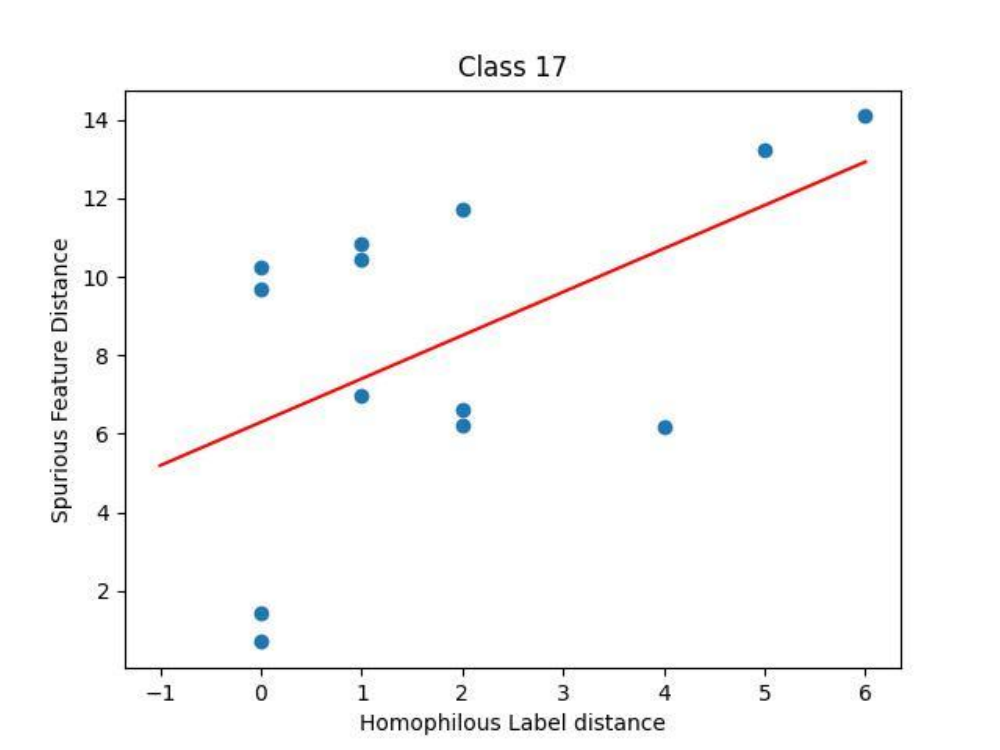}
    \end{subfigure}
    \begin{subfigure}{.19\textwidth}
        \centering
        \includegraphics[width=\linewidth]{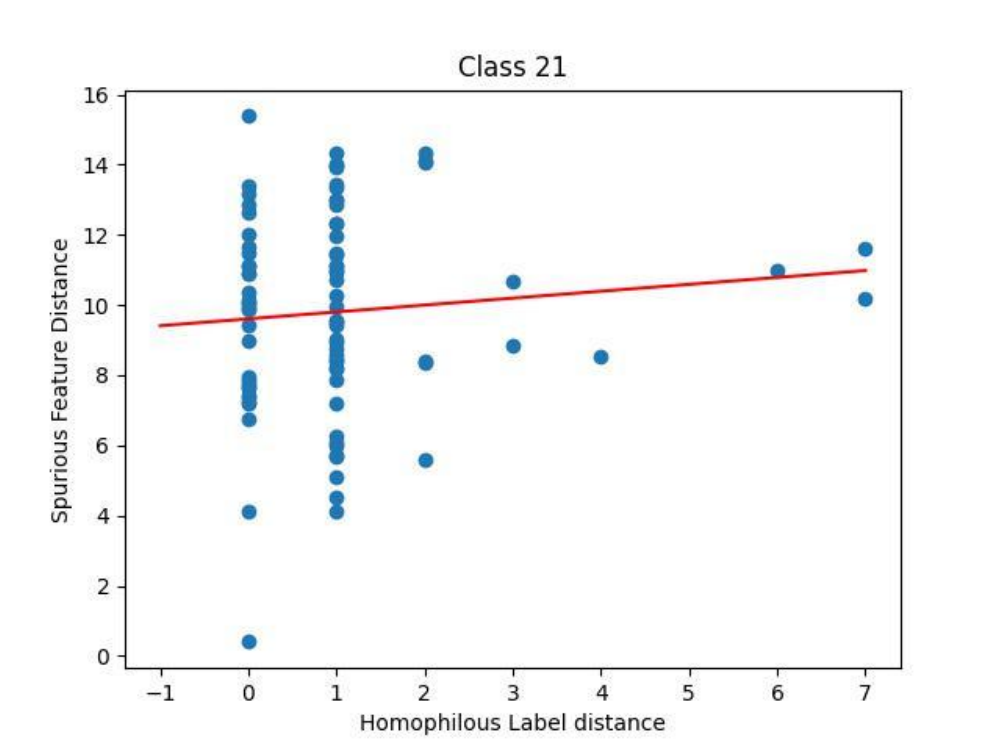}
    \end{subfigure}
    \begin{subfigure}{.19\textwidth}
        \centering
        \includegraphics[width=\linewidth]{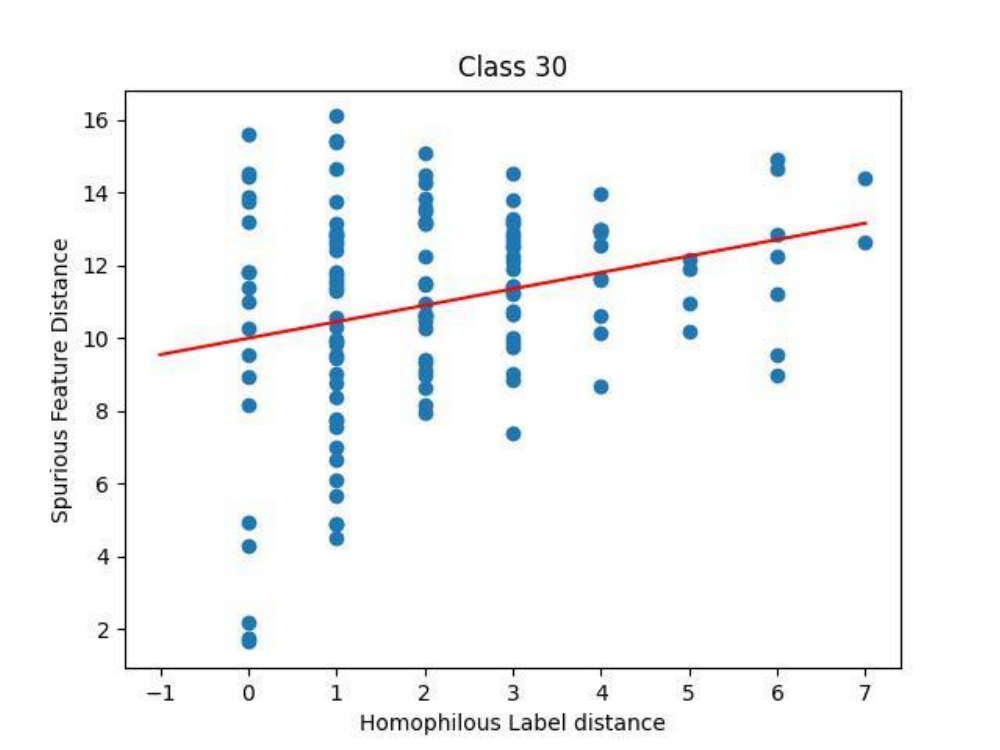}
    \end{subfigure}
    \begin{subfigure}{.19\textwidth}
        \centering
        \includegraphics[width=\linewidth]{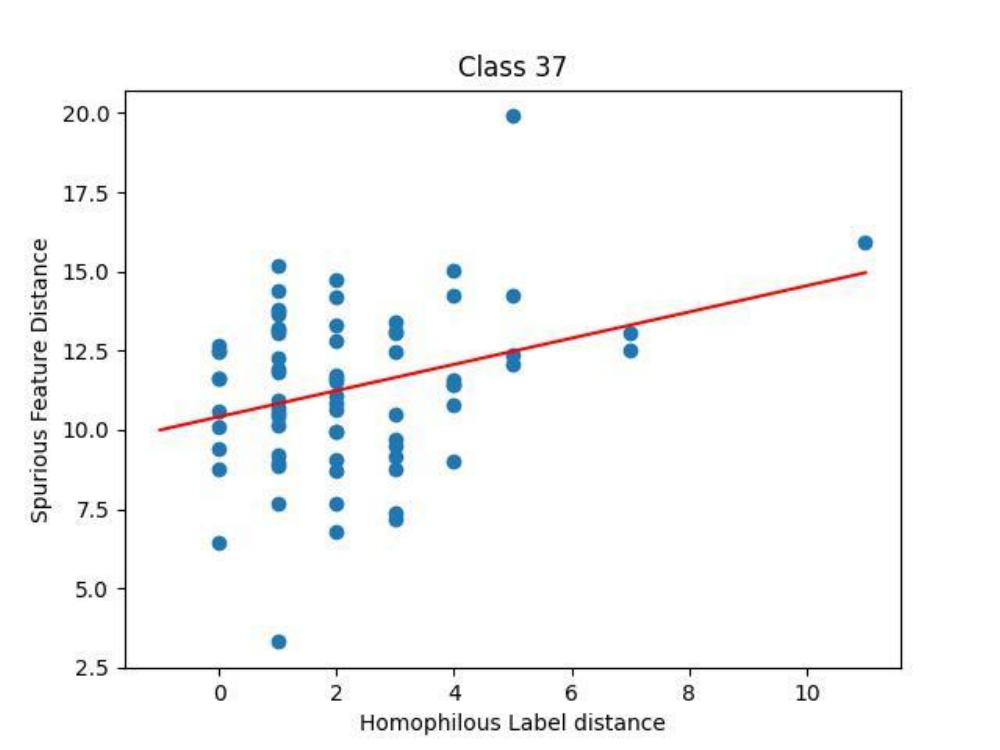}
    \end{subfigure}
    
    \begin{subfigure}{.19\textwidth}
        \centering
        \includegraphics[width=\linewidth]{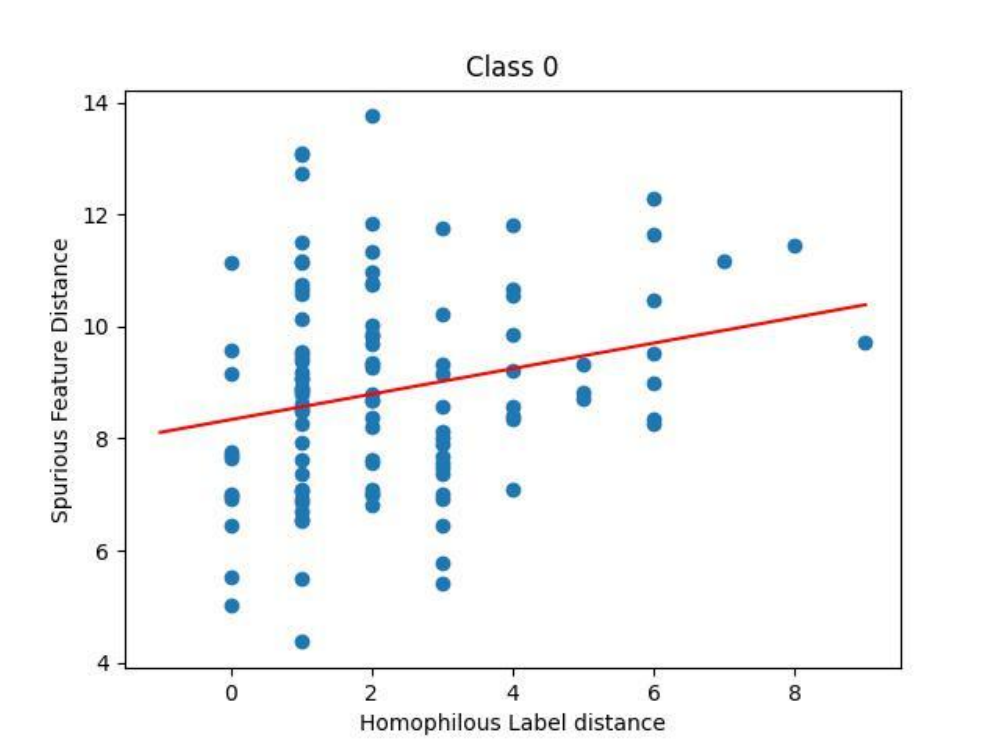}
    \end{subfigure}
    \begin{subfigure}{.19\textwidth}
        \centering
        \includegraphics[width=\linewidth]{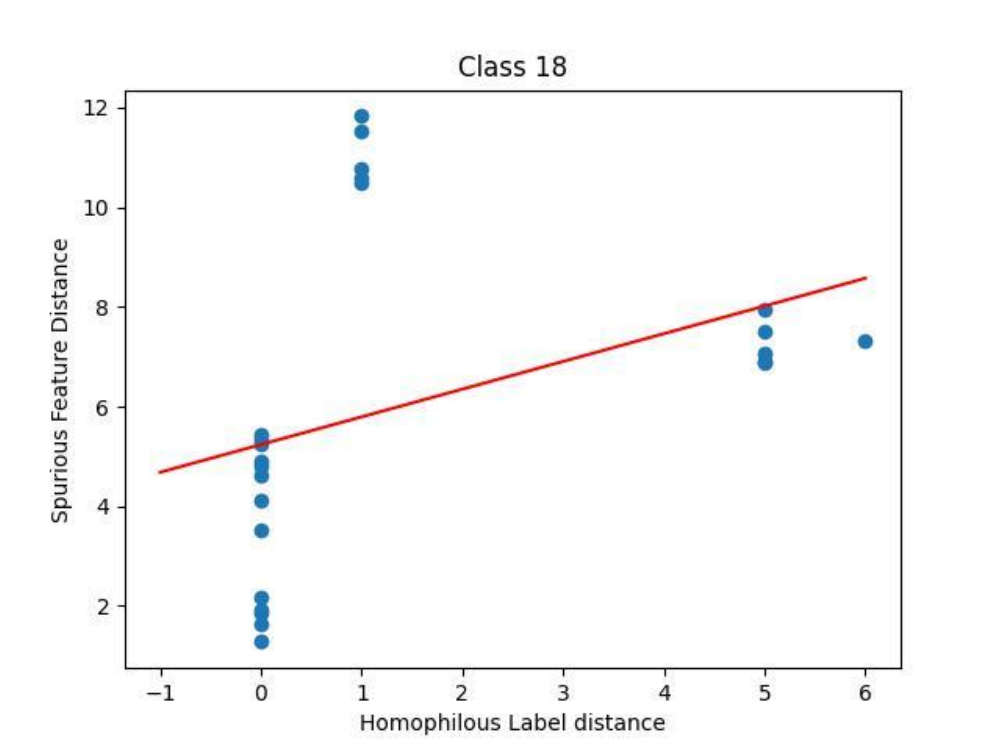}
    \end{subfigure}%
    \begin{subfigure}{.19\textwidth}
        \centering
        \includegraphics[width=\linewidth]{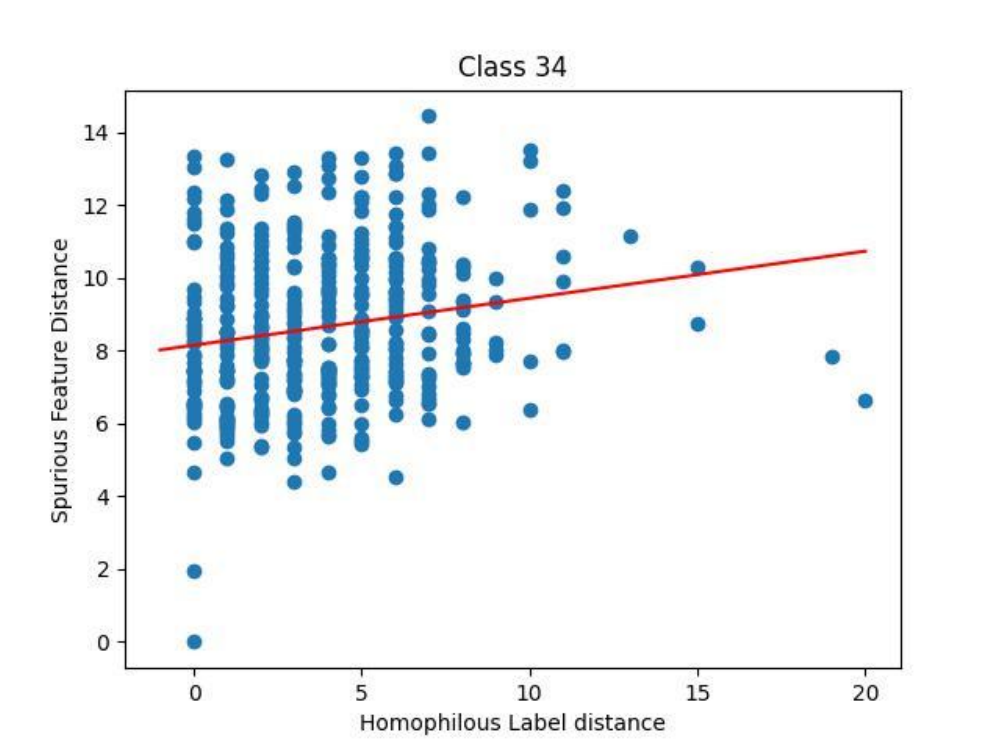}
    \end{subfigure}
    \begin{subfigure}{.19\textwidth}
        \centering
        \includegraphics[width=\linewidth]{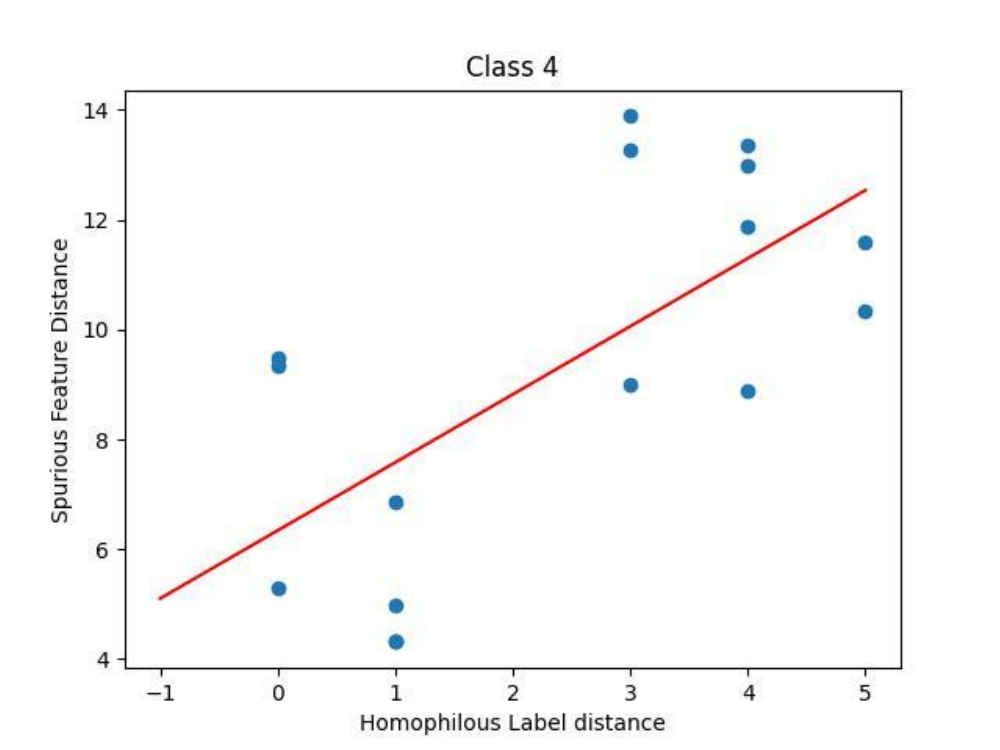}
    \end{subfigure}
    \begin{subfigure}{.19\textwidth}
        \centering
        \includegraphics[width=\linewidth]{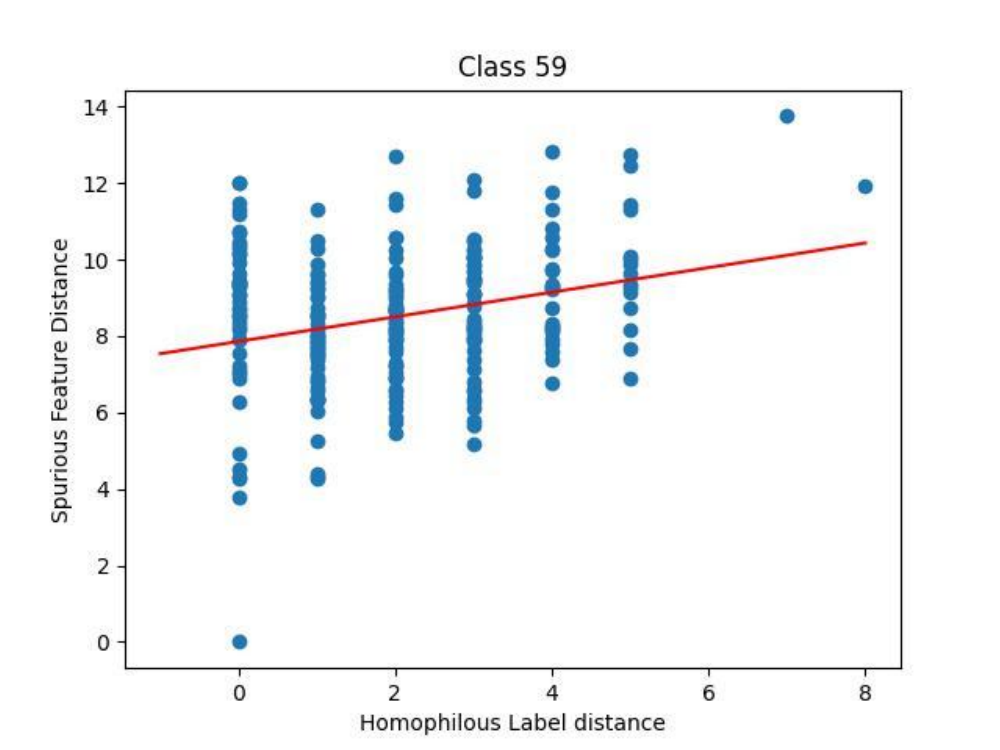}
    \end{subfigure}

    \begin{subfigure}{.19\textwidth}
        \centering
        \includegraphics[width=\linewidth]{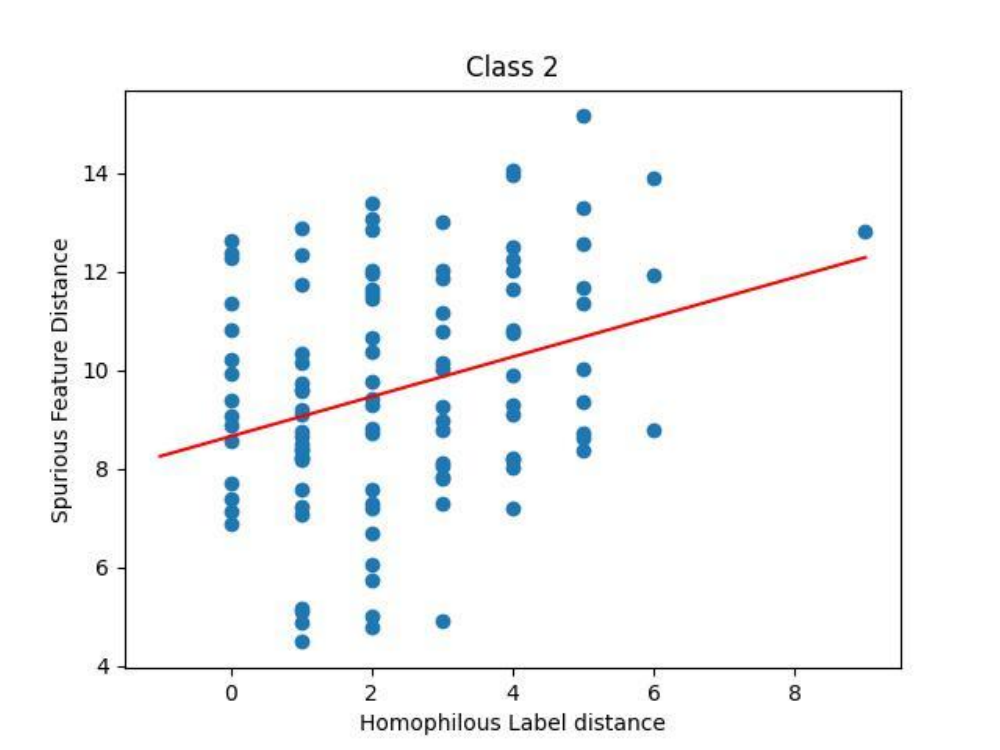}
    \end{subfigure}
    \begin{subfigure}{.19\textwidth}
        \centering
        \includegraphics[width=\linewidth]{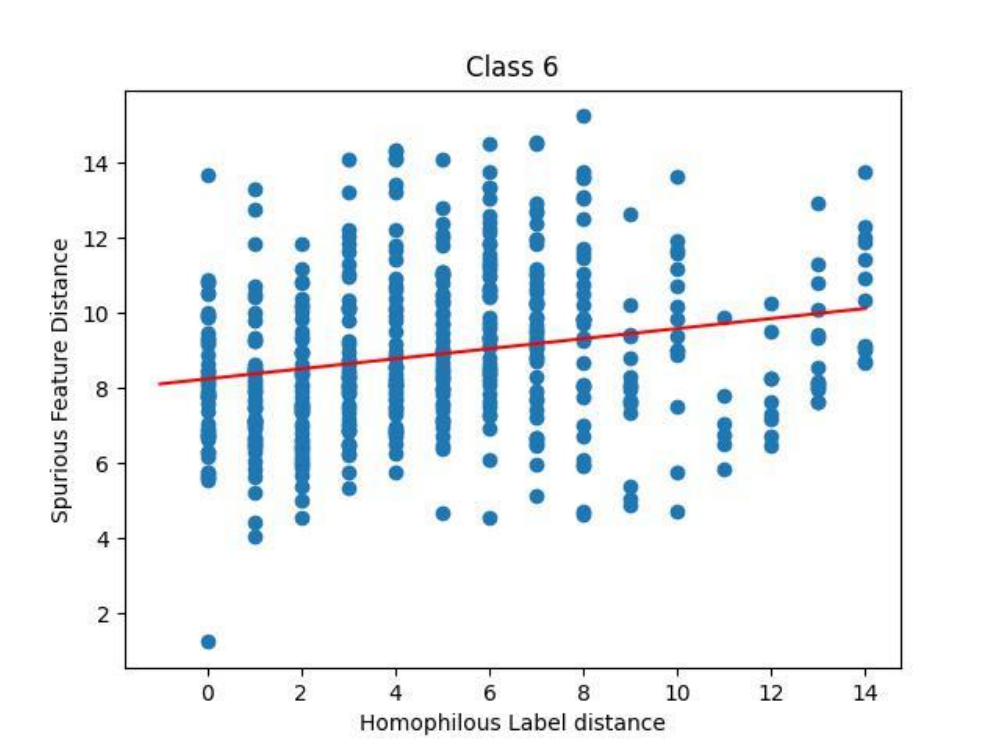}
    \end{subfigure}%
    \begin{subfigure}{.19\textwidth}
        \centering
        \includegraphics[width=\linewidth]{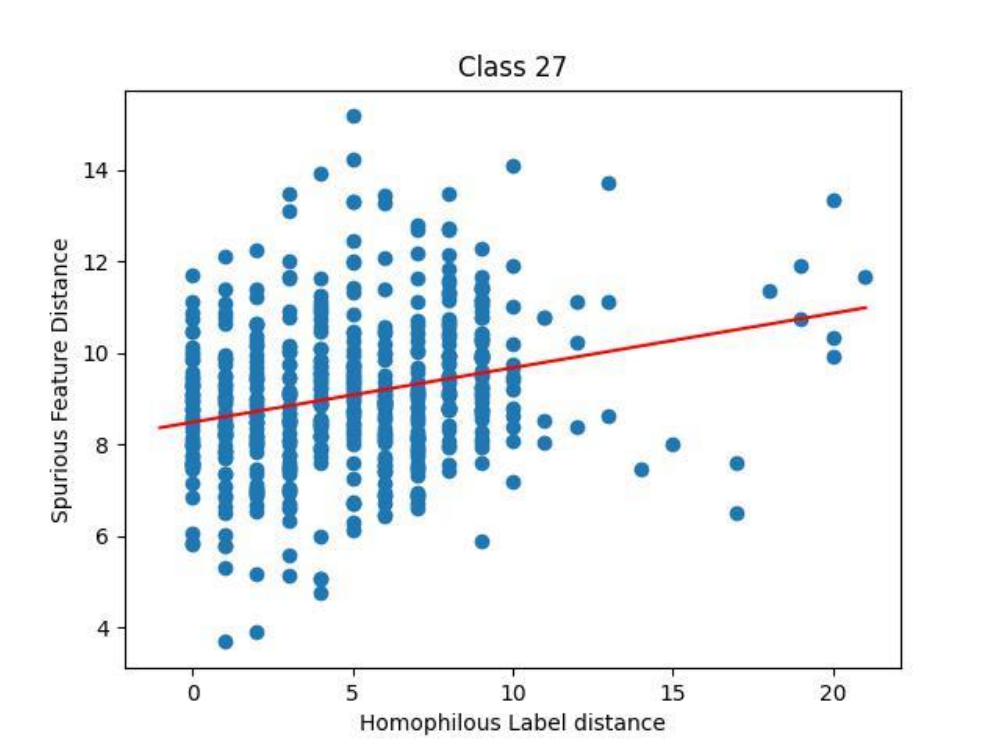}
    \end{subfigure}
    \begin{subfigure}{.19\textwidth}
        \centering
        \includegraphics[width=\linewidth]{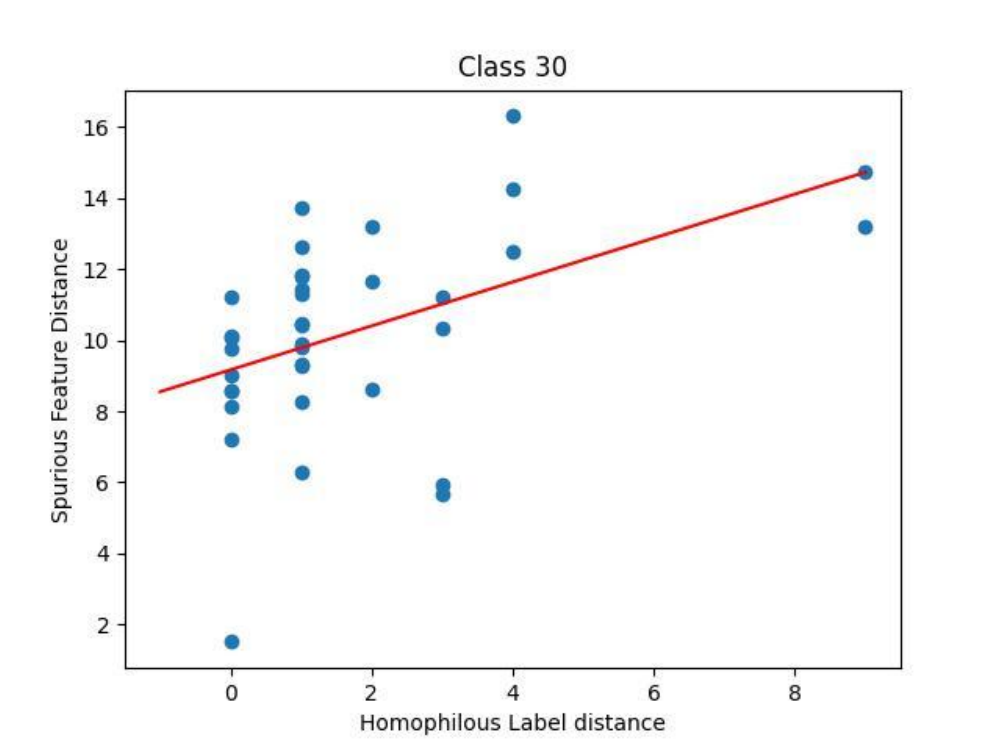}
    \end{subfigure}
    \begin{subfigure}{.19\textwidth}
        \centering
        \includegraphics[width=\linewidth]{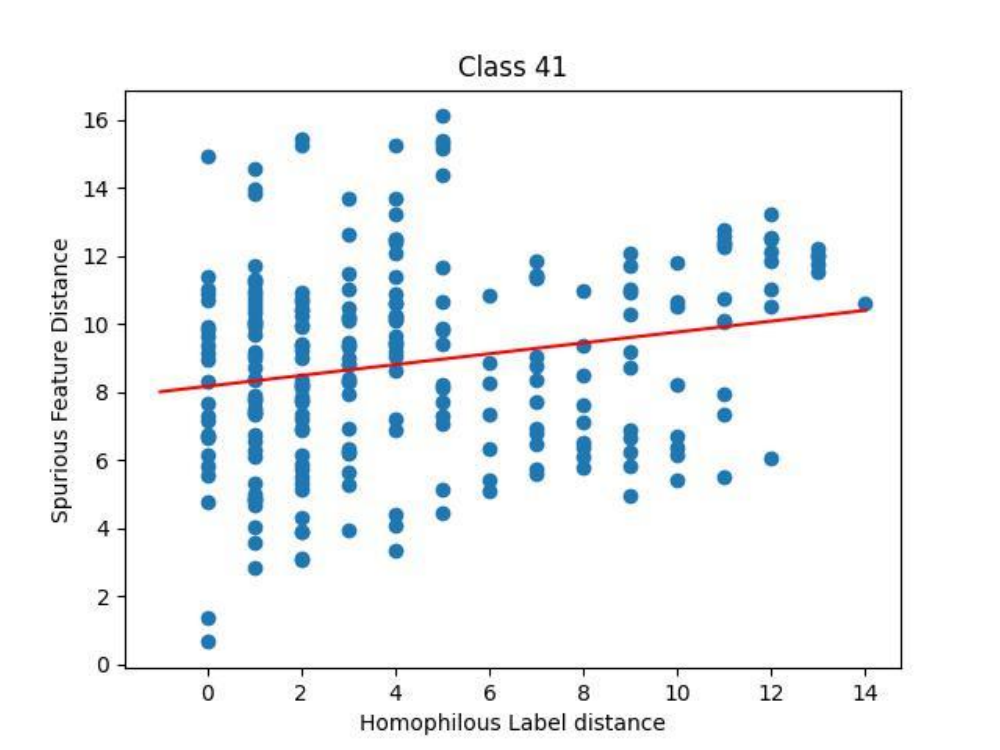}
    \end{subfigure}

    \begin{subfigure}{.19\textwidth}
        \centering
        \includegraphics[width=\linewidth]{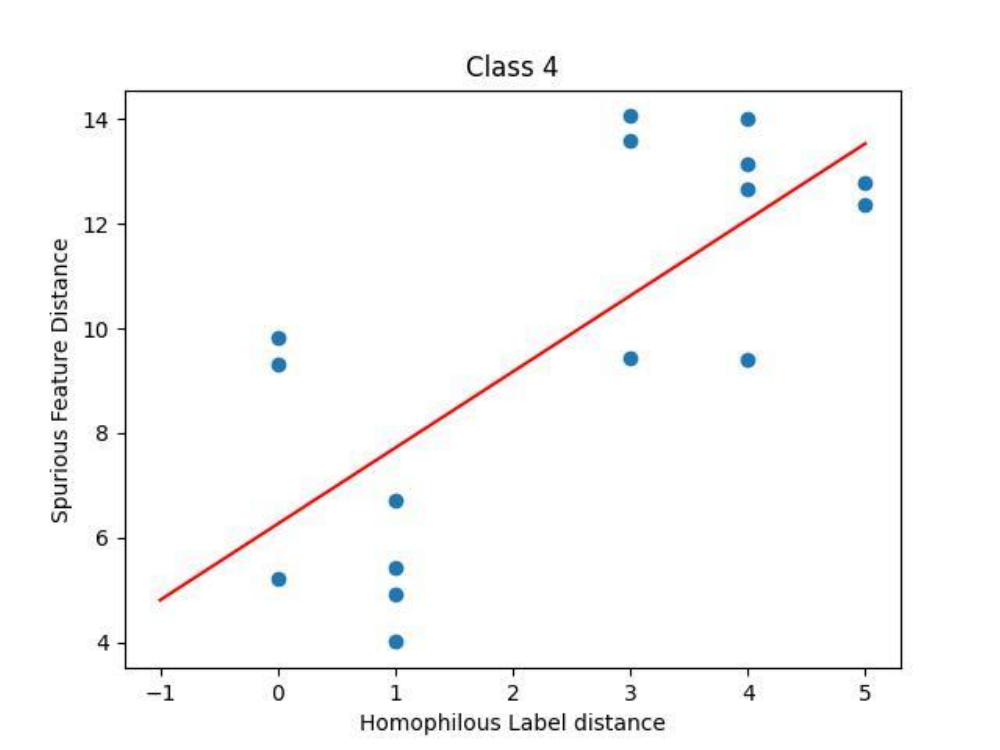}
    \end{subfigure}
    \begin{subfigure}{.19\textwidth}
        \centering
        \includegraphics[width=\linewidth]{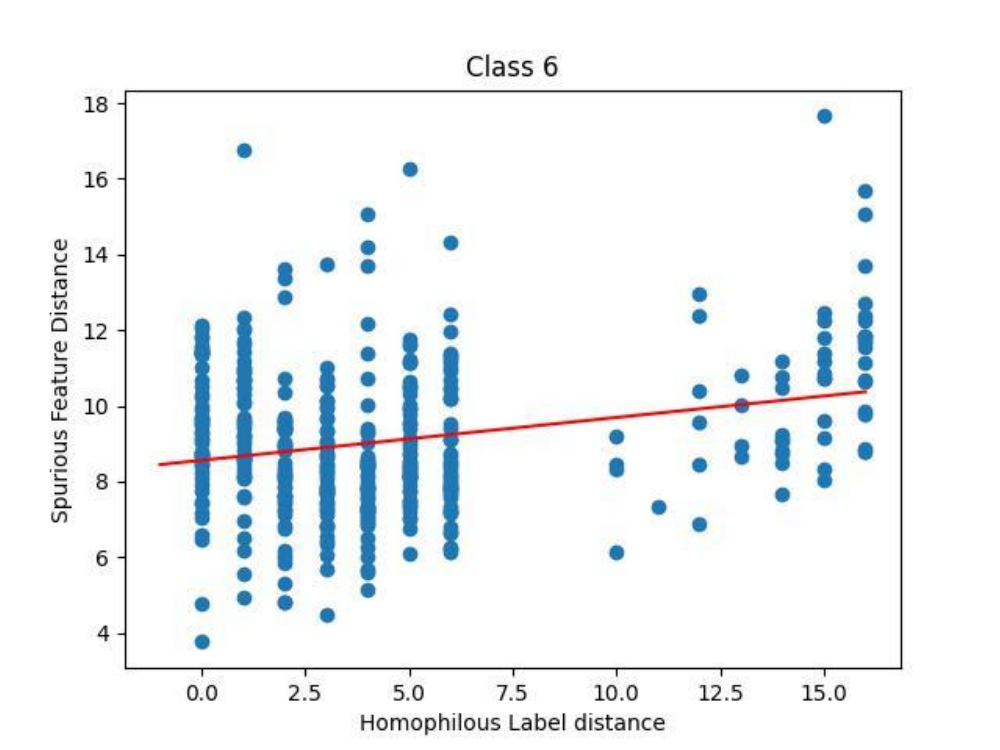}
    \end{subfigure}%
    \begin{subfigure}{.19\textwidth}
        \centering
        \includegraphics[width=\linewidth]{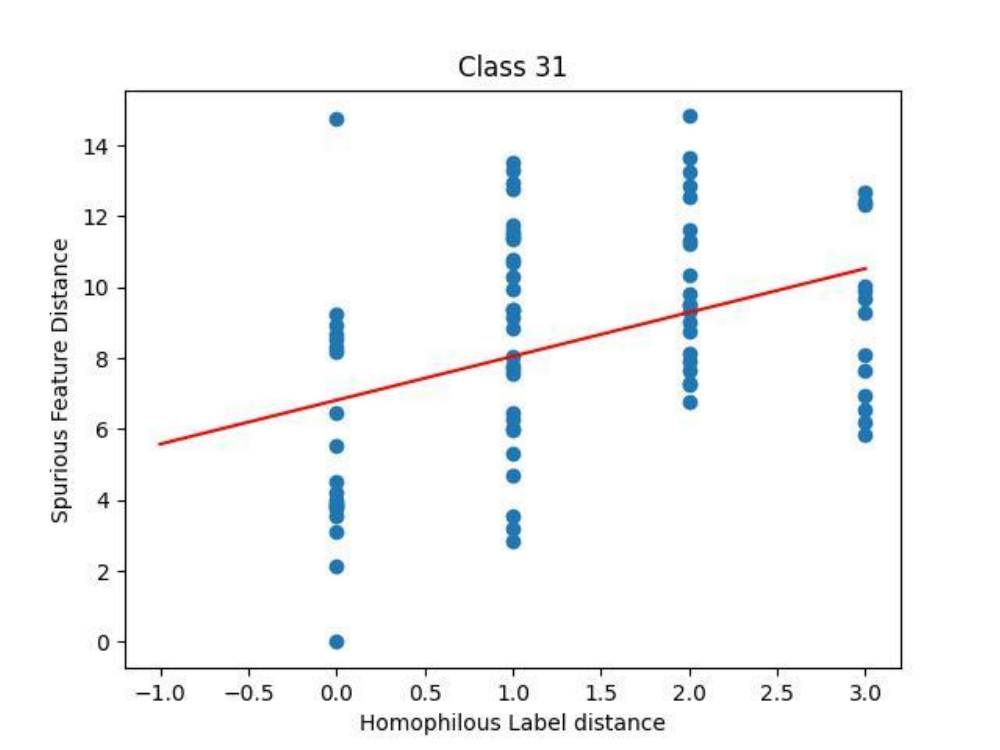}
    \end{subfigure}
    \begin{subfigure}{.19\textwidth}
        \centering
        \includegraphics[width=\linewidth]{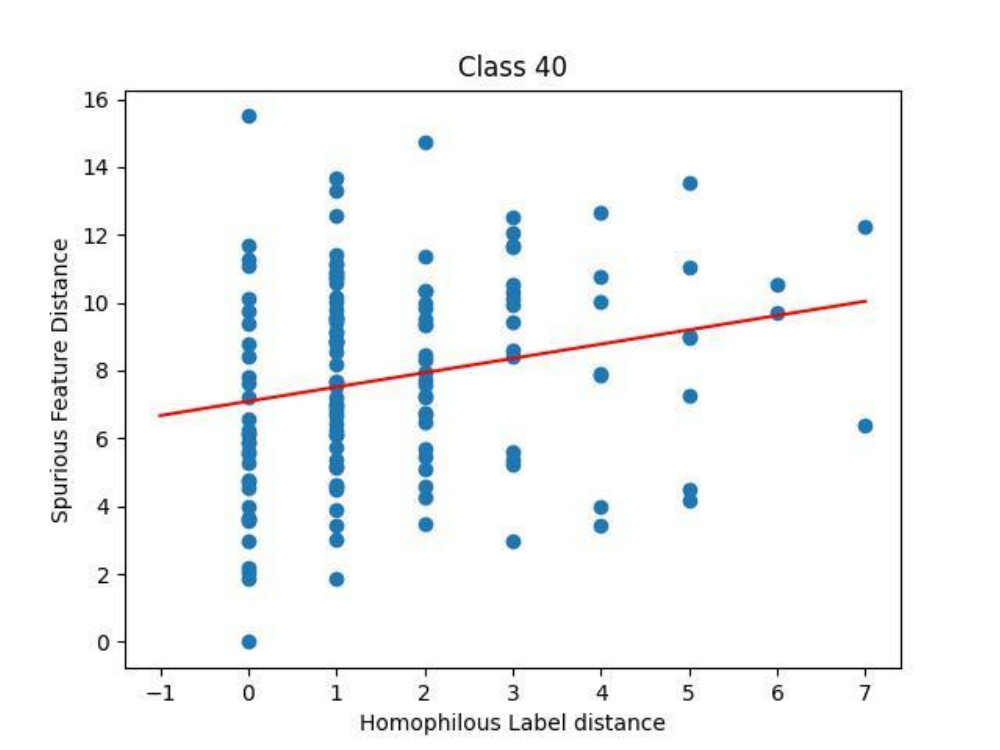}
    \end{subfigure}
    \begin{subfigure}{.19\textwidth}
        \centering
        \includegraphics[width=\linewidth]{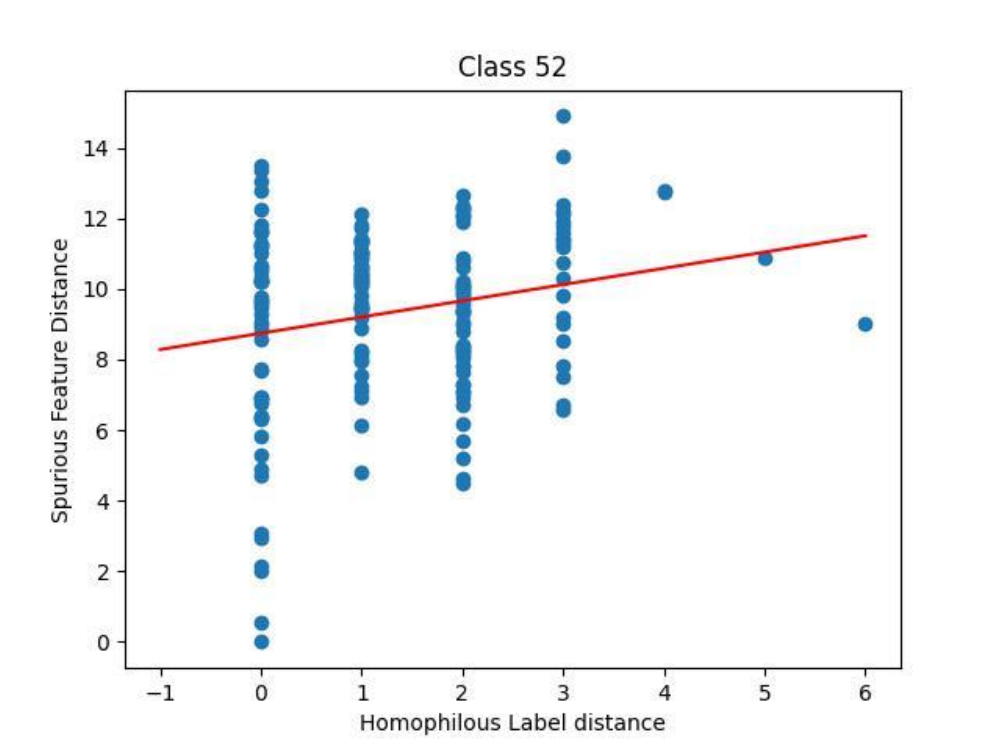}
    \end{subfigure}
    \caption{The relationship between the distance of invariant representations and discrepancy in the same-class neighbor ratio (the ratios in the figure are multiplied by node degree) on Cora \textit{degree}, \textbf{concept shift}. Line 1 to 4 are results of Cora \textit{word}+\textbf{covariate}, \textit{word}+\textbf{concept}, \textit{degree}+\textbf{covariate} and \textit{degree}+\textbf{concept}, respectively. Each subgraph marks a class, and each point in the subfigure represents a node pair. There is a positive correlation between the invariant feature distance and the difference in neighboring label ratio of the same class as the centered node.}
    \label{reflect_inv_cora}
\end{figure}

\subsection{Validation of the True Feature Generation Depth}
\label{depth}
For the theoretical model in Section \ref{theo_sec}, we assume that the number of layers of the GNN $L$ is greater than the depth of the causal pattern $k$. In this section, we empirically verify how large $k$ really is on real-world datasets. Specifically, we use GCN with different layers to predict the ground-truth label $Y$ on Cora and Arxiv datasets respectively (results are in Table \ref{cora_depth_inv} and \ref{arxiv_depth_inv}). As mentioned above, since a GCN with $L$ layers will aggregate features from $L$-hop neighbors for prediction, if the depth of the GCN is equal to the true generation depth, then the performance should be close to optimal. Therefore, we use the layer number that yields the optimal empirical performance (denoted as $L^*$) to approximate $k$. \textbf{ We find that the $L^*\leq 4$ in most cases.}  This indicates that our assumptions $L\leq k$ hold easily.

\subsection{Time Cost of CIA-LRA}
\label{app:time-cost}
 To show the running time of CIA-LRA, we show the time cost to reach the best test accuracy on our largest dataset Arxiv (with 50k~60k nodes). The results are in Table \ref{tab:time-cost} below. The time cost of CIA-LRA is comparable to baseline methods. 
 
\begin{table}
    \centering
    \caption{Time cost (seconds) to achieve optimal test performance on Arxiv using GAT on a single RTX 3090 GPU.}
    \resizebox{\linewidth}{!}{
    \begin{tabular}{ccccccccc}
    \toprule
         & ERM & Coral & Mixup & EERM & SRGNN & GTrans & CIT & CIA-LRA (6 hops)  \\
         \midrule
        Arxiv degree covariate & 74 & 551 & 758 & OOM & 34887 & OOM & OOM & 1248  \\
        \midrule
        Arxiv degree concept & 30 & 360 & 747 & OOM & 3960 & OOM & OOM & 1132  \\
        \midrule
        Arxiv time covariate & 46 & 246 & 1207 & OOM & 1993 & OOM & OOM & 292  \\
        \midrule
        Arxiv time concept & 440 & 1481 & 272 & OOM & 11628 & OOM & OOM & 989  \\
        \bottomrule
    \end{tabular}
    }
    \label{tab:time-cost}
\end{table}

\begin{table}[htbp]
\begin{subtable}{.5\linewidth}
    \centering
    \caption{OOD accuracy on causal prediction (\%) of GCN with different numbers of layers on Arxiv.}
    \resizebox{\linewidth}{!}{
        \begin{tabular}{cccccc}
            \toprule
            Dataset & Shift  & \(l=2\) & \(L=3\) & \(L=4\) &L=5 \\
            \midrule
           \multirow{2}{*}{Arxiv (degree)}& covariate &57.28(0.09) &58.92(0.14) &\textbf{60.18(0.41)} &60.17(0.12) \\
            \cmidrule{2-6}
           & concept &63.32(0.19) &62.92(0.21)&\textbf{65.41(0.13)}&63.93(0.58) \\
            \midrule
           \multirow{2}{*}{Arxiv (time)} & covariate &71.17(0.21)&70.98(0.20)&\textbf{71.71(0.21)}&70.84(0.11) \\
           \cmidrule{2-6}
           & concept &65.14(0.12)&67.36(0.07)&65.20(0.26)&\textbf{67.49(0.05)}\\
            \bottomrule
            \label{cora_depth_inv}
        \end{tabular}
    }
\end{subtable}
\begin{subtable}{.5\linewidth}
\centering
    \caption{OOD accuracy (\%) of GCN with different numbers of layers on Cora.}
    \resizebox{\linewidth}{!}{
        \begin{tabular}{cccccc}
            \toprule
            Dataset & Shift & \(L=1\) & \(l=2\) & \(L=3\) & \(L=4\) \\
            \midrule
           \multirow{2}{*}{Cora (degree)}& covariate &\textbf{59.04(0.15)} &58.44(0.44) &55.78(0.52) &55.15(0.24) \\
            \cmidrule{2-6}
           & concept &\textbf{62.88(0.34)}&61.53(0.48)&60.24(0.40)&60.51(0.17)  \\
            \midrule
           \multirow{2}{*}{Cora (word)} & covariate &64.05(0.18)&\textbf{65.81(0.12)}&65.07(0.52)&64.58(0.10) \\
           \cmidrule{2-6}
           & concept &64.32(0.15)&\textbf{64.85(0.10)}&64.61(0.11)&64.16(0.23)\\
            \bottomrule
             \label{arxiv_depth_inv}
        \end{tabular}
    }
\end{subtable}
\end{table}

\section{Detailed training procedure}
Table \ref{code} shows the detailed training procedure (pseudo code) of CIA-LRA. We use the same GNN encoder for the invariant subgraph extractor. Empirically, we add CIA or CIA-LRA after one epoch of ERM training.
\label{algo_sec}
\begin{algorithm}[htbp]
  \caption{Detailed Training Procedure of CIA-LRA} 
  \label{code} 
  \begin{algorithmic}[1] %
\REQUIRE~~\\ %
  A labeled training graph $\mathcal{G}=(A,X, Y)$, a GNN $f_{\Theta}$, and invariant subgraph generator GNN $f_{\theta_m}$. The number of hops $t$, CIA-LRA weight $\lambda$, the number of classes $C$, total iterations $T$, model learning rate $r_1$, invariant subgraph generator learning rate $r_2$, the number of $f_{\Theta}$'s layers $L$\\
  \ENSURE ~~\\%
  Updated model $f_{\Theta}$ with parameter $\Theta$.
  \FOR{iterations in $[1,2,...,T]$}
    \STATE Randomly sample a subgraph $A'\in \mathbb{R}^{N\times N}$ from $A\in \mathbb{R}^{N_0\times N_0}$.
    \STATE Compute and apply the edge mask according to Equation (\ref{edge_mask}) to obtain the masked adjacency matrix $A_{m}\leftarrow A'\odot M\in \mathbb{R}^{N\times N}$.
    \STATE Initialize $\mathcal{L}_{\text{CIA-LRA}}\leftarrow 0$.   
    \STATE Calculate the node representations $\phi(A_m,X)\in \mathbb{R}^{N\times D}$.
    \STATE \#\#\# Calculate $A^t$, where the $(i,j)$-th element of $A^t$ equals the length of the shortest path from node $i$ to $j$ if the length is less than $t$ else infinity:
    \STATE Initialize $A^t(i,j)\leftarrow \text{Inf}$ if $i\neq j$, $A^t(i,i)\leftarrow 1$, $A_{tem}\leftarrow A_m$
    \FOR{hop $h$ in $[1,2,...,t]$} 
        \STATE $A_{tem}\leftarrow A_mA_{tem}$
        \IF {$A_{tem}(i,j)>0$ \AND $A_{tem}(i,j)<A^t(i,j)$} 
            \STATE $A^t(i,j)\leftarrow h$
        \ENDIF
    \ENDFOR
    \STATE \#\#\# Compute the ratio of neighbored nodes of each class:
    \STATE Compute the normalized adjacency matrix $\bar{A}$, where $\bar{A}$'s $i$-th row $\bar{A}_i\leftarrow {A_{m}}_i/D_i$, ${A_{m}}_i$ is the $i$-th row of $A_m$ and $D_i\in \mathbb{R}$ is the degree of node $i$.
    \STATE Initialize the neighbored label ratio $R \leftarrow Y \in \mathbb{R}^{N\times C}$, where $R(i,c)$ is the ratio of node $i$'s neighbors of class $c$ within a $L$-hop range, $Y$ are the one hop labels.
    \FOR{$l$ in $[1,2,...,L]$}
        \STATE $R\leftarrow \bar{A}R$
    \ENDFOR
    \FOR{$c$ in $[1,2,...,C]$}
        \STATE Sample the nodes of class $c$ from $A^t$ and form $A^t_c$. Use $A^t_c$ to screen for pairs of nodes not exceeding a distance of $t$ hops $\Omega_c(t)$.
        \STATE Compute CIA-LRA loss of class $c$: $\mathcal{L}_{\text{CIA-LRA}}^c$ according to  Equation (\ref{LoReCIA}) using $\Omega_c(t)$, $R$, $A^t_c$ and  $\phi(A_m,X)$:
        \STATE $\mathcal{L}_{\text{CIA-LRA}}\leftarrow\mathcal{L}_{\text{CIA-LRA}}+\mathcal{L}_{\text{CIA-LRA}}^c$
    \ENDFOR
    \STATE Compute final loss $\mathcal{L}\leftarrow\mathcal{L}_{\text{ce}}(f_\Theta(A,X),Y)+\lambda \mathcal{L}_{\text{CIA-LRA}}$, $\mathcal{L}_{\text{ce}}$ is the cross-entropy loss
    \STATE Update model parameters $\Theta\leftarrow\Theta-r\nabla_\Theta\mathcal{L}$, $\theta_m\leftarrow\Theta_m-r\nabla_{\theta_m}\mathcal{L}$
  \ENDFOR
  \end{algorithmic}
\end{algorithm}

\section{Additional Discussion of Theoretical Settings and Results}
\subsection{Detailed Setup of the Theoretical Model in Section \ref{theo_sec}}
\label{theo_model_detail_sec}
\textbf{The proposed data generation process.} In the theoretical model of Equation (\ref{data_gene}), each dimension of  $n_1\in \mathbb{R}^{N^e\times 1}$ and $n_2\in\mathbb{R}^{N^e\times 1}$ are i.i.d, following a standard Gaussian distribution. $\epsilon^e \in \mathbb{R}^{N^e\times 1}$ is an environment spurious variable.  $\epsilon^e_i$ (each dimension of $\epsilon^e$) are independent random variables, $i=1,...,N^e$. We further assume the cross-environment expectation $\mathbb{E}_e[\epsilon^e]=\mathbf{0}$ and cross-environment variance $\mathbb{E}_e[\epsilon_i^e]=\sigma^2$, $i=1,...,N^e$ for brevity.

\textbf{The considered multi-layer GNN.}
In the analyzed GNN of Equation (\ref{GNN}), we simplify the classifier to an identity mapping. Such simplification has been adopted by various previous theoretical works on graphs \citep{wu2022non,tang2023towards}. We assume $L\geq k$ to ensure the model has enough capacity to learn invariant features.  We verify this assumption by using GCNs with different numbers of layers to predict the ground-truth labels (see Appendix \ref{depth}).

\subsection{Discussion of the Structural Feature Considered in the Theoretical Model and Justification for the Choice of the Analyzed GNN}
\label{sec_structural_shifts}
\textbf{Structural Features and Structural Shifts Considered in Section \ref{theo_model}.} To reflect reality as much as possible, it is necessary to consider both nodal and structural invariant and spurious features in the theoretical model. As mentioned in Section \ref{theo_model}, we model the invariant structural feature as the structure of the $k$-hop ego-subgraph. A natural question is raised here:
\begin{quote}
\emph{Can we find other ways to define the invariant/spurious structural features?}
\end{quote}
The answer is yes. For example, the invariant structure can be modeled as the subgraph of the ego-graph of a node, following \cite{li2023invariant}. However, it is fundamentally impossible for GNNs using mean aggregation (like GCN) to learn such causal structures. This is because such GNNs will assign fixed weights to each neighboring node feature, and they can't split the causal substructure from the neighbored ego-sgraph. Therefore, we make the causal structure feasible for GCN-like GNNs to learn by defining the causal structure as the whole $k$-hop neighboring ego-graph, rather than a subgraph, and show that OOD failure can still happen (Theorem \ref{prop1}). Then, under this setting, the remaining challenge becomes identifying the true $k$ by optimizing the shallow layer GNN parameters. However, in real practice, the invariant causal pattern may still be an ego-subgraph. This can be reflected in the performance gain of the invariant subgraph extractor used in CIA-LRA.

\textbf{Why Do We Choose Such a GNN in Section \ref{theo_model}?} From the above analysis, we show that such a choice is a compromise solution between the case of GCN (that can only extract a whole ego-graph) and GAT-like GNNs (that can extract a subgraph from an ego-graph). Although in this GNN each neighbored node is solely assigned the same weight, the shallow layer parameters can be optimized to realize the aggregation of different depths to capture the causal structures of different depths.

\subsection{Discussion of the Failure Solution for GNNs of VREx and IRMv1}
\label{spec_fail_sec}
In Theorem \ref{prop1_formal} and \ref{prop2_formal} (the formal version Theorem \ref{prop1}), we show that VREx and IRMv1 could induce a model that uses spurious features. Now we'll give an intuitive explanation of this failure mode. When the lower-layer parameters of the GNN ${\theta_1^1}^{(l)}$, ${\theta_2^1}^{(l)}$, ${\theta_1^2}^{(l)}$, ${\theta_2^2}^{(l)}$ take the specific solution $\Theta_0$ in Equation (\ref{spec_fail_solution}), we have 
\begin{equation}
\label{equation_degradation}
    H_1^{(L)}=\frac{\partial H_1^{(L)}}{\partial {\theta^i_1}^{(l)}}=\tilde{{A^e}}^{{s}}X_1, ~ H_2^{(L)}=\frac{\partial H_2^{(L)}}{\partial {\theta^i_2}^{(l)}}=\tilde{{A^e}}^{k+m}X_1,
\end{equation}
holds for $i=1,2, ~ l=1,...,L-1$ and every environment $e$. Thus, we get
\begin{equation}
    \begin{aligned}
        \frac{\partial \mathcal{L}}{\partial \theta_1}&=\frac{\partial \mathcal{L}}{\partial (H_1^{(L)}\theta_1)}\frac{\partial (H_1^{(L)} \theta_1)}{\partial \theta_1}=\frac{\partial \mathcal{L}}{\partial (H_1^{(L)}\theta_1)}H_1^{(L)}\\
        &\overset{(*)}{=}\frac{\partial \mathcal{L}}{\partial (H_1^{(L)}\theta_1)}\frac{\partial (H_1^{(L)})}{\partial {\theta^i_1}^{(l)}}=\frac{\partial \mathcal{L}}{\partial {\theta^i_1}^{(l)}}\frac{1}{\theta_1},\\
        &i=1,2,\quad l=1,...,L-1
    \end{aligned}
\end{equation}

$(*)$ is because of Equation (\ref{equation_degradation}). Therefore, $\frac{\partial \mathcal{L}}{\partial \theta_1}=0 \Rightarrow \frac{\partial \mathcal{L}}{\partial {\theta^i_1}^{(l)}}=0$. The same is true for $\frac{\partial \mathcal{L}}{\partial \theta_2}$ and $\frac{\partial \mathcal{L}}{\partial {\theta^i_2}^{(l)}}$. This means the solution of the top-level parameters $\theta_1$ and $\theta_2$ of the GNN will only be constrained by two equations, $\frac{\partial \mathcal{L}}{\partial \theta_1}=0$ and $\frac{\partial \mathcal{L}}{\partial \theta_2}=0$, rather than be constrained by all gradient functions $\frac{\partial \mathcal{L}}{\partial \theta^j_i}=0, ~i=1,2$. By analyzing the specific loss of VREx and IRMv1, we conclude that they will induce a non-zero $\theta_2$.

Note that the failure solution $\Theta_0$ here is not the unique one, we choose $\Theta_0$ just for the elegant expression and to better convey the intuition. In effect, the conclusion $\frac{\partial \mathcal{L}}{\partial \theta_1}=0 \Rightarrow \frac{\partial \mathcal{L}}{\partial {\theta^i_1}^{(l)}}=0$ and $\frac{\partial \mathcal{L}}{\partial \theta_2}=0 \Rightarrow \frac{\partial \mathcal{L}}{\partial {\theta^i_2}^{(l)}}=0$ holds as long as the lower-layer aggregation parameters satisfy
\begin{equation}
    \label{f2}
        \Theta'_0=\left\{
        \begin{array}{l}
            {\theta^1_1}^{(l)}=1, {\theta^2_1}^{(l)}=1, \quad l=L-1, ..., L-{s_1}+1  \\
            {\theta^1_1}^{(l)}=0, {\theta^2_1}^{(l)}=1, \quad l=L-{s_1}, L-{s_1}-1, ..., 1 \\
            {\theta^1_2}^{(l)}=1, {\theta^2_2}^{(l)}=1, \quad l=L-1, ..., L-{s_2}+1  \\
            {\theta^1_2}^{(l)}=0, {\theta^2_2}^{(l)}=1, \quad l=L-{s_2}, L-{s_2}-1, ..., 1 \\
        \end{array}
        \right., ~\text{for some $s_1,~s_2\in \mathbb{N}^+,~ 1<s_1\leq L,~ 1<s_2 \leq L$},
\end{equation}i.e., we don't require the spurious branch of the GNN to be identity mapping $I$ as equation (\ref{f1}) does.

 This failure mode can happen to GCN (all $\theta^i_j=1$, $i=1,2$, $j=1,2$) and also for GAT.

\subsection{The Superiority of the Proposed CSBM-OOD}
\label{discussion_CSBM}
Our CSBM-OOD in Section introduces several advancements over the conventional CSBMs \citep{ma2021subgroup, mao2023demystifying}: 1) It supports multi-class classification, extending beyond the binary classification framework of traditional CSBMs; 2) It accommodates unique neighboring label distributions for each node, in contrast to the traditional models that assume a uniform class-shared homophily/heterophily ratio across all nodes; 3) our model integrates OOD shifts, while traditional ones don't.

\subsection{Tightness of the Error Bound of Theorem \ref{bound}}
When there are no distributional shifts in spurious node features and heterophilic neighborhood distribution between training and test environments, the terms (a)-(d) in Eq. (\ref{eq:bound-formal}) becomes zero, and the upper bound becomes $\widehat{\mathcal{L}}_{e^{\text{tr}}}^\gamma(\tilde{h})+const=\widehat{\mathcal{L}}_{e^{\text{tr}}}^\gamma(\tilde{h})+\frac{1}{N_{e^{\text{tr}}}^{1-2 \alpha}}+\frac{1}{N_{e^{\text{tr}}}^{2 \alpha}} \ln \frac{L C\left(2 B_{e^{\text{te}}}\right)^{1 / L}}{\gamma^{1 / L} \delta}$, i.e., our bound only larger than the ideal error $\widehat{\mathcal{L}}_{e^{\text{tr}}}^\gamma(\tilde{h})$ by a constant $const$. When the number of training samples $N_{e^{\text{tr}}}$ is large, $const$ will be small enough and can be negligible. Hence, the tightness of our bound is guaranteed.

\section{Proofs of the Theoretical Results}
\label{proofs_sec}
\subsection{Proofs of the Concept Shift Case Presented in the Main Text}
In this section, we give proof of the propositions of the concept shift model presented in the main text.
\subsubsection{Proof of the non-Graph Success Case of VREx and IRMv1 under Concept Shift}
\label{proof_VREx_IRM_suc}
We restate Proposition \ref{prop_VREx_IRM_suc} as Proposition \ref{prop_VREx_IRM_suc_appdix} below.

\begin{proposition}
\label{prop_VREx_IRM_suc_appdix}
    For the non-graph version of the SCM in Equation (\ref{data_gene}), 
    \begin{equation}
        Y^e=X_1+n_1,~X_2^e=Y^e+n_2+{\epsilon^e},    
    \end{equation}
    VREx and IRMv1 will learn invariant features when using a 1-layer linear network: $f(X)=\theta_1 X_1 + \theta_2 X_2$.
\end{proposition}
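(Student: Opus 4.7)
The plan is to expand the population risk $R(e)=\mathbb{E}[(f(X)-Y^e)^2]$ explicitly in terms of $(\theta_1,\theta_2)$ using the generative equations, then read off what the IRMv1 and VREx objectives force. Substituting $Y^e=X_1+n_1$ and $X_2^e=X_1+n_1+n_2+\epsilon^e$, the residual becomes $f(X)-Y^e=(\theta_1+\theta_2-1)X_1+(\theta_2-1)n_1+\theta_2 n_2+\theta_2\epsilon^e$. Taking expectations under the independence assumptions on $X_1$, $n_1$, $n_2$, $\epsilon^e$, and using $\mathbb{E}[n_1^2]=\mathbb{E}[n_2^2]=1$ and $\mathbb{E}[\epsilon^e]=0$ with intra-environment second moment $v_e:=\mathbb{E}[(\epsilon^e)^2]$, I obtain a clean closed form of the type $R(e)=(\theta_1+\theta_2-1)^2\mathbb{E}[X_1^2]+(\theta_2-1)^2+\theta_2^2+\theta_2^2 v_e$. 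The key observation driving both halves of the proposition is that $e$ enters $R(e)$ only through $v_e$ and only multiplied by $\theta_2^2$.

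For VREx, I would compute $\mathrm{Var}_e[R(e)]=\theta_2^4\,\mathrm{Var}_e[v_e]$. Under the natural non-degeneracy assumption that $v_e$ is not constant across $\mathcal{E}_{\text{tr}}$, this variance vanishes iff $\theta_2=0$. Plugging $\theta_2=0$ back into $\mathbb{E}_e[R(e)]$ yields $(\theta_1-1)^2\mathbb{E}[X_1^2]+1$, whose unique minimizer is $\theta_1=1$. Hence the VREx minimizer $(\theta_1,\theta_2)=(1,0)$ is invariant and discards $X_2$.

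For IRMv1, I would compute the gradient penalty $g(e):=\nabla_{w|w=1}\mathbb{E}[(w(\theta_1 X_1+\theta_2 X_2^e)-Y^e)^2]=2\,\mathbb{E}[(f(X)-Y^e)f(X)]$. A routine bilinear expansion yields
\begin{equation*}
g(e)/2=(\theta_1+\theta_2-1)(\theta_1+\theta_2)\mathbb{E}[X_1^2]+(\theta_2-1)\theta_2+\theta_2^2+\theta_2^2 v_e .
\end{equation*}
Again the only $e$-dependence is $\theta_2^2 v_e$, so requiring $g(e)=0$ simultaneously for all $e$ (equivalently, driving $\mathbb{E}_e[g(e)^2]$ to zero) forces $\theta_2=0$. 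The remaining condition $(\theta_1-1)\theta_1\mathbb{E}[X_1^2]=0$ admits $\theta_1\in\{0,1\}$, and the accompanying ERM term $\mathbb{E}_e[R(e)]$ in the full IRMv1 objective strictly prefers $\theta_1=1$, recovering the invariant solution.

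The main obstacle is really just bookkeeping: keeping track of which cross terms vanish under independence and verifying that $v_e$ genuinely varies across training environments (so that $\theta_2=0$ is forced rather than merely allowed). A minor subtlety is handling the non-uniqueness at the gradient-penalty level for IRMv1, which I resolve by appealing to the ERM component of the objective; this is standard and presents no real difficulty.
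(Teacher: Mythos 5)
Your proposal is correct in outline and, for the IRMv1 half, essentially coincides with the paper's argument: the paper likewise expands $\nabla_w R(e)$, observes that every environment-dependent term carries a factor of $\theta_2$, concludes $\theta_2=0$ (and then $\theta_1=1$) because no single $\theta_2\neq 0$ can satisfy $\nabla_w R(e)=0$ for all $e$; your explicit resolution of the $\theta_1\in\{0,1\}$ ambiguity via the ERM component is a small improvement over the paper, which simply asserts $\theta_1=1$. For the VREx half your route is genuinely different in mechanics. The paper does not evaluate $\mathrm{Var}_e[R(e)]$ in closed form and argue it vanishes only at $\theta_2=0$; instead it performs a stationarity analysis of the variance penalty alone, computing $\partial_{\theta_1}\mathbb{V}_e[R(e)]=\theta_2^3\,X_1^\top\mathbb{E}_e\!\left[{\epsilon^e}^\top\epsilon^e\,\epsilon^e\right]$ and forcing $\theta_2=0$ from the (implicitly assumed nonzero) cross-environment third-moment term, then verifying $\partial_{\theta_2}\mathbb{V}_e[R(e)]=0$ at that point. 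Your direct "variance $=\theta_2^4\mathrm{Var}_e[v_e]$, hence zero iff $\theta_2=0$" argument is cleaner and makes its non-degeneracy requirement explicit, whereas the paper's hinges on a different, unstated condition ($X_1^\top\mathbb{E}_e[{\epsilon^e}^\top\epsilon^e\epsilon^e]\neq 0$); on the other hand the paper's first-order analysis does not need the penalty to be exactly minimizable to zero.

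One caveat to flag: your clean closed form for $R(e)$ rests on treating $\epsilon^e$ as having zero mean \emph{within} each environment and being independent of $X_1$, and on a per-sample (population) risk. In the paper's setup $\epsilon^e$ is a realized per-environment vector with only its \emph{cross}-environment mean and second moment specified, and the risk is an unnormalized sum over $N^e$ nodes; consequently $R(e)$ retains a cross term $2(\theta_1+\theta_2-1)\theta_2\,X_1^\top\epsilon^e$ and an $N^e$-dependent noise term, so under that exact semantics $\mathrm{Var}_e[R(e)]$ is not $\theta_2^4\mathrm{Var}_e[v_e]$ and is not zero at $\theta_2=0$ unless the $N^e$ are equal (or the risk is normalized). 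Your argument survives with per-environment normalization and your stated assumptions, but you should state these explicitly since they are a mild reinterpretation of the paper's data model rather than a consequence of it.
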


\begin{proof}
        \textbf{VREx.}  Denote $X_1 \theta_1+X_2^e\theta_2-X_1-n_1$ as $l_e$.  The variance of loss across environments is:
        \begin{equation}
            \begin{aligned}
                \mathbb{V}_e[R(e)] & =\mathbb{E}_e[R^2(e)]-\mathbb{E}_e^2[R(e)]                                                                                                             \\
                                   & =\mathbb{E}_e\left[\left(\mathbb{E}_{n_1,n_2}\left\|X_1\theta_1+(X_1+n_1+n_2+\epsilon^e)\theta_2-X_1-n_1\right\|^2_2\right)^2\right]                 \\
                                   & -\mathbb{E}_e^2\left[\mathbb{E}_{n_1,n_2}\left\|X_1\theta_1+(X_1+n_1+n_2+\epsilon^e)\theta_2-(X_1-n_1\right\|^2_2\right].                             \\
                                   & =\mathbb{E}_e\left[ \mathbb{E}_{n_1,n_2}\left[(l_e^\top l_e)^2\right]\right]-\mathbb{E}^2_e\left[\mathbb{E}_{n_1,n_2}\left[l_e^\top l_e\right]\right].
            \end{aligned}
        \end{equation}

        Take the derivative of $\mathbb{V}_e[R(e)]$ with respect to $\theta_1$:
        \begin{equation}
            \label{eqq1}
            \begin{aligned}
                \frac{\partial \mathbb{V}_e[R(e)]}{\partial \theta_1} & =\mathbb{E}_e\left[2\mathbb{E}_{n_1,n_2}\left[l_e^\top l_e\right]\mathbb{E}_{n_1,n_2}\left[2l_e^\top X_1\right]\right]                              \\
                                                                    & -2\mathbb{E}_e\left[\mathbb{E}_{n_1,n_2}\left[l_e^\top l_e\right]\right]\mathbb{E}_e\left[\mathbb{E}_{n_1,n_2}\left[ 2 l_e^\top X_1 \right] \right]
            \end{aligned}
        \end{equation}

        Using the fact that $\mathbb{E}_{n_1,n_2}[n_1]=\mathbb{E}_{n_1,n_2}[n_2]=\mathbf{0}$, 
        $\mathbb{E}_{n_1,n_2}[n_1^\top n_2]=\mathbb{E}_{n_1,n_2}[n_2^\top n_1]=0$ and 
        $\mathbb{E}_{n_1,n_2}[n_1^\top n_1]=\mathbb{E}_{n_1,n_2}[n_2^\top n_2]=N^e$ if it is the noise from $e$,
         $ \mathbb{E}_{n_1, n_2}[n_1^\top \epsilon^e]=\mathbb{E}_{n_1, n_2}[n_2^\top \epsilon^e]=0 $ and  
        using the assumption that $ \mathbb{E}_{e}[(\epsilon^e_i)^2 ]=\sigma^2$, we have
        \begin{equation}
            \label{eqq2}
            \begin{aligned}
                &\mathbb{E}_e\left[2\mathbb{E}_{n_1,n_2}\left[l_e^\top l_e\right]\mathbb{E}_{n_1,n_2}\left[2l_e^\top X_1\right]\right]\\
                =&{\theta_1}^3 {X_1}^4 + 3 {\theta_1}^2{\theta_2} {X_1}^4 + \theta_1 {\theta_2}^2 [3 X_1^2 (X_1^2+\sigma^2) + 2X_1^2\mathbb{E}_e[N^e]]\\
                +&\theta_2^3 X_1 (X_1^3+X_1 \sigma^2 + \mathbb{E}_e[{\epsilon^e}^\top \epsilon^e \epsilon^e]+ 2X_1 \mathbb{E}_e[N^e])\\
                -&3 \theta_1^2 X_1^4 - \theta_1 \theta_2 (6X_1^4 + 2 X_1^2 \mathbb{E}_e[N^e]) - \theta_2^2(3 X_1^2(X_1^2 + \sigma^2)+ 4 X_1^2 \mathbb{E}_e[N^e])\\
                +&\theta_1 (3X_1^4 + X_1^2 \mathbb{E}_e[N^e]0) + 3 \theta_2 (X_1^4 + X_1^2 \mathbb{E}_e [N^e]) -X_1^2 (X_1^2 + \mathbb{E}_e[N^e] )
            \end{aligned}
        \end{equation}
        and
        \begin{equation}
            \label{eqq3}
            \begin{aligned}
                &\mathbb{E}_e\left[\mathbb{E}_{n_1,n_2}\left[l_e^\top l_e\right]\right]\mathbb{E}_e\left[\mathbb{E}_{n_1,n_2}\left[ 2 l_e^\top X_1 \right] \right]\\
                =&{\theta_1}^3 {X_1}^4 + 3 {\theta_1}^2{\theta_2} {X_1}^4 + \theta_1 {\theta_2}^2 [3 X_1^2 (X_1^2+\sigma^2) + 2X_1^2\mathbb{E}_e[N^e]]\\
                +&\theta_2^3 X_1 (X_1^3+X_1 \sigma^2 + 2X_1 \mathbb{E}_e[N^e])\\
                -&3 \theta_1^2 X_1^4 - \theta_1 \theta_2 (6X_1^4 + 2 X_1^2 \mathbb{E}_e[N^e]) - \theta_2^2(3 X_1^2(X_1^2 + \sigma^2)+ 4 X_1^2 \mathbb{E}_e[N^e])\\
                +&\theta_1 (3X_1^4 + X_1^2 \mathbb{E}_e[N^e]0) + 3 \theta_2 (X_1^4 + X_1^2 \mathbb{E}_e [N^e]) -X_1^2 (X_1^2 + \mathbb{E}_e[N^e] )
            \end{aligned}
        \end{equation}
        Plug Equation (\ref{eqq2}) and (\ref{eqq3}) back into Equation (\ref{eqq1}), we have 
        \begin{equation}
            \frac{\partial \mathbb{V}_e[R(e)]}{\partial \theta_1} = \theta_2^3 X_1^\top \mathbb{E}_e[{\epsilon^e}^\top \epsilon^e \epsilon^e] 
        \end{equation}

        Let $ \frac{\partial \mathbb{V}_e[R(e)]}{\partial \theta_1}=0$, we have $\theta_2=0$.

        Now we need to validate $\theta_2=0$ is also a solution to $ \frac{\partial \mathbb{V}_e[R(e)]}{\partial \theta_2}=0 $. 
        Let's calculate $\frac{\partial \mathbb{V}_e[R(e)]}{\partial \theta_2}$ when $ \theta_2=0 $:
        \begin{equation}
            \begin{aligned}
                \frac{\partial \mathbb{V}_e[R(e)]}{\partial \theta_2}& =\mathbb{E}_e\left[2\mathbb{E}_{n_1,n_2}\left[l_e^\top l_e\right]\mathbb{E}_{n_1,n_2}\left[2l_e^\top (X_1+n_1+n_2+\epsilon^e)\right]\right]                              \\
                & -2\mathbb{E}_e\left[\mathbb{E}_{n_1,n_2}\left[l_e^\top l_e\right]\right]\mathbb{E}_e\left[\mathbb{E}_{n_1,n_2}\left[ 2 l_e^\top (X_1+n_1+n_2+\epsilon^e) \right] \right]\\
                =&(\theta_1^2X_1^2-2 \theta_1 X_1^2 + X_1^2 + \mathbb{E}_e {N^e})(\theta_1 X_1^2 -X_1^2 - \mathbb{E}_e {N^e})\\
                -&(\theta_1^2X_1^2-2 \theta_1 X_1^2 + X_1^2 + \mathbb{E}_e {N^e})(\theta_1 X_1^2 -X_1^2 - \mathbb{E}_e {N^e})\\
                =&0
            .\end{aligned}
        \end{equation}
        So far, we have proved $\theta_2=0$ is the solution for VREx, hence it will learn invariant features. We finish the proof for VREx.

        \textbf{IRMv1.}  The objective of IRMv1 is $\mathbb{E}_e \|\nabla_w R(e) \|^2_2$. When IRMv1 loss is optimized to zero,
        we have $\nabla_w R(e)=0$ for all environments $e$.
        \begin{equation}
            \begin{aligned}
                    \nabla_w R(e)&=\mathbb{E}_{n_1,n_2}[2(\theta_1X_1+\theta_2X_2-(X_1 + n_1))^\top(\theta_1X_1 +\theta_2X_2)]\\
                    &=2((\theta_1)^2X_1^\top X_1 + (\theta_2)^2 (X_1^\top X_1 + {\epsilon^e}^\top{\epsilon^e}  +  2N^e) + 2 \theta_1 \theta_2 X_1^\top(X_1+\epsilon^e)\\
                    &-\theta_1X_1^\top X_1 - \theta_2(X_1^\top X_1 + X_1^\top \epsilon^e + N^e))
            \end{aligned}
        \end{equation}
        To realize $\nabla_w R(e)=0$ for all $e$, we must let $\theta_2=0$ (and consequently $\theta_1=1$), otherwise 
        the solution of $\theta_2$ will include terms related to $\epsilon^e$ and $N^e$ that vary with environments, and a single value $\theta_2$
        cannot fit all these values. Thus we finish the proof for IRMv1.
    \end{proof}

\subsubsection{Proof of the Failure Case on Graphs of VREx under Concept Shift}
We present the formal version of the VREx part in Theorem \ref{prop1} as Theorem \ref{prop1_formal} below.
\begin{theorem}
\label{prop1_formal}
    \textbf{(VREx will use spurious features on graphs under concept shift, formal)} Under the SCM of Equation (\ref{data_gene}), the objective $\min_\Theta \mathbb{V}_e[R(e)]$ has non-unique solutions for parameters of the GNN (\ref{GNN}) when part of the model parameters $\{{\theta^1_1}^{(l)},{\theta^2_1}^{(l)},{\theta^1_2}^{(l)},{\theta^2_2}^{(l)}\}$ take the values 
    \begin{equation}
    \label{f1}
        \Theta_0=\left\{
        \begin{array}{l}
            {\theta^1_1}^{(l)}=1, {\theta^2_1}^{(l)}=1, \quad l=L-1, ..., L-{s}+1  \\
            {\theta^1_1}^{(l)}=0, {\theta^2_1}^{(l)}=1, \quad l=L-{s}, L-{s}-1, ..., 1 \\
            {\theta^1_2}^{(l)}=0, {\theta^2_2}^{(l)}=1, \quad l=L-1, ..., 1       \\
        \end{array}
        \right.,
    \end{equation} for some $0<{s}<L$. Specifically, the VREx solutions of $\theta_1$ and $\theta_2$ are the sets of solutions of the cubic equation, some of which are spurious solutions that $\theta_2\neq 0$ (although $\theta_2=0$ is indeed one of the solutions, VREx is not guaranteed to reach this solution):
    \begin{equation}
        \left\{
        \begin{array}{l}
              (3c_1 \theta_1 \theta_2 + c_1 (\theta_2)^2 -2c_6 \theta_2) \sigma^2 
        - \mathbb{E}_e[N^e(2c_1(\theta_1+\theta_2)-c_6)]\sigma^2\theta_2 +c_7\theta_2  =0\\
        (\mathbb{E}_e[N^e(2c_1(\theta_1+\theta_2)-c_6)]\sigma^2\theta_2 -c_7)(c_3-c_4)\theta_2-[c_2(\theta_1+\theta_2)-c_5](\theta_2)^2=0
        \end{array}
        \right. .
    \end{equation}

    where $c_1=\mathbb{E}_e[(\tilde{{A^e}}^{{s}}X_1)^\top(\tilde{{A^e}}^{{s}}X_1)]$, 
$c_2=\mathbb{E}_e[N^e(\tilde{{A^e}}^{{s}}X_1)^\top\tilde{{A^e}}^{{s}}X_1]$, 
$c_3=\mathbb{E}_e[(\tilde{{A^e}}^{{s}}X_1)^\top\mathbf{1}]$,
$c_4=\mathbb{E}_e[((\tilde{{A^e}}^{k}X_1)^\top \mathbf{1}]$, 
$c_5=\mathbb{E}_e[N^e((\tilde{{A^e}}^{k}X_1)^\top \tilde{{A^e}}^{{s}}X_1+  \text{tr} ( (\tilde{{A^e}}^{k})^\top \tilde{{A^e}}^{k}) + N^e (1+\sigma^2))]$,
$c_6=\mathbb{E}_e[(\tilde{{A^e}}^{{s}}X_1)^\top(\tilde{{A^e}}^{k}X_1)]$,
$ c_7= \mathbb{E}_e[{\epsilon^e}^\top {\epsilon^e} {\epsilon^e}^\top (\tilde{{A^e}}^{{s}} X_1)]$.
\end{theorem}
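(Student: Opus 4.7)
The plan is to substitute the concrete lower-layer choice $\Theta_0$ of (\ref{f1}) into the recursion (\ref{GNN}), so that only the top-layer weights $\theta_1,\theta_2$ remain free, and then to show that the first-order optimality conditions of $\mathbb{V}_e[R(e)]$ in these two variables admit a nonzero $\theta_2$ root. First I would unroll (\ref{GNN}) under $\Theta_0$. On the spurious branch, $({\theta^1_2}^{(l)},{\theta^2_2}^{(l)})=(0,1)$ kills every neighbor aggregation, so $H_2^{(L)}$ reduces to $\bar{I}^{L-1}X_2^e$, an $e$-dependent diagonal rescaling of $X_2^e$. On the invariant branch, the alternating block pattern $(1,1)$ for the top $s-1$ steps followed by $(0,1)$ for the remaining $L-s$ steps collapses the recursion to $H_1^{(L)}=(\tilde{A^e})^{s}X_1$. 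Consequently the GNN output becomes $f_{\Theta}(A^e,X^e)=\theta_1 (\tilde{A^e})^{s}X_1+\theta_2\,\bar{I}^{L-1}X_2^e$, which, after absorbing the diagonal rescaling into an effective spurious feature, is a two-parameter linear model.

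Next I would expand $R(e)=\mathbb{E}_{n_1,n_2}\|\theta_1(\tilde{A^e})^{s}X_1+\theta_2 X_2^e-(\tilde{A^e})^{k}X_1-n_1\|_2^2$ using the SCM identity $X_2^e=(\tilde{A^e})^{m}Y^e+n_2+\epsilon^e=(\tilde{A^e})^{m+k}X_1+(\tilde{A^e})^{m}n_1+n_2+\epsilon^e$. Since $n_1,n_2$ are zero-mean and independent of one another and of $\epsilon^e$, the inner expectation eliminates their linear terms and replaces their quadratic cross-products by the trace quantities $\mathrm{tr}(\cdot)$ and $N^e$. This leaves $R(e)$ as an explicit degree-two polynomial in $(\theta_1,\theta_2)$ whose coefficients are random functionals of $\tilde{A^e},\epsilon^e,N^e$. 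Taking $\mathbb{E}_e[R(e)]$ and $\mathbb{E}_e[R(e)^2]$ and invoking the cross-environment zero-mean and unit-variance assumptions $\mathbb{E}_e[\epsilon^e]=\mathbf{0}$, $\mathbb{V}_e[\epsilon_i^e]=\sigma^2$ will collapse these environment-dependent coefficients into the explicit constants $c_1,\ldots,c_7$ displayed in the statement.

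Having reduced $\mathbb{V}_e[R(e)]$ to a degree-four polynomial in $(\theta_1,\theta_2)$, I would impose $\partial_{\theta_1}\mathbb{V}_e[R(e)]=0$ and $\partial_{\theta_2}\mathbb{V}_e[R(e)]=0$; routine but careful simplification should reproduce the two stated cubic equations. To establish non-uniqueness, note that both equations carry an overall factor of $\theta_2$ in several terms, so $\theta_2=0$ is certainly a root, recovering the non-graph solution of Proposition \ref{prop_VREx_IRM_suc}. However, factoring out $\theta_2$ from the first equation leaves a genuine quadratic in $\theta_2$ (with $\theta_1$ as parameter), whose discriminant is generically positive because $c_7=\mathbb{E}_e[{\epsilon^e}^\top\epsilon^e\,{\epsilon^e}^\top(\tilde{A^e})^{s}X_1]$ need not vanish on graphs. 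Substituting such a nonzero $\theta_2$ root into the second equation then pins down a compatible $\theta_1$, yielding a spurious stationary point distinct from $(\theta_1,\theta_2)=(1,0)$.

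The principal obstacle will be the fourth-order moment bookkeeping required to compute $\mathbb{E}_e[R(e)^2]$: terms such as $\mathbb{E}_e[{\epsilon^e}^\top\epsilon^e\,{\epsilon^e}^\top(\tilde{A^e})^{s}X_1]$ or contractions of $\mathrm{tr}((\tilde{A^e})^{k\top}\tilde{A^e}^{k})$ against $N^e$ do not vanish on graphs precisely because $\tilde{A^e}$ couples the environment noise $\epsilon^e$ to the invariant features $X_1$ through message passing. This is exactly the mechanism that kills the non-graph argument of Proposition \ref{prop_VREx_IRM_suc}, and tracking these coupling terms and matching them to $c_1,\ldots,c_7$ is where the bulk of the algebraic work resides. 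A secondary point to handle carefully is the $e$-dependent diagonal $\bar{I}^{L-1}$ on the spurious branch: it can be absorbed into the effective feature and the $c_i$, so the structural form of the stationarity equations is unchanged, but the absorption must be made explicit to justify the exact expressions in the statement.
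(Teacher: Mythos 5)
Your plan retraces the paper's own route: freeze the lower layers at $\Theta_0$ so that the invariant branch outputs $(\tilde{A^e})^{s}X_1$ and the spurious branch passes $X_2^e=(\tilde{A^e})^{k+m}X_1+(\tilde{A^e})^{m}n_1+n_2+\epsilon^e$ through essentially unchanged, expand $R(e)$ and $R(e)^2$ using the Gaussian-noise moments and the cross-environment moments of $\epsilon^e$, and set $\partial_{\theta_1}\mathbb{V}_e[R(e)]=\partial_{\theta_2}\mathbb{V}_e[R(e)]=0$; the paper obtains the second displayed equation by substituting the $\partial_{\theta_1}$ stationarity condition into the $\partial_{\theta_2}$ one, which is the same "routine simplification" you describe. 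So the computational core of your proposal matches the paper.

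There is, however, one step you omit that the paper needs, and two smaller inaccuracies. The theorem is about solutions of $\min_\Theta \mathbb{V}_e[R(e)]$ over \emph{all} GNN parameters, not just $(\theta_1,\theta_2)$ with the lower layers frozen; the paper closes this by noting that at $\Theta_0$ the top-layer representations are proportional to their derivatives w.r.t.\ the lower-layer weights (the degeneracy of Appendix~\ref{spec_fail_sec}), so $\partial\mathbb{V}_e[R(e)]/\partial\theta_1=0$ and $\partial\mathbb{V}_e[R(e)]/\partial\theta_2=0$ already force every lower-layer gradient to vanish. Without this argument you have only shown stationarity in a two-dimensional slice, not that the spurious roots are solutions of the full objective. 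Second, the paper's bookkeeping takes the spurious branch at $\Theta_0$ to be the identity (${C_2^e}'=I_{N^e}$); "absorbing" an $e$-dependent diagonal $\bar{I}^{\,L-1}$ into an effective spurious feature would change $c_3$, $c_5$ and $c_7$ (the $\epsilon^e$ terms would appear premultiplied by that diagonal), so to recover the exact constants in the statement you must adopt the identity convention rather than absorb. Finally, after factoring $\theta_2$ out of the first equation the remaining factor is affine in $(\theta_1,\theta_2)$, not a quadratic with a discriminant; the existence of $\theta_2\neq 0$ stationary points should instead be argued from that affine factor jointly with the second (cubic) equation, which is the (admittedly informal) reasoning the paper uses.
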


\begin{proof}  \label{proof_VREx}
We will use some symbols to simplify the expression of the toy GNN. Denote $\tilde{A}^mn_1+n_2+{\epsilon}$ as ${\eta}$. Use the following notations to represent the components of the $L$-layer GNN model:
    \begin{equation}
    \label{def_C_Z}
        \begin{aligned}
            f_{\Theta}(A,X) & =H^{(L)}_1\theta_1+H^{(L)}_2\theta_2                                                                                                                                                                                                                                                                            \\
                            & =\underbrace{\left[{\theta^1_1}^{(L-1)}\bar{A}\left(...{\theta^1_1}^{(3)}\left({\theta^1_1}^{(2)}\bar{A}({\theta^1_1}^{(1)}\bar{A}+{\theta^2_1}^{(1)}\bar{I})X_1+{\theta^2_1}^{(2)}({\theta^1_1}^{(1)}\bar{A}+{\theta^2_1}^{(1)}\bar{I})X_1\right)+...\right)\right]}_{C_1}\theta_1                             \\
                            & +\underbrace{\left[{\theta^1_2}^{(L-1)}\bar{A}\left(...{\theta^1_2}^{(3)}\left({\theta^1_2}^{(2)}\bar{A}({\theta^1_2}^{(1)}\bar{A}+{\theta^2_2}^{(1)}\bar{I})\tilde{A}^{{k+m}}X_1+{\theta^2_2}^{(2)}({\theta^1_2}^{(1)}\bar{A}+{\theta^2_2}^{(1)}\bar{I})\tilde{A}^{{k+m}}X_1\right)+...\right)\right]}_{C_2}\theta_2 \\
                            & +\underbrace{\left[{\theta^1_2}^{(L-1)}\bar{A}\left(...{\theta^1_2}^{(3)}\left({\theta^1_2}^{(2)}\bar{A}({\theta^1_2}^{(1)}\bar{A}+{\theta^2_2}^{(1)}\bar{I}){\eta}+{\theta^2_2}^{(2)}({\theta^1_2}^{(1)}\bar{A}+{\theta^2_2}^{(1)}\bar{I}){\eta}\right)+...\right)\right]}_{Z}\theta_2                             \\
                            & =C_1\theta_1+(C_2+Z)\theta_2.
        \end{aligned}
    \end{equation}
    $C_1, C_2, Z\in \mathbb{R}^{N \times 1}$. 
    We use $C_1^e$, $C_2^e$, and $Z^e$ to denote the variables from the corresponding environment $e$.
    We further denote $C_2^e={C_2^e}' \tilde{{A^e}}^{{  s}}X_1$, $ Z^e= {C_2^e}' {\eta} $.

    Using these notations, the loss of environment $e$ is
    \begin{equation}
        \begin{aligned}
            R(e) & = \mathbb{E}_{n_1,n_2}\left[\left\|f_\Theta({A^e}, X^e)-Y^e\right\|_2^2\right]                                   \\
                 & =\mathbb{E}_{n_1,n_2}\left[\left\|C^e_1\theta_1+(C^e_2+Z^e)\theta_2-\tilde{{A^e}}^{k}X_1-n_1\right\|_2^2\right].
        \end{aligned}
    \end{equation} Denote the inner term $C^e_1\theta_1+(C^e_2+Z^e)\theta_2-\tilde{{A^e}}^{k}X_1-n_1$ as $l_e$.

    The variance of loss across environments is:
    \begin{equation}
        \begin{aligned}
            \mathbb{V}_e[R(e)] & =\mathbb{E}_e[R^2(e)]-\mathbb{E}_e^2[R(e)]                                                                                                             \\
                               & =\mathbb{E}_e\left[\left(\mathbb{E}_{n_1,n_2}\left\|C^e_1\theta_1+(C^e_2+Z^e)\theta_2-\tilde{{A^e}}^{k}X_1-n_1\right\|^2_2\right)^2\right]                 \\
                               & -\mathbb{E}_e^2\left[\mathbb{E}_{n_1,n_2}\left\|C^e_1\theta_1+(C^e_2+Z^e)\theta_2-\tilde{{A^e}}^{k}X_1-n_1\right\|^2_2\right].                             \\
                               & =\mathbb{E}_e\left[ \mathbb{E}_{n_1,n_2}\left[(l_e^\top l_e)^2\right]\right]-\mathbb{E}^2_e\left[\mathbb{E}_{n_1,n_2}\left[l_e^\top l_e\right]\right].
        \end{aligned}
    \end{equation}

    Take the derivative of $\mathbb{V}_e[R(e)]$ with respect to $\theta_1$:
    \begin{equation}
        \begin{aligned}
            \frac{\partial \mathbb{V}_e[R(e)]}{\partial \theta_1} & =\mathbb{E}_e\left[2\mathbb{E}_{n_1,n_2}\left[l_e^\top l_e\right]\mathbb{E}_{n_1,n_2}\left[2l_e^\top C_1^e\right]\right]                              \\
                                                                  & -2\mathbb{E}_e\left[\mathbb{E}_{n_1,n_2}\left[l_e^\top l_e\right]\right]\mathbb{E}_e\left[\mathbb{E}_{n_1,n_2}\left[ 2 l_e^\top C_1^e \right] \right]
        \end{aligned}
    \end{equation}
    
    Calculate the derivative by terms: 
    \begin{equation}
        \begin{aligned}
            \label{ll1}
            \mathbb{E}_{n_1,n_2}[l_e^\top l_e] & =\mathbb{E}_{n_1,n_2} [{C^e_1}^\top {C^e_1} (\theta_1)^2 +{C^e_1}^\top C^e_2 \theta_1 \theta_2 + {C^e_1}^\top Z^e \theta_1 \theta_2 -{C^e_1}^\top (\tilde{A^e})^kX_1 \theta_1 -{C^e_1}^\top n_1 \theta_1 \\
                                               & +{C_2^e}^\top C_1^e \theta_1 \theta_2 +{C_2^e}^\top {C_2^e} (\theta_2)^2 + {C_2^e}^\top Z^e \theta_1 \theta_2 - {C^e_2}^\top (\tilde{A^e})^k X_1 \theta_2 - {C^e_2}^\top n_1 \theta_2     \\
                                               & + {Z^e}^\top C^e_1\theta_1 \theta_2 + {Z^e}^\top C^e_2 (\theta_2)^2 + {Z^e}^\top {Z^e} (\theta_2)^2 - {Z^e}^\top (\tilde{A^e})^k X_1\theta_2 - {Z^e}^\top n_1 \theta_2\\
                                               & - ({(\tilde{A^e})^kX_1})^\top (C_1^e \theta_1 + C_2^e\theta_2 )-({(\tilde{A^e})^kX_1})^\top {Z^e} \theta_2
                                               +({(\tilde{A^e})^kX_1})^\top (\tilde{A^e})^kX_1 \\
                                               &+ ({(\tilde{A^e})^kX_1})^\top n_1 -n_1^\top (C_1^e \theta_1 + C_2^e \theta_2) -n_1^\top Z^e \theta_2
                                                + n_1^\top  (\tilde{A^e})^kX_1 +n_1^\top n_1 ]\\
        \end{aligned} 
    \end{equation}

    Since $n_1$ and $n_2$ are independent standard Gaussian noise, we have $\mathbb{E}_{n_1,n_2}[n_1]=\mathbb{E}_{n_1,n_2}[n_2]=\mathbf{0}$, 
    $\mathbb{E}_{n_1,n_2}[n_1^\top n_2]=\mathbb{E}_{n_1,n_2}[n_2^\top n_1]=0$ and 
    $\mathbb{E}_{n_1,n_2}[n_1^\top n_1]=\mathbb{E}_{n_1,n_2}[n_2^\top n_2]=N^e$ if it is the noise from $e$. 
    Also, since ${\epsilon^e}$ and 
    $n_1$, $n_2$ are independent, we have $ \mathbb{E}_{n_1, n_2}[n_1^\top {\epsilon^e}]=\mathbb{E}_{n_1, n_2}[n_2^\top {\epsilon^e}]=0 $.

    When \begin{equation}
        \label{sv}
        \left\{
        \begin{array}{l}
            {\theta^1_1}^{(l)}=1, {\theta^2_1}^{(l)}=1, \quad l=L-1, ..., L-{s}+1  \\
            {\theta^1_1}^{(l)}=0, {\theta^2_1}^{(l)}=1, \quad l=L-{s}, L-{s}-1, ..., 1 \\
            {\theta^1_2}^{(l)}=0, {\theta^2_2}^{(l)}=1, \quad l=L-1, ..., 1       \\
        \end{array}
        \right.,
    \end{equation} we have $ {C_2^e}'=I_{N^e}\in \mathbb{R}^{N^e \times N^e} $ and $ C_1^e=\tilde{{A^e}}^{{s}} X_1 $.  
    Consequently, we get $ \mathbb{E}_{n_1, n_2}[{Z^e}^\top n_1]= \text{tr}({C_2^e}' \tilde{{A^e}}^{k} )=\text{tr}(\tilde{{A^e}}^{k}) $, 
    $ \mathbb{E}_{n_1, n_2}[{Z^e}^\top Z^e] = \text{tr}\left((\tilde{{A^e}}^{k})^\top (\tilde{{A^e}}^{k})\right) + N^e +  {\epsilon^e}^\top {\epsilon^e}$.

    Use the above conclusions and rewrite Equation (\ref{ll1}) as (here we only plug in the value of ${C_2^e}'$):
    \begin{equation}
        \label{ll2}
        \begin{aligned}
            &\mathbb{E}_{n_1,n_2}[l_e^\top l_e] =\\
            &\left.\begin{array}{@{}l@{}}
            {C^e_1}^\top {C^e_1} (\theta_1)^2 +{C^e_1}^\top C^e_2 \theta_1 \theta_2 - {C^e_1}^\top (\tilde{A^e})^kX_1 \theta_1 +  {C_2^e}^\top C_1^e \theta_1 \theta_2 +{C_2^e}^\top {C_2^e} (\theta_2)^2 - {C^e_2}^\top (\tilde{A^e})^k X_1 \theta_2  \\
             + \text{tr}\left((\tilde{{A^e}}^{k})^\top (\tilde{{A^e}}^{k})\right) (\theta_2)^2 - ({(\tilde{A^e})^kX_1})^\top (C_1^e \theta_1 + C_2^e\theta_2 ) +({(\tilde{A^e})^kX_1})^\top (\tilde{A^e})^kX_1 + N^e \left(1+(\theta_2)^2 \right) \\
             - 2\text{tr}(\tilde{{A^e}}^{k})
            \end{array} \right\}  \text{($*$)}\\
            &\left.\begin{array}{@{}l@{}} + [{C^e_1}^\top {\epsilon^e}   + {C_2^e}^\top {\epsilon^e}  +  {\epsilon^e}^\top C_1^e] \theta_1 \theta_2 +  {\epsilon^e}^\top  {\epsilon^e}(\theta_2)^2 -2({(\tilde{A^e})^kX_1})^\top {\epsilon^e} \theta_2\\
            \end{array}\right\} \text{($**$)}
        ,\end{aligned}
    \end{equation}

    ($ * $) and ($ ** $) represent terms that are independent and associated with $ {\epsilon^e} $, respectively.
    Additionally, 

    \begin{equation}
        \begin{aligned}
            \label{lc1}
        \mathbb{E}_{n_1, n_2}[2 l_e^\top C_1^e]&=2 \left [ {C_1^e}^\top {C_1^e} \theta_1+
        {C_2^e}^\top {C_1^e} \theta_2 + ({C_2^e}' {\epsilon^e})^\top {C_1^e} \theta_2 - ((\tilde{A^e})^kX_1)^\top {C_1^e} \right ]\\
        &=2 \left [ {C_1^e}^\top {C_1^e} \theta_1+
        {C_2^e}^\top {C_1^e} \theta_2 +{\epsilon^e}^\top {C_1^e} \theta_2 - ((\tilde{A^e})^kX_1)^\top {C_1^e} \right ].
        \end{aligned}
    \end{equation}    

    Multiplying Equation (\ref{ll2}) and (\ref{lc1}) and take the expectation on $e$, 
    using the assumption that $ \mathbb{E}_{e}[({\epsilon^e}_i)^2 ]=\sigma^2$ ($ {\epsilon^e}_i $ is the $i$-th element of $ {\epsilon^e} $):
    \begin{equation}
        \label{eelllc}
        \begin{aligned}
            \mathbb{E}_{e}\left[ 2 \mathbb{ E}_{n_1, n_2}[l^\top_e l_e] \mathbb{ E}_{n_1, n_2}\left[2l_e^\top C_1\right] \right]
            &=4\mathbb{E}_e\left[ \text{($ * $)}\left( {C_1^e}^\top C_1^e \theta_1 + {C_2^e}^\top {C_2^e}' \theta_2 - ({(\tilde{A^e})^kX_1})^\top C_1^e \right) \right]\\
            &+ 4\mathbb{E}_e\left[  (\tilde{{A^e}}^{{s}} X_1)^\top \tilde{{A^e}}^{{s}} X_1 (3\theta_1 \theta_2+(\theta_2)^2)-2 (\tilde{{A^e}}^{{s}}X_1)^\top ((\tilde{A^e})^kX_1) \theta_2   \right]\theta_2 \sigma^2\\
            &+ 4\mathbb{E}_e [N^e {\epsilon^e}^\top {\epsilon^e} {\epsilon^e}^\top (\tilde{{A^e}}^{{s}} X_1)]\theta_2
        .\end{aligned}
    \end{equation}
    
    Next target is to compute $2\mathbb{E}_e [\mathbb{E}_{n_1,n_2}[l_e^\top l_e]]$ and $\mathbb{E}_e [\mathbb{E}_{n_1,n_2}[2l_e^\top C_1]]$.
    Since ${\epsilon^e}$ has zero mean, we have:

    \begin{equation}
        \label{eell}
        \begin{aligned}
            2 \mathbb{E}_e[\mathbb{E}_{n_1,n_2}[l_e^\top l_e]]=2\mathbb{E}_e[\text{($*$)}]+2\mathbb{E}_e[N^e](\theta_2)^2 \sigma^2
        \end{aligned}
    \end{equation}

    and

    \begin{equation}
        \label{eelc}
        \begin{aligned}
            \mathbb{E}_e[\mathbb{E}_{n_1,n_2}[2l_e^\top C_1^e]]=
            2 \mathbb{E}_e\left[ {C_1^e}^\top {C_1^e} \theta_1 +{C_2^e}^\top {C_1^e} \theta_2 - ({(\tilde{A^e})^kX_1})^\top C_1^e \right]
        .\end{aligned}
    \end{equation}

    Use Equation (\ref{eelllc}) (\ref{eell}) and (\ref{eelc}) and let $  \frac{\partial \mathbb{V}_e[R(e)]}{\partial \theta_1}=0 $, we have:

    \begin{equation}
    \label{pt1}
        \begin{aligned}
            &\mathbb{E}_e\left[ 3 (\tilde{{A^e}}^{{s}} X_1)^\top (\tilde{{A^e}}^{{s}} X_1)(\theta_1 \theta_2 + \frac{1}{3}(\theta_2)^2)- 2 (\tilde{{A^e}}^{{s}} X_1)^\top ((\tilde{A^e})^kX_1)\theta_2 \right] \sigma^2 
            + \mathbb{E}_e[{\epsilon^e}^\top {\epsilon^e} {\epsilon^e}^\top (\tilde{{A^e}}^{{s}} X_1)]\theta_2 \\
             -&\mathbb{E}_e[ N^e]\mathbb{E}_e\left[2 (\tilde{{A^e}}^{{s}} X_1)^\top (\tilde{{A^e}}^{{s}} X_1)(\theta_1+\theta_2)-((\tilde{A^e})^kX_1)^\top C_1^e\right] \theta_2 \sigma^2=0
        .\end{aligned}
    \end{equation}

Now we start calculating the expression of $ \frac{\partial \mathbb{V}_e[R(e)]}{\partial \theta_2} $: 
\begin{equation}
    \begin{aligned}
        \frac{\partial \mathbb{V}_e[R(e)]}{\partial \theta_2}& =\mathbb{E}_e\left[2\mathbb{E}_{n_1,n_2}\left[l_e^\top l_e\right]\mathbb{E}_{n_1,n_2}\left[2l_e^\top (C_2+Z^e)\right]\right]                              \\
        & -2\mathbb{E}_e\left[\mathbb{E}_{n_1,n_2}\left[l_e^\top l_e\right]\right]\mathbb{E}_e\left[\mathbb{E}_{n_1,n_2}\left[ 2 l_e^\top (C_2^e+Z^e) \right] \right]
    .\end{aligned}
\end{equation}

Let $\frac{\partial \mathbb{V}_e[R(e)]}{\partial \theta_2}=0$:
\begin{equation}
    \label{pt2}
    \begin{aligned}
        &\mathbb{E}_e\left[ ({C_1^e}^\top {C_2^e}' + {C_2^e}^\top {C_2^e}' + {{C_2^e}'}^\top {C_1^e}^\top )\theta_1 \theta_2 + ({C_2^e}')^\top {C_2^e}' (\theta_2)^2- 2 ({(\tilde{A^e})^kX_1})^\top {C_2^e}' \theta_2 \right]\\
        &\mathbb{E}_e\left[({{C_2^e}'}^\top C_2^e \theta_2 - ({(\tilde{A^e})^kX_1})^\top {C_2^e}')\right]\sigma^2\\
        &-\mathbb{E}_e \left[ N^e\sigma^2 \left( {C_1^e}^\top C_2^e \theta_1 + {C_2^e}^\top C_2^e \theta_2 
       - ({(\tilde{A^e})^kX_1})^\top C_2^e + \text{tr}((\tilde{{A^e}}^{k})^\top \tilde{{A^e}}^{k}) +N^e+{{C^e_2}'}^\top {C_2^e}' \sigma^2 \right)(\theta_2)^2  \right]\\
        & = 0
    .\end{aligned}
\end{equation}

Plug Equation (\ref{pt1}) in (\ref{pt2}), we reach:
\begin{equation}
    \begin{aligned}
            &\left[\mathbb{E}_e \left[N^e (\tilde{{A^e}}^{{s}}X_1)^\top (\tilde{{A^e}}^{{s}}X_1)(\theta_1+\theta_2) -((\tilde{A^e})^kX_1)^\top C_1^e
           \right]\sigma^2-\mathbb{E}_e[  {\epsilon^e}^\top {\epsilon^e} {\epsilon^e}^\top (\tilde{{A^e}}^{{s}}X_1)] \right]\theta_2\\
           &\mathbb{E}_e\left( (\tilde{{A^e}}^{{s}}X_1)^\top \mathbf{1_{N^e}} \theta_2 - (\tilde{{A^e}}^{k}X_1)^\top \mathbf{1_{N^e}}\right) \\
            & - \mathbb{E}_e \left[N^e\left((\tilde{{A^e}}^{{s}}X_1)^\top\tilde{{A^e}}^{{s}}X_1(\theta_1+\theta_2)-(\tilde{{A^e}}^{k}X_1)^\top \tilde{{A^e}}^{{s}}X_1+  \text{tr} ( (\tilde{{A^e}}^{k})^\top \tilde{{A^e}}^{k}) + N^e (1+\sigma^2)  \right)   \right](\theta_2)^2\\
            & =0
    .\end{aligned}
\end{equation}
Let $c_1=\mathbb{E}_e[(\tilde{{A^e}}^{{s}}X_1)^\top(\tilde{{A^e}}^{{s}}X_1)]$, 
$c_2=\mathbb{E}_e[N^e(\tilde{{A^e}}^{{s}}X_1)^\top\tilde{{A^e}}^{{s}}X_1$], 
$c_3=\mathbb{E}_e[(\tilde{{A^e}}^{{s}}X_1)^\top\mathbf{1}]$,
$c_4=\mathbb{E}_e[((\tilde{{A^e}}^{k}X_1)^\top \mathbf{1}]$, 
$c_5=\mathbb{E}_e[N^e((\tilde{{A^e}}^{k}X_1)^\top \tilde{{A^e}}^{{s}}X_1+  \text{tr} ( (\tilde{{A^e}}^{k})^\top \tilde{{A^e}}^{k}) + N^e (1+\sigma^2))]$,
$c_6=\mathbb{E}_e[(\tilde{{A^e}}^{{s}}X_1)^\top(\tilde{{A^e}}^{k}X_1)]$,
$ c_7= \mathbb{E}_e[{\epsilon^e}^\top {\epsilon^e} {\epsilon^e}^\top (\tilde{{A^e}}^{{s}} X_1)]$,

we conclude that

\begin{equation}
    \label{t1t2solu}
    \left\{
    \begin{array}{l}
          (3c_1 \theta_1 \theta_2 + c_1 (\theta_2)^2 -2c_6 \theta_2) \sigma^2 
        - \mathbb{E}_e[N^e(2c_1(\theta_1+\theta_2)-c_6)]\sigma^2\theta_2 +c_7 \theta_2 =0\\
        (\mathbb{E}_e[N^e(2c_1(\theta_1+\theta_2)-c_6)]\sigma^2\theta_2 -c_7)(c_3-c_4)\theta_2-[c_2(\theta_1+\theta_2)-c_5](\theta_2)^2=0
    \end{array}
    \right. .
\end{equation}

As for the derivative respect to $ {\theta_1^1}^{(l)} $, $ {\theta_1^2}^{(l)} $,
$ {\theta_2^1}^{(l)} $,$ {\theta_2^2}^{(l)} $, when they take the special value in Equation (\ref{sv}),
we have $\frac{\partial \mathbb{V}_e[R(e)]}{\partial \theta_1}=0 \Rightarrow \frac{\partial \mathbb{V}_e[R(e)]}{\partial {\theta_1^1}^{(l)}}=\frac{\partial \mathbb{V}_e[R(e)]}{\partial {\theta_1^2}^{(l)}}=0$
and $\frac{\partial \mathbb{V}_e[R(e)]}{\partial \theta_2}=0 \Rightarrow \frac{\partial \mathbb{V}_e[R(e)]}{\partial {\theta_2^1}^{(l)}}=\frac{\partial \mathbb{V}_e[R(e)]}{\partial {\theta_2^2}^{(l)}}=0$, $l=1,...,L$.
So we conclude the solution induced by Equation (\ref{t1t2solu}) is the solution of the objective, and $\theta_2=0$ is not a valid solution.

\end{proof}

\subsubsection{Proof of the Failure Case on Graphs of IRMv1 under Concept Shift}
\label{proof_irm_con_sec}
We present the formal version of the IRM part in  Theorem \ref{prop1} as Theorem \ref{prop2_formal} below.
\begin{theorem}
\label{prop2_formal}
    \textbf{(IRMv1 will use spurious features on graphs  under concept shift, formal)} Under the SCM of Equation (\ref{data_gene}), there exists $s\in \mathbb{N}^+$ that satisfies $0<s<L$ and $s\neq k$ such that optimizing the IRMv1 objective $\min_\Theta \mathbb{E}_e[\|\nabla_{w|w=1.0} R(e)\|^2]$ will not lead to the invariant solution $\theta_2=0$ for parameters of the GNN (\ref{GNN}) when $\{{\theta^1_1}^{(l)},{\theta^2_1}^{(l)},{\theta^1_2}^{(l)},{\theta^2_2}^{(l)}\}$ take the special solution:
    \begin{equation}
        \Theta_0=\left\{
        \begin{array}{l}
            {\theta^1_1}^{(l)}=1, {\theta^2_1}^{(l)}=1, \quad l=L-1, ..., L-{s}+1                                                                                                                                                                                                                                                                                                      \\
            {\theta^1_1}^{(l)}=0, {\theta^2_1}^{(l)}=1, \quad l=L-{s}, L-{s}-1, ..., 1                                                                                                                                                                                                                                                                                                     \\
            {\theta^1_2}^{(l)}=0, {\theta^2_2}^{(l)}=1, \quad l=L-1, ..., 1                                                                                                                                                                                                                                                                                                           \\
        \end{array}
        \right..
    \end{equation}
\end{theorem}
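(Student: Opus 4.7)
The plan is to reduce the IRMv1 stationarity condition, restricted to the special solution $\Theta_0$, to a single cross-environment consistency constraint on $\theta_1$, and then observe that for any $s\neq k$ this constraint cannot be met by a single scalar $\theta_1$ unless $\theta_2\neq 0$. The argument mirrors the intuitive sketch around Equation~(\ref{illu_t1_solu2}) but carries out the gradient computation for the multi-layer GNN of Equation~(\ref{GNN}).

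First I would evaluate the GNN under $\Theta_0$. The lower-layer multiplications telescope: on the invariant branch the product of aggregation/self matrices yields $H_1^{(L)} = (\tilde{A^e})^{s} X_1$, while on the spurious branch the identity-mapping choice of $\{{\theta_2^1}^{(l)},{\theta_2^2}^{(l)}\}$ gives $H_2^{(L)} = X_2^e = (\tilde{A^e})^m Y^e + n_2 + \epsilon^e$. Introducing the IRMv1 dummy scalar $w$ on top, the risk becomes
\begin{equation*}
R(e) \;=\; \mathbb{E}_{n_1,n_2}\!\left\| w\bigl(\theta_1 (\tilde{A^e})^{s} X_1 + \theta_2 X_2^e\bigr) - (\tilde{A^e})^{k} X_1 - n_1 \right\|_2^2 .
\end{equation*}
A direct computation of $\nabla_{w} R(e)\big|_{w=1}$, using $\mathbb{E}[n_1]=\mathbb{E}[n_2]=0$, $\mathbb{E}[n_1^\top n_2]=0$ and $\mathbb{E}[n_i^\top n_i]=N^e$, yields an expression in $\theta_1$, $\theta_2$, $(\tilde{A^e})^{s} X_1$, $(\tilde{A^e})^{k+m} X_1$, $\epsilon^e$ and $N^e$.

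Next I would impose $\nabla_{w} R(e)\big|_{w=1}=0$ for every training environment $e$ and substitute the candidate invariant value $\theta_2=0$. All $\epsilon^e$- and noise-dependent terms vanish because they enter only through the spurious branch, and the condition collapses to
\begin{equation*}
\theta_1^2\,\bigl((\tilde{A^e})^{s} X_1\bigr)^{\!\top}(\tilde{A^e})^{s} X_1 \;-\; \theta_1\,\bigl((\tilde{A^e})^{s} X_1\bigr)^{\!\top}(\tilde{A^e})^{k} X_1 \;=\; 0, \qquad \forall\, e\in\mathcal{E}_{\text{tr}},
\end{equation*}
which is exactly Equation~(\ref{illu_t1_solu2}). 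Ruling out the degenerate root $\theta_1=0$ (which is not a predictor of $Y^e$), this forces
\begin{equation*}
\theta_1 \;=\; \frac{\bigl((\tilde{A^e})^{s} X_1\bigr)^{\!\top}(\tilde{A^e})^{k} X_1}{\bigl\| (\tilde{A^e})^{s} X_1 \bigr\|_2^{2}} \;=\; \frac{X_1^{\top}(\tilde{A^e})^{s+k} X_1}{X_1^{\top}(\tilde{A^e})^{2s} X_1},
\end{equation*}
where symmetry of $\tilde{A^e}$ has been used.

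The decisive step, and the one I expect to be the main obstacle, is to argue that this ratio is genuinely environment-dependent whenever $s\neq k$, so no single scalar $\theta_1$ can satisfy all the constraints simultaneously. I would make this precise by picking any two training environments whose normalized adjacencies $\tilde{A^{e_1}}$ and $\tilde{A^{e_2}}$ have distinct spectra relative to $X_1$: expanding the numerator and denominator in the eigenbasis of each $\tilde{A^e}$ shows that the ratio equals $1$ for every $e$ only in the degenerate case $s=k$ (when the numerator and denominator coincide) or under knife-edge spectral alignment of the environmental adjacencies. Since the SCM in Equation~(\ref{data_gene}) allows $\tilde{A^e}$ to vary across $e$, such spectral alignment fails generically; concretely it suffices to exhibit two environments differing by a local edge perturbation around a node with non-trivial $X_1$. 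This contradicts the existence of a scalar $\theta_1$ satisfying the IRMv1 stationarity condition together with $\theta_2=0$, so for any such $s\in\{1,\dots,L-1\}\setminus\{k\}$ (which is non-empty since $L\geq k$) the only way to zero the IRMv1 penalty is via $\theta_2\neq 0$, i.e.\ the optimizer must rely on spurious features. This completes the proof.
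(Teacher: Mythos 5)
Your first half matches the paper's argument: under $\Theta_0$ the branches collapse to $H_1^{(L)}=(\tilde{A^e})^{s}X_1$ and an identity-mapped spurious branch, and imposing $\nabla_w R(e)\big|_{w=1}=0$ for all $e$ together with $\theta_2=0$ forces $\theta_1=\bigl((\tilde{A^e})^{k}X_1\bigr)^{\top}(\tilde{A^e})^{s}X_1\,/\,\bigl\|(\tilde{A^e})^{s}X_1\bigr\|_2^2$, which is environment-dependent when $s\neq k$; this is exactly the paper's first step (your spectral-genericity digression is more detail than the paper gives, but harmless).

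The genuine gap is in the last step. The theorem is about \emph{minimizing} the soft penalty $\mathbb{E}_e\bigl[\|\nabla_{w|w=1}R(e)\|^2\bigr]$, not about satisfying the hard constraint $\nabla_w R(e)=0$ in every environment. Showing that $\theta_2=0$ cannot drive the penalty exactly to zero does not tell you where the minimizer of the expectation actually sits; in principle the (nonzero) minimum could still be attained at $\theta_2=0$. Your closing claim that ``the only way to zero the IRMv1 penalty is via $\theta_2\neq 0$'' is also unsupported: with $\theta_2\neq 0$ the quantity $\nabla_w R(e)$ picks up environment-dependent terms in $\epsilon^e$ and $N^e$, so the penalty generally cannot be zeroed either, and the conclusion must be phrased in terms of minimizers rather than zeroers. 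The paper closes this gap by differentiating $\mathcal{L}_{\text{IRMv1}}$ with respect to $\theta_1$ and $\theta_2$, obtaining two coupled cubic stationarity equations (its Equations (\ref{irm_con_solu_t1})--(\ref{irm_con_solu_t2})), and combining them with the degeneracy argument of Appendix \ref{spec_fail_sec}: whenever the lower layers take the form (\ref{f2}), the gradients with respect to the aggregation parameters vanish automatically once those for $\theta_1,\theta_2$ do, so IRMv1 supplies no signal to push $s$ toward $k$, and the reduced stationarity system admits solutions with $\theta_2\neq 0$ whose values depend on cross-environment expectations. Your proposal omits this entire second half, so as written it does not establish the stated conclusion that optimizing the IRMv1 objective fails to reach the invariant solution.
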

\begin{proof}
    \label{proof_IRM}
    From the proof of non-graph IRMv1 case Appendix \ref{proof_VREx_IRM_suc} we know that when IRMv1 objective is optimized, 
    we have $\nabla_w R(e)=0$ for all $e$. For the graph case, the expected risk of environment $e$ is 
    \begin{equation}
        R(e)=\mathbb{E}_{n_1,n_2}[\|\theta_1 C_1^e + \theta_2 (C_2^e+Z^e) - (\tilde{A^e})^kX_1 - n_1\|_2^2 ],
    \end{equation} where the definition of $C^e_1$, $C^e_2$ and $Z^e$ follows Equation (\ref{def_C_Z}). Now let's check if 
    the invariant solution $\theta_2=0$ is a valid solution. If $\theta_2=0$ holds, then the following equation must hold for every environment $e$:
\begin{equation}
    \begin{aligned}
            \nabla_w R(e)&=\mathbb{E}_{n_1,n_2}[(\theta_1C_1^e + \theta_2 (C_2^e + Z^e) -(\tilde{A^e})^k X_1 - n_1)^\top(\theta_1 C_1^e + \theta_2 (C_2^e + Z^e))]\\
            &=\mathbb{E}_{n_1,n_2}[(\theta_1)^2 {C_1^e}^\top C_1^e + (\theta_2)^2 (C_2^e+Z^e)^\top(C_2^e+Z^e) + 2 \theta_1 \theta_2 {C_1^e}^\top(C_2^e+Z^e)\\
            & -\theta_1 {C_1^e}^\top ((\tilde{A^e})^kX_1 + n_1) - \theta_2 (C_2^e + Z^e)((\tilde{A^e})^kX_1 + n_1)]\\
            &=(\theta_1)^2 ((\tilde{A^e})^{s}X_1)^\top ((\tilde{A^e})^sX_1) - \theta_1 ((\tilde{A^e})^{k}X_1)^\top ((\tilde{A^e})^sX_1) 
    \end{aligned}
\end{equation}
When $s\neq k$, we have $ \theta_1=\frac{ ((\tilde{A^e})^{k}X_1)^\top ((\tilde{A^e})^sX_1)}{ ((\tilde{A^e})^{s}X_1)^\top ((\tilde{A^e})^sX_1) } $. The value of this solution of $\theta_1$ varies with environment $e$, and thus is not a valid solution. 

However, now we will show that optimizing IRMv1 does not necessarily lead to lower-layer parameters such that $s=k$. To reveal this, by taking the derivative of $\mathcal{L}_{\text{IRMv1}}$ w.r.t. $\theta_1$ and $\theta_2$ and let them $=0$, we can get two cubic equations:

\begin{equation}
\label{irm_con_solu_t1}
    \begin{aligned}
            \frac{\partial \mathcal{L}_{\text{IRMv1}}}{\partial \theta_1}
            =&\mathbb{E}_e[({C_1^e}^\top C_1^e (\theta_1)^2 + ({C_2^e}^\top C_2^e+2 {C_2^e}^\top \epsilon^e+2N^e + \epsilon^\top \epsilon)(\theta_2)^2\\
            +&({C_1^e}^\top C_2^e+ {C_1^e}^\top \epsilon^e)\theta_1 \theta_2 - (\tilde{A^e}^{k}X_1)^\top C_1^e \theta_1 - [(\tilde{A^e}^{k}X_1)^\top + N^e + n_1^\top \epsilon^e]\theta_2)\\
            &(2 {C_1^e}^\top {C_1^e}^\top \theta_1 + ({C_1^e}^\top C_2^e + {C_1^e}^\top \epsilon)\theta_2 - (\tilde{A^e}^{k}X_1)^\top C_1^e )]=0
    \end{aligned}
\end{equation}
and
\begin{equation}
\label{irm_con_solu_t2}
    \begin{aligned}
            \frac{\partial \mathcal{L}_{\text{IRMv1}}}{\partial \theta_2}
            =&\mathbb{E}_e[({C_1^e}^\top C_1^e (\theta_1)^2 + ({C_2^e}^\top C_2^e+2 {C_2^e}^\top \epsilon^e+2N^e + \epsilon^\top \epsilon)(\theta_2)^2\\
            +&({C_1^e}^\top C_2^e+ {C_1^e}^\top \epsilon^e)\theta_1 \theta_2 - (\tilde{A^e}^{k}X_1)^\top C_1^e \theta_1 - [(\tilde{A^e}^{k}X_1)^\top + N^e + n_1^\top \epsilon^e]\theta_2)\\
            &(2 ({C_2^e}^\top C_2^e + 2 {C_2^e}^\top \epsilon^e + 2N^e + {\epsilon^e}^\top \epsilon^e) \theta_2 + ({C_1^e}^\top C_2^e + {C_1^e}^\top \epsilon)\theta_1\\
             -& (\tilde{A^e}^{k}X_1)^\top (C_2^e + \epsilon^e)+ N^e +n_1^\top \epsilon^e )]=0
    \end{aligned}
\end{equation}

From the analysis in Appendix \ref{spec_fail_sec}, we know that as long as the lower-layer parameters take any value that satisfies the form in Equation (\ref{f2}), even if $s\neq k$, we can get $\frac{\partial \mathcal{L}_{\text{IRMv1}}}{\partial \theta_1}=\frac{\partial \mathcal{L}_{\text{IRMv1}}}{\partial \theta_1^1}=\frac{\partial \mathcal{L}_{\text{IRMv1}}}{\partial \theta_1^2}$ and $\frac{\partial \mathcal{L}_{\text{IRMv1}}}{\partial \theta_2}=\frac{\partial \mathcal{L}_{\text{IRMv1}}}{\partial \theta_2^1}=\frac{\partial \mathcal{L}_{\text{IRMv1}}}{\partial \theta_2^2}$. Thus, IRM cannot necessarily learn a $s=k$. At this time (when $\frac{\partial \mathcal{L}_{\text{IRMv1}}}{\partial \theta_1}=\frac{\partial \mathcal{L}_{\text{IRMv1}}}{\partial \theta_1^1}=\frac{\partial \mathcal{L}_{\text{IRMv1}}}{\partial \theta_1^2}=0$ and $\frac{\partial \mathcal{L}_{\text{IRMv1}}}{\partial \theta_2}=\frac{\partial \mathcal{L}_{\text{IRMv1}}}{\partial \theta_2^1}=\frac{\partial \mathcal{L}_{\text{IRMv1}}}{\partial \theta_2^2}=0$ but $s\neq k$), from the form of Equations (\ref{irm_con_solu_t1}) and (\ref{irm_con_solu_t2}) we know that there exist solutions that $\theta_2\neq 0$, and the solution of $\theta_1$ and $\theta_2$ both depend on $\mathbb{E}_e(F(e))$, where $F(e)$ is some random variable associated with $e$. 

\end{proof}

\subsubsection{Proof of the Successful on Graphs Case of CIA under Concept Shift}
\begin{theorem}
    Optimizing the CIA objective will lead to the optimal solution $\Theta^*$:
    \begin{equation}
        \left\{
        \begin{array}{l}
            \theta_1= 1                                                                                      \\
            \theta_2=0 \quad \text{and} \quad  \exists l\in \{1,...,L-1\} ~\text{s.t.}~ {\theta_2^1}^{(l)}={\theta_2^2}^{(l)}=0 \\
            {\theta^1_1}^{(l)}=1, {\theta^2_1}^{(l)}=1, \quad l=L-1, ..., L-k+1                           \\
            {\theta^1_1}^{(l)}=0, {\theta^2_1}^{(l)}=1, \quad l=L-k, L-k-1, ..., 1                           \\
        \end{array}
        \right..
    \end{equation}
\end{theorem}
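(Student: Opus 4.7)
The plan is to split the argument into two stages that exploit the two parts of the CIA objective separately: first use the class-conditional alignment constraint to annihilate the spurious branch of the GNN, then use the ERM loss to pin down the correct causal-pattern depth, reusing the same cross-environment-mismatch mechanic that drove the failure of IRMv1/VREx in Theorem~\ref{prop1_formal}.

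First, I would fix a pair $(i,j) \in \Omega^{e,e'}_c$ with $Y^e_i = Y^{e'}_j = c$ and decompose $\phi_\Theta(i) - \phi_\Theta(j)$ using the notation $C_1^e, C_2^e, Z^e$ of Equation~(\ref{def_C_Z}), so that the difference breaks into a causal term $\theta_1 (C_1^e(i) - C_1^{e'}(j))$, a label-driven spurious term $\theta_2 (C_2^e(i) - C_2^{e'}(j))$, and an environment-driven term $\theta_2 (Z^e(i) - Z^{e'}(j))$. By Assumption~\ref{assump_stable_causal}, the $k$-hop causal patterns attached to same-class nodes have the same distribution across environments, and since $C_2^e$ is built from $(\tilde{A^e})^{k+m} X_1$ as a function of $Y^e = c$, its distribution is also class-determined. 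Taking expectations over pairs in $\Omega^{e,e'}_c$ and over $(n_1,n_2)$ therefore kills the first two contributions, and the only component whose squared expectation survives in $\mathcal{L}_{\mathrm{CIA}}$ is the one carrying $\epsilon^e - \epsilon^{e'}$, entering as $\theta_2$ times the product of the spurious-branch lower-layer weights acting on $\epsilon^e - \epsilon^{e'}$.

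Second, because $\{\epsilon^e\}$ are independent across environments with non-vanishing cross-environment variance $\sigma^2 > 0$, driving $\mathcal{L}_{\mathrm{CIA}} = 0$ forces the coefficient of $\epsilon^e - \epsilon^{e'}$ to vanish, i.e.\ either $\theta_2 = 0$ or there exists some $l \in \{1,\ldots,L-1\}$ with ${\theta_2^1}^{(l)} = {\theta_2^2}^{(l)} = 0$, recovering line~2 of $\Theta^*$ in Equation~(\ref{optimal_solu}). With the spurious branch annihilated, the ERM term collapses to $\mathbb{E}_e \|\theta_1 C_1^e - (\tilde{A^e})^k X_1 - n_1\|_2^2$, and by Equation~(\ref{def_C_Z}) $C_1^e = (\tilde{A^e})^{s} X_1$ where the depth $s$ is determined by the lower-layer weights $\{{\theta^1_1}^{(l)},{\theta^2_1}^{(l)}\}$. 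The same mismatch argument used around Equation~(\ref{illu_t1_solu2}) shows that no fixed $\theta_1$ can match $(\tilde{A^e})^{s} X_1$ to $(\tilde{A^e})^k X_1$ across all environments unless $s = k$, and the unique depth-$k$ realisation is precisely the aggregation pattern in lines~3--4 of $\Theta^*$, with $\theta_1 = 1$ following from the first-order optimality condition.

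The main obstacle is the first step: rigorously showing that, once we condition on class within $\Omega^{e,e'}_c$, the only cross-environment residual in $\phi_\Theta(i) - \phi_\Theta(j)$ is the $\epsilon^e - \epsilon^{e'}$ term. This requires applying Assumption~\ref{assump_stable_causal} uniformly over same-class pairs rather than merely in expectation over environments, so that the neighbourhood structures driving $C_1^e$ and the label-driven part of $C_2^e$ match in distribution and not just in some weaker average sense. A secondary subtlety is to rule out accidental cancellation between the $\epsilon^e$-driven and class-driven components for unusual $\epsilon^e$ distributions; this should follow from the independence and non-degeneracy of $\{\epsilon^e\}$ already built into the SCM of Section~\ref{theo_model}.
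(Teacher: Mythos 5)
There is a genuine gap at the heart of your first stage. You claim that, after conditioning on the class within $\Omega^{e,e'}_c$ and averaging over pairs and noise, the causal term $\theta_1\bigl(C_1^e(i)-C_1^{e'}(j)\bigr)$ and the label-driven spurious term drop out of $\mathcal{L}_{\text{CIA}}$, leaving only the $\epsilon^e-\epsilon^{e'}$ contribution. But $\mathcal{L}_{\text{CIA}}$ is an expectation of a \emph{squared} distance, not the squared norm of an expectation: Assumption \ref{assump_stable_causal} gives at most that same-class causal patterns agree across environments in a distributional sense, which makes first moments of the differences comparable but does nothing to kill the second moment $\mathbb{E}\|C_1^e(i)-C_1^{e'}(j)\|_2^2$, which is strictly positive for distinct nodes. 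If your reduction were valid, the same reasoning applied to the surviving causal term would force the invariant branch to collapse as well, which is exactly the degenerate solution the theorem must exclude. The paper's proof never discards the causal differences; it writes the two stationarity conditions $\partial\mathcal{L}_{\text{CIA}}/\partial\theta_1=0$ and $\partial\mathcal{L}_{\text{CIA}}/\partial\theta_2=0$ jointly, eliminates $\theta_1$ between them, and obtains $\theta_2\cdot F=0$ where $F$ contains $-\frac{[(C_1^i-C_1^j)^\top(C_2^i-C_2^j)]^2}{(C_1^i-C_1^j)^\top(C_1^i-C_1^j)}+(C_2^i-C_2^j)^\top(C_2^i-C_2^j)+(Z^e-Z^{e'})^\top(Z^e-Z^{e'})$; Cauchy--Schwarz makes the first two terms nonnegative and the strictly positive environmental variance in $Z^e-Z^{e'}$ makes $F>0$ unless a spurious-branch layer vanishes, hence $\theta_2=0$. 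This is also what rigorously disposes of the "accidental cancellation" worry you defer at the end.

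Your second stage has a smaller but real misstep: the cross-environment mismatch mechanic around Equation (\ref{illu_t1_solu2}) lives at the level of \emph{per-environment} conditions ($\nabla_w R(e)=0$ for every $e$, as in IRMv1), whereas the ERM term only yields a single averaged first-order condition $\mathbb{E}_e[(C_1^e\theta_1-(\tilde{A^e})^kX_1)^\top C_1^e]=0$, which a fixed $\theta_1$ can satisfy even when $s\neq k$. To pin down $s=k$ you must argue via the value of the risk (only the depth-$k$ aggregation with $\theta_1=1$ attains the noise floor simultaneously in all environments), or, as the paper does, exhibit the depth-$k$ solution and verify that all remaining gradients of both the ERM loss and $\mathcal{L}_{\text{CIA}}$ vanish there. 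So the overall two-stage architecture (alignment kills the spurious branch, ERM then fixes $\theta_1$ and the depth) matches the paper, but both of your key steps need to be replaced by the stationarity-plus-Cauchy--Schwarz argument and a risk-minimization (not per-environment) argument, respectively.
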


\begin{proof}
\label{proof_CIA}
    For brevity, denote a node representation of $ {C_1^e}_c $ as $ C_1^i $ and the one of $ {C_1^{e'}}_c $ as $ C_1^j $. 
    The same is true for $ C_2^i $ and $ C_2^j $. In this toy model, we need to consider the expectation of the noise, while in 
    real cases such noise is included in the node features so taking expectation on $e$ will
    handle this. Therefore, we add $ \mathbb{E}_{n_1,n_2} $ in this proof, and this expectation is excluded in
    the formal description of the objective in the main paper.
    \begin{equation}
        \begin{aligned}
            \mathcal{L}_{\text{CIA}}
            &=\mathbb{E}_{\substack{e, e'\\e\neq e'}} \mathbb{E}_{n_1,n_2}  \mathbb{E}_{c} \mathbb{E}_{\substack{i,j\\(i,j)\in \Omega^{e,e'}}}\left[\mathcal{D}(\phi_\Theta({A^e}, X^e)_{[c][v_i]}, \phi_\Theta(A^{e}, X^{e'})_{[c][v_j]})\right]\\
            &=\mathbb{E}_{\substack{e, e'\\e\neq e'}} \mathbb{E}_{n_1,n_2}  \mathbb{E}_{c} \mathbb{E}_{\substack{i,j\\(i,j)\in \Omega^{e,e'}}}\| {C_1^i}\theta_1 + (C_2^i+Z^e)\theta_2 - {C_1^j}\theta_1 - (C_2^j+Z^{e'})\theta_2  \|^2_2
        \end{aligned}
    \end{equation}
    \begin{equation}
        \frac{\partial \mathcal{L}_{\text{CIA}}}{ \partial \theta_1}=\mathbb{E}_{\substack{e, e'\\e\neq e'}} \mathbb{E}_{n_1,n_2}  \mathbb{E}_{c} \mathbb{E}_{\substack{i,j\\(i,j)\in \Omega^{e,e'}}}\left[{C_1^i}\theta_1 + ({C_2^i}+Z^e)\theta_2 - {C_1^{j}}\theta_1 - ({C_2^{j}}+Z^{e'})\theta_2\right]^\top ({C_1^i}-C_1^j)
    \end{equation}
    Let $ \frac{\partial \mathcal{L}_{\text{CIA}}}{ \partial \theta_1}=0 $, we have:
    \begin{equation}
        \label{cia_par1}
        \mathbb{E}_{\substack{e, e'\\e\neq e'}}  \mathbb{E}_{c} \mathbb{E}_{\substack{i,j\\(i,j)\in \Omega^{e,e'}}} \left[(C_1^i-C_1^j)^\top(C_1^i-C_1^j)\theta_1 + (C_2^i-C_2^k)^\top (C_1^i-C_1^j)\theta_2 \right]=0
    \end{equation}
    Also, we have:
    \begin{equation}
        \frac{\partial \mathcal{L}_{\text{CIA}}}{ \partial \theta_2}=\mathbb{E}_{\substack{e, e'\\e\neq e'}}  \mathbb{E}_{c} \mathbb{E}_{\substack{i,j\\(i,j)\in \Omega^{e,e'}}}\left[(C_1^i-C_1^j)^\top(C_2^i-C_2^j)\theta_1 + \left[(C_2^i-C_2^k)^\top (C_2^i-C_2^j) + (Z^e-Z^{e'})^\top(Z^e-Z^{e'})\right]\theta_2 \right]
    \end{equation}
    Further let $  \frac{\partial \mathcal{L}_{\text{CIA}}}{ \partial \theta_2}=0 $, combining Equation (\ref{cia_par1}) and using Assumption \ref{assump_stable_causal} we get
    \begin{equation}
        \left\{
            \begin{array}{l}
                \theta_1=0 \quad \text{or} \quad  \exists l\in \{1,...,L-1\} ~\text{s.t.}~ {\theta_1^1}^{(l)}={\theta_1^2}^{(l)}=0 \\
                \theta_2=0
            \end{array}
        \right..
    \end{equation}

    or, if $ \theta_1\neq 0 $ and $ \forall l~ \in \{1,...,L-1\} $, the parameters of that layer $l$ of the invariant branch of the 
    GNN are not all zero: $ {\theta_1^1}^{(l)}\neq 0 ~\text{or}~ {\theta_1^2}^{(l)}\neq 0 ~$, then we get

    \begin{equation}
        \theta_2 \underbrace{ \mathbb{E}_{\substack{e, e'\\e\neq e'}}  \mathbb{E}_{c} \mathbb{E}_{\substack{i,j\\(i,j)\in \Omega^{e,e'}}}\left[ -\frac{[(C_1^i-C_1^j)^\top(C_2^i-C_2^j)]^2}{(C_1^i-C_1^j)^\top(C_1^i-C_1^j)}+(C_2^i-C_2^j)^\top(C_2^i-C_2^j)+(Z^e-Z^{e'})^\top(Z^e-Z^{e'}) \right]}_{F}=0
    \end{equation}
    According to Cauchy–Schwarz inequality, $ F>0 $ unless $ \exists l\in \{1,...,L-1\} ~\text{s.t.}~ {\theta_2^1}^{(l)}={\theta_2^2}^{(l)}=0 $.
    To ensure $ \frac{\partial \mathcal{L}_{\text{CIA}}}{ \partial \theta_2} $, we conclude that $ \theta_2=0 $ or $ \exists l\in \{1,...,L-1\} ~\text{s.t.}~ {\theta_2^1}^{(l)}={\theta_2^2}^{(l)}=0 $.

    In conclusion, to satisfy the constraint of CIA, no matter whether the invariant branch has zero output, 
    the spurious branch must have zero parameters, i.e.,
    \begin{equation}
        \label{sp_out}
        \theta_2=0 \quad \text{or} \quad  \exists l\in \{1,...,L-1\} ~\text{s.t.}~ {\theta_2^1}^{(l)}={\theta_2^2}^{(l)}=0
    \end{equation}
    Thus, CIA will remove spurious features.
    
    Now we show that when CIA objective has been reached (the spurious branch has zero outputs),  the objective of $ \min_\Theta \mathbb{E}_e[\mathcal{L}(f_\Theta({A^e},X^e),Y^e)] $ 
    will help to learn predictive paramters of the invariant branch $ \theta_1 $, $ {\theta_1^1}^(l) $ and $ {\theta_1^2}^(l) $.
    When Equation (\ref{sp_out}) holds, 

    \begin{equation}
        \begin{aligned}
            \frac{\partial  \mathbb{E}_e[\mathcal{L}(f_\Theta({A^e},X^e),Y^e)]}{ \partial \theta_1}&=2\mathbb{E}_e \mathbb{E}_{n_1,n_2} \left[ \left({C_1^e}\theta_1-(\tilde{A^e})^kX_1-n_1 \right)^\top C_1^e \right]\\
            &=2 \mathbb{E}_e \left[ \left({C_1^e}\theta_1-(\tilde{A^e})^kX_1 \right)^\top C_1^e \right]
        \end{aligned}
    \end{equation}

    Let $ \frac{\partial  \mathbb{E}_e[\mathcal{L}(f_\Theta({A^e},X^e),Y^e)]}{ \partial \theta_1} =0$, we get the predictive parameters
    \begin{equation}
        \left\{
        \begin{array}{l}
            \theta_1= 1                                                                                      \\
            {\theta^1_1}^{(l)}=1, {\theta^2_1}^{(l)}=1, \quad l=L-1, ..., L-k+1                           \\
            {\theta^1_1}^{(l)}=0, {\theta^2_1}^{(l)}=1, \quad l=L-k, L-k-1, ..., 1                           \\
        \end{array}
        \right..
    \end{equation}
    Plug the final solution back in $ \frac{\partial \mathcal{L}_{\text{CIA}}}{ \partial {\theta^1_1}^{(l)}} $, 
    $ \frac{\partial \mathcal{L}_{\text{CIA}}}{ \partial {\theta^2_1}^{(l)}} $,
    $ \frac{\partial \mathcal{L}_{\text{CIA}}}{ \partial {\theta^1_2}^{(l)}} $,
    $ \frac{\partial \mathcal{L}_{\text{CIA}}}{ \partial {\theta^2_2}^{(l)}} $, 
    $ \frac{\partial  \mathbb{E}_e[\mathcal{L}(f_\Theta({A^e},X^e),Y^e)]}{ \partial {\theta^1_1}^{(l)}} $, 
    $ \frac{\partial  \mathbb{E}_e[\mathcal{L}(f_\Theta({A^e},X^e),Y^e)]}{ \partial {\theta^2_1}^{(l)}} $,
    $ \frac{\partial \mathbb{E}_e[\mathcal{L}(f_\Theta({A^e},X^e),Y^e)]}{ \partial {\theta^1_2}^{(l)}} $,
    $ \frac{\partial  \mathbb{E}_e[\mathcal{L}(f_\Theta({A^e},X^e),Y^e)]}{ \partial {\theta^2_2}^{(l)}} $, 
    we can verify that these terms are all when further letting $ \exists l\in \{1,...,L-1\} ~\text{s.t.}~ {\theta_2^1}^{(l)}={\theta_2^2}^{(l)}=0$.
\end{proof}

\subsection{Proof of the Covariate Shift Case}
\label{proof_cov}
\subsubsection{Proof of the non-Graph Success Case of VREx and IRMv1 under Covariate Shift}
\label{proof_IRM_VREx_suc_cov}
We restate Proposition \ref{prop_VREx_IRM_suc_cov} as Proposition \ref{prop_VREx_IRM_suc_cov2} below.
\begin{proposition}
\label{prop_VREx_IRM_suc_cov2}
    \textbf{(VREx and IRMv1 learn invariant features for non-graph tasks under covariate shift, proof is in )} For the non-graph version of the SCM in Equation (\ref{data_gene_cov}), 
    \begin{equation}
        Y^e=X_1+n_1,~X_2^e=n_2+{\epsilon^e},    
    \end{equation}
    Optimizing VREx $\min_\Theta \mathcal{L}_{\text{VREx}}=\mathbb{V}_e[R(e)]$ and IRMv1 $\min_\Theta \mathcal{L}_{\text{IRMv1}}=\mathbb{E}_e[\|\nabla_{w|w=1.0} R(e)\|^2]$ will learn invariant features when using a 1-layer linear network: $f(X)=\theta_1 X_1 + \theta_2 X_2$.
\end{proposition}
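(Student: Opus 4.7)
My plan is to mirror the template of Proposition~\ref{prop_VREx_IRM_suc} (the non-graph concept-shift case), adapted to the covariate-shift generative model in which $X_2^e = n_2 + \epsilon^e$ is decoupled from $Y^e = X_1 + n_1$. The structural observation is that because the label contains no $\epsilon^e$-dependent component, every environment-dependent piece of $R(e)$ (and hence of its gradient at $w = 1$) is proportional to a positive power of $\theta_2$. Consequently, enforcing either IRMv1's per-environment gradient condition or VREx's zero-variance condition will force $\theta_2 = 0$, after which the empirical-risk term pins $\theta_1 = 1$.

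The first step is to expand
\begin{equation*}
R(e) = \mathbb{E}_{n_1,n_2,\epsilon^e}\bigl\|(\theta_1-1)X_1 - n_1 + \theta_2 n_2 + \theta_2 \epsilon^e\bigr\|_2^2,
\end{equation*}
which, using $\mathbb{E}[n_i]=\mathbf{0}$, $\mathbb{E}[n_i^\top n_i]=N$, mutual independence, and the intra-environment identity $\mathbb{E}_{\epsilon^e \sim p_e}[\epsilon^e] = \mu^e \mathbf{1}$, collapses to $(\theta_1-1)^2\|X_1\|_2^2 + N + \theta_2^2 N + \theta_2^2 \mathbb{E}_{\epsilon^e}[{\epsilon^e}^\top \epsilon^e] + 2\theta_2(\theta_1-1)\mu^e X_1^\top \mathbf{1}$. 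The only $e$-dependent terms are the last two, both proportional to $\theta_2$. For IRMv1 I would then compute $\nabla_w R(e)|_{w=1}$ by the same expansion used in Appendix~\ref{proof_VREx_IRM_suc}, arriving at $2\bigl[\theta_1(\theta_1-1)\|X_1\|_2^2 + (2\theta_1-1)\theta_2 \mu^e X_1^\top \mathbf{1} + \theta_2^2(N + \mathbb{E}_{\epsilon^e}[{\epsilon^e}^\top \epsilon^e])\bigr]$. Since a single pair $(\theta_1,\theta_2)$ must annihilate this for every $e$ while $\mu^e$ and the intra-environment second moment of $\epsilon^e$ genuinely vary across $e$, both $\theta_2$-bearing terms must vanish identically, forcing $\theta_2 = 0$; substituting back leaves $\theta_1(\theta_1-1)\|X_1\|_2^2 = 0$, and selecting the risk minimizer gives $\theta_1 = 1$. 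For VREx, subtracting $\mathbb{E}_e[R(e)]$ eliminates the $e$-independent constants and leaves a remainder of the form $\theta_2 \cdot g(e;\theta_1,\theta_2)$, so that $\mathbb{V}_e[R(e)]$ is a nonnegative form in $\theta_2$ that is strictly positive whenever $\theta_2 \neq 0$ and $\sigma^2 > 0$; hence stationarity again demands $\theta_2 = 0$, and the outer ERM term in the constrained objective yields $\theta_1 = 1$.

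The main obstacle is the bookkeeping for three layers of randomness: the Gaussian noise $n_1, n_2$, the intra-environment draw of $\epsilon^e$ with mean $\mu^e$, and the cross-environment draw of $\mu^e$ with $\mathbb{E}_e[\mu^e]=\mathbf{0}$ and $\mathbb{V}_e[\mu_i^e] = \sigma^2$. Once these are kept separate so that $\mathbb{V}_e[\mu^e X_1^\top \mathbf{1}]$ and $\mathbb{V}_e[\mathbb{E}_{\epsilon^e}[{\epsilon^e}^\top \epsilon^e]]$ can be certified nonzero, the argument is strictly shorter than the concept-shift case of Proposition~\ref{prop_VREx_IRM_suc} because no label-coupled spurious cubic term appears, so everything closes cleanly via the $\theta_2$-factorization above.
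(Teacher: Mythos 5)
Your proposal is correct and follows essentially the same route as the paper's proof in the appendix: expand the per-environment MSE risk, observe that every environment-dependent term carries a factor of $\theta_2$, conclude that IRMv1's requirement $\nabla_w R(e)=0$ for all $e$ forces $\theta_2=0$ and then $\theta_1=1$ via the risk term, and that VREx likewise forces $\theta_2=0$. The only cosmetic differences are that you average over the intra-environment draw of $\epsilon^e$ (working with $\mu^e$) where the paper keeps $\epsilon^e$ itself, and that for VREx you argue directly that the variance is nonnegative and vanishes only at $\theta_2=0$ rather than solving the stationarity conditions $\partial\mathbb{V}_e[R(e)]/\partial\theta_1=\partial\mathbb{V}_e[R(e)]/\partial\theta_2=0$ as the paper does; both versions invoke the same mild non-degeneracy of the cross-environment variation of $\mu^e$ and $\mathbb{E}_{\epsilon^e}[{\epsilon^e}^\top\epsilon^e]$, so nothing of substance changes.
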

\begin{proof}
    \textbf{VREx.} For covariate shift, denote $X_1 \theta_1+X_2^e\theta_2-X_1-n_1$ as $l_e$, the expected risk in environment $e$ is 
    $ R(e)=\mathbb{E}_{n_1,n_2}\|\theta_1 X_1 + \theta_2 (n_2 + \epsilon^e) - (X_1+n_1) \|^2_2 $.
    Take the derivative of the VREx objective $\mathbb{V}_e[R(e)]$ with respect to $\theta_1$:
        \begin{equation}
            \label{eqqq1}
            \begin{aligned}
                \frac{\partial \mathbb{V}_e[R(e)]}{\partial \theta_1} & =\mathbb{E}_e\left[2\mathbb{E}_{n_1,n_2}\left[l_e^\top l_e\right]\mathbb{E}_{n_1,n_2}\left[2l_e^\top X_1\right]\right]                              \\
                                                                    & -2\mathbb{E}_e\left[\mathbb{E}_{n_1,n_2}\left[l_e^\top l_e\right]\right]\mathbb{E}_e\left[\mathbb{E}_{n_1,n_2}\left[ 2 l_e^\top X_1 \right] \right]
            \end{aligned}
        \end{equation}
    Calculate these terms:
    \begin{equation}
        \begin{aligned}
                &\mathbb{E}_e[\mathbb{E}_{n_1,n_2}[l_e^\top l_e]\mathbb{E}_{n_1,n_2}[l_e^\top X_1]]\\
                =&(\theta_1)^3 (X_1^\top X_1)^2 + \theta_1 (\theta_2)^2 (3 X_1^\top X_1 \sigma^2 + X_1^\top X_1\mathbb{E}_e [N^e]) + (\theta_2)^3 \mathbb{E}_e[(\epsilon^e)^\top \epsilon (\epsilon^e)^\top] X_1\\
                +& (\theta_1)^2 (X_1^\top X_1 - 2 (X_1^\top X_1)^2) - (\theta_2)^2 (X_1^\top X_1)^2 (3 \sigma^2 +\mathbb{E}_e [N^e])\\
                +& \theta_1 (X_1^\top X_1)^2 ( 3 (X_1^\top X_1)^2 +\mathbb{E}_e [N^e]) - (X_1^\top X_1)^2 ((X_1^\top X_1)^2 +\mathbb{E}_e [N^e])\\
                &\mathbb{E}_e[\mathbb{E}_{n_1,n_2}[l_e^\top l_e]]\mathbb{E}_e[\mathbb{E}_{n_1,n_2}[l_e^\top X_1]]\\
                =&(\theta_1)^3 (X_1^\top X_1)^2 + \theta_1 (\theta_2)^2 ( X_1^\top X_1 \sigma^2 + X_1^\top X_1\mathbb{E}_e [N^e]) \\
                +& (\theta_1)^2 (X_1^\top X_1 - 2 (X_1^\top X_1)^2) - (\theta_2)^2 (X_1^\top X_1)^2 ( \sigma^2 +\mathbb{E}_e [N^e])\\
                +& \theta_1 (X_1^\top X_1)^2 ( 3 (X_1^\top X_1)^2 +\mathbb{E}_e [N^e]) - (X_1^\top X_1)^2 ((X_1^\top X_1)^2 +\mathbb{E}_e [N^e])
        \end{aligned}
    \end{equation}
    Hence, 
    \begin{equation}
        \begin{aligned}
            \frac{\partial \mathbb{V}_e[R(e)]}{\partial \theta_1} = \theta_1 (\theta_2)^2 (2 X_1^\top X_1 \sigma^2) + (\theta_2)^3 X_1 - 2 (\theta_2)^2 X_1 \sigma^2
        \end{aligned}
    \end{equation}
    Let $ \frac{\partial \mathbb{V}_e[R(e)]}{\partial \theta_1} =0 $, we have $\theta_2=0$.

    When $\theta_2=0$ and when $\theta_1=1$, we get
    \begin{equation}
        \mathbb{E}_{n_1,n_2}[l_e^\top (n_2+\epsilon^e)]=\theta_1 X_1^\top \epsilon - X_1^\top \epsilon^e=0.
    \end{equation}
    As a result, 
    \begin{equation}
        \begin{aligned}
            \frac{\partial \mathbb{V}_e[R(e)]}{\partial \theta_2}&=\mathbb{E}_e\left[2\mathbb{E}_{n_1,n_2}\left[l_e^\top l_e\right]\mathbb{E}_{n_1,n_2}\left[2l_e^\top (n_2 + \epsilon^e)\right]\right]                              \\
            & -2\mathbb{E}_e\left[\mathbb{E}_{n_1,n_2}\left[l_e^\top l_e\right]\right]\mathbb{E}_e\left[\mathbb{E}_{n_1,n_2}\left[ 2 l_e^\top (n_2 + \epsilon^e) \right] \right]\\
            &=0-0\\
            &=0
        \end{aligned}
    \end{equation}
    In conclusion, $\theta_1=1$ and $\theta_2=0$ is the solution of the VREx objective in this non-graph covariate shift task.
    
    \textbf{IRMv1.}  The objective of IRMv1 is $\mathbb{E}_e \|\nabla_w R(e) \|^2_2$. When IRMv1 loss is optimized to zero,
    we have $\nabla_w R(e)=0$ for all environments $e$.
    \begin{equation}
        \begin{aligned}
                \nabla_w R(e)&=\mathbb{E}_{n_1,n_2}[2(\theta_1X_1+\theta_2X_2-(X_1 + n_1))(\theta_1X_1 +\theta_2X_2)]\\
                &=2((\theta_1)^2 X_1^\top X_1 + (\theta_2)^2((\epsilon^e)^\top \epsilon^e+ {N^e}) + 2 \theta_1 \theta_2 X_1^\top \epsilon^e \\
                &- \theta_1 X_1^\top X_1 - \theta_2 X_1^\top \epsilon^e)
        \end{aligned}
    \end{equation}
    To realize $\nabla_w R(e)=0$ for all $e$, we must let $\theta_2=0$ (and consequently $\theta_1=1$), otherwise 
    the solution of $\theta_2$ will include terms related to $\epsilon^e$ and $N^e$ that vary with environments, and a single value $\theta_2$
    cannot fit all these values. Thus we finish the proof for IRMv1.
\end{proof}

\subsubsection{Proof of the Failure Case on Graphs of VREx under Covariate Shift}
We restate Theorem \ref{VREx_fail_cov} as Theorem \ref{VREx_fail_cov2} below:
\begin{theorem}
\label{VREx_fail_cov2}
    \textbf{(VREx will use spurious features on graphs under covariate shift)}  Under the SCM of Equation (\ref{data_gene_cov}), the objective $\min_\Theta \mathbb{V}_e[R(e)]$ has non-unique solutions for parameters of the GNN (\ref{GNN}) when part of the model parameters $\{{\theta^1_1}^{(l)},{\theta^2_1}^{(l)},{\theta^1_2}^{(l)},{\theta^2_2}^{(l)}\}$ take the values 
    \begin{equation}
        \Theta_0=\left\{
        \begin{array}{l}
            {\theta^1_1}^{(l)}=1, {\theta^2_1}^{(l)}=1, \quad l=L-1, ..., L-s+1  \\
            {\theta^1_1}^{(l)}=0, {\theta^2_1}^{(l)}=1, \quad l=L-s, L-s-1, ..., 1 \\
            {\theta^1_2}^{(l)}=0, {\theta^2_2}^{(l)}=1, \quad l=L-1, ..., 1       \\
        \end{array}
        \right.,
    \end{equation} $0<s<L$ is some positive integer, $\theta_1$ and $\theta_2$ have four sets of solutions of the quadratic equation, some of which are spurious solutions that $\theta_2\neq 0$ (although $\theta_2=0$ is indeed one of the solutions, VREx is not guaranteed to reach this solution):
    \begin{equation}
        \left\{
        \begin{array}{l}
            c_1\sigma^2(2\theta_1 \theta_2+ (\theta_2)^2-2 c_2 \sigma^2 \theta_2)+c_3\theta_2-\mathbb{E}_e[N^e]c_1 \sigma^2 \theta_1 \theta_2 + \mathbb{E}_e[N^e]c_2 \sigma^2 \theta_2 =0 \\
            \left[ c_3\theta_2-\mathbb{E}_e[N^e]c_1 \sigma^2 \theta_1 \theta_2 + \mathbb{E}_e[N^e]c_2 \sigma^2 \theta_2 \right]c_4-c_5(\theta_2)^2=0
        \end{array}
        \right. .
    \end{equation}
    where  $ c_1= \mathbb{E}[ (\tilde{{A^e}}^{s}X_1)^\top (\tilde{{A^e}}^{s}X_1)]$, $c_2= \mathbb{E}[(\tilde{{A^e}}^{s}X_1)^\top (\tilde{{A^e}}^{k}X_1)] $, 
    $ c_3=\mathbb{E}_e[{\epsilon^e}^\top {\epsilon^e} {\epsilon^e}^\top (\tilde{{A^e}}^{s} X_1)] \sigma^2$, $c_4=\mathbb{E}_e\left[  ({(\tilde{A^e})^kX_1})^\top \mathbf{1_{N^e}} \right]\sigma^2 $,
    $ c_5=\mathbb{E}_e \left[N^e\left(\text{tr} ( (\tilde{{A^e}}^{k})^\top \tilde{{A^e}}^{k}) + N^e (1+\sigma^2)  \right)   \right] $.
\end{theorem}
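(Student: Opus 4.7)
The plan is to mirror the concept-shift proof of Theorem \ref{prop1_formal}, adapting the algebra to the covariate-shift SCM of Equation (\ref{data_gene_cov}), where $X_2^e = n_2 + \epsilon^e$ now carries no signal from $X_1$. First I would fix the lower-layer parameters to the special form $\Theta_0$. By the same chain of matrix products that led to Equation (\ref{def_C_Z}), this collapses the invariant branch to $H_1^{(L)} = \tilde{A^e}^{s} X_1$ and turns the spurious branch into a product of $\bar{I}$ factors acting on $X_2^e = n_2 + \epsilon^e$. Absorbing the diagonal prefactor into the noise and keeping the notation of the concept-shift proof, the output becomes $f_\Theta(A^e,X^e) = \tilde{A^e}^{s} X_1\,\theta_1 + (n_2+\epsilon^e)\,\theta_2$, so the residual is $l_e = \tilde{A^e}^{s} X_1\,\theta_1 + (n_2+\epsilon^e)\,\theta_2 - \tilde{A^e}^{k} X_1 - n_1$. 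The crucial simplification over the concept case is that there is no $C_2^e$-type contribution involving $X_1$ in the spurious branch, which is exactly why the coefficient $c_5$ here loses the cross term $(\tilde{A^e}^k X_1)^\top \tilde{A^e}^s X_1$ that appeared in Theorem \ref{prop1_formal}.

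Next I would compute $R(e) = \mathbb{E}_{n_1,n_2}[l_e^\top l_e]$ and expand using $\mathbb{E}[n_1]=\mathbb{E}[n_2]=0$, $\mathbb{E}[n_1^\top n_1]=\mathbb{E}[n_2^\top n_2]=N^e$, independence of $n_1,n_2,\epsilon^e$, and the assumed cross-environment moments $\mathbb{E}_e[\epsilon^e]=\mathbf{0}$, $\mathbb{E}_e[(\epsilon_i^e)^2]=\sigma^2$. This reduces $R(e)$ to a polynomial in $\theta_1,\theta_2$ whose coefficients depend on $e$ through $\tilde{A^e}^s X_1$, $\tilde{A^e}^k X_1$, $N^e$, and $\epsilon^e$. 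Plugging into $\mathbb{V}_e[R(e)] = \mathbb{E}_e[R(e)^2] - \mathbb{E}_e[R(e)]^2$, I would take partials with respect to $\theta_1$ and $\theta_2$ exactly in the style of Equations (\ref{eqq1})--(\ref{eelc}), retaining only terms that survive after subtracting the product of expectations (many of the pure $X_1$ and pure $n_1$ terms cancel between the two halves of the variance).

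Setting $\partial \mathbb{V}_e[R(e)]/\partial \theta_1 = 0$ should collapse to the first cubic relation in the theorem, whose structure $c_1\sigma^2(2\theta_1\theta_2+\theta_2^2) - 2 c_2 \sigma^2 \theta_2 + c_3\theta_2 - \mathbb{E}_e[N^e]\,c_1 \sigma^2 \theta_1\theta_2 + \mathbb{E}_e[N^e]\,c_2 \sigma^2 \theta_2 = 0$ arises from the $\theta_2^3$-, $\theta_1\theta_2^2$-, and $\epsilon^\top\epsilon\,\epsilon^\top X_1$-type terms of the quartic $l_e^\top l_e \cdot l_e^\top X_1$. Setting $\partial \mathbb{V}_e[R(e)]/\partial \theta_2=0$ and substituting the first relation should yield the second equation, where the factor $c_4 = \mathbb{E}_e[(\tilde{A^e}^k X_1)^\top\mathbf{1}_{N^e}]\sigma^2$ comes from the expectation of $l_e^\top(n_2+\epsilon^e)$ interacting with the $N^e\sigma^2$ pieces. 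Inspection of the coupled system shows $\theta_2=0$ (together with $\theta_1=1$ from ERM) is always a root, but because $c_3, c_4, c_5$ are generically nonzero the cubic admits additional real roots with $\theta_2 \neq 0$, establishing non-uniqueness. Finally, invoking the argument of Appendix \ref{spec_fail_sec}, $\partial \mathcal{L}/\partial \theta_i = 0$ under $\Theta_0$ forces $\partial \mathcal{L}/\partial {\theta^j_i}^{(l)}=0$, so these stationary points lift to stationary points of the full parameter vector.

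The main obstacle will be the bookkeeping of expectations that couple $\epsilon^e$ with $N^e$ and the graph structure. In particular, verifying that the third-moment term $c_3 = \mathbb{E}_e[(\epsilon^e)^\top\epsilon^e(\epsilon^e)^\top(\tilde{A^e}^s X_1)]\sigma^2$ survives in $\partial \mathbb{V}_e[R(e)]/\partial \theta_1$ requires tracking how the quartic-in-$l_e$ piece of $\mathbb{E}_e[R(e)^2]$ does not cancel against the square $\mathbb{E}_e[R(e)]^2$; and verifying that the $\text{tr}((\tilde{A^e}^k)^\top \tilde{A^e}^k)$ term in $c_5$ enters with the correct sign requires care about which cross-terms with $n_1$ vanish after $\mathbb{E}_{n_1,n_2}$. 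Once this bookkeeping is done, the shape of the final cubic system and the existence of spurious $\theta_2 \neq 0$ roots follow immediately from comparison with Equation (\ref{t1t2solu}) and the absence of the $C_2^e{}^\top(\tilde{A^e}^k X_1)$ cross term.
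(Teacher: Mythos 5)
Your proposal follows essentially the same route as the paper's own proof: fixing the lower layers at $\Theta_0$ so the model reduces to $\tilde{A^e}^{s}X_1\,\theta_1+(n_2+\epsilon^e)\,\theta_2$, expanding $\mathbb{V}_e[R(e)]$, setting the $\theta_1$- and $\theta_2$-derivatives to zero to obtain the coupled system with coefficients $c_1,\dots,c_5$, and lifting stationarity in $(\theta_1,\theta_2)$ to the aggregation parameters via the degenerate-gradient argument of Appendix \ref{spec_fail_sec}. The remaining work is only the expectation bookkeeping you already identify, which matches the paper's calculation.
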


\begin{proof}  \label{proof_VREx2}
We will use some symbols to simplify the expression of the toy GNN. 
Denote $n_2+{\epsilon^e}$ as ${\eta}$. Use the following notations to represent the components of the $L$-layer GNN model:
    \begin{equation}
    \label{def_C_Z_2}
        \begin{aligned}
            f_{\Theta}(A,X) & =H^{(L)}_1\theta_1+H^{(L)}_2\theta_2                                                                                                                                                                                                                                                           \\
                            & =\underbrace{\left[{\theta^1_1}^{(L-1)}\bar{A}\left(...{\theta^1_1}^{(3)}\left({\theta^1_1}^{(2)}\bar{A}({\theta^1_1}^{(1)}\bar{A}+{\theta^2_1}^{(1)}\bar{I})X_1+{\theta^2_1}^{(2)}({\theta^1_1}^{(1)}\bar{A}+{\theta^2_1}^{(1)}\bar{I})X_1\right)+...\right)\right]}_{C_1}\theta_1                             \\
                            & +\underbrace{\left[{\theta^1_2}^{(L-1)}\bar{A}\left(...{\theta^1_2}^{(3)}\left({\theta^1_2}^{(2)}\bar{A}({\theta^1_2}^{(1)}\bar{A}+{\theta^2_2}^{(1)}\bar{I}){\eta}+{\theta^2_2}^{(2)}({\theta^1_2}^{(1)}\bar{A}+{\theta^2_2}^{(1)}\bar{I}){\eta}\right)+...\right)\right]}_{Z}\theta_2                             \\
                            & =C_1\theta_1+Z\theta_2.
        \end{aligned}
    \end{equation}
    $C_1, Z\in \mathbb{R}^{N \times 1}$. 
    We use $C_1^e$ and $Z^e$ to denote the variables from the corresponding environment $e$.
    We further denote $ Z^e= {C_2^e}' {\eta} $.

    Using these notations, the loss of environment $e$ is
    \begin{equation}
        \begin{aligned}
            R(e) & = \mathbb{E}_{n_1,n_2}\left[\left\|f_\Theta({A^e}, X^e)-Y^e\right\|_2^2\right]                                   \\
                 & =\mathbb{E}_{n_1,n_2}\left[\left\|C^e_1\theta_1+Z^e\theta_2-\tilde{{A^e}}^{k}X_1-n_1\right\|_2^2\right].
        \end{aligned}
    \end{equation} Denote the inner term $C^e_1\theta_1+Z^e\theta_2-\tilde{{A^e}}^{k}X_1-n_1$ as $l_e$. The variance of loss across environments is:
    \begin{equation}
        \begin{aligned}
            \mathbb{V}_e[R(e)] & =\mathbb{E}_e[R^2(e)]-\mathbb{E}_e^2[R(e)]                                                                                                             \\
                               & =\mathbb{E}_e\left[\left(\mathbb{E}_{n_1,n_2}\left\|C^e_1\theta_1+Z^e\theta_2-\tilde{{A^e}}^{k}X_1-n_1\right\|^2_2\right)^2\right]                 \\
                               & -\mathbb{E}_e^2\left[\mathbb{E}_{n_1,n_2}\left\|C^e_1\theta_1+Z^e\theta_2-\tilde{{A^e}}^{k}X_1-n_1\right\|^2_2\right].                             \\
                               & =\mathbb{E}_e\left[ \mathbb{E}_{n_1,n_2}\left[(l_e^\top l_e)^2\right]\right]-\mathbb{E}^2_e\left[\mathbb{E}_{n_1,n_2}\left[l_e^\top l_e\right]\right].
        \end{aligned}
    \end{equation}

    Take the derivative of $\mathbb{V}_e[R(e)]$ with respect to $\theta_1$:
    \begin{equation}
        \begin{aligned}
            \frac{\partial \mathbb{V}_e[R(e)]}{\partial \theta_1} & =\mathbb{E}_e\left[2\mathbb{E}_{n_1,n_2}\left[l_e^\top l_e\right]\mathbb{E}_{n_1,n_2}\left[2l_e^\top C_1^e\right]\right]                              \\
                                                                  & -2\mathbb{E}_e\left[\mathbb{E}_{n_1,n_2}\left[l_e^\top l_e\right]\right]\mathbb{E}_e\left[\mathbb{E}_{n_1,n_2}\left[ 2 l_e^\top C_1^e \right] \right]
        \end{aligned}
    \end{equation}

    Calculate the derivative by terms: 
    \begin{equation}
        \begin{aligned}
            \label{ll1b}
            \mathbb{E}_{n_1,n_2}[l_e^\top l_e] & =\mathbb{E}_{n_1,n_2} [{C^e_1}^\top {C^e_1} (\theta_1)^2  + {C^e_1}^\top Z^e \theta_1 \theta_2 -{C^e_1}^\top (\tilde{A^e})^kX_1 \theta_1 -{C^e_1}^\top n_1 \theta_1 \\
                                               & + {Z^e}^\top C^e_1\theta_1 \theta_2 + {Z^e}^\top {Z^e} (\theta_2)^2 - {Z^e}^\top (\tilde{A^e})^kX_1 \theta_2 - {Z^e}^\top n_1 \theta_2\\
                                               & - ({(\tilde{A^e})^kX_1})^\top C_1^e \theta_1 -({(\tilde{A^e})^kX_1})^\top {Z^e} \theta_2
                                               +({(\tilde{A^e})^kX_1})^\top (\tilde{A^e})^kX_1 \\
                                               &+ ({(\tilde{A^e})^kX_1})^\top n_1 -n_1^\top C_1^e \theta_1  -n_1^\top Z^e \theta_2
                                                + n_1^\top  (\tilde{A^e})^kX_1 +n_1^\top n_1 ]\\
        \end{aligned} 
    \end{equation}

    Since $n_1$ and $n_2$ are independent standard Gaussian noise, we have $\mathbb{E}_{n_1,n_2}[n_1]=\mathbb{E}_{n_1,n_2}[n_2]=\mathbf{0}$, 
    $\mathbb{E}_{n_1,n_2}[n_1^\top n_2]=\mathbb{E}_{n_1,n_2}[n_2^\top n_1]=0$ and 
    $\mathbb{E}_{n_1,n_2}[n_1^\top n_1]=\mathbb{E}_{n_1,n_2}[n_2^\top n_2]=N^e$ if it is the noise from $e$. 
    Also, since ${\epsilon^e}$ and 
    $n_1$, $n_2$ are independent, we have $ \mathbb{E}_{n_1, n_2}[n_1^\top {\epsilon^e}]=\mathbb{E}_{n_1, n_2}[n_2^\top {\epsilon^e}]=0 $.

    When \begin{equation}
        \label{svb}
        \left\{
        \begin{array}{l}
            {\theta^1_1}^{(l)}=1, {\theta^2_1}^{(l)}=1, \quad l=L-1, ..., L-s+1  \\
            {\theta^1_1}^{(l)}=0, {\theta^2_1}^{(l)}=1, \quad l=L-s, L-s-1, ..., 1 \\
            {\theta^1_2}^{(l)}=0, {\theta^2_2}^{(l)}=1, \quad l=L-1, ..., 1       \\
        \end{array}
        \right.,
    \end{equation} we have $ {C_2^e}'=I_{N^e}\in \mathbb{R}^{N^e \times N^e} $ and $ C_1^e=\tilde{{A^e}}^{s} X_1 $.  
    Consequently, we get $ \mathbb{E}_{n_1, n_2}[{Z^e}^\top n_1]= 0$, 
    $ \mathbb{E}_{n_1, n_2}[{Z^e}^\top Z^e] =  N^e +  {\epsilon^e}^\top {\epsilon^e}$.

    Use the above conclusions and rewrite Equation (\ref{ll1b}) as (here we only plug in the value of ${C_2^e}'$):
    \begin{equation}
        \label{ll2b}
        \begin{aligned}
            &\mathbb{E}_{n_1,n_2}[l_e^\top l_e] =\\
            &\left.\begin{array}{@{}l@{}}
            {C^e_1}^\top {C^e_1} (\theta_1)^2  - {C^e_1}^\top (\tilde{A^e})^kX_1 \theta_1   \\
            - ({(\tilde{A^e})^kX_1})^\top C_1^e \theta_1 +({(\tilde{A^e})^kX_1})^\top (\tilde{A^e})^kX_1 + N^e \left(1+(\theta_2)^2 \right) \\
            \end{array} \right\}  \text{($*$)}\\
            &\left.\begin{array}{@{}l@{}} + [{C^e_1}^\top {\epsilon^e}   +  {\epsilon^e}^\top C_1^e] \theta_1 \theta_2 +  {\epsilon^e}^\top  {\epsilon^e}(\theta_2)^2 -2({(\tilde{A^e})^kX_1})^\top {\epsilon^e} \theta_2\\
            \end{array}\right\} \text{($**$)}
        ,\end{aligned}
    \end{equation}
    ($ * $) and ($ ** $) represent terms that are independent and associated with $ {\epsilon^e} $, respectively.

    Additionally, 
    \begin{equation}
        \begin{aligned}
            \label{lc1b}
        \mathbb{E}_{n_1, n_2}[2 l_e^\top C_1^e]&=2 \left [ {C_1^e}^\top {C_1^e} \theta_1
        +{\epsilon^e}^\top {C_1^e} \theta_2  - ((\tilde{A^e})^kX_1)^\top {C_1^e} \right ]\\.
        \end{aligned}
    \end{equation}    
    Multiplying Equation (\ref{ll2b}) and (\ref{lc1b}) and take the expectation on $e$, 
    using the assumption that $ \mathbb{E}_{e}[({\epsilon^e}_i)^2 ]=\sigma^2$ ($ {\epsilon^e}_i $ is the $i$-th element of $ {\epsilon^e} $):
    \begin{equation}
        \label{eelllcb}
        \begin{aligned}
            \mathbb{E}_{e}\left[ 2 \mathbb{ E}_{n_1, n_2}[l^\top_e l_e] \mathbb{ E}_{n_1, n_2}\left[2l_e^\top C_1\right] \right]
            &=4\mathbb{E}_e\left[ \text{($ * $)}\left( {C_1^e}^\top C_1^e \theta_1  - ({(\tilde{A^e})^kX_1})^\top C_1^e \right) \right]\\
            &+ 4\mathbb{E}_e\left[  (\tilde{{A^e}}^{s} X_1)^\top \tilde{{A^e}}^{s} X_1 (2\theta_1 \theta_2+(\theta_2)^2)-2 (\tilde{{A^e}}^{s}X_1)^\top ((\tilde{A^e})^kX_1) \theta_2   \right]\theta_2 \sigma^2\\
            &+ 4\mathbb{E}_e [N^e {\epsilon^e}^\top {\epsilon^e} {\epsilon^e}^\top (\tilde{{A^e}}^{s} X_1)]\theta_2
        .\end{aligned}
    \end{equation}

    Next target is to compute $2\mathbb{E}_e [\mathbb{E}_{n_1,n_2}[l_e^\top l_e]]$ and $\mathbb{E}_e [\mathbb{E}_{n_1,n_2}[2l_e^\top C_1]]$
    Since ${\epsilon^e}$ has zero mean, we have:
    \begin{equation}
        \label{eellb}
        \begin{aligned}
            2 \mathbb{E}_e[\mathbb{E}_{n_1,n_2}[l_e^\top l_e]]=\mathbb{E}[\text{($*$)}]+\mathbb{E}[2N^e] \sigma^2 (\theta_2)^2
        \end{aligned}
    \end{equation}
    and
    \begin{equation}
        \label{eelcb}
        \begin{aligned}
            \mathbb{E}_e[\mathbb{E}_{n_1,n_2}[2l_e^\top C_1^e]]=
            2 \mathbb{E}_e\left[ {C_1^e}^\top {C_1^e} \theta_1  - ({(\tilde{A^e})^kX_1})^\top C_1^e \right]
        .\end{aligned}
    \end{equation}

    Use Equation (\ref{eelllcb}) (\ref{eellb}) and (\ref{eelcb}) and let $  \frac{\partial \mathbb{V}_e[R(e)]}{\partial \theta_1}=0 $, we have:
    \begin{equation}
        \begin{aligned}
            \label{pt1b}
            &\mathbb{E}_e\left[ 2 (\tilde{{A^e}}^{s} X_1)^\top (\tilde{{A^e}}^{s} X_1)(\theta_1 \theta_2 + \frac{1}{2}(\theta_2)^2)- 2 (\tilde{{A^e}}^{s} X_1)^\top ((\tilde{A^e})^kX_1)\theta_2 \right] \sigma^2 
            + \mathbb{E}_e[{\epsilon^e}^\top {\epsilon^e} {\epsilon^e}^\top (\tilde{{A^e}}^{s} X_1)]\theta_2 \\
             -&\mathbb{E}_e[ N^e]\mathbb{E}_e\left[ (\tilde{{A^e}}^{s} X_1)^\top (\tilde{{A^e}}^{s} X_1)\theta_1-((\tilde{A^e})^kX_1)^\top C_1^e\right]  \theta_2 \sigma^2 =0
        .\end{aligned}
    \end{equation}

Now we start calculating the expression of $ \frac{\partial \mathbb{V}_e[R(e)]}{\partial \theta_2} $: 
\begin{equation}
    \begin{aligned}
        \frac{\partial \mathbb{V}_e[R(e)]}{\partial \theta_2}& =\mathbb{E}_e\left[2\mathbb{E}_{n_1,n_2}\left[l_e^\top l_e\right]\mathbb{E}_{n_1,n_2}\left[2l_e^\top Z^e\right]\right]                              \\
        & -2\mathbb{E}_e\left[\mathbb{E}_{n_1,n_2}\left[l_e^\top l_e\right]\right]\mathbb{E}_e\left[\mathbb{E}_{n_1,n_2}\left[ 2 l_e^\top Z^e \right] \right]
    .\end{aligned}
\end{equation}

Let $\frac{\partial \mathbb{V}_e[R(e)]}{\partial \theta_2}=0$:
\begin{equation}
    \label{pt2b}
    \begin{aligned}
        &\mathbb{E}_e\left[ ({C_1^e}^\top {C_2^e}' + {{C_2^e}'}^\top {C_1^e}^\top )\theta_1 \theta_2 + ({C_2^e}')^\top {C_2^e}' (\theta_2)^2- 2 ({(\tilde{A^e})^kX_1})^\top {C_2^e}' \theta_2 \right]\mathbb{E}_e\left[( - ({(\tilde{A^e})^kX_1})^\top {C_2^e}')\right]\sigma^2\\
        &-\mathbb{E}_e \left[ N^e\sigma^2 \left(  \text{tr}((\tilde{{A^e}}^{k})^\top \tilde{{A^e}}^{k}) +N^e+{{C^e_2}'}^\top {C_2^e}' \sigma^2 \right)(\theta_2)^2  \right]\\
        & = 0
    .\end{aligned}
\end{equation}

Plug Equation (\ref{pt1b}) in (\ref{pt2b}), we reach:
\begin{equation}
    \begin{aligned}
        &\left[\mathbb{E}_e \left[N^e (\tilde{{A^e}}^{s}X_1)^\top (\tilde{{A^e}}^{s}X_1)\theta_1 -N^e((\tilde{A^e})^kX_1)^\top C_1^e
        \right]\theta_2\sigma^2-\mathbb{E}_e[  {\epsilon^e}^\top {\epsilon^e} {\epsilon^e}^\top (\tilde{{A^e}}^{s}X_1)]\theta_2 \right]\mathbb{E}_e\left(  - (\tilde{{A^e}}^{k}X_1)^\top \mathbf{1_{N^e}}\right)\\
         & - \mathbb{E}_e \left[N^e\left(\text{tr} ( (\tilde{{A^e}}^{k})^\top \tilde{{A^e}}^{k}) + N^e (1+\sigma^2)  \right)   \right](\theta_2)^2\\
         & =0
    .\end{aligned}
\end{equation}
Let $ c_1= \mathbb{E}[ (\tilde{{A^e}}^{s}X_1)^\top (\tilde{{A^e}}^{s}X_1)]$, $c_2= \mathbb{E}[(\tilde{{A^e}}^{s}X_1)^\top (\tilde{{A^e}}^{k}X_1)] $, $ c_3=\mathbb{E}_e[{\epsilon^e}^\top {\epsilon^e} {\epsilon^e}^\top (\tilde{{A^e}}^{s} X_1)] \sigma^2$, $c_4=\mathbb{E}_e\left[  ({(\tilde{A^e})^kX_1})^\top \mathbf{1_{N^e}} \right]\sigma^2 $, $ c_5=\mathbb{E}_e \left[N^e\left(\text{tr} ( (\tilde{{A^e}}^{k})^\top \tilde{{A^e}}^{k}) + N^e (1+\sigma^2)  \right)   \right] $,

we conclude that
\begin{equation}
    \label{t1t2solub}
    \left\{
    \begin{array}{l}
          c_1\sigma^2(2\theta_1 \theta_2+ (\theta_2)^2-2 c_2 \sigma^2 \theta_2)+c_3-\mathbb{E}_e[N^e]c_1 \sigma^2 \theta_1 \theta_2 + \mathbb{E}_e[N^e]c_2 \sigma^2 \theta_2 =0 \\
          \left[ c_3\theta_2-\mathbb{E}_e[N^e]c_1 \sigma^2 \theta_1 \theta_2 + \mathbb{E}_e[N^e]c_2 \sigma^2 \theta_2 \right]c_4-c_5(\theta_2)^2=0
    \end{array}
    \right. .
\end{equation}

As for the derivative respect to $ {\theta_1^1}^{(l)} $, $ {\theta_1^2}^{(l)} $,
$ {\theta_2^1}^{(l)} $,$ {\theta_2^2}^{(l)} $, when they take the special value in Equation (\ref{svb}),
we have $\frac{\partial \mathbb{V}_e[R(e)]}{\partial \theta_1}=0 \Rightarrow \frac{\partial \mathbb{V}_e[R(e)]}{\partial {\theta_1^1}^{(l)}}=\frac{\partial \mathbb{V}_e[R(e)]}{\partial {\theta_1^2}^{(l)}}=0$
and $\frac{\partial \mathbb{V}_e[R(e)]}{\partial \theta_2}=0 \Rightarrow \frac{\partial \mathbb{V}_e[R(e)]}{\partial {\theta_2^1}^{(l)}}=\frac{\partial \mathbb{V}_e[R(e)]}{\partial {\theta_2^2}^{(l)}}=0$, $l=1,...,L$.
So we conclude the solution induced by Equation (\ref{t1t2solub}) is the solution of the objective, and $\theta_2=0$ is not a valid solution.

\end{proof}

\subsubsection{Proof of the Failure Case on Graphs of IRMv1 under Covariate Shift}
We restate Theorem \ref{IRM_fail_cov} as Theorem \ref{IRM_fail_cov2} below:
\begin{theorem}
\label{IRM_fail_cov2}
    \textbf{(IRMv1 will use spurious features on graphs under covariate shift)}   Under the SCM of Equation (\ref{data_gene_cov}),  there exists $s\in \mathbb{N}^+$ that satisfies $0<s<L$ and $s\neq k$ such that optimizing the IRMv1 objective $\min_\Theta \mathbb{E}_e[\|\nabla_{w|w=1.0} R(e)\|^2]$ will not lead to the invariant solution $\theta_2=0$ for parameters of the GNN (\ref{GNN}) when $\{{\theta^1_1}^{(l)},{\theta^2_1}^{(l)},{\theta^1_2}^{(l)},{\theta^2_2}^{(l)}\}$ take the special solution:
    \begin{equation}
        \Theta_0=\left\{
        \begin{array}{l}
            {\theta^1_1}^{(l)}=1, {\theta^2_1}^{(l)}=1, \quad l=L-1, ..., L-{s}+1                                                                                                                                                                                                                                                                                                      \\
            {\theta^1_1}^{(l)}=0, {\theta^2_1}^{(l)}=1, \quad l=L-{s}, L-{s}-1, ..., 1                                                                                                                                                                                                                                                                                                     \\
            {\theta^1_2}^{(l)}=0, {\theta^2_2}^{(l)}=1, \quad l=L-1, ..., 1                                                                                                                                                                                                                                                                                                           \\
        \end{array}
        \right..
    \end{equation}
\end{theorem}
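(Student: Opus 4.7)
\medskip

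\noindent\textbf{Proof Proposal.} The plan is to mirror the structure of the concept shift argument in Appendix \ref{proof_irm_con_sec}, adapted to the covariate shift SCM of Equation (\ref{data_gene_cov}) where $X_2^e = n_2 + \epsilon^e$ is now independent of $Y^e$. First I would reuse the notational decomposition $f_\Theta(A^e, X^e) = C_1^e \theta_1 + Z^e \theta_2$ introduced in Equation (\ref{def_C_Z_2}), so that under the special lower-layer solution $\Theta_0$ we have $C_1^e = (\tilde{A^e})^s X_1$ and $Z^e = {C_2^e}' \eta = n_2 + \epsilon^e$. The IRMv1 objective being zero requires $\nabla_w R(e) = 0$ for every environment $e$, and a direct computation gives
\begin{equation*}
\nabla_w R(e) \;=\; (\theta_1)^2\, \|(\tilde{A^e})^s X_1\|_2^2 + (\theta_2)^2\left(\mathbb{E}_{n_2}\|\eta\|_2^2\right) + 2\theta_1\theta_2 ((\tilde{A^e})^s X_1)^\top \epsilon^e - \theta_1 ((\tilde{A^e})^k X_1)^\top (\tilde{A^e})^s X_1 - \theta_2 \cdots,
\end{equation*}
after using $\mathbb{E}_{n_1,n_2}[n_1] = 0$ and independence. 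Setting $\theta_2 = 0$ would force $\theta_1 = ((\tilde{A^e})^k X_1)^\top (\tilde{A^e})^s X_1 / \|(\tilde{A^e})^s X_1\|_2^2$, which depends on $e$ whenever $s \neq k$. Hence the invariant solution is unattainable with these lower-layer parameters.

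The second step is to argue, as in Appendix \ref{spec_fail_sec}, that IRMv1 fails to optimize the lower-layer parameters toward $s = k$. Because $\Theta_0$ makes both $H_1^{(L)}$ and $\partial H_1^{(L)} / \partial (\theta^i_1)^{(l)}$ collapse to the same vector $(\tilde{A^e})^s X_1$ up to a scalar factor (and analogously for the spurious branch), the chain rule yields $\partial \mathcal{L}_{\text{IRMv1}} / \partial \theta_1 = 0 \Rightarrow \partial \mathcal{L}_{\text{IRMv1}} / \partial (\theta^i_1)^{(l)} = 0$ and similarly for $\theta_2$. Therefore the quadruple $(\theta_1, \theta_2, \Theta_0)$ is a stationary point of IRMv1 whenever the scalar pair $(\theta_1, \theta_2)$ satisfies the two reduced equations $\partial \mathcal{L}_{\text{IRMv1}} / \partial \theta_1 = 0$ and $\partial \mathcal{L}_{\text{IRMv1}} / \partial \theta_2 = 0$.

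The third step is to derive and analyze those two reduced equations. I would expand $\mathcal{L}_{\text{IRMv1}} = \mathbb{E}_e[(\nabla_w R(e))^2]$, using $\mathbb{E}_e[\epsilon^e] = 0$, $\mathbb{V}_e[\epsilon_i^e] = \sigma^2$, and Gaussianity of $n_1, n_2$, to obtain two cubic equations in $(\theta_1, \theta_2)$ whose coefficients are environment-averaged inner products such as $\mathbb{E}_e[\|(\tilde{A^e})^s X_1\|_2^2]$, $\mathbb{E}_e[((\tilde{A^e})^k X_1)^\top (\tilde{A^e})^s X_1]$, $\mathbb{E}_e[N^e]$, $\mathbb{E}_e[{\epsilon^e}^\top \epsilon^e \, {\epsilon^e}^\top (\tilde{A^e})^s X_1]$, and $\sigma^2$. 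The goal is to show $\theta_2 = 0$ is not a root of the joint system when $s \neq k$: substituting $\theta_2 = 0$ into $\partial \mathcal{L}_{\text{IRMv1}} / \partial \theta_2 = 0$ produces a residual proportional to $\mathbb{E}_e[({\tilde{A^e}}^k X_1)^\top (\tilde{A^e})^s X_1] - \theta_1 \mathbb{E}_e[\|(\tilde{A^e})^s X_1\|_2^2]$ coupled with a cross term in $\sigma^2$, which in general cannot vanish simultaneously with $\partial \mathcal{L}_{\text{IRMv1}} / \partial \theta_1 = 0$. Any simultaneous root of the coupled cubics then has $\theta_2 \neq 0$, and we pick any such $s \in \{1, \dots, L-1\} \setminus \{k\}$ to conclude.

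The main obstacle I anticipate is bookkeeping: carefully expanding the squared-gradient IRMv1 objective under both the Gaussian noise expectation $\mathbb{E}_{n_1, n_2}$ and the environment expectation $\mathbb{E}_e$, tracking the many cross terms involving $\epsilon^e$, $n_1$, $n_2$, and $N^e$, and then isolating which coefficients depend on $s \neq k$ to certify that no root has $\theta_2 = 0$. A secondary subtlety is handling the interaction between $\mathbb{E}_e$ and the cubic-in-parameter structure: unlike the concept shift case where $\theta_2 = 0$ sometimes collapses many terms, here the spurious branch contributes $Z^e = n_2 + \epsilon^e$ rather than an $X_1$-dependent term, so the algebraic shape of the cubic system differs and must be rederived from scratch, though the qualitative non-uniqueness conclusion is preserved.
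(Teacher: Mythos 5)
Your proposal follows essentially the same route as the paper's own proof of Theorem \ref{IRM_fail_cov2}: the same decomposition $f_\Theta = C_1^e\theta_1 + Z^e\theta_2$ with $C_1^e=(\tilde{A^e})^sX_1$ and $Z^e=n_2+\epsilon^e$ under $\Theta_0$, the same observation that $\nabla_w R(e)=0$ for all $e$ with $\theta_2=0$ and $s\neq k$ forces an environment-dependent $\theta_1$, the same appeal to the degenerate gradient structure of Appendix \ref{spec_fail_sec} to argue IRMv1 gives no supervision pushing the lower layers toward $s=k$, and the same two cubic stationarity equations in $(\theta_1,\theta_2)$. The only minor divergence is that your final step aims to show $\theta_2=0$ is not a simultaneous root of the coupled cubics, whereas the paper settles for the weaker (and sufficient) observation that the system admits solutions with $\theta_2\neq 0$ depending on environment statistics, which avoids the delicate cross-term analysis you flag.
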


\begin{proof}
    \label{proof_IRMb}
     From the proof of non-graph IRMv1 case Appendix \ref{proof_VREx_IRM_suc} we know that when the IRMv1 objective is optimized, 
    we have $\nabla_w R(e)=0$ for all $e$. For the graph case, the expected risk of environment $e$ is 
    \begin{equation}
        R(e)=\mathbb{E}_{n_1,n_2}[\|\theta_1 C_1^e + \theta_2 Z^e - \tilde{A^k_e}X_1 - n_1\|_2^2 ],
    \end{equation} where the definition of $C^e_1$ and $Z^e$ follows Equation (\ref{def_C_Z_2}). Now let's check if 
    the invariant solution $\theta_2=0$ is a valid solution. Let $\theta_2=0$, 
\begin{equation}
    \begin{aligned}
            \nabla_w R(e)&=\mathbb{E}_{n_1,n_2}[(\theta_1C_1^e + \theta_2 Z^e -(\tilde{A^e})^k X_1 - n_1)^\top(\theta_1 C_1^e + \theta_2  Z^e)]\\
            &=(\theta_1)^2 ((\tilde{A^e})^{s}X_1)^\top ((\tilde{A^e})^{s}X_1) + (\theta_2)^2 [N^e + (\epsilon^e)^\top \epsilon^e] + 2 \theta_1 \theta_2 ((\tilde{A^e})^{s}X_1)^\top \epsilon\\
            &- \theta_1 ((\tilde{A^e})^{s}X_1)^\top ((\tilde{A^e})^kX_1) -  \theta_2 (N^e+ (\epsilon^e)^\top ) ((\tilde{A^e})^kX_1)
    \end{aligned}
\end{equation}
If $\theta_2=0$ is a feasible solution, then the following equation must hold for every environment $e$:
\begin{equation}
    (\theta_1)^2 ((\tilde{A^e})^{s}X_1)^\top ((\tilde{A^e})^{s}X_1)- \theta_1 ((\tilde{A^e})^{s}X_1)^\top ((\tilde{A^e})^kX_1) =0.
\end{equation}
When $s\neq k$, we get $ \theta_1=\frac{ ((\tilde{A^e})^{s}X_1)^\top ((\tilde{A^e})^sX_1)}{ ((\tilde{A^e})^{s}X_1)^\top ((\tilde{A^e})^kX_1) } $ 
(we discard the trivial solution of $ \theta_1=0 $). The value of this solution of $\theta_1$ varies with environment $e$, and thus is not a valid solution.
However, now we will show that optimizing IRMv1 does not necessarily lead to lower-layer parameters such that $s=k$. To reveal this, by taking the derivative of $\mathcal{L}_{\text{IRMv1}}$ w.r.t. $\theta_1$ and $\theta_2$ and let them $=0$, we can get two cubic equations:
\begin{equation}
\label{e99}
    \begin{aligned}
            \frac{\partial \mathcal{L}_{\text{IRMv1}}}{\partial \theta_1}
            =&\mathbb{E}_e[({C_1^e}^\top C_1^e (\theta_1)^2 + (N^e + \epsilon^\top \epsilon)(\theta_2)^2+ {C_1^e}^\top \epsilon^e\theta_1 \theta_2 - (\tilde{A^e}^{k}X_1)^\top C_1^e \theta_1 )\\
            &(2 {C_1^e}^\top {C_1^e}^\top \theta_1 +  {C_1^e}^\top \epsilon \theta_2 - (\tilde{A^e}^{k}X_1)^\top C_1^e )]=0
    \end{aligned}
\end{equation}

\begin{equation}
\label{e100}
    \begin{aligned}
            \frac{\partial \mathcal{L}_{\text{IRMv1}}}{\partial \theta_2}
            =&\mathbb{E}_e[({C_1^e}^\top C_1^e (\theta_1)^2 + (N^e + \epsilon^\top \epsilon)(\theta_2)^2+{C_1^e}^\top \epsilon^e\theta_1 \theta_2 - (\tilde{A^e}^{k}X_1)^\top C_1^e \theta_1)\\
            &(2 (N^e + {\epsilon^e}^\top \epsilon^e) \theta_2 + ({C_1^e}^\top C_2^e + {C_1^e}^\top \epsilon)\theta_1)]=0
    \end{aligned}
\end{equation}

From the analysis in Appendix \ref{spec_fail_sec}, we know that as long as the lower-layer parameters take any value that satisfies the form in Equation (\ref{f2}), even if $s\neq k$, we can get $\frac{\partial \mathcal{L}_{\text{IRMv1}}}{\partial \theta_1}=\frac{\partial \mathcal{L}_{\text{IRMv1}}}{\partial \theta_1^1}=\frac{\partial \mathcal{L}_{\text{IRMv1}}}{\partial \theta_1^2}$ and $\frac{\partial \mathcal{L}_{\text{IRMv1}}}{\partial \theta_2}=\frac{\partial \mathcal{L}_{\text{IRMv1}}}{\partial \theta_2^1}=\frac{\partial \mathcal{L}_{\text{IRMv1}}}{\partial \theta_2^2}$. At this time (when $\frac{\partial \mathcal{L}_{\text{IRMv1}}}{\partial \theta_1}=\frac{\partial \mathcal{L}_{\text{IRMv1}}}{\partial \theta_1^1}=\frac{\partial \mathcal{L}_{\text{IRMv1}}}{\partial \theta_1^2}=0$ and $\frac{\partial \mathcal{L}_{\text{IRMv1}}}{\partial \theta_2}=\frac{\partial \mathcal{L}_{\text{IRMv1}}}{\partial \theta_2^1}=\frac{\partial \mathcal{L}_{\text{IRMv1}}}{\partial \theta_2^2}=0$ but $s\neq k$), from the form of Equations (\ref{e99}) and (\ref{e100}) we know that there exist solutions that $\theta_2\neq 0$, and the solution of $\theta_1$ and $\theta_2$ both depend on $\mathbb{E}_e(F(e))$, where $F(e)$ is some random variable associated with $e$.

\end{proof}

\subsubsection{Proof of the Successful Case on Graphs of CIA under Covariate Shift}
\begin{theorem}
\label{CIA_succ_cov2}
    Optimizing the CIA objective will lead to the optimal solution $\Theta^*$:
    \begin{equation}
        \left\{
        \begin{array}{l}
            \theta_1= 1                                                                                      \\
            \theta_2=0 \\
            {\theta^1_1}^{(l)}=1, {\theta^2_1}^{(l)}=1, \quad l=L-1, ..., L-k+1                           \\
            {\theta^1_1}^{(l)}=0, {\theta^2_1}^{(l)}=1, \quad l=L-k, L-k-1, ..., 1                           \\
        \end{array}
        \right..
    \end{equation}
\end{theorem}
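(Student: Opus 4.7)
My plan is to mirror the concept-shift CIA proof in Appendix~\ref{proof_CIA}, with one structural simplification and one new subtlety arising from covariate shift. First I would import the GNN decomposition $\phi_\Theta(i)=C_1^i\theta_1+Z^i\theta_2$ from Appendix~\ref{proof_VREx2}, noting that under covariate shift the spurious branch loses its $C_2$ piece (because $X_2^e=n_2+\epsilon^e$ carries no $(\tilde{A^e})^{k+m}X_1$ component, unlike the concept shift case). For same-class pairs $(i,j)\in\Omega_c^{e,e'}$, Assumption~\ref{assump_stable_causal} makes the causal representations interchangeable in expectation when the invariant branch realizes the correct depth, while the cross-environment variance $\sigma^2$ in $\epsilon^e$ keeps $Z^i-Z^j$ nondegenerate.

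Next I would write the CIA loss in the simple bilinear form $\mathcal{L}_{\text{CIA}}=\mathbb{E}\|(C_1^i-C_1^j)\theta_1+(Z^i-Z^j)\theta_2\|^2$, differentiate in $\theta_1$ and $\theta_2$, and set both equal to zero. The resulting $2\times2$ linear system has the Gram matrix of the vectors $C_1^i-C_1^j$ and $Z^i-Z^j$ as its coefficient matrix. I would then invoke Cauchy--Schwarz on the off-diagonal entry to argue that the only admissible solution must have $\theta_2=0$, or else some layer of the spurious branch satisfies ${\theta_2^1}^{(l)}={\theta_2^2}^{(l)}=0$, which kills $Z^e$ entirely. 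This parallels the decisive Cauchy--Schwarz step at the end of Appendix~\ref{proof_CIA}.

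Having ruled out a nonzero spurious contribution, I would then substitute back into the outer ERM objective, which reduces to $\mathbb{E}_e\|C_1^e\theta_1-(\tilde{A^e})^kX_1\|^2$ up to the $n_1$ noise term that vanishes in expectation. Setting its gradients with respect to $\theta_1$ and the lower-layer parameters $\{{\theta^1_1}^{(l)},{\theta^2_1}^{(l)}\}$ to zero pins down $\theta_1=1$ together with the aggregation schedule that realizes depth $k$, yielding precisely the parameters listed in $\Theta^*$ of Theorem~\ref{CIA_succ_cov2}. Finally I would plug the complete solution back into every partial derivative (including those with respect to the lower-layer spurious parameters) to verify joint stationarity.

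The main obstacle will be making the Cauchy--Schwarz step strict. Unlike the concept-shift analysis, where the cubic noise moment $\mathbb{E}[{\epsilon^e}^\top\epsilon^e\,{\epsilon^e}^\top(\tilde{A^e})^sX_1]$ supplied a clean separating term, here I must argue that the cross-environment variability of $\epsilon^e-\epsilon^{e'}$ is not perfectly aligned, as a random vector, with any residual aggregation-depth mismatch $C_1^i-C_1^j$ in the invariant branch. The independence of $\epsilon^e$ from $X_1$ and the assumed zero cross-environment mean $\mathbb{E}_e[\epsilon^e]=\mathbf{0}$ should decouple the cross term, but one must condition carefully on the class label $c$ and ensure that Assumption~\ref{assump_stable_causal} applies uniformly across classes and environments so that the bracketed Gram determinant is strictly positive whenever $Z^e\not\equiv0$.
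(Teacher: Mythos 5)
Your proposal is correct and follows essentially the same route as the paper's own proof: decompose the GNN output as $C_1\theta_1+Z\theta_2$ (no $C_2$ component under covariate shift), set the CIA gradients to zero to eliminate the spurious branch, then let the ERM term pin down $\theta_1=1$ and the depth-$k$ aggregation schedule, and finally verify joint stationarity. The only point to flag is that your anticipated "main obstacle" is not one: since $\mathbb{E}_e[\epsilon^e]=\mathbf{0}$ and $\epsilon^e,n_2$ are independent of $X_1$ and the structure, the cross term $\mathbb{E}\bigl[(C_1^i-C_1^j)^\top(Z^e-Z^{e'})\bigr]$ vanishes exactly, so the two stationarity conditions decouple and $\partial_{\theta_2}\mathcal{L}_{\text{CIA}}=2\sigma^2\theta_2=0$ forces $\theta_2=0$ directly — the Cauchy–Schwarz argument is only needed in the concept-shift proof, where the spurious branch carries a $C_2$ component correlated with $C_1$.
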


\begin{proof}
\label{proof_CIAb}

    For brevity, denote a node representation of $ {C_1^e}_c $ as $ C_1^i $ and the one of $ {C_1^{e'}}_c $ as $ C_1^j $. 
    The same is true for $ C_2^i $ and $ C_2^j $. In this toy model, we need to consider the expectation of the noise, while in real cases such noise is included in the node features so taking expectation on $e$ will
    handle this. Therefore, we add $ \mathbb{E}_{n_1,n_2} $ in this proof, and this expectation is excluded in
    the formal description of the objective in the main paper.
    \begin{equation}
        \begin{aligned}
            \mathcal{L}_{\text{CIA}}c
            &=\mathbb{E}_{\substack{e, e'\\e\neq e'}} \mathbb{E}_{n_1,n_2}  \mathbb{E}_{c} \mathbb{E}_{\substack{i,j\\(i,j)\in \Omega^{e,e'}}}\left[\mathcal{D}(\phi_\Theta({A^e}, X^e)_{[c][v_i]}, \phi_\Theta(A^{e}, X^{e'})_{[c][v_j]})\right]\\
            &=\mathbb{E}_{\substack{e, e'\\e\neq e'}} \mathbb{E}_{n_1,n_2}  \mathbb{E}_{c} \mathbb{E}_{\substack{i,j\\(i,j)\in \Omega^{e,e'}}}\| {C_1^i} + Z^e - {C_1^j} - Z^{e'}  \|^2_2
        \end{aligned}
    \end{equation}
    \begin{equation}
        \frac{\partial \mathcal{L}_{\text{CIA}}}{ \partial \theta_1}=\mathbb{E}_{\substack{e, e'\\e\neq e'}} \mathbb{E}_{n_1,n_2}  \mathbb{E}_{c} 
        \mathbb{E}_{\substack{i,j\\(i,j)\in \Omega^{e,e'}}}\left[{C_1^i}\theta_1 + Z^e \theta_2- {C_1^{j}}\theta_1 - Z^{e'}\theta_2\right]^\top ({C_1^i}-C_1^j)
    \end{equation}
    Let $ \frac{\partial \mathcal{L}_{\text{CIA}}}{ \partial \theta_1}=0 $, we have:
    \begin{equation}
        \label{cia_par1b}
        \mathbb{E}_{\substack{e, e'\\e\neq e'}}  \mathbb{E}_{c} \mathbb{E}_{\substack{i,j\\(i,j)\in \Omega^{e,e'}}} \left[(C_1^i-C_1^j)^\top(C_1^i-C_1^j)\theta_1  \right]=0
    \end{equation}
    Thus, we get two possible solutions of the invariant branch. The first valid solution is the optimal one:
    \begin{equation}
        \left\{
        \begin{array}{l}
            \theta_1= 1                                                                                      \\
            {\theta^1_1}^{(l)}=1, {\theta^2_1}^{(l)}=1, \quad l=L-1, ..., L-k+1                           \\
            {\theta^1_1}^{(l)}=0, {\theta^2_1}^{(l)}=1, \quad l=L-k, L-k-1, ..., 1                           \\
        \end{array}
        \right..
    \end{equation}
    The second valid solution is a trivial one:
    \begin{equation}
            \theta_1= 0   \quad \text{or} \quad  \exists l\in \{1,...,L-1\} ~\text{s.t.}~ {\theta_1^1}^{(l)}={\theta_1^2}^{(l)}=0            
    \end{equation}
    Take the derivative of the objective w.r.t. $ \theta_2 $:
    \begin{equation}
        \frac{\partial \mathcal{L}_{\text{CIA}}}{ \partial \theta_2}=\mathbb{E}_{\substack{e, e'\\e\neq e'}}  \mathbb{E}_{c} \mathbb{E}_{\substack{i,j\\(i,j)\in \Omega^{e,e'}}}
        \left[\left[ (Z^e-Z^{e'})^\top(Z^e-Z^{e'})\right]\theta_2 \right]=2\sigma^2\theta_2
    \end{equation}
    Let $  \frac{\partial \mathcal{L}_{\text{CIA}}}{ \partial \theta_2}=0 $,  we get $\theta_2=0$.
    Thus, CIA will remove spurious features.

    Now we show that when CIA objective has been reached (the spurious branch has zero outputs),  the objective of $ \min_\Theta \mathbb{E}_e[\mathcal{L}(f_\Theta({A^e},X^e),Y^e)] $ 
    will help to learn predictive parameters of the invariant branch $ \theta_1 $, $ {\theta_1^1}^(l) $ and $ {\theta_1^2}^(l) $.
    When $ \theta_2=0 $:
    \begin{equation}
        \begin{aligned}
            \frac{\partial  \mathbb{E}_e[\mathcal{L}(f_\Theta({A^e},X^e),Y^e)]}{ \partial \theta_1}&=2\mathbb{E}_e \mathbb{E}_{n_1,n_2} \left[ \left({C_1^e}\theta_1-(\tilde{A^e})^kX_1-n_1 \right)^\top C_1^e \right]\\
            &=2 \mathbb{E}_e \left[ \left({C_1^e}\theta_1-(\tilde{A^e})^kX_1 \right)^\top C_1^e \right]
        \end{aligned}
    \end{equation}

    Let $ \frac{\partial  \mathbb{E}_e[\mathcal{L}(f_\Theta({A^e},X^e),Y^e)]}{ \partial \theta_1} =0$, we get the predictive parameters
    \begin{equation}
        \left\{
        \begin{array}{l}
            \theta_1= 1                                                                                      \\
            {\theta^1_1}^{(l)}=1, {\theta^2_1}^{(l)}=1, \quad l=L-1, ..., L-k+1                           \\
            {\theta^1_1}^{(l)}=0, {\theta^2_1}^{(l)}=1, \quad l=L-k, L-k-1, ..., 1                           \\
        \end{array}
        \right..
    \end{equation}
    Plug the final solution back in $ \frac{\partial \mathcal{L}_{\text{CIA}}}{ \partial {\theta^1_1}^{(l)}} $, 
    $ \frac{\partial \mathcal{L}_{\text{CIA}}}{ \partial {\theta^2_1}^{(l)}} $,
    $ \frac{\partial \mathcal{L}_{\text{CIA}}}{ \partial {\theta^1_2}^{(l)}} $,
    $ \frac{\partial \mathcal{L}_{\text{CIA}}}{ \partial {\theta^2_2}^{(l)}} $, 
    $ \frac{\partial  \mathbb{E}_e[\mathcal{L}(f_\Theta({A^e},X^e),Y^e)]}{ \partial {\theta^1_1}^{(l)}} $, 
    $ \frac{\partial  \mathbb{E}_e[\mathcal{L}(f_\Theta({A^e},X^e),Y^e)]}{ \partial {\theta^2_1}^{(l)}} $,
    $ \frac{\partial \mathbb{E}_e[\mathcal{L}(f_\Theta({A^e},X^e),Y^e)]}{ \partial {\theta^1_2}^{(l)}} $,
    $ \frac{\partial  \mathbb{E}_e[\mathcal{L}(f_\Theta({A^e},X^e),Y^e)]}{ \partial {\theta^2_2}^{(l)}} $, 
    we can verify that these terms are all 0.
    
\end{proof}

\subsection{Proof of Theorem \ref{bound}}
\label{bound_proof}
The following proof of Theorem \ref{bound} is adapted from \cite{ma2021subgroup} and \cite{mao2023demystifying}. We restate Theorem \ref{bound} as Theorem \ref{theo:bound_formal} below.

\begin{theorem}
    \label{theo:bound_formal}
     (Subgroup Generalization Bound for GNNs). Let $\tilde{h}$ be any classifier in a function family $\mathcal{H}$ with parameters 
$\left\{\widetilde{W}_l\right\}_{l=1}^L$. 
Under Assumption \ref{ass:ma-ass-2} and \ref{ass:ma-ass-3}, for any $e^{\text{te}}\in \mathcal{E}_{\text{te}}, \gamma \geq 0$, 
and large enough $N_{e^{\text{tr}}}$, with probability at least $1-\delta$, we have\begin{equation}
    \begin{aligned}
    \label{eq:bound-formal}
        \mathcal{L}_{e^{\text{te}}}^0(\tilde{h}) &\leq \widehat{\mathcal{L}}_{e^{\text{tr}}}^\gamma(\tilde{h})\\
        &+O( \underbrace{\frac{1}{\sigma^2}(\sum^{C}_{c=1} \sum_{c'\neq c}(\sqrt{|[(\mu_c-\mu_{c'})^\top;(\mu_c^{e^\text{te}}-\mu_{c'}^{e^\text{te}})^\top]|}+2\sqrt{2})\epsilon_{e^{\text{te}},e^{\text{tr}}}}_{\textbf{(a)}}\\
        &+  \underbrace{2\sum_{c=1}^C(C-1)B_{e^{\text{te}}}|\mu_c^{e^\text{te}}-\mu_c^{e^\text{tr}}|)}_{\textbf{(b)}} \\
        &+ \underbrace{ \frac{1}{2 \sigma^2} \frac{1}{N_{e^{\text{tr}}}}\sum_{i\in V_{e^{\text{tr}}}}\frac{1}{N_{e^{\text{te}}}}\sum_{j \in V_{e^{\text{te}}}^{(i)}}\sum^{C}_{c=1} \sum_{c'\neq c}
            |p^{\text{ht}}_j(c'|c)-p^{\text{ht}}_i(c'|c)|}_{\textbf{(c)}}\\
        &+\underbrace{\frac{b \sum_{l=1}^L\left\|\widetilde{W}_l\right\|_F^2}{(\gamma / 8)^{2 / L} N_{e^{\text{tr}}}^\alpha}\left(\epsilon_{e^{\text{te}},e^{\text{tr}}}\right)^{2 / L}}_{\textbf{(d)}}
        +const )
    \end{aligned}
\end{equation}
where $p^{\text{ht}}_i(c'|c)$ is the ratio of heterophilic neighbors of class $c'$ when $y_i=c$,  $const=\frac{1}{N_{}^{1-2 \alpha}}+\frac{1}{N_{e^{\text{tr}}}^{2 \alpha}} \ln \frac{L C\left(2 B_{e^{\text{te}}}\right)^{1 / L}}{\gamma^{1 / L} \delta}$ 
is a term independent of aggregated feature distance and the difference in neighboring heterophilic label distribution, where 
$B_{e^{\text{te}}}=\max_{i\in V_{e^{\text{tr}}}\cup V_{e^{\text{te}}}}\|g_i\|_2$ is the maximum feature norm.
\end{theorem}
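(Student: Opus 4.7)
The plan is to adapt the PAC-Bayesian framework of \cite{ma2021subgroup} and \cite{mao2023demystifying} to the CSBM-OOD setting of Definition \ref{csbm}. First, I would invoke McAllester's PAC-Bayesian theorem on the training environment $V_{e^{\text{tr}}}$ with prior $P$ (spherical Gaussian on vectorized MLP weights) and a perturbed posterior centered at $\tilde{h}$. This gives an upper bound of the expected training margin loss $\mathbb{E}_{h\sim Q}[\widehat{\mathcal{L}}_{e^{\text{tr}}}^{\gamma/2}(h)]$ in terms of the empirical loss $\widehat{\mathcal{L}}_{e^{\text{tr}}}^{\gamma}(\tilde{h})$ plus a KL-term, which after the standard calibration of the posterior variance using the Lipschitz constant $T_h^L$ of the ReLU MLP and the margin $\gamma$ yields term (d) and the $const$ in the bound.

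Next, I would transfer the bound from the training to the test environment. Using Assumption \ref{ass:ma-ass-2}, every test node $j \in V_{e^{\text{te}}}^{(i)}$ lies within aggregated-feature distance $\epsilon_{e^{\text{te}},e^{\text{tr}}}$ of some training node $v_i$, and the sets $V_{e^{\text{te}}}^{(i)}$ tile $V_{e^{\text{te}}}$ evenly. An $L$-layer ReLU MLP with maximum spectral norm $T_h$ has Lipschitz constant at most $T_h^L$, so a training margin $\gamma/2$ at $g_i$ translates into a test margin $\gamma/4$ at $g_j$ provided $T_h^L \epsilon_{e^{\text{te}},e^{\text{tr}}} \leq \gamma/8$. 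Assumption \ref{ass:ma-ass-3} then controls the complementary event with probability $e^{-N_{e^{\text{tr}}}^{2\alpha}}$, which is absorbed into $const$.

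The heart of the proof, and the main obstacle, is the explicit decomposition of $\epsilon_{e^{\text{te}},e^{\text{tr}}} = \max_{j \in V_{e^{\text{te}}}}\min_{i\in V_{e^{\text{tr}}}}\|g_i - g_j\|_2$ under CSBM-OOD. For a node $i$ of class $c$, the mean aggregation produces
\begin{equation*}
g_i = p_i^{\text{hm}}\,[\mu_c^\top;\mu_c^{e(i)\top}]^\top + \sum_{c'\neq c} p_i^{\text{ht}}(c')\,[\mu_{c'}^\top;\mu_{c'}^{e(i)\top}]^\top + \text{noise},
\end{equation*}
and a similar expression holds for $g_j$. Writing $g_i - g_j$ as a sum over class contributions, I would apply the triangle inequality and the orthonormality of $\{\mu_c\}$ and $\{\mu_c^e\}$ to bound $\|g_i - g_j\|_2$ by three pieces: (i) the separability factor $\sqrt{|[(\mu_c-\mu_{c'})^\top;(\mu_c^{e^\text{te}}-\mu_{c'}^{e^\text{te}})^\top]|}$ multiplied by the residual aggregated distance, giving term (a); (ii) the per-class shift $|\mu_c^{e^\text{te}} - \mu_c^{e^\text{tr}}|$ weighted by the maximum feature norm $B_{e^{\text{te}}}$, giving term (b); and (iii) the difference in heterophilic neighborhood label distributions $|p_j^{\text{ht}}(c'|c) - p_i^{\text{ht}}(c'|c)|$ summed over $c,c'$, giving term (c). The factor $1/\sigma^2$ originates from converting the Mahalanobis-scaled margin into $\ell_2$ distance under the Gaussian feature model, following the reduction used in Theorem 1 of \cite{mao2023demystifying}.

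Finally, I would assemble the pieces: combining the PAC-Bayesian step, the Lipschitz transfer, and the CSBM-OOD decomposition of $\epsilon_{e^{\text{te}},e^{\text{tr}}}$, then averaging over the disjoint near-sets $V_{e^{\text{te}}}^{(i)}$ produces the stated bound. The bookkeeping that I expect to be most delicate is ensuring that the averaging over $i\in V_{e^{\text{tr}}}$ and $j\in V_{e^{\text{te}}}^{(i)}$ preserves the class-conditional structure in term (c) while not introducing additional cross-terms between spurious-mean shifts and neighborhood-distribution shifts; this requires carefully separating the shift in $g_i$ due to environment (captured by term (b)) from the shift due to neighborhood composition (captured by term (c)) before applying the triangle inequality.
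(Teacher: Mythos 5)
Your outer scaffolding (McAllester PAC-Bayes with a Gaussian prior on the vectorized MLP weights, margin calibration via the Lipschitz constant $T_h^L$, transfer from $V_{e^{\text{tr}}}$ to the near sets $V_{e^{\text{te}}}^{(i)}$ under Assumptions \ref{ass:ma-ass-2} and \ref{ass:ma-ass-3}, yielding term \textbf{(d)} and the constant) matches the machinery the paper simply inherits from \cite{ma2021subgroup}. But the step you call the ``heart of the proof'' is misidentified, and this is a genuine gap. Terms \textbf{(a)}--\textbf{(c)} do \emph{not} come from decomposing the aggregated-feature distance $\epsilon_{e^{\text{te}},e^{\text{tr}}}=\max_j\min_i\|g_i-g_j\|_2$. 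Structurally this cannot work: term \textbf{(a)} contains $\epsilon_{e^{\text{te}},e^{\text{tr}}}$ as a multiplicative factor, so it cannot be a ``piece'' of $\epsilon_{e^{\text{te}},e^{\text{tr}}}$ obtained by the triangle inequality; and terms \textbf{(b)} and \textbf{(c)} quantify a mismatch that persists even when $g_i$ and $g_j$ are arbitrarily close --- two nodes with nearly identical aggregated features can still have different conditional label distributions because the spurious class means shift across environments and the heterophilic neighborhood label distribution shifts. No bound on $\|g_i-g_j\|_2$ alone can produce these contributions, and your plan would also leave the label-posterior mismatch uncontrolled in the loss transfer (the Lipschitz argument only handles the margin of $h$, not the fact that $y_j$ and $y_i$ are drawn from different conditionals).

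What the paper actually does at this point is bound the subgroup expected loss discrepancy: starting from the decomposition in Lemma 5 of \cite{ma2021subgroup}, the residual term is $\frac{1}{N_{e^{\text{tr}}}}\sum_i\frac{1}{N_{e^{\text{te}}}}\sum_{j\in V_{e^{\text{te}}}^{(i)}}\sum_c\bigl(\Pr(y_j=c\mid g_j)-\Pr(y_i=c\mid g_i)\bigr)$, i.e.\ a difference of Bayes posteriors, not of features. Under CSBM-OOD each class-conditional likelihood $\Pr(g_i\mid y_i=c)$ is a Gaussian with mean $p_i^{\text{hm}}[\mu_c;\mu_c^{e}]+\sum_{c'\neq c}p_i^{\text{ht}}(c')[\mu_{c'};\mu_{c'}^{e}]$ and variance $\sigma^2 I$; applying Bayes' rule, the Lagrange mean value theorem in the form $|e^{-x}-e^{-y}|\le|x-y|$, and expanding the squared norms splits the exponent difference into three groups: one proportional to $\|g_i-g_j\|_2\le\epsilon_{e^{\text{te}},e^{\text{tr}}}$ times the class-separability factor (term \textbf{(a)}), one proportional to $|\mu_c^{e^{\text{te}}}-\mu_c^{e^{\text{tr}}}|$ times $\|g\|_2\le B_{e^{\text{te}}}$ (term \textbf{(b)}), and one proportional to $|p_j^{\text{ht}}(c'|c)-p_i^{\text{ht}}(c'|c)|$ (term \textbf{(c)}), with a middle group vanishing by normalization of the means. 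The $1/\sigma^2$ is the Gaussian exponent constant surfacing through this mean-value step, not a Mahalanobis-to-$\ell_2$ margin conversion. Your proposal is missing this posterior-comparison lemma entirely, and without it the three data-dependent terms of the bound cannot be derived.
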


To prove Theorem \ref{theo:bound_formal}, we need the following lemma that bounds the expected loss discrepancy between the train and the test node subgroups. 
\begin{lemma}
\label{lm1}
    Given a distribution $P$ over $\mathcal{H}$, for $\lambda>0$, define the expected loss discrepancy between $V_e$ and $V_{e'}$ with respect to $(P, \gamma, \lambda)$ as $D_{e, e'}^\gamma(P ; \lambda):=\ln \mathbb{E}_{h \sim P} e^{\lambda\left(\mathcal{L}_e^{\gamma / 2}(h)-\mathcal{L}_{e'}^\gamma(h)\right)}$. Under Assumption \ref{ass:ma-ass-2}, we have 
    \begin{equation}
    \begin{aligned}
        D^{\gamma/2}_{e^{\text{te}},e^{\text{tr}}} \leq & \frac{1}{\sigma^2}(\sum^{C}_{c=1} \sum_{c'\neq c}(\sqrt{|[(\mu_c-\mu_{c'})^\top;(\mu_c^{e^\text{te}}-\mu_{c'}^{e^\text{te}})^\top]|}+2\sqrt{2})\epsilon_{e^{\text{te}},e^{\text{tr}}}
            +2\sum_{c=1}^C(C-1)B_{e^{\text{te}}}|\mu_c^{e^\text{te}}-\mu_c^{e^\text{tr}}|)\\
            + & \frac{1}{2 \sigma^2} \frac{1}{N_{e^{\text{tr}}}}\sum_{i\in V_{e^{\text{tr}}}}\frac{1}{N_{e^{\text{te}}}}\sum_{j \in V_{e^{\text{te}}}^{(i)}}\sum^{C}_{c=1} \sum_{c'\neq c}
            |p^{\text{ht}}_j(c'|c)-p^{\text{ht}}_i(c'|c)|
    \end{aligned}
    \end{equation}
\end{lemma}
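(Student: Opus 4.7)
The plan is to follow the PAC-Bayesian subgroup generalization framework of \cite{ma2021subgroup, mao2023demystifying}. I would construct a Gaussian posterior $Q$ centered at the target classifier $\tilde{h}$ (with covariance $\sigma^2 I$ matching the prior in Assumption \ref{ass:ma-ass-3}) and a zero-mean Gaussian prior $P$ over the MLP parameters. Applying McAllester's PAC-Bayesian bound to the margin loss gives, with probability $1-\delta$, a bound of the form
\begin{equation*}
\mathcal{L}_{e^{\text{te}}}^{0}(\tilde{h}) \;\lesssim\; \widehat{\mathcal{L}}_{e^{\text{tr}}}^{\gamma}(\tilde{h}) + \underbrace{\bigl(\mathbb{E}_{h\sim Q}[\mathcal{L}_{e^{\text{te}}}^{\gamma/2}(h)] - \mathbb{E}_{h\sim Q}[\mathcal{L}_{e^{\text{tr}}}^{\gamma}(h)]\bigr)}_{\text{loss discrepancy}} + \underbrace{\mathcal{O}\!\left(\tfrac{\mathrm{KL}(Q\|P)+\ln(1/\delta)}{N_{e^{\text{tr}}}}\right)}_{\text{complexity term}},
\end{equation*}
where the second term is exactly what Lemma \ref{lm1} controls and the third collapses to term \textbf{(d)} together with $const$ after the standard choice of $\sigma^{2}$ dictated by Assumption \ref{ass:ma-ass-3}.

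\textbf{Reduction to the loss-discrepancy lemma.} Assumption \ref{ass:ma-ass-3} gives a high-probability tail control on the event that $T_h^L \epsilon_{e^{\text{te}},e^{\text{tr}}} > \gamma/8$, turning the expected discrepancy under $Q$ into the quantity $D_{e^{\text{te}},e^{\text{tr}}}^{\gamma/2}(P;\lambda)$ defined in Lemma \ref{lm1}, up to terms $N_{e^{\text{tr}}}^{-\alpha} + HC\epsilon_{e^{\text{te}},e^{\text{tr}}}$ absorbed into $const$ and \textbf{(d)}. Thus the remaining task is to prove Lemma \ref{lm1}: namely to bound the expected log-MGF of the loss difference between a test node and its nearest training neighbor.

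\textbf{Proof of Lemma \ref{lm1} (the core step).} Here I would invoke Assumption \ref{ass:ma-ass-2} to partition the test nodes $V_{e^{\text{te}}}$ into disjoint near-sets $V_{e^{\text{te}}}^{(i)}$ of equal size around each training node $v_i$. For a pair $(i,j)$ with $j \in V_{e^{\text{te}}}^{(i)}$, I would bound the loss difference by a Lipschitz argument on the ReLU-MLP $h$: the per-pair difference is at most $T_h^L \|g_i - g_j\|_2 \le T_h^L \epsilon_{e^{\text{te}},e^{\text{tr}}}$, and $\mathbb{E}_{h\sim P}[T_h^L]$ is controlled by the Gaussian prior variance (giving the $\sigma^{-2}$ prefactor and the factor $\epsilon_{e^{\text{te}},e^{\text{tr}}}$ in term \textbf{(a)}). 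The key graph-specific step is then to decompose the aggregated feature gap $g_i - g_j$ under the CSBM-OOD generative model: using $g_i = \tfrac{1}{|N_i|}\sum_{k\in N_i}[x_{\text{inv},k}^\top; x_{\text{sp},k}^\top]^\top$, write
\begin{equation*}
g_i - g_j \;=\; \underbrace{\sum_{c'}(p^{\text{ht}}_i(c'|c)-p^{\text{ht}}_j(c'|c))\,[\mu_{c'}^\top;(\mu_{c'}^{e})^\top]^\top}_{\text{HeteNLD shift}} \;+\; \underbrace{[\mathbf{0};\mu_c^{e^\text{te}}-\mu_c^{e^\text{tr}}]}_{\text{env.\ spurious shift}} \;+\; \text{sub-Gaussian noise},
\end{equation*}
after carefully grouping homophilic versus heterophilic contributions and using orthonormality of $\{\mu_c\}, \{\mu_c^e\}$. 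Taking expectations over the near-sets yields terms \textbf{(b)} and \textbf{(c)}; the class-separability prefactor $\sqrt{|[(\mu_c-\mu_{c'})^\top;(\mu_c^{e^\text{te}}-\mu_{c'}^{e^\text{te}})^\top]|}$ in \textbf{(a)} arises from bounding the inner product between the feature gap and the MLP weight directions after the Gaussian prior expectation.

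\textbf{Anticipated obstacles.} The main difficulty I foresee is the book-keeping in the decomposition above: the CSBM-OOD allows a \emph{per-node} heterophilic distribution $p^{\text{ht}}_i(\cdot|c)$, which is more general than the class-shared heterophily assumed in \cite{ma2021subgroup, mao2023demystifying}, so the standard concentration arguments have to be re-done node-by-node and then averaged over the near-sets, producing the inner double sum in \textbf{(c)}. A secondary subtlety is choosing the PAC-Bayes temperature $\lambda$ and the posterior variance $\sigma^{2}$ consistently so that the KL term matches the form of \textbf{(d)} involving $(\gamma/8)^{2/L}$ and $(\epsilon_{e^{\text{te}},e^{\text{tr}}})^{2/L}$; this follows the calibration in \cite{ma2021subgroup} essentially verbatim once the discrepancy lemma is in place. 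Assembling these pieces and collecting the residual $N_{e^{\text{tr}}}^{-(1-2\alpha)}$ and $N_{e^{\text{tr}}}^{-2\alpha}\ln(\cdot)$ terms into $const$ completes the proof.
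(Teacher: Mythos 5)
Your outer scaffolding (PAC-Bayes reduction with a Gaussian posterior around $\tilde h$, pairing each training node with its near-set $V_{e^{\text{te}}}^{(i)}$ via Assumption \ref{ass:ma-ass-2}, calibration of the KL/complexity term as in \cite{ma2021subgroup}) matches what the paper does for the surrounding theorem, but the mechanism you propose for the core of Lemma \ref{lm1} is not the paper's and, as described, would not yield the stated bound. The paper's proof first splits the per-pair discrepancy (quoting Eq.~(14) of Lemma 5 in \cite{ma2021subgroup}) into a model-output part and a \emph{conditional label-probability} part $\Pr(y_j=c\mid g_j)-\Pr(y_i=c\mid g_i)$; the entire right-hand side of Lemma \ref{lm1} comes from bounding this second part. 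Concretely, it applies Bayes' rule, plugs in the CSBM-OOD Gaussian likelihoods $\Pr(g_i\mid y_i=c)\propto\exp\bigl(-\tfrac{1}{2\sigma^2}\|g_i-[\,p^{\text{hm}}_i\mu_c^\top+(\mu_i(c)p^{\text{ht}}_i)^\top;\dots]^\top\|_2^2\bigr)$, uses the elementary inequality $|e^{-x}-e^{-y}|\le|x-y|$ (Lagrange mean value theorem), and then expands the resulting difference of squared distances to the class-conditional means into three groups: group (A) produces the $\epsilon_{e^{\text{te}},e^{\text{tr}}}$ term with the class-separability prefactor and the $B_{e^{\text{te}}}|\mu_c^{e^{\text{te}}}-\mu_c^{e^{\text{tr}}}|$ term, group (B) vanishes by normalization of the class means, and group (C) gives the HeteNLD-discrepancy term, using $p^{\text{hm}},p^{\text{ht}}\le 1$, orthonormality, and $\|g\|_2\le B_{e^{\text{te}}}$. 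In particular the $1/\sigma^2$ prefactor is the CSBM-OOD \emph{feature} variance, not the PAC-Bayes prior variance over the MLP weights as you claim.

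The concrete gap is that your route attributes terms \textbf{(a)}--\textbf{(c)} to a Lipschitz bound $|{\cal L}(h_j,\cdot)-{\cal L}(h_i,\cdot)|\lesssim T_h^L\|g_i-g_j\|_2$ followed by a decomposition of $g_i-g_j$. This cannot work structurally: terms \textbf{(b)} and \textbf{(c)} quantify a shift in the label posterior between environments for nodes whose aggregated features are \emph{already} within $\epsilon_{e^{\text{te}},e^{\text{tr}}}$ of each other (that is exactly how the near-sets are defined), so any bound routed through $\|g_i-g_j\|_2\le\epsilon_{e^{\text{te}},e^{\text{tr}}}$ — or through any decomposition of $g_i-g_j$ — can only produce contributions that vanish as $\epsilon_{e^{\text{te}},e^{\text{tr}}}\to 0$, whereas \textbf{(b)} and \textbf{(c)} are additive terms independent of $\epsilon_{e^{\text{te}},e^{\text{tr}}}$ coming from $\Pr(y\mid g)$ changing across environments. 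Relatedly, the margin indicator loss is not Lipschitz in $h$ without the conditioning event $T_h^L\epsilon_{e^{\text{te}},e^{\text{tr}}}\le\gamma/8$; that event is what Assumption \ref{ass:ma-ass-3} and the choice of prior variance handle in the outer theorem (your term \textbf{(d)}), not inside this lemma. To repair your proposal you need the missing idea: isolate the posterior-probability difference $\Pr(y_j=c\mid g_j)-\Pr(y_i=c\mid g_i)$ and bound it directly under the generative model via Bayes' rule, the exponential mean-value inequality, and the (A)/(B)/(C) expansion sketched above.
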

\begin{proof}
    According to Eq. (14) of Lemma 5 in \cite{ma2021subgroup}, under \ref{ass:ma-ass-2}, we already have
    \begin{equation}
    \begin{aligned}
            &\mathcal{L}_{e^{\text{te}}}^0(\tilde{h}) - \widehat{\mathcal{L}}_{e^{\text{tr}}}^\gamma(\tilde{h})
            \\
            \leq& \frac{1}{N_{e^{\text{tr}}}}\sum_{i\in V_{e^{\text{tr}}}}\frac{1}{N_{e^{\text{te}}}}\sum_{j \in V_{e^{\text{te}}}^{(i)}}\sum^{C}_{c=1}
            \left( 1\cdot \left( \mathcal{L}^{\frac{\gamma}{2}}(h_j, c)-\mathcal{L}^{\gamma}(h_i,k) \right) + (\text{Pr}(y_j=c|g_j)-\text{Pr}(y_i=c|g_i))\cdot 1 \right)\\
            \leq& \frac{1}{N_{e^{\text{tr}}}}\sum_{i\in V_{e^{\text{tr}}}}\frac{1}{N_{e^{\text{te}}}}\sum_{j \in V_{e^{\text{te}}}^{(i)}}\sum^{C}_{c=1}
            (\text{Pr}(y_j=c|g_j)-\text{Pr}(y_i=c|g_i))
        \end{aligned}
    \end{equation}
    In the following proof, the main goal is to bound the term $(\text{Pr}(y_j=c|g_j)-\text{Pr}(y_i=c|g_i))$ under the multi-classification OOD generalization setting. 
    Using the Bayes theorem, we have 
    \begin{equation}
        \begin{aligned}
            &\frac{1}{N_{e^{\text{tr}}}}\sum_{i\in V_{e^{\text{tr}}}}\frac{1}{N_{e^{\text{te}}}}\sum_{j \in V_{e^{\text{te}}}^{(i)}}\sum^{C}_{c=1}
            (\text{Pr}(y_j=c|g_j)-\text{Pr}(y_i=c|g_i))\\
            =&\frac{1}{N_{e^{\text{tr}}}}\sum_{i\in V_{e^{\text{tr}}}}\frac{1}{N_{e^{\text{te}}}}\sum_{j \in V_{e^{\text{te}}}^{(i)}}\sum^{C}_{c=1}
            \frac{\text{Pr}(g_j|y_j=c)\sum_{c'\neq c}\text{Pr}(g_i|y_j=c')-\text{Pr}(g_i|y_i=c)\sum_{c'\neq c}\text{Pr}(g_j|y_j=c')}
            {(\sum_{c'=1}^{C}\text{Pr}(g_j|y_j=c'))(\sum_{c'=1}^{C}\text{Pr}(g_i|y_i=c'))}
        \end{aligned}
    \end{equation}
    Let consider the term $\frac{\text{Pr}(g_j|y_j=c)\text{Pr}(g_i|y_j=c')-\text{Pr}(g_i|y_i=c)\text{Pr}(g_j|y_j=c')}
    {(\sum_{c'=1}^{C}\text{Pr}(g_j|y_j=c'))(\sum_{c'=1}^{C}\text{Pr}(g_i|y_i=c'))}$ for each $ c'\neq c $, and bound all these terms respectively. 
    When $ y_i=c $,  denote $ \mu_{i}(c) \in \mathcal{R}^{D \times (C-1)} $ as the matrix composed of all the class means of node $v_i$'s heterophilic neighbors, 
    i.e. the columns of $ \mu_{i}(c) $ are $ \mu_c',~ c'\in \{1,2,...,C\}/\{c\}$. The elements of $p_i^{\text{ht}}\in \mathcal{R}^{C-1}$ are corresponding 
    $ p^{\text{ht}}_i(c'), ~ c'\in \{1,2,...,C\}/\{c\}$. 
    According to Definition \ref{csbm}, we have
    \begin{equation}
        \text{Pr}(g_i|y_i=c)=\frac{1}{\sqrt{(2 \pi  \sigma^2)^D}} \exp(
            -\frac{1}{2 \sigma^2}\|g_i-[p^{\text{hm}}_i \mu_c^\top+(\mu_{i}(c)p^{\text{ht}}_i)^\top;
        p^{\text{hm}}_i{\mu_c^{e^\text{te}}}^\top+({\mu_i(c)}^{e^{\text{te}}}p^{\text{ht}}_i)^\top]^\top\|^2_2) 
    \end{equation}
    Denote $(\sum_{c'=1}^{C}\text{Pr}(g_j|y_j=c'))(\sum_{c'=1}^{C}\text{Pr}(g_i|y_i=c'))$ as $\exp(M)\in [0,C]$, where $ M>0 $ is a constant, we have
    \begin{equation}
        \begin{aligned}
            &|\frac{\text{Pr}(g_j|y_j=c)\text{Pr}(g_i|y_j=c')-\text{Pr}(g_i|y_i=c)\text{Pr}(g_j|y_j=c')}
            {(\sum_{c'=1}^{C}\text{Pr}(g_j|y_j=c'))(\sum_{c'=1}^{C}\text{Pr}(g_i|y_i=c'))} |\\
            =&|\exp(
                -\frac{1}{2 \sigma^2}\|g_j-[p^{\text{hm}}_j \mu_c^\top+(\mu_{i}(c)p^{\text{ht}}_j)^\top;
            p^{\text{hm}}_j{\mu_c^{e^\text{te}}}^\top+({\mu_j(c)}^{e^{\text{te}}}p^{\text{ht}}_j)^\top]^\top\|^2_2)\\
            &\exp(
            -\frac{1}{2 \sigma^2}\|g_i-[p^{\text{hm}}_i \mu_{c'}^\top+(\mu_{i}({c'})p^{\text{ht}}_i)^\top;
        p^{\text{hm}}_i{\mu_{c'}^{e^\text{tr}}}^\top+({\mu_i({c'})}^{e^{\text{tr}}}p^{\text{ht}}_i)^\top]^\top\|^2_2)\\
        -&\exp(
                -\frac{1}{2 \sigma^2}\|g_j-[p^{\text{hm}}_j \mu_{c'}^\top+(\mu_{i}({c'})p^{\text{ht}}_j)^\top;
            p^{\text{hm}}_j{\mu_{c'}^{e^\text{te}}}^\top+({\mu_j({c'})}^{e^{\text{te}}}p^{\text{ht}}_j)^\top]^\top\|^2_2)\\
            &\exp(
            -\frac{1}{2 \sigma^2}\|g_i-[p^{\text{hm}}_i \mu_c^\top+(\mu_{i}(c)p^{\text{ht}}_i)^\top;
        p^{\text{hm}}_i{\mu_c^{e^\text{tr}}}^\top+({\mu_i(c)}^{e^{\text{tr}}}p^{\text{ht}}_i)^\top]^\top\|^2_2)|/|\exp(-M)|\\
         \end{aligned}
    \end{equation}
    
    Further arranging the formula, we obtain:
    \begin{equation}
        \begin{aligned}
        &|\frac{\text{Pr}(g_j|y_j=c)\text{Pr}(g_i|y_j=c')-\text{Pr}(g_i|y_i=c)\text{Pr}(g_j|y_j=c')}
            {(\sum_{c'=1}^{C}\text{Pr}(g_j|y_j=c'))(\sum_{c'=1}^{C}\text{Pr}(g_i|y_i=c'))} |\\
        =&|\exp(-\frac{\|g_j-[p^{\text{hm}}_j \mu_c^\top+(\mu_{i}(c)p^{\text{ht}}_j)^\top;
        p^{\text{hm}}_j{\mu_c^{e^\text{te}}}^\top+({\mu_j(c)}^{e^{\text{te}}}p^{\text{ht}}_j)^\top]^\top\|^2_2}{2 \sigma^2}\\
        &-\frac{\|g_i-[p^{\text{hm}}_i \mu_{c'}^\top+(\mu_{i}({c'})p^{\text{ht}}_i)^\top;
        p^{\text{hm}}_i{\mu_{c'}^{e^\text{tr}}}^\top+({\mu_i({c'})}^{e^{\text{tr}}}p^{\text{ht}}_i)^\top]^\top\|^2_2}{2 \sigma^2}-M)\\
        -&\exp(-\frac{\|g_j-[p^{\text{hm}}_j \mu_{c'}^\top+(\mu_{i}({c'})p^{\text{ht}}_j)^\top;
        p^{\text{hm}}_j{\mu_{c'}^{e^\text{te}}}^\top+({\mu_j({c'})}^{e^{\text{te}}}p^{\text{ht}}_j)^\top]^\top\|^2_2}{2 \sigma^2}\\
        &-\frac{\|g_i-[p^{\text{hm}}_i \mu_c^\top+(\mu_{i}(c)p^{\text{ht}}_i)^\top;
        p^{\text{hm}}_i{\mu_c^{e^\text{tr}}}^\top+({\mu_i(c)}^{e^{\text{tr}}}p^{\text{ht}}_i)^\top]^\top\|^2_2}{2 \sigma^2}-M)|
        \end{aligned}
    \end{equation}
    Using Lagrange mean value theorem, we have
    \begin{equation}
        \label{lagrange}
        |\exp (-x)-\exp (-y)|=\exp (-\xi)|y-x| \leq|x-y|
    \end{equation}
    for $ x,~y>0 $ and $ \xi \in [x, y]$ (or $ \xi \in [y, x]$). Using Equation (\ref{lagrange}), we get
    \begin{equation}
        \begin{aligned}
            &|\frac{\text{Pr}(g_j|y_j=c)\text{Pr}(g_i|y_j=c')-\text{Pr}(g_i|y_i=c)\text{Pr}(g_j|y_j=c')}
            {(\sum_{c'=1}^{C}\text{Pr}(g_j|y_j=c'))(\sum_{c'=1}^{C}\text{Pr}(g_i|y_i=c'))} |\\
            \leq & \frac{1}{2 \sigma^2}(\|g_j-[p^{\text{hm}}_j \mu_c^\top+(\mu_{i}(c)p^{\text{ht}}_j)^\top;
            p^{\text{hm}}_j{\mu_c^{e^\text{te}}}^\top+({\mu_j(c)}^{e^{\text{te}}}p^{\text{ht}}_j)^\top]^\top\|^2_2\\
             + &\|g_i-[p^{\text{hm}}_i \mu_{c'}^\top+(\mu_{i}({c'})p^{\text{ht}}_i)^\top;
            p^{\text{hm}}_i{\mu_{c'}^{e^\text{tr}}}^\top+({\mu_i({c'})}^{e^{\text{tr}}}p^{\text{ht}}_i)^\top]^\top\|^2_2\\
             - &\|g_j-[p^{\text{hm}}_j \mu_{c'}^\top+(\mu_{i}({c'})p^{\text{ht}}_j)^\top;
            p^{\text{hm}}_j{\mu_{c'}^{e^\text{te}}}^\top+({\mu_j({c'})}^{e^{\text{te}}}p^{\text{ht}}_j)^\top]^\top\|^2_2\\
             - &\|g_i-[p^{\text{hm}}_i \mu_c^\top+(\mu_{i}(c)p^{\text{ht}}_i)^\top;
            p^{\text{hm}}_i{\mu_c^{e^\text{tr}}}^\top+({\mu_i(c)}^{e^{\text{tr}}}p^{\text{ht}}_i)^\top]^\top\|^2_2)\\
            = & \left.\begin{array}{@{}l@{}} -\frac{1}{\sigma^2} ( [(\mu_c-\mu_{c'})^\top;(\mu_c^{e^\text{te}}-\mu_{c'}^{e^\text{te}})^\top] (g_j p^{\text{hm}}_j - g_i p^{\text{hm}}_i) 
            + [0;(\mu_c^{e^\text{te}}-{\mu_{c'}^{e^\text{te}}})^\top-(\mu_c^{e^\text{tr}}-{\mu_{c'}^{e^\text{tr}}})^\top] g_i p^{\text{hm}}_i \\
            +[\mu_c^\top;{\mu_c^{e^\text{tr}}}^\top](p^{\text{ht}}_i(c)g_i-p^{\text{ht}}_j(c)g_j)+[0;{\mu_c^{e^\text{tr}}}^\top]p^{\text{ht}}_j(c)g_j\\
            -[\mu_{c'}^\top;{\mu_{c'}^{e^\text{tr}}}^\top](p^{\text{ht}}_i(c')g_i-p^{\text{ht}}_j(c')g_j)-[0;{\mu_{c'}^{e^\text{tr}}}^\top]p^{\text{ht}}_j(c')g_j\\
            -[0;{\mu_c^{e^\text{te}}}^\top]p^{\text{ht}}_j(c) g_j + [0;{\mu_{c'}^{e^\text{te}}}^\top] p^{\text{hm}}_j(c') g_j) \\
            \end{array} \right\} \text{(A)}\\
            +& \left.\begin{array}{@{}l@{}} \frac{1}{2 \sigma^2}((p^{\text{hm}}_j-p^{\text{hm}}_i)(\|\mu_c\|_2^2-\|\mu_{c'}\|^2_2)+p^{\text{hm}}_j \|\mu_c^{e^\text{te}}\|^2_2
            -p^{\text{hm}}_i \|\mu_c^{e^\text{tr}}\|^2_2 + p^{\text{hm}}_i \|\mu_{c'}^{e^\text{tr}}\|^2_2 -p^{\text{hm}}_j \|{\mu_{c'}^{e^\text{te}}}\|^2_2 ) \end{array} \right\} \text{(B)}\\
            +&  \left.\begin{array}{@{}l@{}} \frac{1}{2 \sigma^2} ( ({p^{\text{ht}}_j(c)}^2-{p^{\text{ht}}_i(c)}^2)\|\mu_c\|_2^2 + {p^{\text{ht}}_j(c)}^2\|\mu_c^{e^\text{te}}\|_2^2 - {p^{\text{ht}}_i(c)}^2 \|\mu_c^{e^\text{tr}}\|_2^2\\
            - ({p^{\text{ht}}_j(c')}^2-{p^{\text{ht}}_i(c')}^2)\|\mu_{c'}\|_2^2 - {p^{\text{ht}}_j(c')}^2 \|\mu_{c'}^{e^\text{te}}\|_2^2 + {p^{\text{ht}}_i(c')}^2 \|\mu_{c'}^{e^\text{te}}\|_2^2 ) \end{array} \right\} \text{(C)}\\
            \end{aligned}
    \end{equation}
Let's bound (A), (B) and (C) respectively.
    \begin{equation}
        \begin{aligned}    
            \text{(A)}\overset{(a)}{\leq} & \frac{1}{\sigma^2} (|[(\mu_c-\mu_{c'})^\top;(\mu_c^{e^\text{te}}-\mu_{c'}^{e^\text{te}})^\top]|\|g_i-g_j\|_2
            +(|\mu_c^{e^\text{te}}-\mu_c^{e^\text{tr}}| + |\mu_{c'}^{e^\text{te}}-\mu_{c'}^{e^\text{tr}}|)|g_i|\\
            +&|[\mu_c^\top;{\mu_c^{e^\text{tr}}}^\top]|\|g_i-g_j\|_2 +|[\mu_{c'}^\top;{\mu_{c'}^{e^\text{tr}}}^\top]|\|g_i-g_j\|_2\\
            +&(|\mu_c^{e^\text{te}}-\mu_c^{e^\text{tr}}| + |\mu_{c'}^{e^\text{te}}-\mu_{c'}^{e^\text{tr}}|)|g_j|)\\
            \overset{(b)}{=}&\frac{1}{\sigma^2}\left( (\sqrt{|[(\mu_c-\mu_{c'})^\top;(\mu_c^{e^\text{te}}-\mu_{c'}^{e^\text{te}})^\top]|}+2\sqrt{2})\|g_i-g_j\|_2+
            (|\mu_c^{e^\text{te}}-\mu_c^{e^\text{tr}}| + |\mu_{c'}^{e^\text{te}}-\mu_{c'}^{e^\text{tr}}|)(|g_i|+|g_j|) \right)\\
            \leq& \frac{1}{\sigma^2} \left( (\sqrt{|[(\mu_c-\mu_{c'})^\top;(\mu_c^{e^\text{te}}-\mu_{c'}^{e^\text{te}})^\top]|}+2\sqrt{2})\|g_i-g_j\|_2+
            2B_{e^{\text{te}}}(|\mu_c^{e^\text{te}}-\mu_c^{e^\text{tr}}| + |\mu_{c'}^{e^\text{te}}-\mu_{c'}^{e^\text{tr}}|)\right)
        \end{aligned}
    \end{equation}
    $(a)$ uses the fact that $ |g_j p^{\text{hm}}_j - g_i p^{\text{hm}}_i| \leq \|g_i-g_j\|_2 $ and $ p^{\text{hm}}_i,~p^{\text{ht}}_i(c),~p^{\text{ht}}_i(c')
    ,~p^{\text{hm}}_j,~p^{\text{ht}}_j(c),~p^{\text{ht}}_j(c') $ are all $<1$, and $ (b) $ is because of the assumption that 
    $ |[\mu_c^\top;{\mu_c^{e^\text{tr}}}^\top]| = |[\mu_{c'}^\top;{\mu_{c'}^{e^\text{tr}}}^\top]| = \sqrt{2}$.

    Also, since the class means are normalized, we have $ \text{(B)}=0 $.

    Similarly, we have
    \begin{equation}
        \begin{aligned}
            \text{(C)}&=\frac{1}{2 \sigma^2} ( ({p^{\text{ht}}_j(c)}^2-{p^{\text{ht}}_i(c)}^2)
            -  ({p^{\text{ht}}_j(c')}^2-{p^{\text{ht}}_i(c')}^2))\\
            &\leq \frac{1}{2 \sigma^2} \left( | {p^{\text{ht}}_j(c)}-{p^{\text{ht}}_i(c)}|+
              |{p^{\text{ht}}_j(c')}-{p^{\text{ht}}_i(c')}| \right)
        \end{aligned}
    \end{equation}

    Using these results, we have
    \begin{equation}
        \begin{aligned}
            &\mathcal{L}_{e^{\text{te}}}^0(\tilde{h}) - \widehat{\mathcal{L}}_{e^{\text{tr}}}^\gamma(\tilde{h})\\
             \leq &\frac{1}{N_{e^{\text{tr}}}}\sum_{i\in V_{e^{\text{tr}}}}\frac{1}{N_{e^{\text{te}}}}\sum_{j \in V_{e^{\text{te}}}^{(i)}}\sum^{C}_{c=1}
             \frac{\text{Pr}(g_j|y_j=c)\sum_{c'\neq c}\text{Pr}(g_i|y_j=c')-\text{Pr}(g_i|y_i=c)\sum_{c'\neq c}\text{Pr}(g_j|y_j=c')}
             {(\sum_{c'=1}^{C}\text{Pr}(g_j|y_j=c'))(\sum_{c'=1}^{C}\text{Pr}(g_i|y_i=c'))} \\
             = & \frac{1}{N_{e^{\text{tr}}}}\sum_{i\in V_{e^{\text{tr}}}}\frac{1}{N_{e^{\text{te}}}}\sum_{j \in V_{e^{\text{te}}}^{(i)}}\sum^{C}_{c=1} \sum_{c'\neq c}
             \frac{1}{\sigma^2}( (\sqrt{|[(\mu_c-\mu_{c'})^\top;(\mu_c^{e^\text{te}}-\mu_{c'}^{e^\text{te}})^\top]|}+2\sqrt{2})\|g_i-g_j\|_2)\\
             +& \frac{2}{\sigma^2} \sum_{c=1}^C (C-1)B_{e^{\text{te}}}|\mu_c^{e^\text{te}}-\mu_c^{e^\text{tr}}| \\
             + & \frac{1}{2 \sigma^2} \frac{1}{N_{e^{\text{tr}}}}\sum_{i\in V_{e^{\text{tr}}}}\frac{1}{N_{e^{\text{te}}}}\sum_{j \in V_{e^{\text{te}}}^{(i)}}\sum^{C}_{c=1} \sum_{c'\neq c}
            |p^{\text{ht}}_j(c'|c)-p^{\text{ht}}_i(c'|c)|\\
            \leq & \frac{1}{\sigma^2}(\sum^{C}_{c=1} \sum_{c'\neq c}(\sqrt{|[(\mu_c-\mu_{c'})^\top;(\mu_c^{e^\text{te}}-\mu_{c'}^{e^\text{te}})^\top]|}+2\sqrt{2})\epsilon_{e^{\text{te}},e^{\text{tr}}}
            +2\sum_{c=1}^C(C-1)B_{e^{\text{te}}}|\mu_c^{e^\text{te}}-\mu_c^{e^\text{tr}}|)\\
            + & \frac{1}{2 \sigma^2} \frac{1}{N_{e^{\text{tr}}}}\sum_{i\in V_{e^{\text{tr}}}}\frac{1}{N_{e^{\text{te}}}}\sum_{j \in V_{e^{\text{te}}}^{(i)}}\sum^{C}_{c=1} \sum_{c'\neq c}
            |p^{\text{ht}}_j(c'|c)-p^{\text{ht}}_i(c'|c)|\\
        \end{aligned}
    \end{equation}
\end{proof}

Now we can proceed with the proof of Theorem \ref{theo:bound_formal}.
\begin{proof}
    Directly replace the result of Lemma 5 in \cite{ma2021subgroup} with that of Lemma \ref{lm1} and following the proof of Lemma 6 and Theorem 3 in \cite{ma2021subgroup}, under Assumption \ref{ass:ma-ass-3}, we finish the proof of Theorem \ref{theo:bound_formal}.
\end{proof}

\section{Limitations}
\label{limit}
The theoretical analysis is limited to linear GNNs. However, there some justification for using a linear GNN. 1) Some recent works \citep{zhu2021simple, wang2021dissecting} observed that linear GNNs achieve comparable performance to nonlinear GNNs. \cite{tang2023towards} also theoretically proved that SGC can outperform GCN under some mild assumptions. 2) many recent works on the theoretical analysis of graphs/OOD generalization adopt linear networks \citep{lin2023spurious, wu2022non, mao2023demystifying}. 3) Our theory matches the experimental results on the nonlinear GCN and GAT that CIA outperforms IRM and VREx.

We didn't implement some of the node-level OOD methods in our main experiments including \cite{li2023invariant, liu2023flood} because they didn't release the code. Another limitation is that we didn't provide an in-depth theoretical comparison between our method and existing node-level OOD methods, but only revealed the difficulty of invariant learning on graphs through the examples of VREx and IRM. However, this theoretical finding is sufficient to inspire the CIA solution and its enhanced version, CIA-LRA. We reserve a more comprehensive analysis of the failure of OOD methods on graphs and broader guidance for designing graph-OOD work for future research.

\section{Broader Impacts}
\label{impact}
Our work contributes to improving the node-level OOD generalization of GNN models. We believe our work could positively impact various fields by improving predictive accuracy in areas like healthcare, social networking, etc., potentially leading to better-personalized services and enhanced safety.
\section{Compute Resources}
\label{compute}
We use one NVIDIA GeForce RTX 3090 or 4090 GPU for each single experiments. All algorithms except EERM and GTrans can be executed on a single 24GB GPU when processing the largest dataset, Arxiv, without encountering out-of-memory.

\newpage
\newpage

\section*{NeurIPS Paper Checklist}

\begin{enumerate}

\item {\bf Claims}
    \item[] Question: Do the main claims made in the abstract and introduction accurately reflect the paper's contributions and scope?
    \item[] Answer: \answerYes{} %
    \item[] Justification: The abstract and introduction concisely state how we dissect the challenges of invariant learning on graphs and out strategies to tackle it. In the introduction, we list all our contributions and indicate the sections in which they are made.
    \item[] Guidelines:
    \begin{itemize}
        \item The answer NA means that the abstract and introduction do not include the claims made in the paper.
        \item The abstract and/or introduction should clearly state the claims made, including the contributions made in the paper and important assumptions and limitations. A No or NA answer to this question will not be perceived well by the reviewers. 
        \item The claims made should match theoretical and experimental results, and reflect how much the results can be expected to generalize to other settings. 
        \item It is fine to include aspirational goals as motivation as long as it is clear that these goals are not attained by the paper. 
    \end{itemize}

\item {\bf Limitations}
    \item[] Question: Does the paper discuss the limitations of the work performed by the authors?
    \item[] Answer: \answerYes{} %
    \item[] Justification: We point out the limitations in Appendix \ref{limit}: we didn't compare the empirical results with some of the node-level OOD works because their codes are unavailable; we didn't theoretically analyze all invariant learning methods.   
    \item[] Guidelines:
    \begin{itemize}
        \item The answer NA means that the paper has no limitation while the answer No means that the paper has limitations, but those are not discussed in the paper. 
        \item The authors are encouraged to create a separate "Limitations" section in their paper.
        \item The paper should point out any strong assumptions and how robust the results are to violations of these assumptions (e.g., independence assumptions, noiseless settings, model well-specification, asymptotic approximations only holding locally). The authors should reflect on how these assumptions might be violated in practice and what the implications would be.
        \item The authors should reflect on the scope of the claims made, e.g., if the approach was only tested on a few datasets or with a few runs. In general, empirical results often depend on implicit assumptions, which should be articulated.
        \item The authors should reflect on the factors that influence the performance of the approach. For example, a facial recognition algorithm may perform poorly when image resolution is low or images are taken in low lighting. Or a speech-to-text system might not be used reliably to provide closed captions for online lectures because it fails to handle technical jargon.
        \item The authors should discuss the computational efficiency of the proposed algorithms and how they scale with dataset size.
        \item If applicable, the authors should discuss possible limitations of their approach to address problems of privacy and fairness.
        \item While the authors might fear that complete honesty about limitations might be used by reviewers as grounds for rejection, a worse outcome might be that reviewers discover limitations that aren't acknowledged in the paper. The authors should use their best judgment and recognize that individual actions in favor of transparency play an important role in developing norms that preserve the integrity of the community. Reviewers will be specifically instructed to not penalize honesty concerning limitations.
    \end{itemize}

\item {\bf Theory Assumptions and Proofs}
    \item[] Question: For each theoretical result, does the paper provide the full set of assumptions and a complete (and correct) proof?
    \item[] Answer: \answerYes{} %
    \item[] Justification: All supplementary assumptions, formal versions of the theorems, and full proofs can be found in Appendix \ref{proofs_sec}.
    \item[] Guidelines:
    \begin{itemize}
        \item The answer NA means that the paper does not include theoretical results. 
        \item All the theorems, formulas, and proofs in the paper should be numbered and cross-referenced.
        \item All assumptions should be clearly stated or referenced in the statement of any theorems.
        \item The proofs can either appear in the main paper or the supplemental material, but if they appear in the supplemental material, the authors are encouraged to provide a short proof sketch to provide intuition. 
        \item Inversely, any informal proof provided in the core of the paper should be complemented by formal proofs provided in appendix or supplemental material.
        \item Theorems and Lemmas that the proof relies upon should be properly referenced. 
    \end{itemize}

    \item {\bf Experimental Result Reproducibility}
    \item[] Question: Does the paper fully disclose all the information needed to reproduce the main experimental results of the paper to the extent that it affects the main claims and/or conclusions of the paper (regardless of whether the code and data are provided or not)?
    \item[] Answer: \answerYes{} %
    \item[] Justification: We have provided all basic experimental settings and the hyperparameter search space in Appendix \ref{exp_detail} to reproduce all our results. The detailed training procedure of CIA-LRA is provided in Appendix \ref{code}. We will release our implementation of CIA and CIA-LRA following the acceptance of our work.
    \item[] Guidelines:
    \begin{itemize}
        \item The answer NA means that the paper does not include experiments.
        \item If the paper includes experiments, a No answer to this question will not be perceived well by the reviewers: Making the paper reproducible is important, regardless of whether the code and data are provided or not.
        \item If the contribution is a dataset and/or model, the authors should describe the steps taken to make their results reproducible or verifiable. 
        \item Depending on the contribution, reproducibility can be accomplished in various ways. For example, if the contribution is a novel architecture, describing the architecture fully might suffice, or if the contribution is a specific model and empirical evaluation, it may be necessary to either make it possible for others to replicate the model with the same dataset, or provide access to the model. In general. releasing code and data is often one good way to accomplish this, but reproducibility can also be provided via detailed instructions for how to replicate the results, access to a hosted model (e.g., in the case of a large language model), releasing of a model checkpoint, or other means that are appropriate to the research performed.
        \item While NeurIPS does not require releasing code, the conference does require all submissions to provide some reasonable avenue for reproducibility, which may depend on the nature of the contribution. For example
        \begin{enumerate}
            \item If the contribution is primarily a new algorithm, the paper should make it clear how to reproduce that algorithm.
            \item If the contribution is primarily a new model architecture, the paper should describe the architecture clearly and fully.
            \item If the contribution is a new model (e.g., a large language model), then there should either be a way to access this model for reproducing the results or a way to reproduce the model (e.g., with an open-source dataset or instructions for how to construct the dataset).
            \item We recognize that reproducibility may be tricky in some cases, in which case authors are welcome to describe the particular way they provide for reproducibility. In the case of closed-source models, it may be that access to the model is limited in some way (e.g., to registered users), but it should be possible for other researchers to have some path to reproducing or verifying the results.
        \end{enumerate}
    \end{itemize}

\item {\bf Open access to data and code}
    \item[] Question: Does the paper provide open access to the data and code, with sufficient instructions to faithfully reproduce the main experimental results, as described in supplemental material?
    \item[] Answer: \answerNo{} %
    \item[] Justification: The datasets utilized in our paper are publicly accessible and remain unmodified. We will release our code following the acceptance of our work, complete with detailed instructions to ensure the faithful reproduction of the main experimental results.
    \item[] Guidelines:
    \begin{itemize}
        \item The answer NA means that paper does not include experiments requiring code.
        \item Please see the NeurIPS code and data submission guidelines (\url{https://nips.cc/public/guides/CodeSubmissionPolicy}) for more details.
        \item While we encourage the release of code and data, we understand that this might not be possible, so “No” is an acceptable answer. Papers cannot be rejected simply for not including code, unless this is central to the contribution (e.g., for a new open-source benchmark).
        \item The instructions should contain the exact command and environment needed to run to reproduce the results. See the NeurIPS code and data submission guidelines (\url{https://nips.cc/public/guides/CodeSubmissionPolicy}) for more details.
        \item The authors should provide instructions on data access and preparation, including how to access the raw data, preprocessed data, intermediate data, and generated data, etc.
        \item The authors should provide scripts to reproduce all experimental results for the new proposed method and baselines. If only a subset of experiments are reproducible, they should state which ones are omitted from the script and why.
        \item At submission time, to preserve anonymity, the authors should release anonymized versions (if applicable).
        \item Providing as much information as possible in supplemental material (appended to the paper) is recommended, but including URLs to data and code is permitted.
    \end{itemize}

\item {\bf Experimental Setting/Details}
    \item[] Question: Does the paper specify all the training and test details (e.g., data splits, hyperparameters, how they were chosen, type of optimizer, etc.) necessary to understand the results?
    \item[] Answer: \answerYes{} %
    \item[] Justification: We adopt the default training details open-source GOOD benchmark, and we have stated some modifications of the setting in the main text and the appendix. 
    \item[] Guidelines:
    \begin{itemize}
        \item The answer NA means that the paper does not include experiments.
        \item The experimental setting should be presented in the core of the paper to a level of detail that is necessary to appreciate the results and make sense of them.
        \item The full details can be provided either with the code, in appendix, or as supplemental material.
    \end{itemize}

\item {\bf Experiment Statistical Significance}
    \item[] Question: Does the paper report error bars suitably and correctly defined or other appropriate information about the statistical significance of the experiments?
    \item[] Answer: \answerYes{} %
    \item[] Justification: The improvements offered by CIA-LRA exceed the error bars of the best existing methods. Also, CIA significantly outperforms VREx and IRM across most dataset splits.
    \item[] Guidelines:
    \begin{itemize}
        \item The answer NA means that the paper does not include experiments.
        \item The authors should answer "Yes" if the results are accompanied by error bars, confidence intervals, or statistical significance tests, at least for the experiments that support the main claims of the paper.
        \item The factors of variability that the error bars are capturing should be clearly stated (for example, train/test split, initialization, random drawing of some parameter, or overall run with given experimental conditions).
        \item The method for calculating the error bars should be explained (closed form formula, call to a library function, bootstrap, etc.)
        \item The assumptions made should be given (e.g., Normally distributed errors).
        \item It should be clear whether the error bar is the standard deviation or the standard error of the mean.
        \item It is OK to report 1-sigma error bars, but one should state it. The authors should preferably report a 2-sigma error bar than state that they have a 96\% CI, if the hypothesis of Normality of errors is not verified.
        \item For asymmetric distributions, the authors should be careful not to show in tables or figures symmetric error bars that would yield results that are out of range (e.g. negative error rates).
        \item If error bars are reported in tables or plots, The authors should explain in the text how they were calculated and reference the corresponding figures or tables in the text.
    \end{itemize}

\item {\bf Experiments Compute Resources}
    \item[] Question: For each experiment, does the paper provide sufficient information on the computer resources (type of compute workers, memory, time of execution) needed to reproduce the experiments?
    \item[] Answer: \answerYes{} %
    \item[] Justification: In Appendix \ref{compute}, we report the type of GPU we use and the memory costs. 
    \item[] Guidelines:
    \begin{itemize}
        \item The answer NA means that the paper does not include experiments.
        \item The paper should indicate the type of compute workers CPU or GPU, internal cluster, or cloud provider, including relevant memory and storage.
        \item The paper should provide the amount of compute required for each of the individual experimental runs as well as estimate the total compute. 
        \item The paper should disclose whether the full research project required more compute than the experiments reported in the paper (e.g., preliminary or failed experiments that didn't make it into the paper). 
    \end{itemize}
    
\item {\bf Code Of Ethics}
    \item[] Question: Does the research conducted in the paper conform, in every respect, with the NeurIPS Code of Ethics \url{https://neurips.cc/public/EthicsGuidelines}?
    \item[] Answer: \answerYes{} %
    \item[] Justification: Our research conforms, in every respect, to the NeurIPS Code of Ethics.
    \item[] Guidelines:
    \begin{itemize}
        \item The answer NA means that the authors have not reviewed the NeurIPS Code of Ethics.
        \item If the authors answer No, they should explain the special circumstances that require a deviation from the Code of Ethics.
        \item The authors should make sure to preserve anonymity (e.g., if there is a special consideration due to laws or regulations in their jurisdiction).
    \end{itemize}

\item {\bf Broader Impacts}
    \item[] Question: Does the paper discuss both potential positive societal impacts and negative societal impacts of the work performed?
    \item[] Answer: \answerYes{} %
    \item[] Justification: We discuss the potential impacts of out work in Appendix \ref{impact}.
    \item[] Guidelines:
    \begin{itemize}
        \item The answer NA means that there is no societal impact of the work performed.
        \item If the authors answer NA or No, they should explain why their work has no societal impact or why the paper does not address societal impact.
        \item Examples of negative societal impacts include potential malicious or unintended uses (e.g., disinformation, generating fake profiles, surveillance), fairness considerations (e.g., deployment of technologies that could make decisions that unfairly impact specific groups), privacy considerations, and security considerations.
        \item The conference expects that many papers will be foundational research and not tied to particular applications, let alone deployments. However, if there is a direct path to any negative applications, the authors should point it out. For example, it is legitimate to point out that an improvement in the quality of generative models could be used to generate deepfakes for disinformation. On the other hand, it is not needed to point out that a generic algorithm for optimizing neural networks could enable people to train models that generate Deepfakes faster.
        \item The authors should consider possible harms that could arise when the technology is being used as intended and functioning correctly, harms that could arise when the technology is being used as intended but gives incorrect results, and harms following from (intentional or unintentional) misuse of the technology.
        \item If there are negative societal impacts, the authors could also discuss possible mitigation strategies (e.g., gated release of models, providing defenses in addition to attacks, mechanisms for monitoring misuse, mechanisms to monitor how a system learns from feedback over time, improving the efficiency and accessibility of ML).
    \end{itemize}
    
\item {\bf Safeguards}
    \item[] Question: Does the paper describe safeguards that have been put in place for responsible release of data or models that have a high risk for misuse (e.g., pretrained language models, image generators, or scraped datasets)?
    \item[] Answer: \answerNA{} %
    \item[] Justification: \justificationTODO{}
    \item[] Guidelines:
    \begin{itemize}
        \item The answer NA means that the paper poses no such risks.
        \item Released models that have a high risk for misuse or dual-use should be released with necessary safeguards to allow for controlled use of the model, for example by requiring that users adhere to usage guidelines or restrictions to access the model or implementing safety filters. 
        \item Datasets that have been scraped from the Internet could pose safety risks. The authors should describe how they avoided releasing unsafe images.
        \item We recognize that providing effective safeguards is challenging, and many papers do not require this, but we encourage authors to take this into account and make a best faith effort.
    \end{itemize}

\item {\bf Licenses for existing assets}
    \item[] Question: Are the creators or original owners of assets (e.g., code, data, models), used in the paper, properly credited and are the license and terms of use explicitly mentioned and properly respected?
    \item[] Answer: \answerYes{} %
    \item[] Justification: Yes. All assets used in our research, including code, data, and models, are properly credited.
    \item[] Guidelines:
    \begin{itemize}
        \item The answer NA means that the paper does not use existing assets.
        \item The authors should cite the original paper that produced the code package or dataset.
        \item The authors should state which version of the asset is used and, if possible, include a URL.
        \item The name of the license (e.g., CC-BY 4.0) should be included for each asset.
        \item For scraped data from a particular source (e.g., website), the copyright and terms of service of that source should be provided.
        \item If assets are released, the license, copyright information, and terms of use in the package should be provided. For popular datasets, \url{paperswithcode.com/datasets} has curated licenses for some datasets. Their licensing guide can help determine the license of a dataset.
        \item For existing datasets that are re-packaged, both the original license and the license of the derived asset (if it has changed) should be provided.
        \item If this information is not available online, the authors are encouraged to reach out to the asset's creators.
    \end{itemize}

\item {\bf New Assets}
    \item[] Question: Are new assets introduced in the paper well documented and is the documentation provided alongside the assets?
    \item[] Answer: \answerNA{} %
    \item[] Justification: \justificationTODO{}
    \item[] Guidelines:
    \begin{itemize}
        \item The answer NA means that the paper does not release new assets.
        \item Researchers should communicate the details of the dataset/code/model as part of their submissions via structured templates. This includes details about training, license, limitations, etc. 
        \item The paper should discuss whether and how consent was obtained from people whose asset is used.
        \item At submission time, remember to anonymize your assets (if applicable). You can either create an anonymized URL or include an anonymized zip file.
    \end{itemize}

\item {\bf Crowdsourcing and Research with Human Subjects}
    \item[] Question: For crowdsourcing experiments and research with human subjects, does the paper include the full text of instructions given to participants and screenshots, if applicable, as well as details about compensation (if any)? 
    \item[] Answer: \answerNA{} %
    \item[] Justification: \justificationTODO{}
    \item[] Guidelines:
    \begin{itemize}
        \item The answer NA means that the paper does not involve crowdsourcing nor research with human subjects.
        \item Including this information in the supplemental material is fine, but if the main contribution of the paper involves human subjects, then as much detail as possible should be included in the main paper. 
        \item According to the NeurIPS Code of Ethics, workers involved in data collection, curation, or other labor should be paid at least the minimum wage in the country of the data collector. 
    \end{itemize}

\item {\bf Institutional Review Board (IRB) Approvals or Equivalent for Research with Human Subjects}
    \item[] Question: Does the paper describe potential risks incurred by study participants, whether such risks were disclosed to the subjects, and whether Institutional Review Board (IRB) approvals (or an equivalent approval/review based on the requirements of your country or institution) were obtained?
    \item[] Answer: \answerNA{} %
    \item[] Justification: \justificationTODO{}
    \item[] Guidelines:
    \begin{itemize}
        \item The answer NA means that the paper does not involve crowdsourcing nor research with human subjects.
        \item Depending on the country in which research is conducted, IRB approval (or equivalent) may be required for any human subjects research. If you obtained IRB approval, you should clearly state this in the paper. 
        \item We recognize that the procedures for this may vary significantly between institutions and locations, and we expect authors to adhere to the NeurIPS Code of Ethics and the guidelines for their institution. 
        \item For initial submissions, do not include any information that would break anonymity (if applicable), such as the institution conducting the review.
    \end{itemize}

\end{enumerate}

\end{document}